\documentclass[final,12pt]{colt2026}

\title[Unified Framework of Distributional Regret in Multi-Armed Bandits and RL]{Unified Framework of Distributional Regret in Multi-Armed Bandits and Reinforcement Learning}

\usepackage{times}
\usepackage{abbreviations}
\usepackage{booktabs}
\usepackage{makecell}
\usepackage{algorithm}
\RestyleAlgo{linesnumbered,ruled}
\LinesNumbered
\usepackage{enumitem}

\newcommand{\gap}{\mathrm{gap}}
\newcommand{\rgap}{\overline{\gap}}
\newcommand{\vgap}{v_{\gap}}
\newcommand{\Vmax}{V_{\max}}

\newcommand{\sigexp}{\sigma_{\mathrm{exp}}}
\newcommand{\sigsub}{\sigma_{\mathrm{max}}}
\newcommand{\Qrange}{\WW^*}
\newcommand{\Qdiff}{\WW_{\mathrm{diff}}^*}
\newcommand{\Wdiff}[2]{W_{{#1}, \mathrm{diff}}^{#2}}
\newcommand{\Wmin}[2]{W_{{#1}, \mathrm{min}}^{#2}}
\newcommand{\Valpha}{V_{\alpha}}
\newcommand{\kap}{\kappa}
\newcommand{\kapgap}{\kappa_{\gap}}

\coltauthor{
 \Name{Harin Lee} \Email{leeharin@cs.washington.edu}\\
 \addr University of Washington
 \AND
 \Name{Min-hwan Oh} \Email{minoh@snu.ac.kr}\\
 \addr Seoul National University
}

\begin{document}

\maketitle

\begin{abstract}
We study the distribution of regret in stochastic multi-armed bandits and episodic reinforcement learning through a unified framework.
We formalize a \emph{distributional regret bound} as a probabilistic guarantee that holds \emph{uniformly} over all confidence levels $\delta \in (0,1]$, thereby characterizing the regret distribution across the full range of $\delta$.
We present a simple UCBVI-style algorithm with exploration bonus
$\min\{c_{1,k}/N,\; c_{2,k}/\sqrt{N}\}$, where $N$ denotes the visit count and $(c_{1,k},c_{2,k})$ are user-specified parameters.
For arbitrary parameter sequences, we derive general gap-independent and gap-dependent distributional regret bounds, yielding a principled characterization of how the parameters control the trade-off between expected performance, tail risk, and instance-dependent behavior.
In particular, our bounds achieve optimal trade-offs between expected and distributional regret in both minimax and instance-dependent regimes.
As a special case, for multi-armed bandits with $A$ arms and horizon $T$, we obtain a distributional regret bound of order $\mathcal{O}\big(\sqrt{AT}\log(1/\delta)\big)$, confirming the conjecture of \citet[Section~17.1]{lattimore2020bandit} for the first time.
\end{abstract}

\begin{keywords}
  distributional regret, multi-armed bandit, reinforcement learning
\end{keywords}

\section{Introduction}

In online decision-making problems---including stochastic multi-armed bandits (MAB)~\citep{lattimore2020bandit} and reinforcement learning (RL)~\citep{sutton1998}---an agent repeatedly interacts with an unknown environment by selecting actions and observing stochastic rewards.
A standard measure of performance is \emph{regret}, defined as the gap between the cumulative reward achieved by an optimal strategy and that obtained by the agent.
Since the environment is stochastic and the agent may employ a randomized action-selection strategy, regret is a random variable; consequently, performance is typically assessed via expected regret or via high-probability guarantees at a prescribed confidence level.

While expectation and fixed-confidence bounds are useful quantities, they do not fully describe the distribution of regret.
Recent work has highlighted that tail behavior can be subtle even in classical settings: \citet{fan2025fragility} show that asymptotically optimal bandit algorithms can exhibit heavy-tailed regret distributions of $\PP(\mathrm{Regret} > x) \approx \frac{1}{x}$.
\citet{simchi2023regret,simchi2025simple} 
propose bonus designs
that accelerates the decay of the tail to $\Ocal( \exp( - x^\gamma))$ for a tunable  parameter $\gamma \in (0, 1]$ and further investigate the trade-off between the tail distribution and the instance-dependent regret.
For RL, the distributional picture is considerably less complete: while a recent work~\citep{khodadadian2025tail} studies a notion of distributional regret in RL, near-optimal characterizations and the corresponding optimal trade-offs remain unknown. 
Even in MAB, the optimality of existing distributional guarantees is not fully resolved.

In this work, we provide a unified framework of distributional regret analysis for MAB and RL, and we establish regret bounds that meet existing lower bounds in multiple regimes.

\paragraph{Main Contribution.}
We propose a simple and flexible algorithm for MAB and RL, $\AlgName$, and analyze the distribution of its cumulative regret.
The algorithm is a $\texttt{UCBVI}$-type method with a bonus term of $\frac{c_{1, k}}{N^k(s, a)} \land \frac{c_{2, k}}{\sqrt{N^k(s, a)}}$, where $c_{1, k}$ and $c_{2, k}$ are input parameters and $N^k(s, a)$ is the visit count of the state-action pair $(s, a)$.
Our theoretical guarantees offer the following novelties.

\begin{itemize}
    \item 
    We study a distributional regret bound, defined as a function of $\delta$ that upper-bounds the cumulative regret with probability at least $1-\delta$ \emph{simultaneously for all} $\delta \in (0,1]$ (see Section~\ref{sec:distribution regret}).
    The resulting $\delta$-dependence directly captures the distributional properties of the regret.
    A distributional regret bound implies (i) high-probability bounds at any prescribed confidence level, (ii) an expected regret bound via integration over $\delta$, and (iii) the light-tailed risk notion of \citet{simchi2025simple}.
    In particular, converting our uniform-in-$\delta$ bounds to expectation avoids the extra $\log K$ factors that commonly arise when one derives expected regret, 
    where $K$ is the number of episodes (see Corollary~\ref{cor:c_1 only 1}).
    
    \item 
    We introduce a regularity assumption that bounds the sub-exponential norm of the reward and the optimal value of the next state.
    This assumption strictly generalizes (and is not limited to) the standard sub-Gaussian noise assumption in stochastic MAB and the bounded-reward assumption in RL, enabling a single unified framework for regret analysis that covers both settings (see Section~\ref{sec:assumption}).
    
    \item We establish both gap-independent and gap-dependent distributional regret bounds for arbitrary input parameters $\{c_{1, k} \}_{k=1}^\infty, \{c_{2, k}\}_{k=1}^\infty$ (Theorems~\ref{thm:gap-independent bound} and~\ref{thm:gap-dependent bound}).
    Our results rigorously characterize how these two bonus parameters balance the trade-offs between the expected bound and the tail distribution of worst-case and instance-dependent regret, and we show that our analyses achieve optimal trade-offs in both.

    \item 
    In the MAB setting with $A$ arms and horizon $T$, we obtain a distributional regret bound of $\mathcal{O}(\sqrt{AT}\log(1/\delta))$ together with an expected regret bound of $\mathcal{O}(\sqrt{AT})$ (Theorem~\ref{thm:bandit 1}), 
    matching minimax lower bounds up to constant factors.
    To the best of our knowledge, this is the tightest known regret guarantee for MAB, and it confirms the conjecture of \citet[Section~17.1]{lattimore2020bandit}.
\end{itemize}

\subsection{Related Works}

\paragraph{MAB and RL with Expected and High-probability Regret Guarantees.}
Near-optimal expected and high-probability regret bounds have been established in numerous works for MAB~\citep{auer2002finite,audibert2009minimax,agrawal2012analysis,bubeck2012regret,degenne2016anytime,menard2017minimax,lattimore2018refining,lattimore2020bandit,jin2023thompson} and episodic RL~\citep{azar2017minimax,zanette2019tighter,simchowitz2019non,dann2021beyond,zhang2021reinforcement,tiapkin2022dirichlet,zhou2023sharp,zhang2024settling,lee2025minimax}.
However, the distributional behavior of regret has been far less studied.

\paragraph{Comparison with \citet{lee2025minimax}.}
The work most closely related to ours is \citet{lee2025minimax}, and our results generalize the analysis of \citet{lee2025minimax}.
The primary focus of \citet{lee2025minimax} is to achieve a minimax optimal high-probability regret bound 
under fixed input parameters.
We generalize and improve their analysis framework, providing distributional regret bounds and instance-dependent bounds under arbitrary input parameters.

\paragraph{Distributional Regret Bound.}
\citet{neu2015explore} shows that in adversarial bandits, \texttt{EXP3-IX}~\citep{kocak2014efficient} achieves a high-probability regret bound of $\Ocal\big(\sqrt{AT}\big(\sqrt{\log A} + \tfrac{\log(1/\delta)}{\sqrt{\log A}}\big)\big)$ uniformly for all $\delta \in (0,1]$.
\citet{simchi2022simple,simchi2023stochastic} and their extensions~\citep{simchi2025simple,simchi2023regret} study stochastic MAB.
These works design specific bonus terms for UCB-type algorithms and upper-bound $\PP(\mathrm{Regret} > x)$, yielding nearly exponential decay.
Their analyses extend to instance-dependent bounds and provide lower bounds for the trade-off between the regret distribution and expected regret.
\citet{zhu2025adaptive} apply similar ideas to Thompson sampling, obtaining exponential decay for regret and error rates in best-arm identification.
However, these approaches typically incur additional logarithmic factors in expected regret and offer limited flexibility in balancing distributional and expected guarantees; moreover, their analyses are based on an independent framework disabling unified analysis.
\citet{khodadadian2025tail} extend related ideas to RL, but their bound appears loose, incurring both $\gap_{\min}^{-1}$ and $\sqrt{K}$ terms simultaneously (where $K$ is the number of episodes), as well as a particularly large $\Ocal(H^6)$ dependence on the horizon length~$H$.

\paragraph{Comparison with \citet{simchi2023regret,simchi2025simple}.}
Our framework and results generalize those of \citet{simchi2023regret,simchi2025simple}, most notably by extending the problem setting from MAB to episodic RL.
While our algorithm shares structural similarities with the bonus design in \citet{simchi2023regret} (and also \citet{lee2025minimax}), our regret analysis is substantially different.
Moreover, our theory accommodates arbitrary input parameter sequences, whereas the guarantees in \citet{simchi2023regret,simchi2025simple} are derived for specific parameter choices.
Although a recent preprint version of \citet{simchi2023regret} relaxes the constraint on $c_{2,k}$, the other parameter $c_{1,k}$ still remains less flexible.
Finally, in the MAB setting, our bounds sharpen the guarantees in these prior works.

\section{Preliminaries}

\subsection{Markov Decision Process}

We consider an episodic Markov decision (MDP) process $M = (\Scal, \Acal, P, r, H)$, where $\Scal$ is the state space, $\Acal$ is the action space, $P : \Scal \times \Acal \rightarrow \Delta(\Scal)$ is the transition probability, $r : \Scal \times \Acal \rightarrow \RR$ is the reward function, and $H$ is the horizon length in each episode.
We consider the tabular case, where $\Scal$ and $\Acal$ have finite cardinalities of $S$ and $A$, respectively.
The agent interacts with an MDP for iterations of episodes, where each episode consists of $H$ time steps.
At the beginning of the $k$-th episode, the environment chooses the initial state $s_1^k$, possibly adaptively.
For time steps $h = 1, \ldots, H$, the agent observes the current state $s_h^k$, takes an action $a_h^k$, and receives a stochastic reward of $R_h^k \in \RR$ with mean $r(s_h^k, a_h^k)$.
Then, the next state $s_{h+1}^k$ is sampled from $P(s_h^k, a_h^k)$.
\\
A policy $\pi =  \{\pi_h\}$ is a sequence of mappings $\pi_h : \Scal\rightarrow \Acal$ from the current state to an action.
We denote the set of all deterministic policies by $\Pi$.
For given policy $\pi$ and $h \in [H]$, we define the value function as $V_h^{\pi}(s) := \Expec_{\pi(\cdot \mid s_h = s)}[\sum_{j=h}^H r(s_j, a_j)]$, the expectation is taken over the trajectory starting from $s_h = s$ with the $j$-th action being $a_j= \pi_j(s_j)$.
Similarly, we define the action value function $Q_h^{\pi}(s, a):= \Expec_{\pi(\cdot \mid s_h = s, a_h = a)}[\sum_{j=h}^H r(s_j, a_j)]$.
The optimal value function is defined as $V_h^*(s) := \max_{\pi \in \Pi} V_h^{\pi}(s)$ and $Q_h^*(s, a) := \max_{\pi\in \Pi}Q_h^{\pi}(s, a)$.
The optimal policy $\pi^*$ is defined as the policy that satisfies $V_h^{\pi^*}(s) = V_h^*(s)$ for all $h \in [H]$ and $s \in \Scal$.
Given an algorithm $\Alg$ that chooses $\pi^1, \pi^2, \ldots, \pi^k, \ldots$ based on the prior observations, the cumulative regret of $\Alg$ in $M$ over $K$ episodes is defined as $\Reg_M^{\Alg}(K) := \sum_{k=1}^K \big(V_1^*(s_1^k) - V_1^{\pi^k}(s_1^k) \big)$.

\subsection{Multi-Armed Bandits}

We consider MAB instances as a special case of episodic MDPs with $H = S = 1$ and no transitions.
When focusing on the MAB setting, we denote the time steps and the horizon by $t$ and $T$, respectively.
The interaction is simplified to choosing an action $a_t \in \Acal$ (with $|\Acal| = A$)  and observing a reward $R_t$ with mean $r(a_t)$ for time steps $t = 1, 2, \ldots$.
We define the optimal action $a^* = \argmax_{a \in \Acal} r(a)$ as the action with the highest reward.
The cumulative regret of $\Alg$ in $M$ over $T$ time steps can be written as $\Reg_M^{\Alg}(T) := \sum_{t=1}^T(r(a^*) - r(a_t))$.

\subsection{Definitions and Notations}

For functions $\tilde{P} : \Scal \times \Acal \rightarrow \RR^\Scal$ and $V: \Scal \rightarrow \RR$, we denote the expectation of $V$ under $\tilde{P}(s, a)$ by $\tilde{P}V(s, a) := \sum_{s' \in \Scal} \tilde{P}(s' \mid s, a) V(s')$.
We denote the variance of $V$ under the true transition probability $P(s, a)$ by $\Var(V)(s, a):=\sum_{s' \in \Scal} P(s' \mid s, a)( V(s') - PV(s, a))^2$.
We define a filtration $\{\Fcal_h^k\}_{k, h}$ as $\Fcal_h^k:= \sigma(s_1^1, a_1^1, R_1^1, \ldots, s_h^k, a_h^k)$.
Note that $\Fcal_{H+1}^k = \Fcal_0^{k+1} := \sigma(s_1^1, a_1^1, R_1^1, \ldots, s_H^k, a_H^k, r_H^k, s_{H+1}^k)$.
In the MAB setting, the gap of an action is defined as $\gap(a) := r(a^*) - r(a)$.
In the RL setting, the gap of a state-action pair at time step $h$ is defined as $\gap_h(s, a) := V_h^*(s) - Q_h^*(s, a)$, where we denote $\gap(s, a) := \min_{h \in [H]} \gap_h(s, a)$.
In addition, we denote the minimum non-zero gap by $\gap_{\min} := \min_{(h, s, a) \in [H] \times \Scal \times \Acal, \gap_h(s, a) > 0} \gap_h(s, a)$.
For a natural number $N \in \NN$, let $[N]:=\{1, 2, \ldots, N\}$.
For two real numbers $a$ and $b$, we define $a \lor b$ as $\max\{ a, b\}$ and $a \land b$ as $\min\{a, b\}$.
We also write $(a)_+$ for $a \lor 0$.

\section{Distributional Regret}
\label{sec:distribution regret}

In this section, we define \emph{distributional regret}, which corresponds to the upper-quantile function (equivalently, the inverse complementary cumulative distribution function) of the regret.

\begin{definition}[Distributional regret]
    A \textbf{distributional regret} $\Reg_M^{\Alg}(K, \delta)$ is a deterministic real-valued function of an algorithm $\Alg$, an MDP $M$, the number of episodes $K \in \NN$, and a failure probability $\delta \in (0, 1]$, defined as $\Reg_M^{\Alg}(K, \delta) := \inf\{ x \in \RR \mid \PP( \Reg_M^\Alg(K) > x) \le \delta \}$.
\end{definition}

\paragraph{Distributional regret bound.}
Equivalently, for any $\delta \in (0,1]$, the regret satisfies $\Reg_M^\Alg(K) \le \Reg_M^{\Alg}(K,\delta)$ with probability at least $1-\delta$, and $\Reg_M^{\Alg}(K,\delta)$ is the smallest value with this property.
Our goal is to provide an explicit upper bound on $\Reg_M^{\Alg}(K,\delta)$ as a function of the same inputs; we refer to such an upper bound as a \emph{distributional regret bound}.

\paragraph{Generality of distributional regret.}
The $\delta$-dependence of a distributional regret bound captures the tail distribution of regret at any given level $\delta$.
For instance, light-tailed risk defined in \citet{simchi2025simple} translates to $\Reg_M^{\Alg}(K, \delta) = \mathrm{poly}\log (\frac{1}{\delta})$.
A distributional regret bound implies high-probability bounds for any failure probability.
It also implies an expected regret bound via the identity $\EE[ \Reg_M^\Alg(K) ] = \int_0^1 \Reg_M^{\Alg} (K, \delta) \, d \delta$.
This integration technique replaces $\log \frac{1}{\delta}$ factors by constant factors in the expected regret bound, which shaves off the $\log K$ factor that typically arises in high-probability-based approaches where $\delta$ is often set as $\delta = \frac{1}{K}$.
To the best of our knowledge, Corollary~\ref{cor:c_1 only 1} achieves the sharpest logarithmic factor in the worst-case expected regret bound for RL using this technique.

\section{Assumptions}
\label{sec:assumption}

In this section, we present the assumptions for our analysis.
They encompass the standard sub-Gaussian noise assumption from the MAB literature and the bounded reward assumption standard in RL.
The first assumption regularizes the scale of the value function.

\begin{assumption}[Boundedness]
\label{assm:boundedness}
    There exists a known value $\Vmax \ge 0$ such that for all $h \in [H]$, $s \in \Scal$, and policy $\pi \in \Pi$, we have $0 \le V_h^{\pi}(s) \le \Vmax$.
\end{assumption}

The second assumption is for the concentration of measure.
The standard assumptions in recent MAB and RL literature are slightly different.
The bounded reward assumption in the RL literature assumes $\sum_{h=1}^H R_h \in [0, \Vmax]$~\citep{zanette2019tighter} or $R_h \in [0, \Vmax]$~\citep{lee2025minimax}, which allows leveraging the reward variance via Bernstein's inequality, and simultaneously guarantees that the sum of the variances over a trajectory is at most $\Vmax^2$ without $H$ dependence, incentivizing the use of variances.
However, this property is difficult to capture with a uniform sub-Gaussian assumption, making the two frameworks seemingly incompatible.
Addressing this issue, we propose a unified setting that assumes bounded sub-exponential norms on the random variables $R_h^k + V_{h+1}^*(s_{h+1}^k)$, whose means are $Q_h^*(s_h^k, a_h^k)$.
We first provide the definition of sub-exponential random variables, and then formally present the assumption.

\begin{definition}[Sub-exponential random variable~\citep{wainwright2019high}]
    Let $X$ be a random variable, $\Fcal$ be a $\sigma$-algebra, and $\sigma, \alpha \ge 0$ be $\Fcal$-measurable random variables.
    $X$ is \textbf{$\Fcal$-conditionally $(\sigma, \alpha)$-sub-exponential} if $\EE[ \exp( \lambda X ) \mid \Fcal] \le \exp(\frac{\sigma^2 \lambda^2}{2} )$ holds almost surely for all $\lambda \in [-\frac{1}{\alpha}, \frac{1}{\alpha}]$.
    If $\alpha = 0$, we assume $[-\frac{1}{0}, \frac{1}{0}]:= \RR$, and we say $X$ is \textbf{$\Fcal$-conditionally $\sigma^2$-sub-Gaussian}.
\end{definition}

\begin{assumption}[Conditional sub-exponentiality]
\label{assm:subexponential}
    There exist a variance proxy function $\sigexp : [H] \times \Scal \times \Acal \rightarrow \RR_{\ge 0}$, and a constant $\Valpha$ unknown to the agent such that $R_h^k + V_{h+1}^*(s_{h+1}^k) - Q^*(s_h^k, a_h^k)$ is $\Fcal_h^k$-conditionally $(\sigexp(h, s_h^k, a_h^k), \Valpha)$-sub-exponential.
\end{assumption}

The sub-exponential norm of a bounded random variable is proportional to its variance when $\alpha$ is set to the range of the random variable (see Lemma~\ref{lma:bounded is subexponential}).
Hence, under the bounded reward assumption, $\sigexp^2(h, s, a)$ can be set as the variance of $R_h^k + V_{h+1}^*$ with $\Valpha$ proportional to $\Vmax$.
When the reward noise is independently $\sigma^2$-sub-Gaussian, $\sigexp^2(h, s, a)$ can be set as $\sigma^2 + 2 \Var(V_{h+1}^*)(s, a)$.
Furthermore, our assumption extends to sub-exponential reward noise, such as the exponential, chi-squared, and Poisson distributions.
Analogous to cumulative variance in RL, we define the sum of variance proxies.

\begin{definition}[Total variance proxy function]
    For $h \in [H]$, $s \in \Scal$, and $\pi \in \Pi$, the \textbf{total variance proxy function} is defined as $W_h^{\pi}(s) := \Expec_{\pi(\cdot \mid s_h = s)} [ \sum_{j=h}^H \frac{1}{2} \sigexp^2(j, s_j, a_j)]$.
\end{definition}
We further define additional notations related to this function.
We denote the maximum of the function by $\Qrange := \max_{h \in [H], s \in \Scal, \pi \in \Pi} W_h^{\pi}(s)$.
Since we also consider the range of $W_h^{\pi}$, we define the minimum of the function $\Wmin{h}{\pi} := \min_{s \in \Scal} W_h^{\pi}(s)$ for fixed $h \in [H]$ and $\pi \in \Pi$, the difference from the minimum $\Wdiff{h}{\pi}(s) := W_h^{\pi}(s) - \Wmin{h}{\pi}$, and the maximum range of the function $\Qdiff := \max_{h \in [H], s \in \Scal, \pi \in \Pi} \Wdiff{h}{\pi}(s)$.
Table~\ref{tab:assumption example} provides example bounds for $\sigexp^2(h, s, a)$, $\Valpha$, $\Qrange$, and $\Qdiff$ under the conventional settings.
Their derivation is presented in Appendix~\ref{appx:proof of tab 1}.
We additionally define a constant $\sigsub := \max_{h\in [H], (s, a) \in \Scal \times \Acal} 2 \sigexp(h, s, a) \lor \sqrt{2 \Valpha \Vmax}$ that serves as a threshold for $c_{2, k}$, which arises from Lemma~\ref{lma:time uniform Hoeffding for exponential}.
Note that we do not assume the agent knows these values.

\begin{table}[tb]
    \centering
    \caption{Examples of upper bounds on the values $\sigexp^2$, $\Valpha$, $\Qrange$, and $\Qdiff$.}
    \begin{tabular}{l | c c c c}
        \toprule
         Setting & $\sigexp^2(h, s, a)$ & $\Valpha$ & $\Qrange$ & $\Qdiff$ \\
         \midrule
         $0 \le R_h \le \Vmax$ & $2 \Var(R_h + V_{h+1}^*)(s, a)$ & $2 \Vmax$ & $ 2 \Vmax^2$ & $ 2\Vmax^2$\\
         \hline
         $R_h$ $\sigma^2$-sub-Gaussian & $\sigma^2 + 2 \Var(V_{h+1}^*)(s, a)$ & $\Vmax$ &  $ \sigma^2 H + \Vmax^2$ & $\Vmax^2$\\
         \hline
         $R_h$ $\sigma^2$-sub-Gaussian, MAB & $\sigma^2$ & $0$ & $\sigma^2$ & 0\\
         \bottomrule
    \end{tabular}    
    \label{tab:assumption example}
\end{table}

\section{Algorithm: \texttt{EQO+}}

We introduce $\AlgName$ algorithm with a flexible bonus term, taking a sequence $\{(c_{1, k}, c_{2, k})\}_{k=1}^\infty$ as input and using the bonus term as $\frac{c_{1, k}}{N^k(s, a)} \land \frac{c_{2, k}}{\sqrt{N^k(s, a)}}$, where $N^k(s, a)$ is the visit count of the state-action pair.
Algorithm~\ref{alg:eqo2} describes the specific procedure for RL.
Its simpler version for MAB is analogous, where Lines 4, 6, 7, and 11 are unnecessary.
We provide a refined description for MAB in Appendix~\ref{appx:bandits}.
$\AlgName$ is an extension of $\texttt{EQO}$ in \citet{lee2025minimax}, where their algorithm uses the $\frac{c_{1, k}}{N^k(s, a)}$ term only.
We recover $\texttt{EQO}$ by setting $c_{2, k}= \infty$, which disables the $\frac{1}{\sqrt{N}}$-bonus term.

\begin{algorithm}[ht!]
\caption{$\AlgName$ (Exploration via Quasi-Optimism Plus)}
\label{alg:eqo2}
\SetKwInOut{Input}{Input}
\Input{$\{ (c_{1, k} ,c_{2, k})\}_{k=1}^\infty$}

\For{$k = 1, 2, \ldots,$}{
    $N^k(s, a) \gets \sum_{i=1}^{k-1} \sum_{h=1}^H \ind\{ (s_h^i, a_h^i) = (s, a) \}$ for all $(s, a) \in \Scal \times \Acal$\;

    $\hat{r}^k(s, a) \gets \frac{1}{N^k(s, a)} \sum_{i=1}^{k-1} \sum_{h=1}^H R_h^i \ind\{ (s_h^i, a_h^i) = (s, a)\}$ for all $(s, a) \in \Scal \times \Acal$\;

    $\hat{P}^k(s' | s, a) \gets \frac{1}{N^k(s, a)} \sum_{i=1}^{k-1} \sum_{h=1}^H \ind\{(s_h^i, a_h^i, s_{h+1}^i) = (s, a, s') \}, \forall (s, a, s') \in \Scal \times \Acal \times \Scal$\;

    $b^k(s, a) \gets \frac{c_{1, k}}{ N^k(s, a)} \land \frac{c_{2, k} }{\sqrt{N^k(s, a)}}$ for all $(s, a) \in \Scal \times \Acal$\;

    $V_{H+1}^k(s) \gets 0$ for all $s \in \Scal$\;
    
    \For{$h = H, H-1, \ldots, 1$}{
        \ForEach{$(s, a) \in \Scal \times \Acal$}{
            
            $Q_h^k(s, a) \gets \begin{cases}
                \big( \big( \hat{r}^k(s, a) + \hat{P}^k V_{h+1}^k(s, a) \big)_+ + b^k(s, a) \big) \land \Vmax
                & \text{if } N^k(s, a) > 0 \\
                \Vmax & \text{if } N^k(s, a) = 0
            \end{cases}
            $\;
        }
        $V_h^k(s) \gets \max_{a \in \Acal} Q_h^k(s, a)$ for all $s \in \Scal$\;
        
        $\pi^k_h(s) \gets \argmax_{a \in \Acal} Q_h^k(s, a)$ for all $s \in \Scal$\;
    }
    
    Execute $\pi^k$ and obtain $(s_1^k, a_1^k, R_1^k, \ldots, s_H^k, a_H^k, R_H^k, s_{H+1}^k)$\;
}
\end{algorithm}

\paragraph{Generality of $\AlgName$ algorithm.}
$\AlgName$ recovers several existing algorithms through specific choices of the input parameters.
When 
$c_{1, k} = \tilde{\Ocal}\big( \Vmax (\frac{k}{SA} \log \frac{1}{\delta_0})^{1/2} \big)$
and $c_{2, k} = \infty$ for some specific $\delta_0 \in (0, 1]$, we recover the bonus term considered in \citet{lee2025minimax}.
When $c_{1, k} = \infty$ and 
$c_{2, k} = \tilde{\Ocal}\big( \Vmax (\log \frac{1}{\delta_0})^{1/2}\big)$, 
we obtain $\texttt{UCBVI-CH}$ in \citet{azar2017minimax}, whose confidence bound is also known as the Hoeffding-style bound.
When $c_{1, k} = \infty$ and $c_{2, k} = \Ocal(S + K^{\alpha})$ for some $\alpha \in [0, 1]$, we have the algorithm by \citet{khodadadian2025tail}.
In the MAB setting, assigning $c_{1, t} = \Ocal\big( ( \frac{t}{A})^{\alpha}\sqrt{\log A} \big)$ and $c_{2, t} = \sqrt{t^\beta}$ for some constants $0 < \beta \le \alpha < 1$ recovers the bonus term in \citet{simchi2023regret}.

\section{Main Results}

In this section, we present distributional regret bounds for $\AlgName$.
In Section~\ref{sec:main result bandits}, we first present key results for the MAB setting, which also illustrate the role of the input parameters.
Then, in Section~\ref{sec:main result RL}, we turn to general distributional regret bounds for the RL setting, which also apply to the MAB setting, and allow arbitrary input parameters.
In Section~\ref{sec:corollaries}, we consider several representative parameter choices and discuss their implications.

Throughout this section, the algorithm under consideration is $\Alg := \AlgName(\{(c_{1,k},c_{2,k})\}_{k=1}^\infty)$, where the values of $\{(c_{1,k},c_{2,k})\}_{k=1}^\infty$ are specified in each theorem.

\subsection{Distributional Regret Bounds for Multi-Armed Bandits}
\label{sec:main result bandits}
In this section, we provide distributional regret bounds for the MAB setting.
Let $\Bcal(\sigma)$ denote the set of MAB instances with $\sigma^2$-sub-Gaussian reward noise and $\Vmax = 1$.
We present two special cases of input parameters that provide insights into their roles, which carry over to the RL setting.
The first result shows how the parameter $c_{1,t}$ controls the minimax regret.
Extended theorems and proofs are provided in Appendix~\ref{appx:bandits}.

\begin{theorem}
\label{thm:bandit 1}
    Suppose $M \in \Bcal(\sigma)$.
    Set $c_{1, t} = c_1$ for a constant $c_1 > 0$ and $c_{2, t} = \infty$, meaning that we use bonus term $\frac{c_1}{N}$ only.
    Then, the distributional regret of $\AlgName$ is bounded as
    \begin{align*}
        \Reg_M^{\Alg}(T, \delta) \le \frac{\sigma^2 T \log \frac{4}{\delta}}{c_1} + \frac{3}{2} c_1 A + \sum_{a \in \Acal} \gap(a).
    \end{align*}
    Taking $c_{1, t} = \sigma\sqrt{T/A}$ yields $\Reg_M^{\Alg}(T, \delta) =\Ocal(\sigma \sqrt{AT}\log \frac{1}{\delta}) $ and $\Expec [ \Reg_M^{\Alg}(T) ] = \Ocal( \sigma \sqrt{AT})$.
\end{theorem}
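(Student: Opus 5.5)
The plan is to analyze the regret arm by arm, using the fact that every pull of a suboptimal arm must beat the index of $a^*$. I will not try to make the index of $a^*$ optimistic at every step. Instead I will use a ``quasi-optimism'' argument in the spirit of \citet{lee2025minimax}, with a single free parameter $\lambda>0$ that is fixed at the end, so that no $\log A$ or $\log T$ factors appear.

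\textbf{Step 1 (reduction to the last pull).} Write $U_t(a)=\hat r_t(a)+c_1/N_t(a)$, clipped at $\Vmax=1$, and $\Reg(T)=\sum_a N_T(a)\gap(a)$. Fix a suboptimal arm $a$ and let $t_a$ be its last pull, so that $N_{t_a}(a)=N_T(a)-1=:n_a$. If $n_a=0$, the arm contributes at most $\gap(a)$, which is absorbed into $\sum_a\gap(a)$. Otherwise $U_{t_a}(a)\ge U_{t_a}(a^*)$. Set $\xi_a$ to be the noise sum of the first $n_a$ pulls of $a$, and $\xi_{a^*}$ that of the first $N_{t_a}(a^*)$ pulls of $a^*$. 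Multiplying the index comparison by $n_a$ gives
\begin{align*}
n_a\gap(a)\le \xi_a + c_1 + n_a\big(r(a^*)-U_{t_a}(a^*)\big)_+ .
\end{align*}
The clipping at $1$ only helps here, since $r(a^*)\le 1$.

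\textbf{Step 2 (quasi-optimism for $a^*$).} The key concentration fact is the exponential supermartingale $\exp\big(\lambda\sum_{i\le n}\zeta_i-\lambda^2\sigma^2 n/2\big)$, where $\zeta_i$ is the noise of the $i$-th pull of $a^*$. With probability at least $1-\delta/2$, simultaneously for all $n$ by Ville's inequality,
\begin{align*}
\hat r_n(a^*)-r(a^*)\ge -\frac{\lambda\sigma^2}{2}-\frac{\log(2/\delta)}{\lambda n}.
\end{align*}
Choosing $\lambda=\log(4/\delta)/c_1$ makes the second term at most $c_1/n$, so it is cancelled by the bonus. Hence $r(a^*)-U_t(a^*)\le \lambda\sigma^2/2=\sigma^2\log(4/\delta)/(2c_1)$ at every $t$. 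Summed over arms with weights $n_a\le N_T(a)$, this contributes at most $\sigma^2T\log(4/\delta)/(2c_1)$ in total, which is the quasi-optimism payoff.

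\textbf{Step 3 (suboptimal-arm noise, the main obstacle).} It remains to bound $\sum_a\xi_a$ jointly, and a union bound over arms would cost $\log A$. The plan is to use that the product over arms of the per-arm exponential supermartingales $\exp\big(\lambda S_a(n)-\lambda^2\sigma^2 n/2\big)$ is itself a supermartingale in global time $t$. Its value at $T$ involves each arm's noise at its current count, so one Ville inequality gives, with probability at least $1-\delta/2$,
\begin{align*}
\sum_a \xi_a\le \frac{\lambda\sigma^2T}{2}+\frac{\log(2/\delta)}{\lambda}\le \frac{\sigma^2T\log(4/\delta)}{2c_1}+c_1 .
\end{align*}
The delicate point is that $\xi_a$ is evaluated at the count $n_a$ at the last pull rather than at $T$. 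I would handle this by defining the process only on pulls that are not the final one, or by using the running maximum and Ville for nonnegative supermartingales.

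Combining the three steps gives $\sigma^2T\log(4/\delta)/c_1$ from the two halves. The $c_1$ terms, namely $c_1$ per arm from the bonus plus the global slack, should collect to at most $\tfrac32c_1A$; in the worst case I would distribute the $\log(2/\delta)/\lambda=c_1/2$-type slack per arm. The last pulls contribute $\sum_a\gap(a)$. The total failure probability is $\delta$, and the bound holds for every $\delta$.

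The corollary then follows by taking $c_1=\sigma\sqrt{T/A}$. The expectation bound comes from $\EE[\Reg]=\int_0^1\Reg(T,\delta)\,d\delta$ together with $\int_0^1\log(4/\delta)\,d\delta=O(1)$, and from $\sum_a\gap(a)\le A\le\sqrt{AT}$ for $T\ge A$.
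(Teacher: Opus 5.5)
Your Steps 1 and 2 match the paper's argument: the reduction to the last pull $t_a$, and the time-uniform quasi-optimism event for $a^*$ with $\lambda=\log(4/\delta)/c_1$. Step 3, however, has a genuine gap, and it is exactly the point the paper has to work around.

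Write $\epsilon_t:=R_t-r(a_t)$. Then $\sum_a\xi_a$ is the suboptimal-arm noise with the \emph{last pull of each arm removed}. Whether the pull at time $t$ is the last one is decided by all future index comparisons, and these depend on $R_t$ itself. A very negative $\epsilon_t$ lowers $\hat r(a_t)$ and makes it more likely that $a_t$ is never played again. So the removed noises are biased downward, and $\sum_a\xi_a$ is biased upward.

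Neither of your suggested fixes handles this.
\begin{itemize}
\item Restricting the process to non-final pulls uses increments $\mathbf{1}\{t\neq t_{a_t}\}\epsilon_t$. Their weight is not $\mathcal{F}_{t-1}$-measurable (not even $\mathcal{F}_t$-measurable), so the product is not a supermartingale and Ville does not apply.
\item The running-maximum fix fails as well. With $M_a(n)=\exp(\lambda S_a(n)-\lambda^2\sigma^2 n/2)$, Ville controls $\sup_t\prod_a M_a(N_t(a))$ along global time. But the vector $(n_a)_a$ is never the count vector at a single time: at $t_a$, every arm $b$ with $t_b<t_a$ already sits at $n_b+1$. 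The ratio $M_b(n_b+1)/M_b(n_b)=\exp(\lambda\epsilon_{t_b}-\lambda^2\sigma^2/2)$ can be arbitrarily small, so the global supremum does not dominate the diagonal product you need.
\end{itemize}
Per-arm time-uniform bounds would handle this diagonal evaluation, but only with a union bound over arms. That produces an additive $A\log(A/\delta)/\lambda$, i.e.\ a $c_1A\log A$-type term, which is exactly what the theorem is designed to avoid.

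The paper instead splits $\sum_{a\neq a^*}\xi_a=\sum_{t\le T}\mathbf{1}\{a_t\neq a^*\}\epsilon_t-\sum_{a\neq a^*}\epsilon_{t_a}$.
\begin{itemize}
\item The first sum is a genuine martingale. The paper uses Hoeffding at fixed $T$; your global product supermartingale would also work here.
\item The second sum is over a data-dependent set of at most $A-1$ times. The paper controls it by a union bound, over all $2^T$ deterministic subsets $\mathcal{T}\subseteq[T]$, of fixed-set Hoeffding bounds. This gives $\sum_{t\in\mathcal{T}}(r(a_t)-R_t)\le\sigma\sqrt{|\mathcal{T}|\,T\log(4/\delta)}$ for all $\mathcal{T}$ simultaneously.
\end{itemize}
The $2^T$ factor is affordable because the result is only a term of order $\sigma\sqrt{AT\log(4/\delta)}$. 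AM--GM converts this into $\frac{\sigma^2T\log(4/\delta)}{2c_1}+\frac{c_1A}{2}$ (with the paper's constants). That conversion, not a Ville slack spread over arms, is where the second half of the coefficient $\frac{\sigma^2T\log(4/\delta)}{c_1}$ and the extra $\frac{c_1A}{2}$ in the statement come from.
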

\paragraph{Discussion of Theorem~\ref{thm:bandit 1}.}
The distributional regret bound in Theorem~\ref{thm:bandit 1} shows that $c_1$ balances the $(T \log \tfrac{1}{\delta})/c_1$ term and the $c_1 A$ term.
To reduce the coefficient of the $\log \tfrac{1}{\delta}$ term (i.e., to obtain stronger distributional guarantees), $c_1$ must increase.
However, a larger $c_1$ increases the total regret through the $c_1 A$ term, and hence increases the expected regret bound linearly in $c_1$.
This inverse trade-off between the coefficient of the $\log \tfrac{1}{\delta}$ factor and the expected regret is optimal for all choices of $c_1 = \Omega(\sqrt{T/A})$ by Theorem~17.1 of \citet{lattimore2020bandit}.
\\
In particular, setting $c_1 = \sigma\sqrt{T/A}$ yields a distributional bound of $\Ocal\big(\sqrt{AT}\log \tfrac{1}{\delta}\big)$ and an expected bound of $\Ocal(\sqrt{AT})$, without any logarithmic factors in $A$ or $T$.
The expected regret bound is minimax-optimal up to only constant factors, and the distributional bound is optimal given this expected bound.
To the best of our knowledge, these rates are the tightest possible for MAB (up to constant factors).
The existence of an algorithm achieving them was conjectured based on the lower bound in \citet[Section~17.1]{lattimore2020bandit} but had not been established.
To the best of our knowledge, this is the first result to attain these bounds, thereby confirming that conjecture.\\

The next theorem shows how $\frac{c_{2, t}}{\sqrt{N}}$-type bonus term achieves an optimal gap-dependent regret bound when used together with a proper confidence bound.
To specify $c_{2, t}$, we first define the following concept.

\begin{definition}[Time-uniform UCB]
    A function $u(t, \delta) : \RR_{\ge 0} \times (0, 1] \rightarrow \RR_{\ge 0}$ induces a \textbf{time-uniform upper confidence bound (UCB)} if $\PP \big( \exists t  \in \NN, \exists a \in \Acal : | \hat{r}^t(a) - r(a) | > \frac{u(t, \delta)}{\sqrt{N^t(a)}} \big) \le \delta$ always holds.
\end{definition}
While many different functions can induce time-uniform UCBs, we refer to those with asymptotically optimal constants and logarithmic factors as \emph{tight time-uniform UCBs}; we provide a formal definition and an example in Appendix~\ref{appx:proof of bandit 2}.
The following theorem considers $c_{2,t} = u(t,\delta_t)$ for a sequence $\{\delta_t\}_{t=1}^\infty$.

\begin{theorem}
\label{thm:bandit 2}
    Suppose $M \in \Bcal(\sigma)$ and $u(t, \delta)$ induces a time-uniform UCB.
    For a decreasing sequence $\{\delta_t\}_{t=1}^\infty$, set $c_{2, t} = u(t, \delta_t)$ and $c_{1, t} = \infty$, meaning that the bonus term is $\frac{u(t, \delta_t)}{\sqrt{N}}$.
    Then,
    \begin{align*}
        \Reg_M^{\Alg}(T, \delta) \le \tau_2(\delta) \land T + \sum_{a \in \Acal}\gap(a) + \sum_{a \in \Acal, \gap(a) \ne 0} \frac{(c_{2, T} + u(t, \delta))^2}{\gap(a)}
        \, ,
    \end{align*}
    where $\tau_2(\delta) := \max\{ t \in \NN : \delta_t > \delta\}$.
    If $u(t, \delta)$ is tight and $\delta_t$ is appropriately decreasing, e.g., $\delta_t = (t \log t)^{-1}$, we have $\limsup_{T \rightarrow \infty} \frac{\Expec[\Reg_M^{\Alg}(T)]}{\log T} \le \sum_{a \in \Acal, \gap(a) \ne 0}\frac{2 \sigma^2 }{\gap(a)}$.
\end{theorem}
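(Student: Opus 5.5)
The plan is to work on a single good event built from the time-uniform UCB at level $\delta$, and to split time at the threshold $\tau_2(\delta)$. Fix $\delta\in(0,1]$ and let
\[
E_\delta := \Big\{ \forall t\in\NN,\ \forall a\in\Acal:\ |\hat r^t(a)-r(a)| \le \frac{u(t,\delta)}{\sqrt{N^t(a)}} \Big\}.
\]
By the definition of a time-uniform UCB, $\PP(E_\delta^c)\le\delta$. It therefore suffices to show that the stated bound holds deterministically on $E_\delta$.

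\textbf{Rounds before the threshold.} For $t\le\tau_2(\delta)$ we charge the per-round regret trivially. Since $\Vmax=1$, each such round costs at most $1$, so these rounds contribute at most $\tau_2(\delta)\land T$ in total. The term $\sum_a\gap(a)$ covers the first pull of each arm, when $N^t(a)=0$ and $Q=\Vmax$.

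\textbf{Rounds after the threshold.} For $t>\tau_2(\delta)$ we have $\delta_t\le\delta$. I would assume, as holds for any reasonable $u$, that $u(t,\cdot)$ is nonincreasing in its second argument; then $c_{2,t}=u(t,\delta_t)\ge u(t,\delta)$. On $E_\delta$ this gives optimism for $a^*$:
\[
Q^t(a^*) \ge \Big(\big(\hat r^t(a^*)+\tfrac{u(t,\delta)}{\sqrt{N^t(a^*)}}\big)\land 1\Big) \ge r(a^*).
\]
Here the clipping at $0$ and at $\Vmax=1$ is harmless because $r(a^*)\in[0,1]$. Suppose $a_t=a$ is played with $\gap(a)>0$ and $N^t(a)\ge1$. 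Then
\[
r(a^*) \le Q^t(a) \le r(a) + \frac{u(t,\delta)+c_{2,t}}{\sqrt{N^t(a)}},
\]
so $\gap(a)\sqrt{N^t(a)} \le u(t,\delta)+c_{2,T}$, using that $c_{2,t}$ is nondecreasing in $t$. I would state this monotonicity explicitly: it follows because $\delta_t$ is decreasing and $u$ is increasing in $t$ and nonincreasing in $\delta$. The conclusion is that arm $a$ is pulled after the threshold only while $N^t(a)\le (c_{2,T}+u(t,\delta))^2/\gap(a)^2$. Each such pull costs $\gap(a)$, giving at most $(c_{2,T}+u(t,\delta))^2/\gap(a)$ per arm. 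In the statement, $u(t,\delta)$ is read with $t=T$; this uses monotonicity of $u$ in $t$ to replace $u(t,\delta)$ by $u(T,\delta)$. Summing the two phases gives the distributional bound.

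\textbf{Expected regret.} I would integrate the distributional bound, $\EE[\Reg]\le\int_0^1 \Reg(T,\delta)\,d\delta$. For the $\tau_2$ term, $\int_0^1 \tau_2(\delta)\land T\,d\delta = \sum_{t\le T}\PP_\delta(\delta<\delta_t)=\sum_{t\le T}\delta_t$. With $\delta_t=(t\log t)^{-1}$ this is $O(\log\log T)=o(\log T)$. For the gap terms, tightness of $u$ should give $u(T,\delta)^2 \le 2\sigma^2(\log\frac1\delta + (1+o(1))\log\log T + O(1))$, so that
\[
(c_{2,T}+u(T,\delta))^2 \le (1+\epsilon)\,c_{2,T}^2 + (1+\epsilon^{-1})\,u(T,\delta)^2 .
\]
We have $c_{2,T}^2=2\sigma^2\log T\,(1+o(1))$, and integrating $u(T,\delta)^2$ over $\delta$ gives $O(\sigma^2\log\log T)$. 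Dividing by $\log T$, taking $\limsup$, and then letting $\epsilon\to0$ yields the constant $2\sigma^2/\gap(a)$.

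\textbf{Main obstacle.} I expect the delicate step to be this last one: the cross term in $(c_{2,T}+u)^2$ must not produce a factor $4$. The plan is the $\epsilon$-splitting above, relying on the fact that $\int u(T,\delta)^2\,d\delta$ grows only like $\log\log T$, which is what the definition of a tight time-uniform UCB in Appendix~\ref{appx:proof of bandit 2} should guarantee.
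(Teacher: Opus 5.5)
Your proposal is correct and follows essentially the paper's own argument. Both use the good event from the time-uniform UCB at level $\delta$, optimism of $a^*$ once $\delta_t \le \delta$, counting of pulls of each suboptimal arm, and integration over $\delta$; the paper splits by whether an arm's last pull precedes $\tau_2(\delta)$ rather than splitting rounds, which gives the same bound. For the cross term, the paper uses Cauchy--Schwarz to get $\int_0^1 (c_{2,T}+u(T,\delta))^2\,d\delta \le (c_{2,T}+u(T,e^{-1}))^2$ and notes that $u(T,e^{-1}) = o(\sqrt{\log T})$, which does the same job as your $\epsilon$-split. Tightness only guarantees $\int_0^1 u(T,\delta)^2\,d\delta = \sigma^2(f(T)+g(T)) = o(\log T)$, not $O(\log\log T)$, but $o(\log T)$ is all either route needs.
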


\paragraph{Discussion of Theorem~\ref{thm:bandit 2}.}
Theorem~\ref{thm:bandit 2} shows how $c_{2,t}$ governs the gap-dependent distributional regret, exhibiting behavior different from that of $c_{1,t}$.
When $c_{2,t} < u(t,\delta)$ (i.e., for the first $\tau_2(\delta)$ time steps), 
no non-trivial guarantee is provided.
After $c_{2, t}$ passes the threshold, the bound increases with $\sum_{a} c_{2, t}^2 / \gap(a)$, showing a more discrete behavior than the case of $c_{1, t}$.
\\
The sequence $\{\delta_t\}_t$ can be understood as an intermediate parameter that balances $c_{2, t}$ and $\tau_2(\delta)$.
Slowly diminishing $\{\delta_t\}_t$ decreases $c_{2, t}$ but increases $\tau_2(\delta)$, and vice versa.
For instance, setting $\delta_t = (t \log t)^{-1}$ yields the tightest constant factor in the asymptotic regret, but results in a super-linear dependence on $\frac{1}{\delta}$ as $\tau_2(\delta) \approx \frac{1}{\delta} \log \frac{1}{\delta}$.
If we accelerate the decay as $\delta_t = t^{-p}$ for some $p > 1$, then we obtain smaller $\tau_2(\delta) = (1/\delta)^{\frac{1}{p}}$, but the total regret scales linearly in $p$.
Considering $\delta_t = \exp(-t^\beta)$ for some $\beta > 0$ achieves a poly-logarithmic dependence on $\delta$ as $\tau_2(\delta) = \Ocal((\log(1/\delta))^{\frac{1}{\beta}})$ but the total regret scales with $T^\beta$.
This is consistent with the result of \citet{simchi2023regret}, where they show that this order of trade-off is optimal.
\begin{remark}
    Theorem~\ref{thm:bandit 2} states that a UCB algorithm with any tight time-uniform UCB and properly decreasing failure probabilities achieves the asymptotically optimal regret bound of \citet{lai1985asymptotically}, which may be of independent interest.
    Refer to Appendix~\ref{appx:proof of bandit 2} for the exact conditions.
\end{remark}

\subsection{Distributional Regret Bounds for Reinforcement Learning}
\label{sec:main result RL}

In this section, we provide distributional regret bounds for $\AlgName$ with arbitrary input in the RL setting, which naturally applies also to MAB due to our unified framework.
The proofs of the theorems are provided in Appendix~\ref{appx:rl}.
Let $\ell(\delta):= \log \frac{c HSA\log k}{\delta}$ denote certain logarithmic factor with an absolute constant $c$, where the dependence on $\delta$ is emphasized.
We define a quantity $\kap(\delta)$ that is analogous to $\tau_2(\delta)$ in Theorem~\ref{thm:bandit 2}.
For $\delta \in (0, 1]$, let
\begin{align*}
    \textstyle
    \kap(\delta) := \max \Big\{ k \in \NN : c_{1, k} <  (2 \sqrt{13 \Qdiff} \lor 2 \Valpha )\ell(\delta) \text{ or } 
    \textstyle
    c_{2, k} < \left(\sigsub + \frac{6\Qdiff \ell(\delta)}{c_{1, k}} \right) \sqrt{ \ell(\delta)} \Big\}
    \, .
\end{align*}
$\kap(\delta)$ is the number of episodes where the parameters $c_{1, k}$ and $c_{2, k}$ are too small to provide guarantees.
Such a term is unavoidable since for any fixed input parameters and $K$, we can take $\delta$ small enough to invalidate the analysis; note that the thresholds in the definition go to infinity as $\delta \rightarrow 0$.

We provide gap-independent and gap-dependent bounds for $\AlgName$ under arbitrary positive, non-decreasing input sequences $\{c_{1, k}\}_{k=1}^\infty$ and $\{c_{2, k}\}_{k=1}^\infty$.

\begin{theorem}
\label{thm:gap-independent bound}
    For any $K \in \NN$ and $\delta \in (0, 1]$, the distributional regret of $\Alg$ satisfies
    \begin{align*}
        \Reg_{\Mcal}^{\Alg} (K, \delta) & \le  \Vmax (\kap(\delta) \land K) + ( 16 c_{1, K} SA \log KH)) \land \left( 16 \sqrt{2} c_{2, K} \sqrt{HSAK} \right)
        \\
        &\qquad + \sum_{k=\kap(\delta) + 1}^K \frac{18 \Qrange \ell(\delta)}{c_{1, k}} + 72 \Vmax S^2 A \ell(\delta) \log 2KH
        \, .
    \end{align*}
\end{theorem}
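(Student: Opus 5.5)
We follow the quasi-optimism route of \citet{lee2025minimax}, extended to carry the $\delta$-dependence uniformly and to handle the $c_{2,k}/\sqrt{N}$ branch of the bonus. First, we fix a single good event $\Ecal(\delta)$ of probability at least $1-\delta$. On it, time-uniform Bernstein/Freedman-type concentration (Lemma~\ref{lma:time uniform Hoeffding for exponential} and its sub-exponential variants under Assumption~\ref{assm:subexponential}) holds simultaneously for all $(s,a)$, all visit counts, and all $k$. The factor $\log k$ inside $\ell(\delta)$ pays for the union over a geometric grid of times. The same event also controls the martingale sums appearing in the regret decomposition. Because the event depends only on $\delta$ and not on the parameters, the resulting bound is a valid distributional regret bound for every $\delta$ at once. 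For episodes $k\le\kap(\delta)$ we bound the per-episode regret trivially by $\Vmax$, which produces the term $\Vmax(\kap(\delta)\land K)$. All further work concerns $k>\kap(\delta)$, where both $c_{1,k}$ and $c_{2,k}$ exceed their thresholds; since the sequences are non-decreasing, they stay above threshold thereafter.

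The second step is the key \emph{quasi-optimism} inequality. For $k>\kap(\delta)$ we show on $\Ecal(\delta)$ that
\[
V_1^k(s_1^k)\ \ge\ V_1^*(s_1^k) - \frac{c\,\Qrange\ell(\delta)}{c_{1,k}}\quad\text{(up to constants)}.
\]
This is not full optimism. We write $Q_h^k-Q_h^*$ recursively, lower-bound the empirical error of $\hat r+\hat PV_{h+1}^*$ by $-\sqrt{2\sigexp^2\ell/N}-\Valpha\ell/N$, and absorb it by the bonus.
\begin{itemize}
\item In the $c_1/N$ branch, AM--GM gives $\sqrt{\sigexp^2\ell/N}\le c_1/(2N)+\sigexp^2\ell/(2c_1)$. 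The leftover $\sigexp^2\ell/(2c_1)$ summed along the $\pi^*$ trajectory gives $W_h^{\pi^*}\ell/c_1\le\Qrange\ell/c_1$.
\item The condition $c_1\ge 2\Valpha\ell$ absorbs the $\Valpha\ell/N$ term.
\item In the $c_2/\sqrt N$ branch, $c_2\ge\sigsub\sqrt\ell$ absorbs the noise directly.
\end{itemize}
The obstacle is that the per-state variance proxy is random along trajectories and the $\min$ switches between branches. This is where $\Qdiff$ and the extra $6\Qdiff\ell/c_{1,k}$ in the $c_2$ threshold enter: we must control the deviation of $\sum_h\sigexp^2$ along the realized path from its expectation $W^{\pi}$ (a martingale with increments bounded by $\Qdiff$). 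We expect this coupling, plus the $\hat P V^k_{h+1}$ versus $\hat P V^*_{h+1}$ correction terms (handled via an $S$-dependent lower-order bound, giving $S^2A\ell$), to be the hardest part.

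Third, we upper-bound $V_1^k-V_1^{\pi^k}$ by the usual UCBVI decomposition: bonus terms $\sum_h b^k(s_h^k,a_h^k)$, concentration terms, and a martingale. Summing the bonuses over $k$ via pigeonhole gives two options. The $c_1/N$ branch sums to at most $c_{1,K}SA\log(KH)$ times a constant; the $c_2/\sqrt N$ branch sums to at most $c_{2,K}\sqrt{HSAK}$ times a constant. Their minimum appears because the bonus is a pointwise minimum and both sums dominate it. The transition-estimation cross terms $(\hat P^k-P)(V^k_{h+1}-V^*_{h+1})$ are bounded through an empirical Bernstein argument with a self-bounding/clipping trick. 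This yields the lower-order $72\Vmax S^2A\ell\log(2KH)$ term, together with any leftover variance term, which is again absorbed into $\Qrange\ell/c_{1,k}$ per episode. Adding the quasi-optimism slack $18\Qrange\ell(\delta)/c_{1,k}$ for each $k>\kap(\delta)$ completes the bound.
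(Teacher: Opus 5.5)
Your overall skeleton (warm-up, quasi-optimism, upper bound, two pigeonhole sums) matches the paper. Your one-step absorption (AM--GM against $c_{1,k}/N$, clipped Hoeffding against $c_{2,k}/\sqrt N$) is equivalent to the paper's $\lambda_{\iota_k}$-grid. The gap is the propagation step of quasi-optimism: you defer it, and the mechanism you sketch would not give the stated result.

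With $a^*=\pi_h^*(s)$, and ignoring clipping, the recursion is $V_h^*(s)-V_h^k(s)\le(Q_h^*-\hat r^k-\hat P^kV_{h+1}^*)(s,a^*)+\hat P^k(V_{h+1}^*-V_{h+1}^k)(s,a^*)-b^k(s,a^*)$. Underestimation therefore propagates through $\hat P^k$, not $P$, and there is no realized trajectory in this step. Summing the leftover $\sigexp^2\ell/(2c_{1,k})$ ``along the $\pi^*$ trajectory'' accumulates variance proxies under the empirical model. That sum can be of order $H\max\sigexp^2$ rather than $\Qrange$, because the telescoping behind $\Qrange\le 2\Vmax^2$ needs the true $P$.

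If you instead rewrite through $P$, you must upper-bound $(\hat P^k-P)(V_{h+1}^*-V_{h+1}^k)(s,a^*)$. An $S$-dependent bound leaves $\Vmax S\ell/N^k(s,a^*)$ at optimal pairs the agent need not visit. That term cannot be pigeonholed into the $S^2A\ell$ term, and absorbing it into the bonus would need $c_{1,k}\gtrsim\Vmax S\ell$, which is not the threshold in $\kap(\delta)$.

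The missing idea is to induct on the potential bound $V_h^*-V_h^k\le\lambda_{\iota_k}\bigl(2W_h^{\pi^*}-\frac{1}{2\Qdiff}(\Wdiff{h}{\pi^*})^2\bigr)$. The hypothesis passes pointwise through the nonnegative weights of $\hat P^k$. Then $(\hat P^k-P)$ is applied to this fixed, MDP-determined function (event $\Ecal_2$, a union over $(h,s,a)$ only, so no $S$ factor). The Bernstein variance term $\frac{1}{2\Qdiff}\Var(\Wdiff{h+1}{\pi^*})(s,a^*)$ is cancelled by the quadratic part, via Lemma~\ref{lma:variance decomposition} and $W_h^{\pi^*}(s)=\frac12\sigexp^2(h,s,a^*)+PW_{h+1}^{\pi^*}(s,a^*)$. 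The remaining concentration term, multiplied by $\lambda_{\iota_k}$, must also be covered by the bonus. This is exactly where the $13\lambda_{\iota_k}\Qdiff\ell_1(1,\delta)$ and $\frac32\lambda_{\iota_k}\Qdiff\le 6\Qdiff\ell/c_{1,k}$ parts of the thresholds come from.

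Your step~3 has a second gap. The ``usual UCBVI decomposition'' leaves a martingale whose Azuma--Hoeffding bound is of order $\Vmax\sqrt{HK\log(1/\delta)}$. No such term appears in the statement, and you do not say how your event controls the martingale more tightly.

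The paper bounds $V_1^k-V_1^{\pi^k}$ by the clipped expected sum $U_1^k(s_1^k)$ of $2\beta^k$ under $\pi^k$ in the true MDP. It then converts $\sum_k U_1^k(s_1^k)$ to realized visit counts with Lemma~\ref{lma:sum of J with eta}. That lemma applies Freedman's inequality to the clipped recursion and to its square, with the conditional variances self-bounded by $c\sum_h X_h^k$. The cost is only a factor $2$ and an additive $O(\Vmax SA\log(H/\delta))$. The stopping time $\eta^k$ is what makes the pigeonhole valid while $N^k$ is frozen within an episode.

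As written, your plan would give either a different $\kap(\delta)$ or an $H$-inflated variance term in place of $\Qrange$, together with an extra $\sqrt{K}$ term.
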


\begin{theorem}
\label{thm:gap-dependent bound}
    Define 
    $\kapgap(\gap_{\min}, \delta) := \max\{ k \in \NN : c_{1, k} < \frac{36 \Qrange \ell(\delta)}{\gap_{\min}} \} $.
    For any $K \in \NN$ and $\delta \in (0, 1]$, the distributional regret of $\Alg$ satisfies
    \begin{align*}
        \Reg_{\Mcal}^{\Alg} (K, \delta) & \le \Vmax (\kap(\delta) \land K) + \!\!\!\! \sum_{(s, a) \in \Scal \times \Acal}  \!\!\!\left( \frac{4096 c_{2, K}^2}{\gap(s, a) \!\lor\! \frac{\gap_{\min}}{ H}} \right)\! \land \!\left(64 c_{1, K} \log \frac{64 c_{1, K} }{\gap(s, a)\! \lor\! \frac{\gap_{\min}}{ H}} \right)
        \\
        & \qquad + \sum_{k = \kap(\delta)+1}^{\kapgap(\gap_{\min}, \delta) \land K} \frac{144 \Qrange \ell(\delta)}{c_{1, k}} + 288 \Vmax S^2 A \ell(\delta) \log 2KH
    \end{align*}
\end{theorem}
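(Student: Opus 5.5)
We will prove Theorem~\ref{thm:gap-dependent bound} by running the same decomposition that underlies Theorem~\ref{thm:gap-independent bound}, and then replacing the per-episode regret with a gap-clipped version before summing the bonuses. Fix $\delta$ and let $\Ecal(\delta)$ be the good event on which a time-uniform Bernstein/sub-exponential concentration holds for $\hat r^k + \hat P^k V_{h+1}^*$ around $Q_h^*$. We build it from Lemma~\ref{lma:time uniform Hoeffding for exponential} together with a peeling argument over $\log k$, which produces $\ell(\delta)$. We also include a martingale concentration for the variance-proxy sums $\sum_h \sigexp^2$ and an empirical-transition control yielding the lower-order $S^2A\ell(\delta)\log 2KH$ term. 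Episodes $k\le\kap(\delta)$ are charged $\Vmax$ each, giving $\Vmax(\kap(\delta)\land K)$. For $k>\kap(\delta)$ the thresholds in the definition of $\kap$ hold, and we will invoke the quasi-optimism property of the gap-independent analysis. This property is not $V_1^k\ge V_1^*$, but rather
\[
V_1^k(s_1^k) \;\ge\; V_1^*(s_1^k) - \frac{c\,\Qrange\ell(\delta)}{c_{1,k}}.
\]
The $1/N$ bonus cancels the variance part of the Bernstein deviation through AM--GM, $\sqrt{\sigma^2\ell/N}\le \frac{c_1}{2N}+\frac{\sigma^2\ell}{2c_1}$, and the $\sigma^2/c_1$ residuals summed along the trajectory give $W_1^{\pi}\le\Qrange$. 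The $c_{2,k}/\sqrt N$ cap is valid because $c_{2,k}$ exceeds $(\sigsub+6\Qdiff\ell/c_{1,k})\sqrt{\ell}$.

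The gap-dependent step proceeds as follows. We write the regret of episode $k$ as $\EE_{\pi^k}\big[\sum_h \gap_h(s_h,a_h)\big]$. Following the standard clipping technique of \citet{simchowitz2019non}, we bound $V_1^*-V_1^{\pi^k}$ by a surplus decomposition: the surpluses $E_h^k(s,a) = Q_h^k - r - PV_{h+1}^k$ are at most $2b^k$ plus transition and lower-order terms on $\Ecal(\delta)$. Each surplus is then clipped at $\frac{1}{4}(\gap(s,a)\lor\frac{\gap_{\min}}{H})$, at a constant-factor cost. The one exception is the additive quasi-optimism deficit $c\Qrange\ell/c_{1,k}$. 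This deficit can only be absorbed when it is below $\gap_{\min}/4$, which is exactly the condition $c_{1,k}\ge 36\Qrange\ell/\gap_{\min}$. Hence the sum $\sum_{k=\kap+1}^{\kapgap\land K}144\Qrange\ell/c_{1,k}$ appears, and for $k>\kapgap$ the deficit is absorbed into the clipped gaps. After clipping, summing over visits to $(s,a)$ gives $\sum_{n}\mathrm{clip}_{g}(2b(n))$ with $g=\gap(s,a)\lor\gap_{\min}/H$ and $b(n)=\frac{c_{1,K}}{n}\land\frac{c_{2,K}}{\sqrt n}$, using that the sequences are nondecreasing. For the $\sqrt n$ branch, $b$ exceeds $g/4$ only while $n\lesssim c_{2,K}^2/g^2$, and the sum is $\lesssim c_{2,K}^2/g$. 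For the $1/n$ branch, $b$ exceeds $g/4$ only while $n\lesssim c_{1,K}/g$, and the sum is $\lesssim c_{1,K}\log(c_{1,K}/g)$. The minimum of the two gives the displayed term with constants $4096$ and $64$.

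The main obstacle is making the clipping argument compatible with quasi-optimism rather than genuine optimism. The lower-order transition errors $(\hat P^k-P)(V_{h+1}^k-V_{h+1}^*)$ must also be controlled: we handle them as in the gap-independent proof via $\sqrt{S\ell/N}$ and AM--GM, which yields the $S^2A\ell\log 2KH$ term. We also have to pass from visit-count sums to expectations over trajectories, using a multiplicative martingale (Freedman) bound with a factor-2 blowup, which explains why the constants here are exactly four times those of Theorem~\ref{thm:gap-independent bound}. We would first try to prove a "half-clipped" inequality
\[
\gap_h(s,a)\le 2\,\EE\Big[\sum_{j\ge h}\mathrm{clip}\big(E_j\big)\Big] + 2\,\frac{c\,\Qrange\ell}{c_{1,k}},
\]
and then show that the final additive term vanishes once $k>\kapgap$.
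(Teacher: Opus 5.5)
Your ingredients are the right ones: the warm-up charge $\Vmax(\kap(\delta)\land K)$, quasi-optimism, clipped bonus sums and the Freedman-type conversion. The gap is in the central step, where the clipping and the quasi-optimism deficit are localized.

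\textbf{Localization of the clipping and the deficit.} Your half-clipped inequality puts $\gap_h(s,a)$ on the left and clips each later surplus $E_j$ at $\frac14(\gap(s_j,a_j)\lor\frac{\gap_{\min}}{H})$. Removing the clip costs $\EE_{\pi^k}\big[\sum_{j\ge h}\frac14(\gap(s_j,a_j)\lor\frac{\gap_{\min}}{H})\big]\le\frac14\big(V_h^*(s)-V_h^{\pi^k}(s)\big)+\frac{\gap_{\min}}{4}$. The value difference $V_h^*-V_h^{\pi^k}$ includes all future suboptimal actions, so it can be far larger than $\gap_h(s,a)$ and cannot be absorbed into the left-hand side.

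Separately, writing the regret as $\EE_{\pi^k}[\sum_h\gap_h(s_h,a_h)]$ and applying the per-step bound at every suboptimal step counts each clipped surplus, and the deficit $c\Qrange\ell/c_{1,k}$, up to $H$ times per episode. As described, your plan yields a Simchowitz--Jamieson-type bound, with thresholds of order $\gap/H$ or an extra factor $H$. It does not yield the displayed bound with $\gap(s,a)\lor\frac{\gap_{\min}}{H}$ and a single $144\Qrange\ell/c_{1,k}$ per episode.

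The missing idea, which the paper takes from \citet{dann2021beyond}, is to localize at the stopping time $B^k$ of the first suboptimal action. One has $V_1^*(s_1^k)-V_1^{\pi^k}(s_1^k)=\EE_{\pi^k}\big[V_{B^k}^*(s_{B^k}^k)-V_{B^k}^{\pi^k}(s_{B^k}^k)\big]$. At $B$, the quantity on the left is a value difference, and it is at least $\gap_{B}(s_B,a_B)\ge\vgap\ge\gap_{\min}$.

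The paper then splits on the event $E^k$ that this value difference exceeds $36\Qrange\ell_1(\iota_k,\delta)/c_{1,k}$.
\begin{itemize}
\item On $E^k$, the deficit is at most half of the value difference, so $V_B^*-V_B^{\pi^k}\le 2U_B^k$.
\item On the complement, $72\Qrange\ell_1(\iota_k,\delta)/c_{1,k}$ is charged once per episode. This can only happen when $\vgap\le 36\Qrange\ell_1(\iota_k,\delta)/c_{1,k}$, i.e.\ for $k\le\kapgap(\vgap,\delta)$.
\end{itemize}
Gaps are then subtracted softly by writing $V=2V-V$ and using the effective-gap condition $\EE[\sum_{h\ge B}\rgap_h]\le V_1^*-V_1^{\pi^k}$. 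This produces bonuses $(8\beta^k-\rgap_h)_+$. Because these depend on the within-episode history through $B^k$ and $E^k$, they must enter Lemma~\ref{lma:sum of J with eta} as adapted $X_h^k$.

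\textbf{The variance residual.} Your claim that the surplus is ``at most $2b^k$ plus transition and lower-order terms'' drops the residual $\lambda_{\iota_k}\sigexp^2(h,s,a)$. This term comes from the AM--GM step whenever the $c_{1,k}/N$ branch of the bonus is active. It does not decay with the visit count, so it is not lower order. Per-step clipping handles it poorly, because on optimal pairs the threshold is only $\gap_{\min}/(4H)$.

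In the paper, this residual is summed along the trajectory into an additive $\frac52\lambda_{\iota_k}\Qrange$ (Lemma~\ref{lma:est true diff bound}). It is merged with the quasi-optimism deficit into the single $18\Qrange\ell_1(\iota_k,\delta)/c_{1,k}$ of Lemma~\ref{lma:episode regret}. That combined term is then handled by the same $B^k$/$E^k$ dichotomy, which is what makes the threshold $\gap_{\min}$ rather than $\gap_{\min}/H$.
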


\paragraph{Discussion of Theorems~\ref{thm:gap-independent bound} and~\ref{thm:gap-dependent bound}.}
Both bounds involve the minimum of $c_{1, K}$-related and $c_{2, K}$-related terms, which arises from the bonus term $(c_{1, k} / N) \land (c_{2, k} / \sqrt{N})$.
For most parameter choices we consider, the $c_{1, K}SA \log KH$ term is smaller in Theorem~\ref{thm:gap-independent bound}, whereas the $\sum_{s, a}c_{2, K}^2/\gap(s, a) $ term is smaller in Theorem~\ref{thm:gap-dependent bound}, dominating their respective bounds.
In this sense, we see that $c_{1, k}$ mainly controls the worst-case behavior, while $c_{2, k}$ mainly controls the instance-dependent behavior, corroborating the presentation of Section~\ref{sec:main result bandits}.
\\
In Theorem~\ref{thm:gap-independent bound}, the parameter $c_{1, k}$ balances the sum $ \sum_k (\Qrange \ell(\delta) / c_{1, k})$ and the term $c_{1, K} SA (\log KH)$, creating a trade-off parallel to Theorem~\ref{thm:bandit 1}.
We expect this trade-off to be optimal up to logarithmic factors, considering that certain MDP instances behave analogous to MAB instances with $SA$ actions~\citep{domingues2021episodic}.
In addition, setting $c_{1, k} = \tilde{\Theta}(\Vmax\sqrt{k/(SA)})$ yields the minimax optimal regret bound of $\tilde{\Ocal}(\Vmax \sqrt{SAK})$ (see Corollary~\ref{cor:c_1 only 1}).
\\
The parameter $c_{2, k}$ affects both bounds in a manner analogous to Theorem~\ref{thm:bandit 2}, where its growth rate balances the $\delta$-dependence in $\kap(\delta)$ and the overall distributional regret bound.
\\
In Theorem~\ref{thm:gap-dependent bound}, the sum $\sum_{k} \Qrange\ell(\delta) / c_{1, k}$ appears, which might seem problematic as it scales with $\sqrt{K}$ when $c_{1, k} = \Theta_k(\sqrt{k})$.
We show that the summand stops affecting the order of the distributional regret bound once $\Qrange\ell(\delta) / c_{1, k} \lesssim \gap_{\min}^{-1}$ holds, implying that the sum becomes independent of $K$.
When $c_{1, k} = \Theta_k(\sqrt{k})$, this sum scales moderately with $\frac{1}{\gap_{\min}}$ (see Corollary~\ref{cor:eqo and hoeffding}).

\begin{remark}
    The condition that $\{c_{1, k}\}_{k=1}^\infty$ and $\{c_{2, k}\}_{k=1}^\infty$ be increasing can be easily relaxed by redefining $\kap(\delta)$ accordingly and by replacing $c_{1, K}$ and $c_{2, K}$ in the bounds with $\max_{k \in [K]} c_{1, k}$ and $\max_{k \in [K]}c_{2, k}$, respectively. 
    Then, the bounds apply to arbitrary positive input parameters.
\end{remark}

\subsection{Corollaries for Specific Parameter Choices}
\label{sec:corollaries}

In this section, we propose exemplary parameter choices and present the resulting bounds.
Full versions of the corollaries and their proofs are presented in Appendix~\ref{appx:corollaries}.
\\
We let $\Mcal$ be a set of MDPs with the time horizon $H$, $S$ states, $A$ actions, and $\Vmax$ as the range of the value function, and that further satisfy $\Qrange \le 2 \Vmax^2$ and $\Valpha \le 2 \Vmax$.
$\Mcal$ includes MDPs with $\Vmax$-bounded reward or $\sigma^2$-sub-Guassian reward noise with $\sigma^2 H \le 2 \Vmax^2$.

First, we present a standard choice of $c_{1, k} \approx \Vmax \sqrt{k/ (SA)}$ with logarithmic factors.

\begin{corollary}
\label{cor:c_1 only 1}
    Set $c_{1, k} = c_1 \Vmax \sqrt{\frac{k \ell(1) }{SA \log KH}}$ for some constant $c_1 > 0$ and $c_{2, k} = \infty$.
    Then,
    \begin{equation}
        \begin{split}
        \sup_{M \in \Mcal} \Reg_M^{\Alg}(K, \delta) & \le \left( \frac{36 \log \frac{1}{\delta}}{c_1 \log HSA} + \frac{36}{c_1} + 8 c_{1, k} \right)\Vmax \sqrt{KSA \ell(1) \log KH}
        \\
        & \qquad + 72 \Vmax S^2 A \ell(\delta) \log 2KH
        \, .
        \end{split}
        \label{eq:c_1 only}
    \end{equation}
    The expected regret is bounded as
    \begin{align*}
        \sup_{M \in \Mcal} \Expec[\Reg_M^{\Alg}(K)] 
        =  \Ocal \Big( & \Vmax \sqrt{SAK (\log KH) (\log HSA(\log K))} 
        \\
        & \qquad + \Vmax S^2 A(\log KH) (\log HSA(\log K)) \Big)
        \, .
    \end{align*}
    Furthermore, if $K$ is known and $c_{1, k} = c_1 \Vmax \sqrt{\frac{K \log HSA}{SA \log KH}}$, then the $\log HSA(\log K)$ factor in the square root reduces to $\log HSA$.
\end{corollary}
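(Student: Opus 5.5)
We derive Corollary~\ref{cor:c_1 only 1} directly from Theorem~\ref{thm:gap-independent bound}, using the constants available for $M \in \Mcal$, namely $\Qrange \le 2\Vmax^2$, $\Valpha \le 2\Vmax$ and $\Qdiff \le \Qrange \le 2\Vmax^2$. Since $c_{2,k} = \infty$, the $c_{2,k}$ condition in the definition of $\kap(\delta)$ is never active, and the minimum in the second term of the theorem is $16 c_{1,K} SA \log KH$. The proof therefore has three parts: bounding $\Vmax(\kap(\delta)\land K)$, bounding $\sum_{k>\kap(\delta)} 18\Qrange\ell(\delta)/c_{1,k}$, and bounding $16 c_{1,K}SA\log KH$. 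Afterwards we integrate over $\delta$.

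\textbf{The $c_{1,K}$ term.} Plugging in the definition gives
\[
16 c_{1,K} SA\log KH = 16 c_1 \Vmax\sqrt{KSA\,\ell(1)\log KH},
\]
which is of the displayed form.

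\textbf{The sum term.} Since $\sum_{k\le K} k^{-1/2} \le 2\sqrt K$, we get
\[
\sum_k \frac{18\cdot 2\Vmax^2\ell(\delta)}{c_{1,k}} \le \frac{72\,\Vmax\,\ell(\delta)}{c_1}\sqrt{\frac{KSA\log KH}{\ell(1)}}.
\]
Next write $\ell(\delta) = \ell(1) + \log\frac1\delta$. The $\ell(1)$ part gives a term proportional to $\frac{1}{c_1}\Vmax\sqrt{KSA\,\ell(1)\log KH}$. The $\log\frac1\delta$ part gives $\frac{\log(1/\delta)}{c_1\,\ell(1)}\Vmax\sqrt{KSA\,\ell(1)\log KH}$, and we use $\ell(1)\ge \log HSA$ to put it in the displayed form. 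Getting the constants to exactly $36$ may require the sharper $\sum_k k^{-1/2}$ estimate or a slightly different split; constant bookkeeping is routine.

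\textbf{The burn-in term.} This is the step I expect to be the main obstacle. The condition defining $\kap(\delta)$ reads
\[
c_1\Vmax\sqrt{\frac{k\,\ell(1)}{SA\log KH}} < 2\sqrt{26}\,\Vmax\,\ell(\delta),
\]
using $\sqrt{13\Qdiff}\le\sqrt{26}\Vmax$ and $2\Valpha\le 4\Vmax$. This gives
\[
\kap(\delta) \lesssim \frac{SA\log KH\,\ell(\delta)^2}{c_1^2\,\ell(1)}.
\]
I would bound $\Vmax(\kap\land K) \le \Vmax\sqrt{K\kap}$, which yields
\[
\Vmax\sqrt{KSA\log KH}\;\frac{\ell(\delta)}{c_1\sqrt{\ell(1)}},
\]
and this has the same shape as the sum term, so it is absorbed after again splitting $\ell(\delta)=\ell(1)+\log\frac1\delta$. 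A subtlety is that $\ell(\delta)$ contains $\log k$, evaluated at the threshold $k$ rather than at $K$. I would handle this by monotonicity, using $\ell$ with $\log K$ throughout, since all thresholds only increase. If the constants do not fit, the extra piece can be charged to the $S^2A\ell(\delta)\log 2KH$ term. The last term of the theorem is copied unchanged.

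\textbf{Expectation and known $K$.} For the expected regret, integrate the bound over $\delta\in(0,1]$ using $\int_0^1\log\frac1\delta\,d\delta = 1$ and $\int_0^1\ell(\delta)\,d\delta = \ell(1)+1$. Every $\log\frac1\delta$ then becomes a constant, leaving
\[
\Ocal\Big(\Vmax\sqrt{SAK\,\ell(1)\log KH} + \Vmax S^2A\,\ell(1)\log KH\Big),
\qquad \ell(1) = \Theta\big(\log HSA\log K\big).
\]
For the known-$K$ variant, $c_{1,k}$ is constant. The sum becomes $K\cdot 36\Vmax^2\ell(\delta)/c_1'$, where $c_1'$ denotes that constant value. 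The $c_{1,K}$ term becomes $c_1\Vmax\sqrt{KSA\log HSA\log KH}$. The remaining $\log\log K$ inside $\ell(\delta)$ enters only linearly through the $\ell(\delta)/c_1'$ term and via $\kap$. After choosing $c_1'$ with $\log HSA$ in place of $\ell(1)$, the $\ell(\delta)/\log HSA$ ratio integrates to a constant-plus-$\log\log K/\log HSA$ factor outside the square root. I expect this to be tolerable as stated, possibly needing the paper's convention for $\ell$ to check.
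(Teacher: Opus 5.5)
Your argument for the main distributional bound and for the unknown-$K$ expected bound is sound up to constants. The known-$K$ refinement, however, has a gap. You keep a $\log\log K$ inside the logarithm of the quasi-optimism term. With constant $c_{1,k}$ that term is $\frac{36}{c_1}\Vmax\sqrt{KSA\log KH}\,\ell(\delta)/\sqrt{\log HSA}$, and integrating over $\delta$ leaves an additive $\Vmax\sqrt{KSA\log KH}\,\log\log K/\sqrt{\log HSA}$. That term is not $\Ocal(\Vmax\sqrt{SAK\log KH\log HSA})$ once $\log\log K\gg\log HSA$; it is even worse than the unknown-$K$ rate. So it is not ``tolerable as stated''.

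The missing observation is where that $\log\log k$ comes from. In Theorem~\ref{thm:gap-independent bound formal} the summand is $18\Qrange\,\ell_1(\iota_k,\delta)/c_{1,k}$ with $\ell_1(i,\delta)=\log(32HSAi^2/\delta)$. Here $\iota_k$ is the peeling index of Lemma~\ref{lma:choice of lambda}, and it is needed only because $c_{1,k}$ varies with $k$. For constant $c_{1,k}$ one has $\iota_k=1$, which gives:
\begin{itemize}
\item the summand becomes $\frac{18\Qrange}{c_{1,k}}\log\frac{32HSA}{\delta}$, with no $K$-dependence;
\item the burn-in threshold involves only $\ell_1(1,\delta)$;
\item the only remaining $\log\log K$ is in $\ell_2(K,\delta)$ in the $S^2A$ term, outside the square root.
\end{itemize}
Integrating then gives the claimed rate.

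Separately, for constant $c_{1,k}$ the quantity $\kap(\delta)$ is either $0$ or $+\infty$, so your step $\Vmax(\kap\land K)\le\Vmax\sqrt{K\kap}$ is vacuous there. Instead, use that burn-in can occur only if $K<104\,SA\log KH\,\ell_1(1,\delta)^2/(c_1^2\log HSA)$. In that case $\Vmax K$ is at most $\Vmax\sqrt{K}$ times the square root of this bound.

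Otherwise your route differs from the paper's only at the burn-in step. Bounding $\kap(\delta)$ explicitly and using $\kap\land K\le\sqrt{K\kap}$ is valid, and it is the right move to keep the dependence on $\log(1/\delta)$ linear. The paper never computes $\kap$; it uses Lemma~\ref{lma:kappa} instead:
\begin{itemize}
\item with $c_{2,k}=\infty$, a burn-in episode has $c_{1,k}<(2\sqrt{13\Qdiff}\lor 2\Valpha)\ell_1(1,\delta)\le 11\Vmax\ell_1(1,\delta)$;
\item after replacing $\Qrange$ by its worst-case value $2\Vmax^2$, the would-be summand $36\Vmax^2\ell_1(\iota_k,\delta)/c_{1,k}$ therefore already exceeds $\Vmax$;
\item so the burn-in is absorbed by starting the quasi-optimism sum at $k=1$, at no extra cost and for any sequence $c_{1,k}$, constant ones included.
\end{itemize}
Your route instead adds a term of the same shape, with constant about $\sqrt{104}/c_1$. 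This term cannot be charged to the $S^2A$ term as you suggest, since it grows like $\sqrt K$.

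On constants: no sharper estimate of $\sum_k k^{-1/2}$ turns your $72$ into $36$. Plugging $\Qrange\le 2\Vmax^2$ and the $16c_{1,K}SA\log KH$ term of Theorem~\ref{thm:gap-independent bound} in directly gives $72$, $72$ and $16c_1$. The paper's proof reaches $36$, $36$ and $8$ by losing a factor $2$ in exactly this substitution. This is immaterial for the $\Ocal$ claims.
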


This distributional regret bound achieves the minimax optimal $\tilde{\Ocal}(\Vmax \sqrt{SAK})$ bound, while simultaneously achieving a linear dependence on $\log \frac{1}{\delta}$.
The logarithmic factor in the leading term of the expected bound is $\sqrt{(\log KH)(\log HSA(\log K))}$, where the $\sqrt{\log \log K}$ factor can be removed when $K$ is known.
To the best of our knowledge, this $\sqrt{\log K}$-dependence is the sharpest among the minimax optimal RL algorithms.

The next corollary shows that we can combine Hoeffding's bound in addition to Corollary~\ref{cor:c_1 only 1}.

\begin{corollary}
\label{cor:eqo and hoeffding}
   Assume that $\sigsub^2 \le 2 \Vmax^2$, and set $c_{1, k} = c_1 \Vmax \sqrt{\frac{k \ell(1) }{SA \log KH}}$ for some constant $c_1 > 0$ and 
   $c_{2, k} = c_2 \Vmax \sqrt{\log 32HSAk}$ for some constant $c_2  \ge 2$.
    Then, we have
    \begin{align*}
        \Reg_M^{\Alg} (K, \delta) & = \Vmax \exp\left( \Ocal\left(  \frac{1}{c_2^2} \log \frac{1}{\delta} \right)\right) +  \Ocal \left( \sum_{(s, a) \in \Scal \times \Acal} \frac{c_2^2 \Vmax^2 \log HSAK}{\gap(s, a) \lor \frac{\gap_{\min}}{H}} \right)
        \\
        & \quad + \Ocal \left( \frac{\Vmax^2SA(\log KH) (\ell(\delta))^2}{c_1^2 \gap_{\min} (\log HSA)} + \Vmax S^2 A (\log KH) \ell(\delta) \right)
        \, .
    \end{align*}
    The worst-case bound is the sum of the bound in Eq.~\eqref{eq:c_1 only} and  $\exp(\Ocal(c_2^{-2} \log(1/\delta)))$ term.
\end{corollary}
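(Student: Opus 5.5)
The plan is to specialize Theorem~\ref{thm:gap-dependent bound} (for the gap-dependent statement) and Theorem~\ref{thm:gap-independent bound} (for the worst-case statement) to the given parameters, using $\Qrange \le 2\Vmax^2$, $\Valpha \le 2\Vmax$, $\Qdiff \le \Qrange$ and $\sigsub^2\le 2\Vmax^2$. Both theorems share the term $\Vmax(\kap(\delta)\land K)$, so I will first bound $\kap(\delta)$, then the $c_{2,K}$-term, then the truncated sum over $c_{1,k}^{-1}$.

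\textbf{Step 1: bounding $\kap(\delta)$.} The definition of $\kap(\delta)$ has two conditions.
\begin{itemize}
\item The $c_{1,k}$-condition $c_{1,k} < (2\sqrt{13\Qdiff}\lor 2\Valpha)\ell(\delta)$ can only hold for $k \lesssim SA\log(KH)\,\ell(\delta)^2/(c_1^2\ell(1))$. Its contribution to $\Vmax\kap(\delta)$ is therefore at most $\Vmax SA(\log KH)\ell(\delta)^2/\ell(1)$. This is dominated by the $\Vmax S^2A(\log KH)\ell(\delta)$ term after using $\ell(\delta)\le \ell(1)\cdot\ell(\delta)/\ell(1)$; if it is not absorbed, I will keep it in the $\exp(\cdot)$ bucket.
\item The $c_{2,k}$-condition, with $c_{1,k}$ large, reads roughly $c_2\Vmax\sqrt{\log 32HSAk} < 2\sqrt2\,\Vmax\sqrt{\ell(\delta)}$, using $\sigsub\le\sqrt2\Vmax$ and the fact that the $6\Qdiff\ell(\delta)/c_{1,k}$ piece is $\le\sigsub$ once the first condition fails. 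This gives $\log(HSAk)\lesssim \ell(\delta)/c_2^2$, i.e.
\begin{equation*}
k \le \exp\bigl(\Ocal(c_2^{-2}\log(1/\delta))\bigr)\cdot\mathrm{poly}(HSA)^{\Ocal(1/c_2^2)}.
\end{equation*}
Because $c_2\ge 2$, the $HSA$ factor inside $\ell(\delta)$ is cancelled by the $\log 32HSAk$ term.
\end{itemize}
Hence $\Vmax\kap(\delta)=\Vmax\exp(\Ocal(c_2^{-2}\log(1/\delta)))$ plus lower-order terms.

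\textbf{Step 2: the $c_{2,K}$-term.} Take the $c_{2,K}^2$ branch of the minimum in Theorem~\ref{thm:gap-dependent bound}. This gives $\sum_{s,a} 4096\, c_2^2\Vmax^2\log(32HSAK)/(\gap(s,a)\lor\gap_{\min}/H)$, which is exactly the second displayed term.

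\textbf{Step 3: the truncated sum.} I will bound $\sum_{k\le\kapgap}144\Qrange\ell(\delta)/c_{1,k}$ by comparing it with $\int_0^{\kapgap} k^{-1/2}\,dk$, which gives $\lesssim \Vmax^2\ell(\delta)\sqrt{SA\log KH/\ell(1)}\cdot\sqrt{\kapgap}/(c_1\Vmax)$. From the definition of $\kapgap$,
\begin{equation*}
\sqrt{\kapgap}\lesssim \frac{\Vmax\ell(\delta)\sqrt{SA\log KH}}{c_1\gap_{\min}\sqrt{\ell(1)}}.
\end{equation*}
Multiplying these yields $\Vmax^2 SA(\log KH)\ell(\delta)^2/(c_1^2\gap_{\min}\ell(1))$, and $\ell(1)\ge\log HSA$ gives the stated form. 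The last term, $288\Vmax S^2A\ell(\delta)\log 2KH$, carries over directly.

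\textbf{Step 4: the worst-case claim.} Rerun Corollary~\ref{cor:c_1 only 1} through Theorem~\ref{thm:gap-independent bound}. The only change is that $\kap(\delta)$ now includes the $c_{2}$-driven part from Step~1, which adds the $\exp(\Ocal(c_2^{-2}\log(1/\delta)))$ term.

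The main obstacle is Step~1. Solving the $c_{2,k}$ threshold inequality needs care with the $\log\log k$ inside $\ell(\delta)$ and with the cross term $6\Qdiff\ell(\delta)/c_{1,k}$. I will split on whether $k$ exceeds the $c_{1}$-threshold, and absorb the leftover $\log\log$ and polynomial-in-$HSA$ factors into the $\Ocal(\cdot)$ inside the exponent.
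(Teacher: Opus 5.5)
The overall plan matches the paper's: specialize the two main theorems, bound $\kap(\delta)$ via its two conditions, take the $c_{2,K}^2$ branch, and bound the truncated $k^{-1/2}$ sum up to $\kapgap$. There is, however, a genuine gap in Step~1, exactly where the hypothesis $c_2\ge 2$ is used.

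\emph{The cross-term bound.} Your claim that $6\Qdiff\ell_1(\iota_k,\delta)/c_{1,k}\le\sigsub$ once the first condition fails is not justified. Failure of that condition only gives $c_{1,k}\ge 2\sqrt{13\Qdiff}\,\ell_1(1,\delta)$. Hence the cross term is at most $\frac{3}{\sqrt{13}}\sqrt{\Qdiff}\,\ell_1(\iota_k,\delta)/\ell_1(1,\delta)$. This is unrelated to the per-step quantity $\sigsub$, since $\Qdiff$ accumulates over $H$ steps. Under $\Qdiff\le 2\Vmax^2$ it is only about $1.18\Vmax$ times a ratio $\ge 1$.

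\emph{The constant does not support $c_2\ge 2$.} Even granting your bound, the threshold is $2\sigsub\sqrt{\ell_2(k,\delta)}\le 2\sqrt2\,\Vmax\sqrt{\ell_2(k,\delta)}$. Squaring $c_2^2\log(32HSAk)<8\,\ell_2(k,\delta)$ leaves $(8-c_2^2)\log(32HSA)$ on the right-hand side. So for $c_2\in[2,2\sqrt2)$ your argument only yields $\kap(\delta)\lesssim (32HSA)^{8/c_2^2-1}\delta^{-8/c_2^2}\,\mathrm{polylog}$, for instance $HSA\,\delta^{-2}$ at $c_2=2$. That $\mathrm{poly}(HSA)$ factor cannot be pushed ``into the $\Ocal(\cdot)$ inside the exponent'': $\exp(\Ocal(c_2^{-2}\log\frac1\delta))$ is $\Ocal(1)$ at $\delta=1$, and the other displayed terms do not cover $\Vmax HSA$ for large $H$. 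Your sentence that $c_2\ge2$ cancels the $HSA$ factor therefore contradicts your own constant.

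\emph{The fix.} The missing idea is the split in Lemma~\ref{lma:kappa}: separate episodes by whether $c_{1,k}\ge 18\Qrange\ell_1(\iota_k,\delta)/\Vmax$, not by the first condition in $\kap(\delta)$.
\begin{itemize}
\item If not, the episode's cost $\Vmax$ is at most $18\Qrange\ell_1(\iota_k,\delta)/c_{1,k}$ and is charged to the $\sum_k \Qrange\ell/c_{1,k}$ sum. Such $k$ satisfy $k\le\kapgap$ because $\gap_{\min}\le\Vmax$, so they land in your Step~3. In the worst case they land in the sum of Corollary~\ref{cor:c_1 only 1}.
\item If so, $\Qdiff\le\Qrange$ makes the cross term at most $\Vmax/3$. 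The $c_{2,k}$-condition then forces $c_{2,k}<(\sigsub+\Vmax/3)\sqrt{\ell_2(k,\delta)}\le 2\Vmax\sqrt{\ell_2(k,\delta)}$. With $c_2\ge2$ this gives $\log k-2\log(2+\log kH)<\frac{4}{c_2^2}\log\frac1\delta$, with no $HSA$ residue.
\end{itemize}
Step~4 inherits the same problem and the same fix.

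\emph{Where the first-condition episodes go.} Separately, your disposal of the first condition is wrong. Its contribution is $\Vmax SA(\log KH)\ell(\delta)^2/(c_1^2\ell(1))$; you dropped the $c_1^{-2}$. It is not dominated by $\Vmax S^2A(\log KH)\ell(\delta)$, since that would require $\ell(\delta)\lesssim S\,\ell(1)$, which fails as $\delta\to 0$. Nor can it go into the $\exp$ term, which carries no $SA\log KH$ factor. It belongs in the third displayed term, using $\gap_{\min}\le\Vmax$ and $\ell(1)\gtrsim\log HSA$. This is how the paper absorbs its $(\Qdiff\lor\Valpha^2)/\Vmax$ contribution.

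Steps~2 and~3 are fine and agree with the paper, which carries out Step~3 via Lemma~\ref{lma:sum and integral clip}.
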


Corollary~\ref{cor:eqo and hoeffding} shows an interesting result: by taking the minimum of the bonus terms of $\texttt{EQO}$~\citep{lee2025minimax} and $\texttt{UCBVI-CH}$~\citep{azar2017minimax}, one can achieve both minimax optimal and instance-dependent $\log K$ regret bounds.
As a trade-off, the distributional bound grows polynomially in $1/\delta$, which is consistent with Theorem~\ref{thm:bandit 2}.
We note that the orders of $\Vmax$, $\log K$, and $\gap(s, a)$ match the lower bound of Proposition 2.2 in~\citet{simchowitz2019non}.
\\
The $\Ocal \left( \frac{\Vmax^2SA(\log KH) (\ell(\delta))^2}{c_1^2 \gap_{\min} (\log HSA)} \right) $ term comes from the sum of $\frac{\Qrange\ell(\delta)}{c_{1, k}}$ terms, where the $\log K$ factor can be improved to $K$-independent logarithmic factors.
Compared to the bound of \citet{simchi2023regret}, we improve the $\gap_{\min}^{-2}$ dependence to $\gap_{\min}^{-1}$.
Despite this, this term may still appear large and could potentially be the leading term.
We show that the $\gap_{\min}$ dependence can be improved for MDPs with specific structures in Appendix~\ref{appx:proof of gap-dependent bound}.

We also consider the parameter choice by \citet{simchi2023regret}.

\begin{corollary}
\label{cor:simchi levi}
    Let $c_1 > 0, c_2 > 0$, $\alpha \in [\frac{1}{2}, 1]$ and $0 < \beta \le \alpha$ be constants.
    Set $c_{1, k} = c_1 \Vmax (\frac{k}{SA})^{\alpha}$ and $c_{2, k} = c_2 \sqrt{k^{\beta}}$.
    Then, we have
    \begin{align*}
        \sup_{M \in \Mcal} \Reg_M^{\Alg}(K, \delta)
        & = \tilde{\Ocal}_{K, \delta} \left( \left( \log \frac{1}{\delta} \right)^{\frac{1}{\beta}} + K^{1-\alpha} \log \frac{1}{\delta} + K^{\alpha} \land K^{\beta + \frac{1}{2}} \right)
        \qquad \text{and}
    \end{align*} 
    \begin{align*}
        \Reg_M^{\Alg} (K, \delta) & = \tilde{\Ocal}_{K, \delta, \gap} \left( \left( \log \frac{1}{\delta} \right)^{\frac{1}{\beta}} + \left(\frac{1}{\gap_{\min}}\right)^{\frac{1}{\alpha} - 1} \left( \log \frac{1}{\delta} \right)^{\frac{1}{\alpha}} + \sum_{s, a} \frac{K^\beta}{\gap(s, a) \lor \frac{\gap_{\min}}{H}} \right)
        \, .
    \end{align*}
\end{corollary}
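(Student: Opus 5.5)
The plan is to plug the parameter choices $c_{1,k} = c_1\Vmax (k/(SA))^{\alpha}$ and $c_{2,k} = c_2\sqrt{k^{\beta}}$ into Theorem~\ref{thm:gap-independent bound} for the first display and Theorem~\ref{thm:gap-dependent bound} for the second. Throughout we use $\Qrange \le 2\Vmax^2$ and $\Valpha \le 2\Vmax$ for $M \in \Mcal$; also $\Qdiff \le \Qrange$. We treat $\sigsub$ and all $H,S,A,\Vmax$ factors as constants hidden in $\tilde{\Ocal}_{K,\delta}$, and we absorb polylogarithmic factors in $K$ (and $\log\log K$ in $\ell(\delta)$).

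\textbf{Step 1: bound $\kap(\delta)$.} The first condition in its definition, $c_{1,k} < (2\sqrt{13\Qdiff}\lor 2\Valpha)\ell(\delta)$, fails once $k^{\alpha} \gtrsim \ell(\delta)$. That is, it fails for $k \gtrsim \ell(\delta)^{1/\alpha} \le \ell(\delta)^{1/\beta}$, since $\beta \le \alpha$ and $\ell(\delta) \ge 1$ up to constants. For the second condition, note that for such $k$ we have $6\Qdiff\ell(\delta)/c_{1,k} = \Ocal(\Vmax)$. The requirement therefore reduces to $c_2 k^{\beta/2} \gtrsim \sqrt{\ell(\delta)}$, i.e.\ $k \gtrsim \ell(\delta)^{1/\beta}$. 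Hence $\kap(\delta) = \Ocal(\ell(\delta)^{1/\beta})$. Because $\ell(\delta) = \log(1/\delta) + \Ocal(\log(HSA\log K))$, this gives $\Vmax(\kap(\delta)\land K) = \tilde{\Ocal}((\log\frac1\delta)^{1/\beta})$. The $\log K$ contribution inside $\ell$ is polylogarithmic and absorbed, using $(a+b)^{p} \lesssim a^p + b^p$.

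\textbf{Step 2: gap-independent bound.} The minimum term in Theorem~\ref{thm:gap-independent bound} is at most $16 c_{1,K} SA\log KH \land 16\sqrt2\, c_{2,K}\sqrt{HSAK}$, which is $\tilde{\Ocal}(K^{\alpha}\land K^{\beta/2+1/2})$. We then bound the sum by $\sum_{k\le K} 18\Qrange\ell(\delta)/c_{1,k} \lesssim \ell(\delta)\sum_k k^{-\alpha} = \Ocal(\ell(\delta) K^{1-\alpha})$, with a logarithmic factor if $\alpha=1$. The final term is $\tilde{\Ocal}(\log\frac1\delta)$, which is dominated. This yields $K^{\alpha}\land K^{(\beta+1)/2}$ in place of the stated $K^{\alpha}\land K^{\beta+1/2}$. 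The stated bound follows since $(\beta+1)/2 \le \beta + 1/2$.

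\textbf{Step 3: gap-dependent bound.} We use the $c_{2,K}$ branch of the minimum: $4096 c_{2,K}^2/(\gap(s,a)\lor\gap_{\min}/H) = \Ocal(K^{\beta}/(\gap(s,a)\lor \gap_{\min}/H))$. The main work is the sum $\sum_{k=\kap(\delta)+1}^{\kapgap\land K} 144\Qrange\ell(\delta)/c_{1,k}$. First, $\kapgap(\gap_{\min},\delta)$ is the largest $k$ with $c_1\Vmax (k/SA)^{\alpha} < 36\Qrange\ell(\delta)/\gap_{\min}$, so $\kapgap = \Ocal((\ell(\delta)/\gap_{\min})^{1/\alpha})$. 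The sum is then at most $\ell(\delta)\,\kapgap^{1-\alpha}$, up to constants and a log when $\alpha = 1$, which equals
\[
\ell(\delta)\Big(\frac{\ell(\delta)}{\gap_{\min}}\Big)^{\frac1\alpha - 1} = \Big(\frac1{\gap_{\min}}\Big)^{\frac1\alpha-1}\ell(\delta)^{\frac1\alpha}.
\]
Converting $\ell(\delta)$ into $\log\frac1\delta$ plus absorbed logarithms gives the middle term. The remaining $\kap(\delta)$ and $S^2A\ell(\delta)$ terms are handled as in Step 1.

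\textbf{Main obstacle.} The delicate step is Step 1. The two conditions defining $\kap(\delta)$ are coupled through $\Qdiff\ell(\delta)/c_{1,k}$, and that quantity is only $\Ocal(\Vmax)$ after the first threshold has been passed. We must therefore show that the maximum of the two thresholds is still $\Ocal(\ell(\delta)^{1/\beta})$. This is where $\beta\le\alpha$ is used. The $\ell(\delta)^{1/\alpha}$ threshold for $k$ (for $\ell(\delta)\ge1$) makes $\Qdiff\ell(\delta)/c_{1,k}\lesssim \Vmax\,\ell(\delta)^{0}$, and the $c_{2,k}$ threshold then dominates.
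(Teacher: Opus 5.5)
Your proof is correct, and its skeleton matches the paper's. You substitute the parameters into Theorems~\ref{thm:gap-independent bound} and~\ref{thm:gap-dependent bound}, bound $\sum_k \Qrange\ell(\delta)/c_{1,k}\lesssim \ell(\delta)K^{1-\alpha}$, and take the $c_{2,K}^2$ branch of the gap-dependent minimum. You then bound the sum up to $\kapgap$ by $\ell(\delta)\,\kapgap^{1-\alpha}\approx(1/\gap_{\min})^{1/\alpha-1}\ell(\delta)^{1/\alpha}$; the paper does this last step with Lemma~\ref{lma:sum and integral clip}.

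The one real difference is the warm-up term $\Vmax(\kap(\delta)\land K)$.
\begin{itemize}
\item You bound $\kap(\delta)$ directly. Once $c_{1,k}\ge 2\sqrt{13\Qdiff}\,\ell(\delta)$, the coupling term is at most $3\sqrt{\Qdiff/13}=\Ocal(\Vmax)$. Hence $\kap(\delta)=\Ocal(\ell(\delta)^{1/\alpha}+\ell(\delta)^{1/\beta})=\Ocal(\ell(\delta)^{1/\beta})$, using $\beta\le\alpha$.
\item The paper instead uses Lemma~\ref{lma:kappa}. Every warm-up episode is re-charged at $18\Qrange\ell(\delta)/c_{1,k}$, which is absorbed into the existing $\sum_k \Qrange\ell(\delta)/c_{1,k}$ term. $\Vmax$ is paid only on episodes where also $18\Qrange\ell(\delta)/c_{1,k}\le\Vmax$. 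On those episodes the coupling term is automatically at most $\Vmax/3$, so the $c_{2,k}$-condition decouples into $\kap_2(\delta)$. In the worst case ($\Qrange\le 2\Vmax^2$) the $c_{1,k}$-condition is vacuous, i.e.\ $\kap_1(\delta)=0$.
\end{itemize}
The paper's trick gives explicit constants and does not need $\beta\le\alpha$ for the worst-case warm-up term.

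Your direct bound is shorter, and in the gap-dependent case slightly cleaner. The paper folds the re-charged $\sum_{k\le\kap(\delta)}$ into $\sum_{k\le\kapgap\land K}$, which tacitly presumes $\kap(\delta)\le\kapgap$. This is harmless, since the leftover is $\Ocal(\ell(\delta)^{1+(1-\alpha)/\beta})\le\Ocal(\ell(\delta)^{1/\beta})$, again by $\beta\le\alpha$.

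Your remark that $c_{2,K}\sqrt{HSAK}=c_2\sqrt{HSA}\,K^{(\beta+1)/2}$ is correct, so the paper's $K^{\beta+1/2}$ is merely a looser valid bound. Two details are worth stating explicitly:
\begin{itemize}
\item For the supremum over $\Mcal$ you need $\sigsub$ bounded uniformly, not just treated as a constant. This holds because $W_h^{\pi}(s)\ge\frac12\sigexp^2(h,s,a)$ whenever $\pi_h(s)=a$. Hence $\sigexp^2\le 2\Qrange\le 4\Vmax^2$ and $\sigsub\le 4\Vmax$.
\item Under the appendix's precise definition of $\kap(\delta)$, the coupling term carries an extra factor $\ell_1(\iota_k,\delta)/\ell_1(1,\delta)$. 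This factor is only polylogarithmic in $K$ because $\iota_k=\Ocal(\log k)$, so your absorption of such factors covers it.
\end{itemize}
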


The hyper-parameters $\alpha$ and $\beta$ provide control over the trade-offs between different types of regret bounds we have discussed.
$\alpha$ balances the coefficient of $\log \frac{1}{\delta}$ and the expected regret bound in the gap-independent case, whereas $\beta$ controls the orders of the $\log \frac{1}{\delta}$ factor and the instance-dependent regret.
When restricted to the MAB setting, our results achieve the same order of the gap-independent bound as \citet{simchi2023regret} while improving the gap-dependent bound.
Specifically, we improve the previous $(\frac{1}{\gap_{\min}})^{\frac{1}{\alpha}}$ factor to $(\frac{1}{\gap_{\min}})^{\frac{1}{\alpha} -1}$, and eliminate a $\Ocal ( (\log \frac{1}{\delta})\sum_{s, a}\frac{1}{\gap(s, a)})$ term.
A more detailed comparison with \citet{simchi2023regret} and \citet{khodadadian2025tail} is provided in Appendix~\ref{appx:comparison}.

\section{Proof Sketch}

In this section, we provide a sketch of the proof of Theorems~\ref{thm:gap-independent bound} and~\ref{thm:gap-dependent bound}.
At the end of the section, we additionally sketch the technique used in Theorem~\ref{thm:bandit 1} for the tightest logarithmic factors.

The main challenge in deriving distributional regret bounds is to remove the $\delta$-dependence in the algorithm while maintaining the high-probability guarantees.
Standard UCB-based analyses are invalidated since confidence bounds require specified failure probabilities.
In Theorem~\ref{thm:bandit 2}, we proposed a novel method of using decreasing failure probabilities and considering the corresponding warm-up time.
While this technique provides optimal trade-offs between the $\delta$-dependence and the instance-dependent regret bound in some regimes, we require additional techniques to obtain the optimal trade-off for the minimax regret bound.
To this end, we combine and extend the \emph{quasi-optimism} analysis by \citet{lee2025minimax}, where quasi-optimism means that the value estimates are almost optimistic, allowing for potential underestimation.
\\
We fix $\delta \in (0, 1]$ and set a good event $\Ecal(\delta)$ whose probability is at least $1 - \delta$.
We do not make any guarantees for the first $\kap(\delta)$ episodes of warm-up time, incurring regret of at most $\Vmax \kap(\delta)$.
For $k > \kap(\delta)$, the input parameters $c_{1, k}$ and $c_{2, k}$ become large enough to serve as (quasi-)optimistic bonus terms.
Specifically, we have the following lemma.

\begin{lemma}[Quasi-optimism (informal)]
\label{lma:quasi optimism informal}
    Under $\Ecal(\delta)$, the value estimate $V_h^k(s)$ computed in Algorithm~\ref{alg:eqo2} satisfies $V_h^*(s) - V_h^k(s) \lesssim \frac{ \Qrange \ell(\delta)}{c_{1, k}}$ for all $h \in [H]$, $s \in \Scal$, and $k > \kap(\delta)$.
\end{lemma}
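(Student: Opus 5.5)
We prove the quasi-optimism bound by backward induction on $h$, arguing along the trajectory of the optimal policy $\pi^*$ and keeping track of a per-step slack that we later sum. We fix $k>\kap(\delta)$ and introduce $\lambda_k := c_{1,k}/\Qrange$ up to constants (this is an inverse temperature). The target statement has the form
\begin{align*}
V_h^*(s) - V_h^k(s) \le \frac{1}{\lambda_k}\,\ell(\delta) + \text{(lower-order terms)},
\end{align*}
and we prove it from a stronger, potential-type induction hypothesis
\begin{align*}
V_h^*(s) - V_h^k(s) \le \frac{W_h^{\pi^*}(s) - \Wmin{h}{\pi^*}}{c_{1,k}}\cdot O(1) + \frac{O(\Qrange\,\ell(\delta))}{c_{1,k}},
\end{align*}
in which the total variance proxy $W_h^{\pi^*}$ acts as a potential that absorbs the per-step variance terms.

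\textbf{Step 1 (one-step decomposition).} When $N^k(s,a)>0$ and the clipping at $\Vmax$ is inactive, we take $a=\pi_h^*(s)$ and use $V_h^k(s)\ge Q_h^k(s,a)$. This gives
\begin{align*}
V_h^*(s)-V_h^k(s)\le \big(Q_h^*(s,a)-\hat r^k(s,a)-\hat P^kV_{h+1}^*(s,a)\big) - b^k(s,a) + \hat P^k(V_{h+1}^*-V_{h+1}^k)(s,a).
\end{align*}
If clipping is active, or if $N^k(s,a)=0$, the gap is already nonpositive. The first bracket is an average of the $N=N^k(s,a)$ centered sub-exponential increments from Assumption~\ref{assm:subexponential}, with $V_{h+1}^*$ fixed.

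\textbf{Step 2 (the good event).} We define $\Ecal(\delta)$ through a time-uniform, exponential-martingale (Freedman/Bernstein-type) inequality for these sums. We apply it at a data-independent grid of $\lambda$ values (this is where $\log\log k$ in $\ell(\delta)$ enters), and we do so at the \emph{$\hat P^k$-weighted} level rather than the single-sample level. Concretely, with probability $1-\delta$, for all $k,s,a$ and all $\lambda\le 1/\Valpha$,
\begin{align*}
\text{bracket}\le \frac{\lambda}{N}\sum_i \tfrac12\sigexp^2 \cdot(\text{empirical}) + \frac{\ell(\delta)}{\lambda N}.
\end{align*}
Choosing $\lambda\approx c_{1,k}/(\text{const}\cdot\ell(\delta)\cdot\ldots)$ gives two cases.
\begin{itemize}
\item If $b^k=c_{1,k}/N$, then the term $\ell(\delta)/(\lambda N)$ is cancelled by the bonus. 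This requires $c_{1,k}\ge (2\sqrt{13\Qdiff}\lor 2\Valpha)\ell(\delta)$, which is exactly the first condition in $\kap(\delta)$; the condition also ensures $\lambda\le 1/\Valpha$.
\item If $b^k=c_{2,k}/\sqrt N$, then the second condition in $\kap(\delta)$, namely $c_{2,k}\ge(\sigsub+6\Qdiff\ell/c_{1,k})\sqrt{\ell}$, guarantees that a Hoeffding-type bound, valid via the $\sigsub$ proxy, is dominated by the bonus, up to the same $\Qdiff/c_{1,k}$ residual.
\end{itemize}
In both cases only a variance residual of order $\tfrac12\sigexp^2(h,s,a)/c_{1,k}$ is left.

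\textbf{Step 3 (telescoping the variance through the potential).} The residual $\sigexp^2(h,s,a)/(2c_{1,k})$ is exactly one step of the Bellman recursion for $W_h^{\pi^*}$:
\begin{align*}
W_h^{\pi^*}(s)=\tfrac12\sigexp^2(h,s,a)+PW_{h+1}^{\pi^*}(s,a).
\end{align*}
Substituting the induction hypothesis for $\hat P^k(V_{h+1}^*-V_{h+1}^k)$ produces $\hat P^k W_{h+1}^{\pi^*}/c_{1,k}$ where we need $PW_{h+1}^{\pi^*}/c_{1,k}$. This $\hat P^k$ versus $P$ mismatch is, we expect, the main obstacle. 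The fix we plan is to fold $W_{h+1}^{\pi^*}-\Wmin{h+1}{\pi^*}$, which has range $\Qdiff$, into the martingale itself. That is, in Step 2 we concentrate the combined quantity $R + V_{h+1}^* + \lambda^{-1}$-scaled $\Wdiff{h+1}{\pi^*}$ jointly, so that the empirical and true transitions agree up to an extra $\Qdiff\ell/c_{1,k}$-type term, which is what the $13\Qdiff$ and $6\Qdiff\ell/c_{1,k}$ constants in $\kap(\delta)$ pay for. The conditional mgf needs the quadratic bound $\exp(\lambda^2\Qdiff^2/\ldots)$, hence the requirement $c_{1,k}\gtrsim\sqrt{\Qdiff}\,\ell$.

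\textbf{Conclusion.} Assuming this joint concentration holds, the induction closes, and we use $W_h^{\pi^*}\le\Qrange$ to obtain $V_h^*(s)-V_h^k(s)\lesssim \Qrange\ell(\delta)/c_{1,k}$. We also note two details. The positive-part clipping $(\cdot)_+$ only increases $Q_h^k$, so it helps. Monotonicity in $k$ of the conditions is not needed, since the statement concerns a single fixed $k>\kap(\delta)$.
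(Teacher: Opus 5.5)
Your outline matches the paper's: backward induction along $a^*=\pi_h^*(s)$; a potential built from the \emph{fixed} function $W^{\pi^*}$, so that $\hat P^k$ can be concentrated without a union bound over value functions; a data-independent grid of $\lambda$'s; and a split of the bonus into its $c_{1,k}/N$ and $c_{2,k}/\sqrt{N}$ branches. However, Steps 1--2 contain slips and Step 3 contains a genuine gap.

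\textbf{The slips.} First, your scale is inverted. For the residual $\frac{\lambda}{2}\sigexp^2$ to be of order $\sigexp^2\ell/c_{1,k}$, and for $\lambda\le 1/\Valpha$ to follow from $c_{1,k}\ge 2\Valpha\ell$, you need $\lambda\asymp \ell/c_{1,k}$, not $c_{1,k}/\ell$. The paper picks $\lambda_{\iota_k}$ with $\ell/\lambda+13\lambda\Qdiff\ell\le c_{1,k}\le 4\ell/\lambda$. Second, in the $c_{2,k}$ branch you must keep a $\land\Vmax$ on the bracket, because for sub-exponential noise the $\sigsub$-Hoeffding bound only controls the clipped average. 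Third, the linear part of your potential should be $W_h^{\pi^*}$ itself. With $\Wdiff{h}{\pi^*}$ plus an $h$-independent constant, the induction does not close when $\Wmin{h}{\pi^*}>\Wmin{h+1}{\pi^*}$.

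\textbf{The gap.} The induction runs pointwise through $\hat P^k$. So the mismatch $\lambda(\hat P^k-P)\Wdiff{h+1}{\pi^*}(s,a^*)$ must be bounded by a bonus-payable $1/N$ term with no per-step leftover. Any leftover accumulates over the $H-h+1$ remaining steps. Concentrating it jointly with $R+V_{h+1}^*$ does not help, since the joint proxy must still control the fluctuation of the $\Wdiff{h+1}{\pi^*}(s')$ component.
\begin{itemize}
\item With the range-based mgf $\exp(\mu^2\Qdiff^2/8)$ you invoke, the term $\lambda\ell/(\mu N)$ is payable by $c_{1,k}/N$ only if $\mu\gtrsim\lambda\ell/c_{1,k}$. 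This leaves a state-independent residual $\lambda\mu\Qdiff^2/8\gtrsim(\ell/c_{1,k})^3\Qdiff^2$ at every step. At the threshold $c_{1,k}\asymp\sqrt{\Qdiff}\,\ell$ the total is of order $H\Qdiff\ell/c_{1,k}$. That is an extra factor $H$ whenever $\Qdiff\asymp\Qrange$ (e.g.\ bounded rewards). It could only be removed by a threshold growing like $\sqrt{H}$, which is not the threshold in $\kap(\delta)$.
\item With a Bernstein/Freedman proxy instead, the residual is of order $\lambda\Var(\Wdiff{h+1}{\pi^*})(s,a^*)/\Qdiff$. This is not small pointwise and is not absorbed by a potential linear in $W^{\pi^*}$.
\end{itemize}

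\textbf{The missing idea: a quadratic correction to the potential.} The paper's induction hypothesis is $V_h^*(s)-V_h^k(s)\le\lambda\bigl(2W_h^{\pi^*}(s)-\frac{1}{2\Qdiff}(\Wdiff{h}{\pi^*}(s))^2\bigr)\land\Vmax$.

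A separate event $\Ecal_2$ applies Freedman with parameter $1/(13\Qdiff)$ to exactly this function of $s'$. After removing the constant $2\Wmin{h+1}{\pi^*}$, that function lies in $[0,\frac{3}{2}\Qdiff]$ and has variance at most $\frac{17}{2}\Var(\Wdiff{h+1}{\pi^*})$. So the mismatch is at most $\frac{1}{2\Qdiff}\Var(\Wdiff{h+1}{\pi^*})(s,a^*)$ plus $\frac{13\Qdiff\ell}{N}\land\frac{3\Qdiff}{2}\sqrt{\ell/N}$.

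The variance is then cancelled by the quadratic term via $\Var(\Wdiff{h+1}{\pi^*})(s,a^*)\le P(\Wdiff{h+1}{\pi^*})^2(s,a^*)-(\Wdiff{h}{\pi^*}(s))^2+\Qdiff\sigexp^2(h,s,a^*)$. This is Lemma~\ref{lma:variance decomposition}, using $\Wmin{h}{\pi^*}\ge\Wmin{h+1}{\pi^*}$. Three pieces then reassemble exactly into $2W_h^{\pi^*}$: the extra $\frac{1}{2}\sigexp^2$ this step costs, the $\frac{1}{2}\sigexp^2$ from the reward/$V_{h+1}^*$ concentration, and $2PW_{h+1}^{\pi^*}$. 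This is why the coefficient is $2$, and nothing accumulates.

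What the bonus must then pay is $\frac{\ell}{\lambda N}+\frac{13\lambda\Qdiff\ell}{N}$, resp.\ $(\sigsub+\frac{3}{2}\lambda\Qdiff)\sqrt{\ell/N}$. The AM--GM minimum of the first over $\lambda$ is exactly the threshold $2\sqrt{13\Qdiff}\,\ell$ in $\kap(\delta)$. The bound $\frac{3}{2}\lambda\Qdiff\le 6\Qdiff\ell/c_{1,k}$ gives the second condition. Both thresholds come from this bookkeeping, not from an mgf range requirement.
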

\lemmaref{lma:quasi optimism informal} is the main tool that allows us to use $\delta$-independent bonus terms while maintaining high-probability guarantees.
Instead of guaranteeing overestimation of the algorithm as in the usual optimism-based analyses, it provides a distributional bound on the amount of potential underestimation of the given bonus terms.
We note that the quasi-optimism analysis in \citet[Lemma 2]{lee2025minimax} only applies to a fixed value of $\delta$ since their bonus term depends on it, whereas our result holds simultaneously for all $\delta \in (0, 1]$, which is an important distinction for deriving the distributional regret bounds.
In addition, our analysis improves the scaling of the underestimation from $\Vmax^2 / c_{1, k}$ to $\Qrange / c_{1, k}$, showing that a smaller variance of an MDP automatically leads to less underestimation.
Due to our unified framework, it is immediately derived that it may also scale with $\sigma^2 H$ when the reward noise is $\sigma^2$-sub-Gaussian.
\\
For Theorem~\ref{thm:gap-independent bound}, the remaining steps of bounding $V_1^k(s_1^k) - V_1^{\pi^k}(s_1^k)$ follow standard techniques.
We bound the regret of one episode by $V_1^*(s_1^k) - V_1^{\pi^k}(s_1^k) \lesssim \frac{\Qrange \ell(\delta)}{c_{1, k}} + U_1^k(s_1^k)$, where $U_h^k(s) \approx \Expec_{\pi(\cdot\mid s_h = s)}[ \sum_{j=h}^H b^k(s_j, a_j) + \frac{\Vmax S \ell(\delta)}{N^k(s_j, a_j)}]$.
We also derive a useful tool for bounding the expected visit counts by the visit counts of the sampled trajectory, which is a generalization of Lemma 15 in \citet{lee2025minimax}.

\begin{lemma}
\label{lma:sum of J informal}
    Let $\{X_h^k\}_{k, h}$ be a sequence of non-negative random variables adapted to a filtration $\{\Fcal_h^k\}_{k, h}$.
    Let $c > 0$ be a constant.
    Recursively define $\{J_h^k\}_{h, k}$ as $J_{H+1}^k := 0$ and $J_h^k := \left( X_h^k + \EE [ J_{h+1}^k \mid \Fcal_h^k] \right) \land c $ for all $k \in \NN$ and $h \in [H]$.
     Then, for any $\delta \in (0, 1]$, it holds that $ \sum_{k=1}^K J_1^k \le 2 \sum_{k=1}^K \sum_{h=1}^H X_h^k + 6c \log \frac{2}{\delta}$ for all $K \in \NN$ with probability at least $1 - \delta$.
\end{lemma}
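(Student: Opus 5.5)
The plan is to write $J_1^k$ as the realized sum $\sum_h X_h^k$ plus a martingale error, and control that error with a Freedman-type inequality. The conditional variance of the error is bounded by a law-of-total-variance telescoping argument. Since $J_h^k \le c$ is a truncated "value-to-go" of the rewards $X_h^k$, the variance telescopes to $O(c\sum_{k,h} X_h^k)$ rather than the naive $O(cH)$ per episode.

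\textbf{Step 1 (one-step decomposition).} Set $D_h^k := J_{h+1}^k - \EE[J_{h+1}^k \mid \Fcal_h^k]$. These are martingale differences along the order $(k,1),\dots,(k,H),(k+1,1),\dots$, with $|D_h^k|\le c$ because $0\le J\le c$. From $J_h^k \le X_h^k + \EE[J_{h+1}^k\mid\Fcal_h^k] = X_h^k + J_{h+1}^k - D_h^k$, telescoping over $h$ (with $J_{H+1}^k=0$) gives
\[
J_1^k \le \sum_{h=1}^H X_h^k - \sum_{h=1}^H D_h^k .
\]
So it suffices to show that, uniformly in $K$, $-\sum_{k\le K}\sum_h D_h^k \le \sum_{k\le K}\sum_h X_h^k + 6c\log\frac{2}{\delta}$.

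\textbf{Step 2 (variance bound by telescoping).} Let $V_h^k := \Var(J_{h+1}^k\mid\Fcal_h^k) = \EE[(J_{h+1}^k)^2\mid\Fcal_h^k] - (\EE[J_{h+1}^k\mid\Fcal_h^k])^2$.
\begin{itemize}[leftmargin=1.5em]
\item Since $\EE[J_{h+1}^k\mid\Fcal_h^k] \ge J_h^k - X_h^k$ (this holds in both the truncated and untruncated cases) and $\EE[J_{h+1}^k\mid\Fcal_h^k]\ge 0$, we get $(\EE[J_{h+1}^k\mid\Fcal_h^k])^2 \ge (J_h^k)^2 - 2cX_h^k$.
\item Write $\EE[(J_{h+1}^k)^2\mid\Fcal_h^k] = (J_{h+1}^k)^2 - \tilde D_h^k$, where $\tilde D_h^k$ is another martingale difference bounded by $c^2$.
\item Summing over $h$, the squares telescope. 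Using $J_1^k\ge 0$ and $J_{H+1}^k=0$,
\[
\sum_h V_h^k \le 2c\sum_h X_h^k - \sum_h \tilde D_h^k .
\]
\end{itemize}
The leftover $-\sum \tilde D$ is itself a martingale whose conditional variance is at most $c^2 V_h^k$. It can be handled by a second, self-bounding application of the same inequality, or more cleanly by running both martingales inside one exponential supermartingale.

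\textbf{Step 3 (time-uniform Freedman and absorption).} I will use the standard exponential supermartingale $\exp\big(\lambda\sum(-D) - \tfrac{e^{\lambda c}-1-\lambda c}{c^2}\sum V\big)$ with a fixed $\lambda \asymp 1/c$, for example $\lambda = 1/(3c)$, combined with Ville's maximal inequality. Because $\lambda$ is a single fixed value, this is uniform over all $K$ at no extra cost and needs no union bound over $K$. It yields, with probability at least $1-\delta/2$ for all $K$,
\[
-\sum D \le \lambda' \sum V + O(c)\log\tfrac{2}{\delta}, \qquad \lambda' = O(1/c).
\]
Plugging in Step 2 gives $\lambda'\sum V \le \lambda' \cdot 2c\sum X + (\text{martingale term})$. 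Choosing the constant so that $2c\lambda'\le 1$ makes this at most $\sum X$. The residual martingale $\sum\tilde D$ is controlled on a second event of probability $1-\delta/2$, again by Ville's inequality; its variance is at most $c^2\sum V$, which is reabsorbed in the same way. Collecting the two $\log(2/\delta)$ terms should give the stated $6c\log\frac{2}{\delta}$.

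\textbf{Main obstacle.} The delicate part is Step 2 together with the circular self-bounding in Step 3. The variance proxy is only controlled up to another martingale, and the constants must close so that the final coefficient on $\sum X$ is exactly $2$ and the additive term is $6c\log\frac2\delta$. I would first try to avoid the second martingale entirely. The idea is to build a single supermartingale in the quantity $J_{h+1}^k(J_{h+1}^k$-type potential$)$, i.e.\ to check directly that $\exp\big(\lambda(J_{h+1}^k + \sum_{j\le h}(\dots))\big)$ is a supermartingale using $e^{\lambda y}\le 1+\frac{e^{\lambda c}-1}{c}y$ on $[0,c]$. If the constants do not close this way, I would fall back to the two-event argument above.
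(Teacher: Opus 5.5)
Your proposal follows the paper's proof essentially step for step: the same one-step telescoping $J_1^k \le \sum_h X_h^k + \sum_h(\EE[J_{h+1}^k\mid\Fcal_h^k]-J_{h+1}^k)$, and the same variance telescoping via $(\EE[J_{h+1}^k\mid\Fcal_h^k])^2 \ge (J_h^k)^2 - 2cX_h^k$. Your ``fallback'' --- a second time-uniform Freedman bound on the squared-value martingale, then absorption --- is exactly what the paper does, with $\lambda = \frac{1}{4(e-2)c}$ and $\frac{1}{8(e-2)c^2}$, giving $8(e-2)c\log\frac{2}{\delta}\le 6c\log\frac{2}{\delta}$. There are two constant-level corrections, neither of which breaks the argument: the conditional variance of $(J_{h+1}^k)^2$ is only bounded by $4c^2 V_h^k$, not $c^2 V_h^k$ (write $J^2-m^2=(J-m)(J+m)$ with $|J+m|\le 2c$; the factor $4$ is attained when $J$ concentrates near $c$), and absorbing half of $\sum V$ yields $\sum V\le 4c\sum X+O(c^2\log\frac{2}{\delta})$, so you need $4c\lambda'\le 1$ rather than $2c\lambda'\le 1$ --- both hold with room to spare for your $\lambda=\frac{1}{3c}$, which gives about $1.75\sum X + 5.2\,c\log\frac{2}{\delta}$ when the second bound is taken as in the paper.
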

We bound $\sum_k U_1^k(s_1^k)$ by taking $X_h^k \approx b^k(s_h^k, a_h^k) + \frac{\Vmax S \ell(\delta)}{N^k(s_h^k, a_h^k)}$, which yields Theorem~\ref{thm:gap-independent bound}.

The remaining steps for the proof of Theorem~\ref{thm:gap-dependent bound} are more subtle.
We incorporate the clipping framework of~\citet{simchowitz2019non,dann2021beyond} that shows that the bonus term $b^k(s_h^k, a_h^k)$ may not be added to the regret bound once $b^k(s_h^k, a_h^k) \lesssim \frac{1}{2} \gap_h(s_h^k, a_h^k)$ holds.
However, this framework is designed specifically for optimistic algorithms, so the quasi-optimism term requires additional attention.
The main idea is to show that the quasi-optimism term $\frac{\Qrange \ell(\delta)}{c_{1, k}}$ is also negligible beyond a certain threshold $k \ge \kappa'$, where $\kappa'$ should be as small as possible.
\\
One method is to define the minimum non-zero regret $\Reg_{\min} := \min_{s \in \Scal, \pi \in \Pi, V_1^*(s) - V_1^{\pi}(s) > 0} V_1^*(s) - V_1^{\pi}(s)$, then let $\kappa'$ be the time step where $\frac{\Qrange \ell(\delta)}{c_{1, k}} \lesssim \frac{1}{2}\Reg_{\min}$, implying $V_1^*(s_1^k) - V_1^{\pi^k}(s_1^k) \lesssim 2 U_1^k(s_1^k)$ for $k > \kappa'$.
While $\Reg_{\min}$ coincides with $\gap_{\min}$ in the bandit setting, it may be arbitrarily smaller than $\gap_{\min}$ in an MDP when the policy only makes a sub-optimal action at a state that is reachable with an arbitrarily small probability, hence $\kappa'$ could be arbitrarily large.
To address this issue, we derive a bound that probabilistically adds the quasi-optimism term only when the agent makes a sub-optimal action.
Since the sub-optimality of that action would be at least $\gap_{\min}$, we can subsume the quasi-optimism term once it goes below $\frac{1}{2}\gap_{\min}$.
Specifically, defining $B$ as the time step of the first sub-optimal action, we show that $V_1^*(s_1) - V_1^{\pi^k}(s_1) = \Expec_{\pi^k}[V_B^*(s_B) - V_B^{\pi^k}(s_B)] \lesssim \Expec_{\pi^k}[ 2 U_B^{\pi^k}(s_B)]$ when $\frac{\Qrange \ell(\delta)}{c_{1, k}} \lesssim \frac{1}{2}\gap_{\min}$.
This gives rise to the definition of $\kapgap(\delta, \gap_{\min})$ in Theorem~\ref{thm:gap-dependent bound}.
By carefully setting $X_h^k$ in Lemma~\ref{lma:sum of J informal} to properly capture this property in a way that depends on the history within the same episode, we derive Theorem~\ref{thm:gap-dependent bound}.

\paragraph{Proof Sketch of Theorem~\ref{thm:bandit 1}.}
We additionally provide a sketch of the technique that shaves off logarithmic factors in Theorem~\ref{thm:bandit 1}.
For each $a \in \Acal$, denote the last time step that the action $a$ is taken by $t_a$.
Then, at the $t_a$-th time step, we have 
\begin{align*}
    r(a^*) & \le \frac{\sigma^2 \log \frac{4}{\delta}}{2 c_1} + \hat{r}^{t_a}(a^*) + \frac{c_1}{N^{t_a}(a^*)}
    \\
    & \le \frac{\sigma^2 \log \frac{4}{\delta}}{2 c_1} + \hat{r}^{t_a}(a) + \frac{c_1}{N^{t_a}(a)}
    \, ,
\end{align*}
where the first inequality is due to quasi-optimism and the second inequality is from the action-selection rule.
There is no logarithmic factor of $A$ or $T$ since quasi-optimism only requires the concentration of the optimal action's noise and the inequality is time-uniform.
We then have $\gap(a) \le \frac{\sigma^2 \log \frac{4}{\delta}}{2c_1} + \hat{r}^{t_a}(a) - r(a) + \frac{c_1}{N^{t_a}(a)}$.
If we individually bound $\hat{r}^t(a) - r(a)$ for each $a\in \Acal$, we must take the union bound over $a \in \Acal$, which incurs a $\log A$ factor.
We observe that the union bound is overly pessimistic here, as it assumes that violations of the confidence bounds are mutually exclusive for all arms.
We avoid this by bounding the sum of noises over $a \in \Acal \setminus \{a^*\}$, requiring only two concentration inequalities instead of $A$.
Taking the sum of $\gap(a) N^{t_a}(a)$ over $a \in \Acal \setminus\{a^*\}$, we have
\begin{align*}
    \sum_{a \in \Acal\setminus\{a^*\}} \gap(a) N^{t_a}(a)
     \le \frac{\sigma^2 T \log \frac{4}{\delta}}{2c_1} + c_1 A + \sum_{a \in \Acal \setminus\{a^*\}} N^{t_a}(a) (\hat{r}^{t_a}(a) - r(a))
     \, .
\end{align*}
The last sum represents the sum of reward noises from all time steps with sub-optimal action selections, excluding the last noise of each action, which is sampled at time step $t_a$.
We show that this sum is bounded by $\Ocal(\sqrt{AT \log \frac{1}{\delta}})$ using concentration results for the total noise from sub-optimal actions and the noise specifically at the final time steps.
Although the latter requires taking the union bound over all possible such time steps, we show that it does not affect the leading term.
Combining the terms yields the regret bound of Theorem~\ref{thm:bandit 1}.

\section{Conclusion}

In this paper, we study the distributional properties of regret under the algorithm $\AlgName$.
We provide a distributional regret bound that holds for arbitrary failure probability.
We provide very generic theorems for arbitrary input, which allow us to study the trade-off controlled by the input parameters.
We also propose a framework that unifies the bandit and RL settings.
While we achieve optimal results in the bandit setting, the lack of lower bound results specific to the RL settings leaves an open question of whether our guarantee is optimal.
Extending the work to function approximation would be interesting future work.

\newpage
\acks{
This work was supported by the National Research Foundation of Korea~(NRF) grant and the Institute of Information \& communications Technology Planning \& Evaluation~(IITP) grant both funded by the Korea government~(MSIT) (No. RS-2022-NR071853, RS-2023-00222663, RS-2025-25463302, RS-2026-25507282).
}

\bibliography{references}

\newpage

\appendix

\section{Additional Definitions and Notations}
\label{appx:notations and definitions}

In this section, we define additional notations for the analysis.

We define logarithmic factors $\ell_1(i, \delta) := \frac{32 HSA i^2}{\delta}$ and $\ell_2(k, \delta) = \frac{32 HSA (\log e^2kH)^2}{\delta}$.
Using these logarithmic factors, we provide a more precise definitions of $\kappa(\delta)$ and $\kappa_{\gap}(\gap, \delta)$.
\begin{align*}
    & \kap(\delta) := \max \Big\{ k \in \NN : c_{1, k} <  (2 \sqrt{13 \Qdiff} \lor 2 \Valpha )\ell_1(1, \delta) \text{ or } 
    \textstyle
    c_{2, k} < \left(\sigsub + \frac{6\Qdiff \ell_1(\iota_k, \delta)}{c_{1, k}} \right) \sqrt{ \ell_2(k, \delta)} \Big\}
    \, ,\\
    & \kapgap(\gap, \delta) := \max \left\{ k \in \NN : c_{1, k} < \frac{36 \Qrange \ell_1(\iota_k, \delta)}{\gap} \right\}
    \, .
\end{align*}
\\
We let $N_h^k(s, a) := N^k(s, a) + \sum_{j=1}^h \ind\{ (s_j^k, a_j^k) = (s, a)\}$, which is the visit count of $(s, a)$ up to the $h$-th time step of the $k$-th episode.
Let $\eta^k \in [H+1]$ be a stopping time with respect to $\{ \Fcal_h^k\}_{h=1}^{H+1}$ defined as the first time step $h$ within the $k$-th episode such that $N_h^k(s_h^k, a_h^k) \ge 2 N^k(s_h^k, a_h^k)$, where $\eta^k = H+1$ if such a time step does not exist.
This definition is from \citet{lee2025minimax} and is useful for handling the possibility that one state-action pair may be visited multiple times within an episode.

\subsection{Auxiliary Sequence for Analysis}

In this section, we define auxiliary sequences $\{\lambda_i\}_{i=1}^\infty$ and $\{ \iota_k\}_{k=1}^\infty$ for the proof that depends on $\{c_{1, k}\}_{k=1}^\infty$ and $\delta$.
For most parameter choices we consider, we have $\iota_k \approx \log k$ and $\lambda_{\iota_k} \approx \frac{\ell_1(\iota_k, \delta)}{c_{1, k}}$.
If $c_{1, k}$ is a fixed constant, then we have $\iota_k = 1$.

\begin{lemma}
\label{lma:choice of lambda}
    Suppose $\delta \in (0, 1]$ and a positive sequence $\{c_{1, k}\}_{k=1}^\infty$ is given.
    Then, there exist sequences $\{\lambda_i\}_{i=1}^\infty$ and $\{\iota_k\}_{k=1}^\infty$ such that (i) $0 < \lambda_i \le \frac{1}{\Valpha}$ for all $i \in \NN$, and (ii) for all $k \in \NN$ with $c_{1, k} \ge (2 \sqrt{13 \Qdiff} \lor 2 \Valpha ) \ell_1(1, \delta)$, we have
    \begin{align}
        \frac{\ell_1(\iota_k, \delta)}{\lambda_{\iota_k}} + 13 \lambda_{\iota_k} \Qdiff \ell_1(1, \delta) \le c_{1, k} \le \frac{4 \ell_1(\iota_k, \delta)}{\lambda_{\iota_k}}
        \label{eq:condition on lambda}
        \, .
    \end{align}
    Furthermore, if $c_{1, k}$ is non-decreasing, then we have $\iota_k \le k \land (3 + \log_2 (c_{1, k} / c_{1, 1}))$.
    For $k \in \NN$ with $c_{1, k} < (2 \sqrt{13 \Qdiff} \lor 2 \Valpha ) \ell_1(1, \delta)$, we define $\iota_k := 1$.
\end{lemma}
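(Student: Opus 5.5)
We would build $\{\lambda_i\}$ as a dyadic grid and let $\iota_k$ pick the right grid point for $c_{1,k}$. Set
\[
\lambda_1 := \frac{1}{\sqrt{13\Qdiff}\lor \Valpha}, \qquad \lambda_i := 2^{-(i-1)}\lambda_1 .
\]
Property (i), $0<\lambda_i\le 1/\Valpha$, then holds for every $i$. Write $f(i) := \ell_1(i,\delta)/\lambda_i + 13\lambda_i\Qdiff\,\ell_1(1,\delta)$.

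\textbf{Base index.} For $k$ with $c_{1,k}\ge (2\sqrt{13\Qdiff}\lor 2\Valpha)\ell_1(1,\delta)$, the index $i=1$ is admissible. Indeed,
\[
f(1)\le (\sqrt{13\Qdiff}\lor\Valpha)\,\ell_1(1,\delta)+\sqrt{13\Qdiff}\,\ell_1(1,\delta)\le c_{1,k},
\]
using $13\lambda_1\Qdiff\le\sqrt{13\Qdiff}$.

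\textbf{Choice of $\iota_k$.} Since $\ell_1(i,\delta)/\lambda_i\ge 2^{i-1}\ell_1(1,\delta)/\lambda_1\to\infty$, the set $\{i: f(i)\le c_{1,k}\}$ is finite and nonempty. Define $\iota_k$ as its maximum, which gives the left inequality of \eqref{eq:condition on lambda} directly. For the right inequality, maximality gives, with $i=\iota_k$,
\[
c_{1,k} < f(i+1) = \frac{2\ell_1(i+1,\delta)}{\lambda_i} + \frac{13}{2}\lambda_i\Qdiff\,\ell_1(1,\delta).
\]
We then show this is at most $4\ell_1(i,\delta)/\lambda_i$ using two facts:
\begin{itemize}
\item $\ell_1(i+1,\delta)\le \tfrac{3}{2}\ell_1(i,\delta)$. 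This uses that $\ell_1$ is (the logarithm of) $32HSAi^2/\delta$, so the increment $2\log\frac{i+1}{i}\le 2\log 2$ is small compared with $\ell_1(1,\delta)\ge\log 32$.
\item $\frac{13}{2}\lambda_i^2\Qdiff\,\ell_1(1,\delta)\le \tfrac12\lambda_1^2\cdot 13\Qdiff\,\ell_1(1,\delta)\le \tfrac12\ell_1(i,\delta)$, using $\lambda_1^2\le 1/(13\Qdiff)$.
\end{itemize}
Summing, $f(i+1)\le 3\ell_1(i,\delta)/\lambda_i+\tfrac12\ell_1(i,\delta)/\lambda_i\le 4\ell_1(i,\delta)/\lambda_i$. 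The constants here need a short check but have slack.

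\textbf{Bound on $\iota_k$.} From the left inequality, $c_{1,k}\ge \ell_1(\iota_k,\delta)/\lambda_{\iota_k}\ge 2^{\iota_k-1}\ell_1(1,\delta)/\lambda_1$. Hence
\[
\iota_k\le 1+\log_2\frac{c_{1,k}\lambda_1}{\ell_1(1,\delta)}.
\]
To turn this into $3+\log_2(c_{1,k}/c_{1,1})$ we need $\ell_1(1,\delta)/\lambda_1\gtrsim c_{1,1}/4$. This holds when $c_{1,1}$ sits near the threshold. If $c_{1,1}$ is far above the threshold, we would instead start the grid adaptively: replace $\lambda_1$ by $\lambda_1' := \lambda_1\land 4\ell_1(1,\delta)/c_{1,1}$, or equivalently index the grid from the first admissible $k$, and recheck the base case. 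Non-decreasing $c_{1,k}$ makes $\iota_k$ non-decreasing in $k$.

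\textbf{Main obstacle.} The base-case check under an adaptive $\lambda_1'$ interacts with the $13\lambda\Qdiff$ term and must be redone. The bound $\iota_k\le k$ is also not automatic from the dyadic construction. We would try to force it either by an inductive definition $\iota_k\le\iota_{k-1}+1$, verifying that \eqref{eq:condition on lambda} still holds, or by noting that it is only needed when $\log_2(c_{1,k}/c_{1,1})$ is small. For $k$ below the threshold we simply set $\iota_k=1$, as in the statement.
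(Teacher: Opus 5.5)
\textbf{What works.} Your fixed dyadic grid $\lambda_i=2^{-(i-1)}\lambda_1$, with $\iota_k$ the largest admissible index, does prove (i) and \eqref{eq:condition on lambda}. The base case checks out. So does $\ell_1(i+1,\delta)\le\frac32\ell_1(i,\delta)$, which only needs $\ell_1(1,\delta)\ge\log 16$. Together with $13\lambda_i^2\Qdiff\le 1$ these give $f(i+1)\le\frac72\,\ell_1(i,\delta)/\lambda_i$. For that part this is simpler than the paper's construction.

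Your adaptive start $\lambda_1'=\lambda_1\land 4\ell_1(1,\delta)/c_{1,\kappa(\delta)+1}$ also works for the logarithmic bound. For a non-decreasing sequence and $k>\kappa(\delta)$ you get $f(1)\le c_{1,\kappa(\delta)+1}/4+\sqrt{13\Qdiff}\,\ell_1(1,\delta)\le c_{1,k}$. Then $c_{1,k}\ge 2^{\iota_k-3}c_{1,\kappa(\delta)+1}\ge 2^{\iota_k-3}c_{1,1}$. It also repairs the degenerate case $\Qdiff=\Valpha=0$ (sub-Gaussian bandits), where your original $\lambda_1$ is infinite.

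\textbf{The gap.} The claim $\iota_k\le k$ is not established, and no grid whose points $\lambda_1,\lambda_2$ are chosen without looking at $c_{1,2}$ can deliver it. Take $c_{1,1}$ admissible and a non-decreasing continuation with $c_{1,2}>\max_{i\le 2}4\ell_1(i,\delta)/\lambda_i$. Then no $\iota_2\le 2$ satisfies the right inequality of \eqref{eq:condition on lambda}. On your grid, $c_{1,1}$ at the threshold and $c_{1,k}=2^m c_{1,1}$ for $k\ge 2$ force $\iota_2\ge m-O(\log m)$. For the same reason, your cap $\iota_k\le\iota_{k-1}+1$ breaks $c_{1,k}\le 4\ell_1(\iota_k,\delta)/\lambda_{\iota_k}$ exactly when the sequence jumps. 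Arguing that the bound is ``only needed'' in some regime does not prove the lemma as stated.

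\textbf{The missing idea: anchor the grid at values of the sequence, as the paper does.} Take $\lambda_1$ to be a root of $\ell_1(1,\delta)/\lambda+13\lambda\Qdiff\ell_1(1,\delta)=c$, where $c$ is the smallest admissible $c_{1,k}$. For $i\ge 2$, take $\lambda_i$ to be a root of $\ell_1(i,\delta)/\lambda+13\lambda\Qdiff\ell_1(1,\delta)=c$, where $c$ is the smallest $c_{1,k}$ not covered by the earlier windows $\bigl[\,\cdot\,,4\ell_1(i-1,\delta)/\lambda_{i-1}\bigr]$.

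The intermediate value theorem supplies a root with $\lambda\le\frac{1}{\Valpha}\land\sqrt{\ell_1(i,\delta)/(13\Qdiff\,\ell_1(1,\delta))}$ (Lemma~\ref{lma:choosing lambda}). This forces the left endpoint to be at most $2\ell_1(i,\delta)/\lambda_i$. Hence every window has ratio at least $2$, the right endpoints at least double, and every index is spent on at least one actual value $c_{1,k}$.

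For a non-decreasing sequence this gives $\iota_k\le k$ directly. The doubling gives $c_{1,k}\ge \ell_1(\iota_k,\delta)/\lambda_{\iota_k}\ge 2^{\iota_k-3}\cdot 4\ell_1(1,\delta)/\lambda_1\ge 2^{\iota_k-3}c_{1,\kappa(\delta)+1}$. The construction also covers $\Qdiff=\Valpha=0$ automatically, since the equation then has a root for every $c>0$.

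You can keep your dyadic halving whenever the next uncovered value falls in the next dyadic window. Re-anchoring at the next value of the sequence is unavoidable whenever it jumps beyond that window.
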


\begin{proof}
    Our goal is to construct $\{\lambda_i\}_{i=1}^\infty$ such that the union of ranges $\cup_i [ \frac{\ell_1(i, \delta)}{\lambda_i} + 13 \lambda_i\Qdiff \ell(1, \delta), \frac{4\ell_1(i, \delta)}{\lambda_i} ]$ covers all $c_{1, k}$ with $c_{1, k} \ge (2 \sqrt{13 \Qdiff} \lor 2 \Valpha) \ell_1(1, \delta)$.
    We iteratively choose $\lambda_i$ to be the value $\lambda$ that satisfies $\frac{\ell_1(i, \delta)}{\lambda} + 13 \lambda \Qdiff \ell_1(1, \delta) = c_{1, k}$ for the least $c_{1, k}$ that is not covered by the previous $\lambda_{i'}$, which is equivalent to the least $c_{1, k} $ with $c_{1, k} \ge \frac{4 \ell_1(i-1, \delta)}{\lambda_{i-1}}$ for $i \ge 2$.
    The existence of such $\lambda$ is guaranteed by \lemmaref{lma:choosing lambda}.
    We also have $\lambda \le \sqrt{\frac{\ell_1(i, \delta)}{13 \Qdiff \ell_1(1, \delta)}}$ by \lemmaref{lma:choosing lambda}, which implies $\frac{\ell_1(i, \delta)}{\lambda} + 13 \lambda \Qdiff \ell_1(1, \delta) \le \frac{2 \ell_1(i, \delta)}{\lambda}$.
    Then, the right ends of the intervals $\frac{4 \ell_1(i, \delta)}{\lambda_i}$ increase at least exponentially by the following reasoning:
    \begin{align*}
        \frac{4 \ell_1(i, \delta)}{\lambda_i}
        & \ge 2 \cdot \frac{2 \ell_1(i, \delta)}{\lambda_i}
        \\
        & \ge 2 \left( \frac{\ell_1(i, \delta)}{\lambda_i} + 13 \lambda_i \Qdiff \ell_1(1, \delta) \right)
        \\
        & \ge 2 \cdot \frac{4 \ell_1(i-1, \delta)}{\lambda_{i-1}}
        \, ,
    \end{align*}
    where the last inequality holds by the choice of $\lambda_i$.
    Hence, for any $c_{1, k}$, there exists $i \in \NN$ such that $c_{1, k} \le \frac{4\ell_1(i, \delta)}{\lambda_i}$, and by the construction of $\lambda_i$, the least such $i$ also satisfies $\frac{\ell_1(i, \delta)}{\lambda_i} + 13 \lambda_i \Qdiff \ell_1(1, \delta) \le c_{1, k}$.
    By setting $\iota_k = i$, Eq.~\eqref{eq:condition on lambda} is satisfied.
    \\
    Now, suppose $c_{1, k}$ is non-decreasing.
    Since each range covered by $\lambda_i$ covers at least one $c_{1, k}$, we obtain $\iota_k \le k$.
    Also, the fact that $\frac{4\ell_1(i, \delta)}{\lambda_i}$ increases exponentially implies $\iota_k \le 3 + \log_2(c_k / c_1)$.
    Formally, let $\kap(\delta) + 1$ be the least $k \in \NN$ that satisfies $c_{1, k} \ge (2 \sqrt{13 \Qdiff} \lor 2 \Valpha) \ell_1(1, \delta)$.
    Then, for $k > \kap(\delta)$, we have
    \begin{align*}
        c_{1, k} \ge \frac{\ell_1(\iota_k, \delta)}{\lambda_{\iota_k}} 
        & \ge 2^{\iota_k - 1} \cdot \frac{\ell_1(1, \delta)}{\lambda_1}
        \\
        & = 2^{\iota_k - 3}\cdot \frac{ 4\ell_1(1, \delta)}{\lambda_1}
        \\
        & \ge 2^{\iota_k -3} c_{1, \kap(\delta)+1}
        \, .
    \end{align*}
    The inequality above implies that $\iota_k \le 3 + \log_2( c_{1, k} / c_{1, \kap(\delta)+1}) \le 3 + \log_2(c_{1, k} / c_{1, 1})$.
    For $k \le \kappa(\delta)$, we have $\iota_k = 1 \le 3 + \log_2(c_{1, k} / c_{1, 1})$.
\end{proof}

\begin{lemma}
\label{lma:choosing lambda}
    For given $i \in \NN$, $\delta \in (0, 1]$, and $c \ge (2\sqrt{13 \Qdiff} \lor 2 \Valpha) \ell_1(1, \delta)$, there exists $\lambda$ such that $0 < \lambda \le \frac{1}{\Valpha} \land \sqrt{\frac{\ell_1(i, \delta)}{13 \Qdiff \ell_1(1, \delta)}}$ and $\frac{\ell_1(i, \delta)}{\lambda} + 13 \lambda \Qdiff\ell_1(1, \delta) = c$.
\end{lemma}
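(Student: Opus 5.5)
The plan is to treat $f(\lambda) := \frac{\ell_1(i,\delta)}{\lambda} + 13\lambda\Qdiff\ell_1(1,\delta)$ as a continuous function on $(0,\infty)$ and apply the intermediate value theorem on the interval $(0,\lambda^\star]$, where
\begin{align*}
\lambda^\star := \frac{1}{\Valpha} \land \sqrt{\frac{\ell_1(i,\delta)}{13\Qdiff\ell_1(1,\delta)}} \, ,
\end{align*}
with the conventions $1/0 = \infty$ and $\sqrt{x/0} = \infty$. Any root found in this interval automatically satisfies $0<\lambda\le\lambda^\star$, which is exactly the range the statement asks for.

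\textbf{Shape of $f$.} First I will show that $f$ is non-increasing on $(0,\lambda^\star]$. Its derivative is $f'(\lambda) = -\ell_1(i,\delta)/\lambda^2 + 13\Qdiff\ell_1(1,\delta)$, which is $\le 0$ exactly when $\lambda \le \sqrt{\ell_1(i,\delta)/(13\Qdiff\ell_1(1,\delta))}$. This holds on the whole interval. Also $f(\lambda)\to\infty$ as $\lambda\downarrow 0$, because $\ell_1(i,\delta)>0$.

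If $\lambda^\star=\infty$, which happens only when $\Qdiff=\Valpha=0$, then $f(\lambda)=\ell_1(i,\delta)/\lambda$ takes every positive value. Since $c>0$ is a positive value (the lower bound in the hypothesis is then $0$, but $c$ is a positive parameter), we simply take $\lambda = \ell_1(i,\delta)/c$. Otherwise $f$ maps $(0,\lambda^\star]$ continuously onto $[f(\lambda^\star),\infty)$. By the intermediate value theorem, a root of $f(\lambda)=c$ in this interval exists as soon as $c \ge f(\lambda^\star)$.

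\textbf{Endpoint check.} It remains to verify $f(\lambda^\star)\le c$, splitting by which term attains the minimum in $\lambda^\star$.
\begin{itemize}
\item If $\lambda^\star = \sqrt{\ell_1(i,\delta)/(13\Qdiff\ell_1(1,\delta))}$ (the unconstrained minimizer), then $f(\lambda^\star) = 2\sqrt{13\Qdiff\,\ell_1(i,\delta)\,\ell_1(1,\delta)}$.
\item If $\lambda^\star = 1/\Valpha$ lies below the unconstrained minimizer, then $13\lambda^\star\Qdiff\ell_1(1,\delta) \le \ell_1(i,\delta)/\lambda^\star$, so $f(\lambda^\star)\le 2\ell_1(i,\delta)/\lambda^\star = 2\Valpha\ell_1(i,\delta)$.
\end{itemize}
For $i=1$ these values are $2\sqrt{13\Qdiff}\,\ell_1(1,\delta)$ and $2\Valpha\ell_1(1,\delta)$. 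Both are at most $c$ by the hypothesis $c\ge(2\sqrt{13\Qdiff}\lor2\Valpha)\ell_1(1,\delta)$, and the lemma follows.

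\textbf{Main obstacle: $i\ge2$.} Here the endpoint check is where everything hinges. Since $\ell_1(i,\delta) = i^2\ell_1(1,\delta)$, the required inequality is $c \ge i\,(2\sqrt{13\Qdiff}\lor2\Valpha)\ell_1(1,\delta)$ (with $i^2$ in the $\Valpha$ branch). The stated hypothesis gives this only up to the factor $i$.

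Moreover, $f(\lambda)\ge 2\sqrt{13\Qdiff\,\ell_1(i,\delta)\ell_1(1,\delta)}$ for every $\lambda>0$. So when $\Qdiff>0$ and $c$ lies strictly between $2\sqrt{13\Qdiff}\,\ell_1(1,\delta)$ and $2i\sqrt{13\Qdiff}\,\ell_1(1,\delta)$, no root exists at all. In that regime the equation cannot be solved, and the displayed argument cannot close.

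What I would do first is use the monotonicity and intermediate value argument above verbatim, and make the endpoint inequality $c\ge f(\lambda^\star)$ the single explicit condition to discharge. That condition is exactly what is available at each call site inside the proof of Lemma~\ref{lma:choice of lambda}: there $c$ is at least $4\ell_1(i-1,\delta)/\lambda_{i-1}$, and one has to compare this with $f(\lambda^\star)$. The case $\Qdiff=0$ of the $\sqrt{\cdot}$ branch is trivial under the stated hypothesis.

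Beyond that, the stated hypothesis alone does not suffice for $i\ge2$ by the counterexample regime just described. So I would record this gap rather than claim the lemma in full generality.
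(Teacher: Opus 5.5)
Your diagnosis is correct, and your argument is the paper's. Both apply the intermediate value theorem to $f$ on $(0,\gamma]$, where $\gamma=\frac{1}{\Valpha}\land\sqrt{\ell_1(i,\delta)/(13\Qdiff\ell_1(1,\delta))}$ is your $\lambda^\star$. Both use $f(\lambda)\to\infty$ as $\lambda\downarrow 0$ and the endpoint bound $f(\gamma)\le 2\ell_1(i,\delta)/\gamma$; your monotonicity step is not needed but is harmless.

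The paper's proof does not get past the obstruction you found. It bounds $2\ell_1(i,\delta)/\gamma\le(2\Valpha\lor 2\sqrt{13\Qdiff})\ell_1(i,\delta)$ and then concludes existence only for $c\ge(2\Valpha\lor 2\sqrt{13\Qdiff})\ell_1(i,\delta)$. In effect it silently replaces $\ell_1(1,\delta)$ by $\ell_1(i,\delta)$ in the hypothesis. That corrected lemma is what is actually proved. It coincides with the printed statement for $i=1$, and for $i\ge 2$ your AM--GM bound $f\ge 2\sqrt{13\Qdiff\,\ell_1(1,\delta)\,\ell_1(i,\delta)}$ shows the printed version is false.

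Two small refinements.

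First, your factor $i$ comes from reading the displayed $\ell_1(i,\delta)=\frac{32HSAi^2}{\delta}$ literally. The intended quantity is $\log\frac{32HSAi^2}{\delta}$, which is what the proof of Lemma~\ref{lma:sub-exponential concentration event} produces with $\delta'=\frac{\delta}{8HSA}$. So $\ell_1(i,\delta)=\ell_1(1,\delta)+2\log i$. Your counterexample survives, since it only uses $\ell_1(i,\delta)>\ell_1(1,\delta)$. You should, however, state when the bad values of $c$ are admissible: $\Qdiff>0$ and $\Valpha<\sqrt{13\Qdiff\,\ell_1(i,\delta)/\ell_1(1,\delta)}$.

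Second, your proposed repair, carrying the exact condition $c\ge f(\gamma)$ to the call site, does close. In Lemma~\ref{lma:choice of lambda} one has $c\ge 4\ell_1(i-1,\delta)/\lambda_{i-1}$ and $\lambda_{i-1}\le\frac{1}{\Valpha}\land\sqrt{\ell_1(i-1,\delta)/(13\Qdiff\ell_1(1,\delta))}$. These give $c\ge 4\Valpha\ell_1(i-1,\delta)\lor 4\sqrt{13\Qdiff\,\ell_1(1,\delta)\,\ell_1(i-1,\delta)}$. The endpoint value satisfies $f(\gamma)\le 2\Valpha\ell_1(i,\delta)\lor 2\sqrt{13\Qdiff\,\ell_1(1,\delta)\,\ell_1(i,\delta)}$. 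The former dominates the latter because $2\ell_1(i-1,\delta)\ge\ell_1(i,\delta)$ holds for the logarithmic $\ell_1$; this comparison relies on the logarithmic definition.
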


\begin{proof}
    The proof is simply using that $f(\lambda):= \frac{\ell_1(i, \delta)}{\lambda} + 13 \lambda \Qdiff\ell_1(i, \delta)$ is continuous.
    \\
    Take $\gamma := \frac{1}{\Valpha} \land \sqrt{\frac{\ell_1(i, \delta)}{13 \Qdiff \ell_1(1, \delta)}}$.
    From $\gamma \le \sqrt{\frac{\ell_1(i, \delta)}{13 \Qdiff \ell_1(1, \delta)}}$, we have $13 \gamma \Qdiff \ell_1(1, \delta) \le \frac{\ell_1(i, \delta)}{\gamma}$, and hence $f(\gamma) \le \frac{2 \ell_1(i, \delta)}{\gamma}$.
    Then, we have.
    \begin{align*}
        f(\gamma) \le \frac{2 \ell_1(i, \delta)}{\gamma} \le \left(2 \Valpha \lor 2 \sqrt{\frac{13 \Qdiff \ell_1(1, \delta)}{\ell_1(i, \delta)}} \right)\ell_1(i, \delta) \le (2 \Valpha \lor 2 \sqrt{13 \Qdiff}) \ell_1(i, \delta)
        \, .
    \end{align*}
    Since $\lim_{\lambda \rightarrow 0}f(\gamma) = \infty$ and $f$ is continuous, the intermediate value theorem implies that for any $c \ge  (2 \Valpha \lor 2 \sqrt{13 \Qdiff}) \ell_1(i, \delta)$, there exists $\lambda \in (0, \gamma]$ such that $f(\lambda) = c$.
\end{proof}

\subsection{Properties of Maximums and Minimums}

In this paper, we frequently do operations with maximum and minimum over values.
We state the following facts in case the readers find some steps in the proof non-trivial.
When ambiguous, we assume that the priorities of the $\land$ and $\lor$ operators are in between addition and multiplication.
For example, we have $a\land b + c = \min\{a, b\} + c$ and $ab \lor c = \max\{ ab, c\}$.

\begin{fact}
\label{fact:max and min}
    For real numbers $a, b, c, d$, and a non-negative real number $x$, the followings are true:
    \begin{enumerate}[label=(\roman*)]
        \item $(a\land b) + (c \land d) \le (a\land c) \land (b \land d)$.
        \item $(a + c) \lor (b + d) \le (a \lor b) + (c \lor d)$.
        \item $(a + c) \land (b + d) \le (a \lor b) + (c \land d) \le (a + c) \lor (b + d)$
        \item $(a + x) \land b \le a \land b + x$. 
        \item $(a + x) \lor b \le a \lor b + x$. 
        \item $a \land b - x \le (a - x) \land b$.
        \item $a \lor b - x \le (a - x) \lor b$.
        \item $(a \land b) \lor c = (a \lor c) \land (b \lor c)$.
    \end{enumerate}
\end{fact}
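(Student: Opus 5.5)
The plan is to verify each item of Fact~\ref{fact:max and min} directly from two elementary principles: the operations $\land$ and $\lor$ are monotone in each argument, and addition by a fixed real number is an order-preserving bijection of $\RR$. Because of the second principle, $(a+x)\lor(b+x) = (a\lor b)+x$ and $(a+x)\land(b+x) = (a\land b)+x$ for every real $x$. Most items then reduce to comparing each side against the two candidate values of a single $\max$ or $\min$. Whenever a sign matters I will split into the cases $a\le b$ and $a>b$ (and, where needed, $c\le d$ and $c>d$); each case is a one-line check.

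\textbf{Items (i)--(iii).} Item (i) as typeset, $(a\land b)+(c\land d)\le (a\land c)\land(b\land d)$, cannot hold in general: with $a=b=c=d=1$ it reads $2\le 1$. I will therefore prove the intended statement $(a\land b)+(c\land d)\le (a+c)\land(b+d)$. Since $a\land b\le a$ and $c\land d\le c$, the left side is at most $a+c$; symmetrically it is at most $b+d$; hence it is at most their minimum. Item (ii) is the mirror image: $a+c\le (a\lor b)+(c\lor d)$ and $b+d\le (a\lor b)+(c\lor d)$. For item (iii), assume $a\le b$ by symmetry, so the middle term is $b+(c\land d)$.
\begin{itemize}
\item For the lower bound, $(a+c)\land(b+d)\le b+d$ and $(a+c)\land(b+d)\le a+c\le b+c$, so it is at most $b+(c\land d)$.
\item For the upper bound, $b+(c\land d)\le b+d\le (a+c)\lor(b+d)$.
\end{itemize}
The case $a>b$ is identical with the roles of the two pairs $(a,c)$ and $(b,d)$ exchanged.

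\textbf{Items (iv)--(viii).} Items (iv) and (v) use $x\ge 0$:
\[
(a+x)\land b \le (a+x)\land(b+x) = (a\land b)+x,
\]
and similarly $(a+x)\lor b\le (a+x)\lor(b+x)=(a\lor b)+x$. Items (vi) and (vii) are the same inequalities with $a-x$ in place of $a$, rearranged. For example, (iv) applied to $a-x$ gives $a\land b\le ((a-x)\land b)+x$, which is exactly (vi); (vii) follows from (v) in the same way. Item (viii) is the distributive law of the lattice $(\RR,\le)$, and it holds because $\RR$ is totally ordered. I will check it in the three cases $c\ge a\lor b$, $c\le a\land b$, and $c$ lying between $a$ and $b$. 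In each case both sides visibly equal the same one of $a$, $b$, $c$.

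\textbf{Main obstacle.} There is no genuine mathematical obstacle here. The only delicate point is item (i), whose typeset right-hand side must be corrected to $(a+c)\land(b+d)$ before it can be proved. I will flag this correction explicitly, so that later uses of (i) in the analysis are read with the corrected form.
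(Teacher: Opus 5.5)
Your verification is correct, and since the paper states this Fact without proof (as elementary properties of $\min$ and $\max$), the item-by-item check you give is the appropriate argument. You are also right that item (i) as typeset is false (take $a=b=c=d=1$). The corrected form $(a\land b)+(c\land d)\le(a+c)\land(b+d)$ is exactly what the paper uses: in the proof of Lemma~\ref{lma:quasi optimism} it passes from $V_h^*(s)\land\Vmax+(-\hat{r}^k(s,a^*)-\hat{P}^kV_{h+1}^k(s,a^*))\land 0$ to $(V_h^*(s)-\hat{r}^k(s,a^*)-\hat{P}^kV_{h+1}^k(s,a^*))\land\Vmax$, and a summed version appears in the proof of Lemma~\ref{lma:sum of U 1}.
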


\section{Derivation of Table~\ref{tab:assumption example}}
\label{appx:proof of tab 1}
In this section, we show how the values in Table~\ref{tab:assumption example} can be derived.

\textbf{Bounded reward $0 \le R_h \le \Vmax$:}
We define $R_h(s_h, a_h)$ as the distribution of $R_h$ given $s_h, a_h$, which may depend on the history.
For given $(s, a) \in \Scal\times \Acal$, the random variable $R_h + V_{h+1}^*(s')$ with $R_h \sim R_h(s, a)$ and $s' \sim P(s, a)$ lies in $[0, 2 \Vmax]$.
By Lemma~\ref{lma:bounded is subexponential}, $R_h + V_{h+1}^*(s')$ is $(2 \Var(R_h + V_{h+1}^*)(s, a), 2\Vmax)$-sub-exponential.
$\Qdiff \le \Qrange$ is trivial, so it remains to prove $\Qrange \le 2 \Vmax^2$.
For any $h \in [H]$, $s \in \Scal$, and $\pi \in \Pi$, we have
\begin{align*}
    W_h^{\pi}(s)
    & = \Expec_{\pi(\cdot \mid s_h = s)} \left[\sum_{j=h}^H \Var(R_j + V_{j+1}^*)(s_j, a_j) \right]
    \\
    & = \Expec_{\pi(\cdot \mid s_h = s)} \left[\sum_{j=h}^H\left( \Expec_{\substack{R_j \sim R(s_j, a_j)\\s' \sim P(s_j, a_j)}}[(R_j + V_{j+1}^*(s'))^2] -  \Expec_{\substack{R_j \sim R(s_j, a_j)\\s' \sim P(s_j, a_j)}}[ R_j + V_{j+1}^*(s')]^2 \right) \right]
    \\
    & = \Expec_{\pi(\cdot \mid s_h = s)} \left[ \sum_{j=h}^H (R_j + V_{j+1}^*(s_{j+1}))^2 - \sum_{j=h}^H (Q_j^*(s_j, a_j))^2  \right]
    \\
    & = \Expec_{\pi(\cdot \mid s_h = s)} \left[\sum_{j=h}^{H } \left( (R_j + V_{j+1}^*(s_{j+1}))^2 - (Q_{j+1}^*(s_{j+1}, a_{j+1}))^2  \right) \right] -  \Expec_{\pi(\cdot \mid s_h = s)} \left[ (Q_h(s_h, a_h))^2 \right]
    \, ,
\end{align*}
where the third equality uses the law of total expectation and that $Q_j^*(s_j, a_j) = \Expec_{\substack{R_j \sim R(s_j, a_j)\\s' \sim P(s_j, a_j)}}[R_j + V_{j+1}^*(s')]$.
The summand can be modified as follows:
\begin{align*}
    &  \Expec_{\pi(\cdot \mid s_h = s)} \left[(R_j + V_{j+1}^*(s_{j+1}))^2 - (Q_{j+1}^*(s_{j+1}, a_{j+1}))^2 \right]
    \\
    & =  \Expec_{\pi(\cdot \mid s_h = s)} \left[( R_j + V_{j+1}^*(s_{j+1}) + Q_{j+1}^*(s_{j+1}, a_{j+1})) ( R_j + V_{j+1}^*(s_{j+1})- Q_{j+1}^*(s_{j+1}, a_{j+1})) \right]
    \\
    & \le  \Expec_{\pi(\cdot \mid s_h = s)} \left[3 \Vmax ( R_j + V_{j+1}^*(s_{j+1})- Q_{j+1}^*(s_{j+1}, a_{j+1})) \right]
    \\
    & = \Expec_{\pi(\cdot \mid s_h = s)} \left[3 \Vmax (Q_j^*(s_j, a_j) - Q_{j+1}^*(s_{j+1}, a_{j+1})) \right]
    \, ,
\end{align*}
where the inequality holds since $0 \le R_j + V_{j+1}^*(s_{j+1}) + Q_{j+1}^*(s_{j+1}, a_{j+1}) \le 3 \Vmax$ and $R_j + V_{j+1}^*(s_{j+1}) - Q_{j+1}^*(s_{j+1}, a_{j+1}) = R_j + Q_{j+1}^*(s_{j+1}, a_{j+1}^*) - Q_{j+1}^*(s_{j+1}, a_{j+1}) \ge 0$.
Taking the sum of $\Expec_{\pi(\cdot \mid s_h = s)} \left[3 \Vmax (Q_j^*(s_j, a_j) - Q_{j+1}^*(s_{j+1}, a_{j+1})) \right]$ and telescoping yields
\begin{align*}
    W_h^{\pi}(s)
    & \le \Expec_{\pi(\cdot \mid s_h = s)} \left[\sum_{j=h}^{H} 3 \Vmax (Q_j^*(s_j, a_j) - Q_{j+1}^*(s_{j+1}, a_{j+1})) \right] - \Expec_{\pi(\cdot \mid s_h = s)} \left[ (Q_h(s_h, a_h))^2 \right]
    \\
    & =\Expec_{\pi(\cdot \mid s_h = s)} \left[ 3 \Vmax Q_h^*(s_h, a_h) - (Q_h(s_h, a_h))^2 \right]
    \\
    & \le 2 \Vmax^2
    \, ,
\end{align*}
where the last inequality holds due to that $3 \Vmax x  - x^2 \le 2\Vmax^2$ for all $x \in [0, \Vmax]$.

\textbf{Sub-Gaussian reward noise:}
If $R_h$ is $\sigma^2$-sub-Gaussian and is independent of $s_{h+1}$, then for any $\lambda \in [-\frac{1}{\Vmax}, \frac{1}{\Vmax}]$, we have
\begin{align*}
    \EE_{s_h, a_h}[ \exp( \lambda (R_h + V_{h+1}^*(s_{h+1})))]
    & = \EE_{s_h, a_h} [ \exp(\lambda R_h)] \EE_{s_h, a_h}[ \exp(\lambda V_{h+1}^*(s_{h+1}))]
    \\
    & \le \exp\left( \frac{\lambda^2 \sigma^2}{2} \right) \exp \left( \lambda^2 \Var(V_{h+1}^*)(s_h, a_h)\right)
    \\
    & = \exp \left( \frac{\lambda^2}{2}\left( \sigma^2 + 2 \Var(V_{h+1}^*)(s_h, a_h) \right) \right)
    \, ,
\end{align*}
where the first equality uses independence, and the following inequality uses the sub-Gaussianity of $R_h$ and Lemma~\ref{lma:bounded is subexponential}.
Therefore, we have that $R_h + V_{h+1}^*(s_{h+1})$ is $\big(\sqrt{\sigma^2 + 2 \Var(V_{h+1}^*)(s_h, a_h)}, \Vmax \big)$-sub-exponential.
Then, the total variance proxy function is bounded as
\begin{align*}
    W_h^{\pi}(s)
    & = \Expec_{\pi(\cdot \mid s_h = s)} \left[ \sum_{j=h}^H \left( \frac{\sigma^2}{2} + \Var(V_{j+1}^*)(s_j, a_j) \right) \right]
    \\
    & = \frac{\sigma^2 (H - h +1)}{2} + \Expec_{\pi(\cdot \mid s_h = s)} \left[ \sum_{j=h}^H  \Var(V_{j+1}^*)(s_j, a_j) \right]
    \, .
\end{align*}
The expected sum is bounded similarly to the first case.
\begin{align*}
    & \Expec_{\pi(\cdot \mid s_h = s)} \left[ \sum_{j=h}^H  \Var(V_{j+1}^*)(s_j, a_j) \right]
    \\
    & = \Expec_{\pi(\cdot \mid s_h = s)} \left[ \sum_{j=h}^H  (V_{j+1}^*(s_{j+1}))^2 - \sum_{j=h}^H (PV_{j+1}^*(s_j, a_j))^2\right]
    \\
    & = \Expec_{\pi(\cdot \mid s_h = s)} \left[ \sum_{j=h}^H\left(  (V_{j}^*(s_{j}))^2 - (PV_{j+1}^*(s_j, a_j))^2 \right)\right] - (V_h^*(s_h))^2
    \\
    & = \Expec_{\pi(\cdot \mid s_h = s)} \left[ \sum_{j=h}^H( V_j^*(s_j) + PV_{j+1}^*(s_j, a_j))(V_j^*(s_j) - PV_{j+1}^*(s_j, a_j))\right] - (V_h^*(s_h))^2
    \\
    & \le \Expec_{\pi(\cdot \mid s_h = s)} \left[ \sum_{j=h}^H 2\Vmax (V_j^*(s_j) - PV_{j+1}^*(s_j, a_j))\right] - (V_h^*(s_h))^2
    \\
    & = 2\Vmax V_h^*(s_h) - (V_h^*(s_h))^2
    \\
    & \le \Vmax^2
    \, ,
\end{align*}
where we use that $V_j^*(s_j) - PV_{j+1}^*(s_j, a_j) \ge Q_j^*(s_j, a_j^*) - PV_{j+1}^*(s_j, a_j) = r(s_j, a_j^*) \ge 0 $, where $r(s, a) \ge 0$ is guaranteed by Assumption~\ref{assm:boundedness} since $r(s, a) = V_{H}^{\pi'}(s) \ge 0$ for a policy $\pi'$ with $\pi_H'(s) = a$.
We have derived that $\frac{\sigma^2(H - h + 1)}{2} \le W_h^{\pi}(s) \le \frac{\sigma^2(H - h + 1)}{2} + \Vmax^2$, hence we have $\Qrange \le \frac{\sigma^2H}{2} + \Vmax^2$ and $\Qdiff \le \Vmax^2$.

\textbf{Sub-Gaussian noise MAB:} The proof follows directly from the definitions.

\section{Proof of Bandit Theorems in Section~\ref{sec:main result bandits}}
\label{appx:bandits}

In this section, we provide a bandit version of $\AlgName$ in Algorithm~\ref{alg:eqo bandit} and prove formal versions of Theorems~\ref{thm:bandit 1} and~\ref{thm:bandit 2}.

\begin{algorithm}[ht!]
\caption{$\AlgName$ for bandits}
\label{alg:eqo bandit}
\setcounter{AlgoLine}{0}
\SetKwInOut{Input}{Input}
\Input{$\{ (c_{1, k} ,c_{2, k})\}_{k=1}^\infty$}

\For{$t = 1, 2, \ldots, A$}{
    Take $t$-th action in $\Acal$
}

\For{$t = A+1, A+2, \ldots,$}{
    $N^t(a) \gets \sum_{i=1}^{t-1} \ind\{ a_i = a \}$ for all $a \in \Acal$\;

    $\hat{r}^t(a) \gets \frac{1}{N^t(a)} \sum_{i=1}^{t-1} R_i \ind\{ a_i = a\}$ for all $a \in \Acal$\;

    $b^t(a) \gets \left( \frac{c_{1, t}}{ N^t(a)} \right) \land \left( \frac{c_{2, t} }{\sqrt{N^t(a)}} \right)$ for all $a \in \Acal$\;
    
    $a_t \gets \argmax_{a \in \Acal} \hat{r}^t(a) + b^t(a)$\;
    
    Take action $a_t$ and observe reward $R_t$\;
}
\end{algorithm}

\subsection{Proof of Theorem~\ref{thm:bandit 1}}

We present a formal version of Theorem~\ref{thm:bandit 1}.

\begin{theorem}[Restatement of Theorem~\ref{thm:bandit 1}]
\label{thm:bandit 1 formal}
    Suppose $M \in \Bcal(\sigma)$.
    Take $c_{1, t} = c_1$ and $c_{2, t} = \infty$ for all $t \in \NN$ for some constant $c_1 > 0$.
    Then, we have
    \begin{align*}
        \Reg_M^{\Alg}(T, \delta) \le \frac{\sigma^2 T \log \frac{4}{\delta}}{2 c_1} + c_1 (A - 1) + \sigma \sqrt{AT \log \frac{4}{\delta}} + \sum_{a \in \Acal} \gap(a)
        \, .
    \end{align*}
    In particular, when $c_1 = \sigma\sqrt{ \frac{T}{A}}$, the distributional regret is bounded as
    \begin{align*}
        \Reg_M^{\Alg}(T, \delta) \le \frac{1}{2} \sigma \sqrt{AT} \log \frac{4e^2}{\delta} + \sigma \sqrt{AT \log \frac{4}{\delta}} + \sum_{a \in \Acal} \gap(a)
    \end{align*}
    and the expected regret bound is bounded as
    \begin{align*}
        \Expec [ \Reg_M^{\Alg}(T) ] \le 4 \sigma \sqrt{AT} + \sum_{a \in \Acal} \gap(a)
        \, .
    \end{align*}
\end{theorem}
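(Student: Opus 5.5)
The plan follows the proof sketch in Section~7: one time-uniform quasi-optimism inequality for the optimal arm, an index comparison at each suboptimal arm's last pull, and a concentration bound on the accumulated noise of the suboptimal arms. I split $\delta$ into four failure events of probability $\delta/4$ each, which explains the $\log\frac{4}{\delta}$. Write $L:=\log\frac{4}{\delta}$. I use the standard "reward stack" model: arm $a$ has an i.i.d.\ $\sigma^2$-sub-Gaussian noise sequence $(\varepsilon_{a,n})_{n\ge1}$, and its $n$-th pull returns $r(a)+\varepsilon_{a,n}$. Let $S_a(n):=\sum_{i\le n}\varepsilon_{a,i}$.

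\emph{Step 1 (time-uniform quasi-optimism for $a^*$).} For a fixed $\lambda>0$, $\exp(-\lambda S_{a^*}(n)-\lambda^2\sigma^2 n/2)$ is a nonnegative supermartingale. Ville's inequality then gives, with probability at least $1-\delta/4$, simultaneously for all $n$, $-S_{a^*}(n)\le L/\lambda+\lambda\sigma^2 n/2$. I choose the analysis-only value $\lambda=L/c_1$; this is legitimate because $\lambda$ never enters the algorithm. Dividing by $n$ yields, for every $t$,
\[
r(a^*)\le \hat r^t(a^*)+\frac{c_1}{N^t(a^*)}+\frac{\sigma^2 L}{2c_1}.
\]
No union bound over arms or times is needed.

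\emph{Step 2 (last-pull comparison).} Fix a suboptimal arm $a$ that is pulled after the initialization phase. Let $t_a$ be its last pull and $N_a:=N^{t_a}(a)$, so $a$ is pulled $N_a+1$ times in total. Since $a$ maximizes the index at time $t_a$, Step 1 gives $\gap(a)\le \frac{\sigma^2L}{2c_1}+\frac{S_a(N_a)}{N_a}+\frac{c_1}{N_a}$. Multiplying by $N_a$ and summing over the suboptimal arms, using $\sum_a N_a\le T$, gives
\[
\sum_{a\ne a^*}\gap(a)N_a\le \frac{\sigma^2TL}{2c_1}+c_1(A-1)+Z,\qquad Z:=\sum_{a\ne a^*}S_a(N_a).
\]
The regret equals $\sum_{a\ne a^*}\gap(a)(N_a+1)$. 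The "$+1$" terms and the initialization round together contribute at most $\sum_a\gap(a)$. Arms pulled only during initialization are covered by this term directly.

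\emph{Step 3 (bounding $Z$; main obstacle).} I write $Z=Z_1-Z_2$, where $Z_1:=\sum_{t\le T}\ind\{a_t\ne a^*\}\,\varepsilon_t$ is the total noise of all suboptimal pulls and $Z_2:=\sum_{a\ne a^*}\varepsilon_{a,N_a+1}$ collects the last-pull noises.
\begin{itemize}
\item $Z_1$ is a martingale with predictable indicators and at most $T$ nonzero increments. Azuma--Hoeffding for sub-Gaussian increments gives $Z_1\le\sigma\sqrt{2TL}$ with probability at least $1-\delta/4$.
\item The difficulty is $Z_2$, because the times $t_a$ are not stopping times. A union bound over all possible last times would cost $A\sigma\sqrt{\log(T/\delta)}$. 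I will first try Cauchy--Schwarz instead: $-Z_2\le\sqrt{A\sum_{a\ne a^*}\varepsilon_{a,N_a+1}^2}\le\sqrt{A\sum_{t\le T}\varepsilon_t^2}$. A sub-exponential (chi-square type) concentration bound on $\sum_{t\le T}\varepsilon_t^2$ gives $\sum_t\varepsilon_t^2\lesssim\sigma^2(T+L)$ with probability at least $1-\delta/4$, so $-Z_2\lesssim\sigma\sqrt{A(T+L)}$.
\end{itemize}
Combining the two pieces, and absorbing constants using $A\ge1$ and $L\ge\log 4$, should give $Z\le\sigma\sqrt{ATL}$. Tuning the constants to land exactly on the stated coefficient $1$ is the delicate part. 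If the bound on $\sum_t\varepsilon_t^2$ comes out too loose, my fallback is a peeled union bound on $\max_t|\varepsilon_t|$, accepting a worse constant.

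\emph{Step 4 (specialization and expectation).} Substituting $c_1=\sigma\sqrt{T/A}$ gives $\frac{\sigma^2TL}{2c_1}+c_1A=\frac12\sigma\sqrt{AT}(L+2)=\frac12\sigma\sqrt{AT}\log\frac{4e^2}{\delta}$, which dominates the $c_1(A-1)$ term. For the expected regret I use $\Expec[\Reg]=\int_0^1\Reg(T,\delta)\,d\delta$, with
\[
\int_0^1\log\tfrac{4e^2}{\delta}\,d\delta=3+\log 4,\qquad \int_0^1\sqrt{\log\tfrac4\delta}\,d\delta\le\sqrt{1+\log4}.
\]
This gives $\frac12(3+\log4)+\sqrt{1+\log4}<4$, so $\Expec[\Reg_M^{\Alg}(T)]\le4\sigma\sqrt{AT}+\sum_a\gap(a)$.
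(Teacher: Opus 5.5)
You follow the paper's proof exactly through Steps 1--2 and the $Z_1$ bound: the quasi-optimism event for $a^*$ via Ville with $\lambda=L/c_1$, the index comparison at each suboptimal arm's last pull, and Hoeffding for the total noise of suboptimal pulls. You depart from it only on $Z_2$. The paper applies Hoeffding to every fixed subset $\mathcal{T}\subset[T]$ and union-bounds over all $2^T$ of them, paying $\log\frac{2^{T+1}}{\delta}\le TL$. It then evaluates the bound at the random set of last-pull times, which gives $-Z_2\le\sigma\sqrt{(A-1)TL}$. Your route is a deterministic Cauchy--Schwarz step followed by a tail bound for $\sum_{t\le T}\varepsilon_t^2$. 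For example, the conditional mgf of $\varepsilon_t^2/(4\sigma^2)$ is at most $\sqrt2$, which gives $\sum_t\varepsilon_t^2\le\sigma^2(2T\log 2+4L)$ with probability $1-\delta/4$. This route is valid and handles the non-stopping-time issue just as well. It also has better $\delta$-dependence, $\sigma\sqrt{A(T+L)}$ instead of $\sigma\sqrt{ATL}$.

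The gap is the final constant. Absorbing constants using $A\ge1$ and $L\ge\log4$ cannot give $Z\le\sigma\sqrt{ATL}$. For $A=2$, your bound on $Z_1$ alone is already $\sigma\sqrt{2TL}=\sigma\sqrt{ATL}$, so adding any positive bound on $-Z_2$ overshoots. What your argument actually yields is $Z\le\sigma\sqrt{2TL}+\sigma\sqrt{(A-1)(2T\log2+4L)}\le\sqrt{10}\,\sigma\sqrt{ATL}$. So the first display holds only with a larger coefficient on the square-root term. Step 4's ``$<4$'' was computed with coefficient $1$ and must be redone. With coefficient $2$, for instance, the integral is already about $\frac12(3+\log4)+2\cdot 1.52\approx 5.2$.

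The paper's own proof has the same shortfall. It ends with $\sigma\sqrt{2TL}+\sigma\sqrt{(A-1)TL}\le 2\sigma\sqrt{ATL}$, i.e.\ coefficient $2$. So neither route delivers the stated coefficient $1$ or the constant $4$ in the expectation bound. Both routes do establish the order-level claims $\mathcal{O}(\sigma\sqrt{AT}\log\frac1\delta)$ and $\mathcal{O}(\sigma\sqrt{AT})$.
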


\begin{remark}
    In Theorem~\ref{thm:bandit 1}, we further bounded $ \sigma \sqrt{AT \log \frac{4}{\delta}} \le \frac{\sigma^2 T \log \frac{4}{\delta}}{2c_1} + \frac{c_1 A}{2}$ using the AM-GM inequality for a simpler form.
    We note that $\sigma \sqrt{AT \log \frac{4}{\delta}}$ can be tightened to $\sigma\sqrt{2T \log \frac{4}{\delta}} + \sigma \sqrt{A^2 \log \frac{eT}{A} + A \log \frac{2}{\delta}}$, although the order of the bound does not change.
\end{remark}

The high-probability event we impose is a union of three concentration inequalities, where we take a novel approach to avoid incurring a $\log A$ or $\log T$ factor.

\begin{lemma}
    Define $\hat{\Ecal}_1(\frac{\delta}{4})$ as the event that
    \begin{align*}
        r(a^*) - \hat{r}^t(a^*) \le \frac{\sigma^2 \log \frac{4}{\delta}}{2 c_1 } + \frac{c_1}{N^t(a^*)}
        \, .
    \end{align*}
    holds for all $t \in \NN$.
    Then, $\PP(\hat{\Ecal}_1(\frac{\delta}{4})) \ge 1 - \frac{\delta}{4}$.
\end{lemma}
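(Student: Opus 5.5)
We need to show that, with probability at least $1-\delta/4$, for all $t$,
\[
r(a^*)-\hat r^t(a^*) \le \frac{\sigma^2\log(4/\delta)}{2c_1}+\frac{c_1}{N^t(a^*)}.
\]
Write $n=N^t(a^*)$, and let $X_1,X_2,\ldots$ be the successive noise samples $r(a^*)-R$ from pulls of $a^*$, each conditionally $\sigma^2$-sub-Gaussian. Set $S_n=\sum_{i=1}^n X_i$. Then $r(a^*)-\hat r^t(a^*)=S_n/n$, so after multiplying by $n$ the target becomes
\[
S_n \le \frac{\sigma^2 n\log(4/\delta)}{2c_1}+c_1 \quad\text{for all } n\ge 1.
\]
Since $n=N^t(a^*)$ only ranges over the pull counts of $a^*$, it suffices to prove this uniformly over $n\in\NN$ for the sequence indexed by pulls of $a^*$.

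The key observation is that the per-sample slope $\sigma^2\log(4/\delta)/(2c_1)$ and the intercept $c_1$ come from a single exponential supermartingale with a fixed $\lambda$, combined with Ville's maximal inequality. Choose $\lambda=\log(4/\delta)/c_1$ and define
\[
M_n=\exp\!\big(\lambda S_n-\lambda^2\sigma^2 n/2\big).
\]
By conditional sub-Gaussianity of each $X_i$ given the past, with respect to the filtration at the pull times of $a^*$, $M_n$ is a nonnegative supermartingale with $M_0=1$. Making this rigorous requires an optional-skipping argument: the pull times are stopping times, and the noise at a stopping time is still conditionally sub-Gaussian. This is the one technical point to check, and it is standard (Doob's optional skipping). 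Equivalently, one can index $M$ by global time $t$ and let it increment only on pulls of $a^*$, which makes it a supermartingale directly in $t$.

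Ville's inequality then gives
\[
\PP\big(\exists n: M_n\ge 4/\delta\big)\le \delta/4.
\]
On the complement, for every $n$,
\[
\lambda S_n-\frac{\lambda^2\sigma^2 n}{2}<\log\frac{4}{\delta},
\qquad\text{so}\qquad
S_n<\frac{\log(4/\delta)}{\lambda}+\frac{\lambda\sigma^2 n}{2}.
\]
Substituting $\lambda=\log(4/\delta)/c_1$ turns the right-hand side into $c_1+\sigma^2 n\log(4/\delta)/(2c_1)$. Dividing by $n$ yields the claimed inequality for all $t$ after the initial round, where every $N^t(a^*)\ge 1$.

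The main point needing care is that $\lambda$ is fixed and does not depend on $n$. This is exactly why no union bound over $t$ is needed and no $\log T$ factor appears. The price is the linear-in-$n$ term, which is why the bound has the quasi-optimism form rather than the usual $\sqrt{\log/n}$ form.
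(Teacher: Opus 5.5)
Your proposal is correct and follows the paper's proof: the paper applies its time-uniform concentration result (Lemma~\ref{lma:subexponential concentration}, itself proved by the same exponential supermartingale and Ville's inequality) to the rewards of $a^*$ with the fixed choice $\lambda = \log(4/\delta)/c_1$, then divides by $n = N^t(a^*)$. The paper leaves the indexing issue implicit, and your handling of it (running the supermartingale in global time with increments $\ind\{a_t = a^*\}(r(a^*) - R_t)$) fills that in; your slope $\sigma^2 n \log(4/\delta)/(2c_1)$ is also the correct one, since the paper's intermediate display drops the factor $\frac{1}{2}$.
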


\begin{proof}
    Let $\{R_i^*\}_{i}$ be the sequence of sampled rewards from the optimal action.
    We apply \lemmaref{lma:subexponential concentration} to $\{R_i^*\}_{i}$ with $\lambda = \frac{\log \frac{4}{\delta}}{c_1}$ and obtain that with probability at least $1 - \frac{\delta}{4}$, the following inequality holds for all $n \in \NN$:
    \begin{align*}
        \sum_{i=1}^n (r(a^*) - R_i^*) \le \frac{\sigma^2 n \log \frac{4}{\delta}}{c_1} + c_1
        \,. 
    \end{align*}
    Dividing both sides by $n$ and plugging in $n = N^t(a^*)$ completes the proof.
\end{proof}

\begin{lemma}
    For fixed $T \in \NN$, define $\hat{\Ecal}_2(\frac{\delta}{4})$ as the event that
    \begin{align*}
        \sum_{t=1}^T \ind\{ a_t \ne a^*\} (R_t - r(a_t)) \le \sigma \sqrt{2 T \log \frac{4}{\delta}}
    \end{align*}
    Then, $\PP(\hat{\Ecal}_2(\frac{\delta}{4})) \ge 1 - \frac{\delta}{4}$.
\end{lemma}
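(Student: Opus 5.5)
We will prove the event bound with a standard supermartingale argument, using that the sub-optimality indicator is predictable. Define $D_t := \ind\{a_t \ne a^*\}(R_t - r(a_t))$ for $t \in [T]$. The indicator $\ind\{a_t \ne a^*\}$ is determined by the history up to the choice of $a_t$, that is, before $R_t$ is revealed. Conditionally on this history, the noise $R_t - r(a_t)$ is $\sigma^2$-sub-Gaussian, since $M \in \Bcal(\sigma)$. Hence $D_t$ is conditionally sub-Gaussian with variance proxy $\sigma^2 \ind\{a_t \ne a^*\} \le \sigma^2$: when the indicator is zero, $D_t = 0$ trivially satisfies the bound.

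\textbf{Step 1 (supermartingale).} Fix $\lambda > 0$ and set
\[
Z_n := \exp\Big(\lambda \sum_{t=1}^n D_t - \tfrac{\lambda^2\sigma^2 n}{2}\Big).
\]
Conditioning on the history before $R_n$ and using the sub-Gaussian MGF bound gives $\EE[Z_n \mid \text{history}] \le Z_{n-1}$. Iterating yields $\EE[Z_T] \le 1$.

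\textbf{Step 2 (Chernoff and tuning).} By Markov's inequality,
\[
\PP\Big(\sum_{t=1}^T D_t > \tfrac{\lambda\sigma^2 T}{2} + \tfrac{1}{\lambda}\log\tfrac{4}{\delta}\Big) \le \tfrac{\delta}{4}.
\]
Choosing $\lambda = \sqrt{2\log(4/\delta)/(\sigma^2 T)}$ makes the threshold equal to $\sigma\sqrt{2T\log\frac4\delta}$, which is the claimed bound. Since $T$ is fixed, no time-uniformity or union bound is needed. If $\sigma = 0$, the noise is almost surely zero and the claim is trivial.

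\textbf{Obstacle.} The only delicate point is the measurability/filtration bookkeeping in Step 1. The filtration must be chosen so that $a_t$ (hence the indicator) is measurable and $R_t$ is not. This matches the analogue of $\Fcal_h^k$ in the bandit setting, including the initial round-robin phase $t \le A$, during which actions are deterministic. Once that is set up, the rest is routine.
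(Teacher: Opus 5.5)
Your proposal is correct and takes the same route as the paper. The paper simply applies its fixed-$n$ Hoeffding inequality (Lemma~\ref{lma:Hoeffdings inequality}) to $X_t = \ind\{a_t \ne a^*\}(R_t - r(a_t))$; your supermartingale--Markov argument with $\lambda = \sqrt{2\log(4/\delta)/(\sigma^2 T)}$ is the standard proof of that inequality written out, and the predictability of the indicator is exactly what makes it applicable.
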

\begin{proof}
    We apply Lemma~\ref{lma:Hoeffdings inequality} to $X_t = \ind\{a_t \ne a^*\} (R_t - r(a_t))$.
\end{proof}

\begin{lemma}
    For fixed $T \in \NN$, define $\hat{\Ecal}_3(\frac{\delta}{2})$ as the event that
    \begin{align*}
        \sum_{t\in \Tcal} (r(a_t)- R_t) \le \sigma \sqrt{|\Tcal| T \log \frac{4}{\delta}}
        \, .
    \end{align*}
    holds for all subsets $\Tcal \subset [T]$. 
    Then, $\PP(\hat{\Ecal}_3(\frac{\delta}{2})) \ge 1 - \frac{\delta}{2}$.
\end{lemma}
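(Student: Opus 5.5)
The plan is to reduce the ``for all subsets'' statement to a union bound over \emph{deterministic} index sets, with a per-set tail bound strong enough that it absorbs the combinatorial factor $2^T$. The key observation is that although the actions $a_t$ are chosen adaptively, the subsets $\Tcal \subset [T]$ range over a fixed finite family of index sets, independent of the data. For a fixed $\Tcal$, the summands $X_t := \ind\{t \in \Tcal\}(r(a_t) - R_t)$ form a martingale difference sequence with respect to the bandit filtration. Each $X_t$ is conditionally $\sigma^2$-sub-Gaussian, since $M \in \Bcal(\sigma)$ and the indicator is deterministic.

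\textbf{Step 1 (fixed subset).} For a fixed nonempty $\Tcal$ with $|\Tcal| = m$, apply Lemma~\ref{lma:Hoeffdings inequality}, the martingale Hoeffding/Chernoff bound, to $\{X_t\}_{t=1}^T$. Chaining the conditional MGF bounds $\EE[\exp(\lambda X_t)\mid \Fcal] \le \exp(\lambda^2\sigma^2 \ind\{t\in\Tcal\}/2)$ gives
\begin{align*}
\PP\Big(\sum_{t\in\Tcal} (r(a_t)-R_t) > x\Big) \le \exp\Big(-\frac{x^2}{2m\sigma^2}\Big).
\end{align*}
With $x = \sigma\sqrt{m T \log\frac{4}{\delta}}$ the right-hand side is $(\delta/4)^{T/2}$. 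Notice that the dependence on $m$ cancels: the threshold is chosen as $\sqrt{|\Tcal|}\cdot\sqrt{T\log(4/\delta)}$ precisely so that every subset gets the same exponent $T\log(4/\delta)/2$, whatever its size.

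\textbf{Step 2 (union bound).} Take a union over the $2^T-1$ nonempty subsets; the empty set is trivial since both sides are $0$. The total failure probability is at most $(2^T-1)(\delta/4)^{T/2} = (1-2^{-T})\,\delta^{T/2}$. For $T \ge 3$ this is at most $\delta^{3/2}\le \delta/2$ whenever $\delta \le 1/4$. For $\delta > 1/4$ the inequality $\delta^{T/2} \le \delta/2$ still holds once $T$ is moderately large, and in any case one can check that the relevant regime is fine.

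\textbf{Main obstacle.} The hard step is the constant bookkeeping in Step 2 for very small $T$ (e.g.\ $T=1,2$) and $\delta$ close to $1$, where the plain Hoeffding tail $\exp(-x^2/(2m\sigma^2))$ is just barely insufficient. For instance, at $T=2$ the bound is $3\delta/4$ rather than $\delta/2$. I would try two fixes, in this order.
\begin{enumerate}[label=(\roman*)]
\item Group subsets by size $m$ and exploit that only subsets of the trajectory where the noise is positive matter. The maximizing $\Tcal$ is always the set of indices with $r(a_t)-R_t>0$, so it suffices to control the single random quantity $\sum_t (r(a_t)-R_t)_+$ relative to $\sqrt{|\{t : r(a_t) > R_t\}|}$.
\item Alternatively, pass through Cauchy--Schwarz, $\sum_{t\in\Tcal}(r(a_t)-R_t) \le \sqrt{|\Tcal|}\,\big(\sum_{t\le T}(r(a_t)-R_t)_+^2\big)^{1/2}$, and bound $\sum_t (r(a_t)-R_t)^2 \le \sigma^2 T\log\frac{4}{\delta}$ via the sub-Gaussian squared-MGF bound $\EE[\exp(\theta X_t^2/\sigma^2)\mid\Fcal]\le (1-2\theta)^{-1/2}$ and a supermartingale argument with $\theta$ tuned near $1/2$.
\end{enumerate}
Whichever route is used, the tiny-$T$ cases need a direct check. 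If the stated constant proves too tight there, only the absolute constant inside the logarithm needs adjusting, and this does not affect how $\hat{\Ecal}_3$ is used in Theorem~\ref{thm:bandit 1 formal}, where only the order $\sigma\sqrt{AT\log(1/\delta)}$ matters.
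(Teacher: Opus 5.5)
Your Steps 1 and 2 are the paper's own argument: Hoeffding for a fixed, deterministic $\Tcal$, then a union bound over the $2^T$ subsets. Step 1 is correct. The gap is in Step 2, and it cannot be patched, because the lemma as stated is false.

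\textbf{Counterexample.} Take $T=1$ and Gaussian noise with variance $\sigma^2$, which is allowed in $\Bcal(\sigma)$. Then $\hat{\Ecal}_3(\frac{\delta}{2})$ is just the event $\{r(a_1)-R_1\le\sigma\sqrt{\log(4/\delta)}\}$. With $Z$ standard normal, its failure probability is $\PP(Z>\sqrt{\log(4/\delta)})\sim\sqrt{\delta/4}\,/\sqrt{2\pi\log(4/\delta)}$. For instance, $\delta=4e^{-8}$ gives $\PP(Z>\sqrt{8})\approx 2.3\cdot 10^{-3}$, while $\delta/2\approx 6.7\cdot 10^{-4}$.

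\textbf{Consequences for your proposal.}
\begin{itemize}
\item The obstruction sits at small $\delta$, not at $\delta$ near $1$. It is not a near miss: the ratio to $\delta/2$ grows like $\delta^{-1/2}$ up to a logarithm.
\item No choice of the constant inside the logarithm helps, since $\sqrt{\delta/c}\gg\delta$ as $\delta\to 0$ for every fixed $c$.
\item Neither of your fixes can succeed. Route (ii) is even weaker at $T=1$, because $\PP(Z^2>u)=2\PP(Z>\sqrt{u})$.
\item Independently, your sentence about $\delta>1/4$ is not a proof. At $\delta=1$ your union bound gives $1-2^{-T}>1/2$ for every $T\ge 2$.
\end{itemize}

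\textbf{What the argument actually supports.} The provable event has a factor $2$ multiplying the logarithm, i.e.\ threshold $\sigma\sqrt{2|\Tcal|T\log(4/\delta)}$. With this threshold, Step 1 gives a per-subset tail of $(\delta/4)^T$. Step 2 then gives $(2^T-1)(\delta/4)^T\le(\delta/2)^T\le\delta/2$ for all $T\ge 1$ and $\delta\in(0,1]$, with no case analysis. The paper's proof reaches the stated constant only because it quotes Lemma~\ref{lma:Hoeffdings inequality} as $\sigma\sqrt{|\Tcal|\log(2/\delta)}$, dropping the $2$ in the correct $\sigma\sqrt{2|\Tcal|\log(2/\delta)}$.

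\textbf{Downstream effect.} You are right that nothing downstream breaks. In the proof of Theorem~\ref{thm:bandit 1 formal}, the corrected event contributes $\sigma\sqrt{2(A-1)T\log\frac{4}{\delta}}$. Since $(1+\sqrt{A-1})^2\le 2A$, we get $\sigma\sqrt{2T\log\frac{4}{\delta}}+\sigma\sqrt{2(A-1)T\log\frac{4}{\delta}}\le 2\sigma\sqrt{AT\log\frac{4}{\delta}}$. This is exactly the bound that proof ends with.
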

\begin{proof}
    For a fixed set $\Tcal$, we apply Lemma~\ref{lma:Hoeffdings inequality} and obtain that
    \begin{align*}
        \sum_{t\in \Tcal} (r(a_t) - R_t) \le \sigma \sqrt{|\Tcal| \log \frac{2}{\delta}}
    \end{align*}
    holds with probability at least $1 - \frac{\delta}{2}$.
    We take the union bound over all $\Tcal$, where there are at most $2^T$ subsets.
    Then, the logarithm term is bounded as $\log \frac{2^T \cdot 2}{\delta} \le T \log \frac{4}{\delta}$, which completes the proof.
\end{proof}

\begin{proof}[Proof of Theorem~\ref{thm:bandit 1 formal}]
    For each $a \in \Acal$, let $t(a) \in [T]$ be the last time step the action $a$ is taken, so that $a_{t(a)} = a$ and $N^{T+1}(a) = N^{t(a)} + 1$.
    Assuming $t(a) > A$, the following inequality must have held for the agent to take action $a$:
    \begin{align*}
        \hat{r}^{t(a)}(a^*) + \frac{c_1}{N^{t(a)}(a^*)} \le \hat{r}^{t(a)}(a) + \frac{c_1}{N^{t(a)}(a)}
        \, .
    \end{align*}
    Under $\hat{\Ecal}_1(\frac{\delta}{4})$, we have $r(a^*) - \hat{r}^{t(a)}(a^*) \le \frac{\sigma^2 \log \frac{4}{\delta}}{2c_1} + \frac{c_1}{N^{t(a)}(a^*)}$, which implies that
    \begin{align*}
        r(a^*) \le \frac{\sigma^2 \log \frac{4}{\delta}}{2c_1} + \hat{r}^{t(a)}(a^*) + \frac{c_1}{N^{t(a)}(a^*)}  \le \frac{\sigma^2 \log \frac{4}{\delta}}{2c_1} +\hat{r}^{t(a)}(a)  + \frac{c_1}{N^{t(a)}(a)}
        \, .
    \end{align*}
    Subtracting both sides by $r(a)$, we obtain that
    \begin{align*}
        \gap(a) \le \frac{\sigma^2 \log \frac{4}{\delta}}{2c_1} + \frac{c_1}{N^{t(a)}(a)} + (\hat{r}^{t(a)} - r)(a)
        \, .
    \end{align*}
    We multiply both sides by $N^{t(a)}(a)$.
    \begin{align*}
        \gap(a) N^{t(a)}(a) & \le \frac{\sigma^2 N^{t(a)}{(a)}\log \frac{4}{\delta}}{2c_1} + c_1 + (\hat{r}^{t(a)} - r)(a) N^{t(a)}(a)
        \\
        & =\frac{\sigma^2 N^{t(a)}{(a)}\log \frac{3}{\delta}}{2c_1} + c_1 + \sum_{t=1}^{t(a) - 1} \ind\{ a_t = a\} (R_t - r(a))
        \, .
    \end{align*}
    Note that this inequality also holds when $t(a) \le A$, since it implies that the action $a$ is taken only once and hence $N^{t(a)}(a) = 0$.
    Taking the sum over $a \in \Acal \setminus \{ a^*\}$, we obtain that
    \begin{align*}
        \sum_{a \in \Acal \setminus\{a^*\}} \gap(a) N^{t(a)}(a) \le \frac{\sigma^2 T \log \frac{4}{\delta}}{2c_1} + c_1 (A - 1) + \sum_{t=1}^T \ind\{a_t \ne a^*, t \ne t(a_t) \} (R_t - r(a_t))
        \, ,
    \end{align*}
    where we use that $\sum_{a \in \Acal \setminus\{a^*\}} N^{t(a)}(a) \le T$.
    We bound the last sum as follows:
    \begin{align*}
        & \sum_{t=1}^T \ind\{a_t \ne a^*, t \ne t(a_t) \} (R_t - r(a_t))
        \\
        & = \sum_{t=1}^T \ind\{a_t \ne a^*\} (R_t - r(a_t))
         - \sum_{t \in \{ t(a) \mid a \in \Acal \setminus \{a^*\}\}} (R_t - r(a_t))
         \\
         & \le \sigma\sqrt{2T \log \frac{4}{\delta}} + \sigma \sqrt{(A-1)T \log \frac{4}{\delta}}
         \\
         & \le 2 \sigma \sqrt{AT \log \frac{4}{\delta}}
         \, ,
    \end{align*}
    where the first inequality holds under $\hat{\Ecal}_2(\frac{\delta}{4})$ and $\hat{\Ecal}_3(\frac{\delta}{2})$.
    Therefore, under an event whose probability is at least $1 - \delta$, we have
    \begin{align*}
        \Reg_M^{\Alg}(T) & = \sum_{a \in \Acal\setminus\{a^*\}} \gap(a) N^{T+1}(a)
        \\
        & = \sum_{a \in \Acal\setminus\{a^*\}} \gap(a) (N^{t(a)}(a) + 1)
        \\
        & \le \frac{\sigma^2 T \log \frac{4}{\delta}}{2c_1} + c_1 (A - 1) + 2 \sigma\sqrt{AT \log \frac{4}{\delta}} + \sum_{a \in \Acal} \gap(a)
        \, .
    \end{align*}
\end{proof}

\subsection{Proof of Theorem~\ref{thm:bandit 2}}

\label{appx:proof of bandit 2}

Recall that we defined a time-uniform UCB as a function $u(t, \delta) : \RR_{\ge 0} \times (0, 1] \rightarrow \RR_{\ge 0}$ that satisfies
\begin{align*}
    \PP \left( \exists t  > A : | \hat{r}^t(a) - r(a) | > \frac{u(t, \delta)}{\sqrt{N^t(a)}} \right) \le \delta
    \, .
\end{align*}
A sensible concentration inequality would yield a form of $u(t, \delta) = \sigma \sqrt{f(t) \log \frac{1}{\delta} + g(t)}$ for some real-valued functions $f(t)$ and $g(t)$, where we omit dependence on $A$.
We will say $u(t, \delta)$ is (asymptotically) \emph{tight} if we have $\lim_{t \rightarrow \infty} f(t) = 2$ and $\lim_{t \rightarrow \infty} \frac{g(t)}{\log t} = 0$, which is tight in the sense of meeting the lower bound of \citet{lai1985asymptotically} by the following theorem.
An example of such a concentration inequality is provided in Lemma~\ref{lma:example of tight time-uniform UCB}.
\begin{lemma}
\label{lma:example of tight time-uniform UCB}
    The following function $u(t, \delta)$ is a tight time-uniform UCB:
    \begin{align*}
        u(t, \delta) = \sigma \sqrt{2 \left(1 + \frac{1}{\log (1 + t)}\right) \left(\log \frac{1}{\delta} + \log A + 6 \log \log (1 + t)  + 8 \right)}
        \, .
    \end{align*}
\end{lemma}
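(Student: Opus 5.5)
The plan is to verify the two defining properties separately: time-uniform coverage, which is the real content, and the two limits in the definition of tightness, which are immediate. For tightness, write $u(t,\delta)=\sigma\sqrt{f(t)\log\frac1\delta+g(t)}$ with $f(t)=2\bigl(1+\frac{1}{\log(1+t)}\bigr)\to 2$ and $g(t)=f(t)\bigl(\log A+6\log\log(1+t)+8\bigr)$. Then $g(t)/\log t=O(\log\log t/\log t)\to 0$, so only coverage needs work.

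\textbf{Reduction to a per-arm statement indexed by sample count.} Fix an arm $a$ and let $S_n:=\sum_{i=1}^n (R_i^{a}-r(a))$, where $R_i^a$ is the $i$-th reward drawn from $a$. These are partial sums of conditionally $\sigma^2$-sub-Gaussian martingale increments. Since $u(t,\delta)$ is non-decreasing in $t$ and $N^t(a)\le t$, the event $|\hat r^t(a)-r(a)|\le u(t,\delta)/\sqrt{N^t(a)}$ is implied by
\[
|S_n|\le \sqrt{n}\,u(n,\delta)\quad\text{at } n=N^t(a).
\]
It therefore suffices to show that, for each arm and each sign,
\[
\PP\Bigl(\exists n\ge 1: S_n>\sigma\sqrt{2n(1+\tfrac{1}{\log(1+n)})(\log\tfrac{2A}{\delta}+6\log\log(1+n)+7)}\Bigr)\le \frac{\delta}{2A}.
\]
A union bound over the two signs and the $A$ arms then gives total failure probability at most $\delta$. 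The term $\log 2 \le 1$ is absorbed into the constant, which turns $7$ into $8$.

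\textbf{Peeling with a slowly shrinking ratio.} I would build epochs $[n_j,n_{j+1})$ with $n_1=1$ and $n_{j+1}=\max\bigl\{n_j+1,\ \lfloor n_j(1+1/\log(1+n_j))\rfloor\bigr\}$. This guarantees $n_{j+1}-1\le n_j(1+1/\log(1+n_j))$, so for every $n$ in epoch $j$ we have $n_{j+1}-1\le n\,(1+1/\log(1+n))$.

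On epoch $j$, apply the maximal (Doob/Ville) inequality to the exponential supermartingale $\exp(\lambda S_n-\lambda^2\sigma^2 n/2)$ with $\lambda$ tuned to $n_{j+1}-1$. This gives
\[
\PP\bigl(\exists n<n_{j+1}: S_n\ge x_j\bigr)\le \exp\bigl(-x_j^2/(2\sigma^2(n_{j+1}-1))\bigr).
\]
I would set $x_j^2=2\sigma^2(n_{j+1}-1)\log\frac{2A\pi^2 j^2}{6\delta}$, so that the failure probabilities sum to at most $\delta/(2A)$. Using the ratio bound above, $x_j^2\le 2\sigma^2 n(1+1/\log(1+n))\log\frac{2A\pi^2j^2}{6\delta}$ for every $n$ in epoch $j$.

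\textbf{Main obstacle: bounding the epoch index.} What remains is to show $2\log j+\log\frac{\pi^2}{6}\le 6\log\log(1+n)+7$ for every $n$ in epoch $j$.

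For large $n$, each epoch multiplies $n_j$ by about $1+1/\log n_j$. The number of epochs needed to go from $e^{m}$ to $e^{m+1}$ is about $m+1$, so $j\lesssim (\log(1+n))^2/2+O(\log(1+n))$. This gives $2\log j\le 4\log\log(1+n)+O(1)$, which is comfortably inside the budget.

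For small $n$ the epochs are single integers, so $j\le n$. There $2\log n$ must be dominated by $6\log\log(1+n)+7$, and $2\log n-6\log\log(1+n)$ is bounded on the range before the multiplicative regime takes over (roughly $\log(1+n)\lesssim$ a small constant). I would handle the small-$n$ range by a direct numerical check and the large-$n$ range by summing the epoch counts over dyadic-in-exponent blocks $[e^m,e^{m+1})$. This constant bookkeeping is where I expect the work to concentrate, and the constants $6$ and $8$ leave slack for it.
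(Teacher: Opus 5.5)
Your reduction step fails because $u(t,\delta)$ is \emph{not} non-decreasing in $t$. The factor $1+\frac{1}{\log(1+t)}$ decreases, while the other factor grows only like $6\log\log(1+t)$, and for small $\delta$ the decrease wins. For each fixed $t$, $u(t,\delta)^2/\log\frac{1}{\delta}\to 2\sigma^2\bigl(1+\frac{1}{\log(1+t)}\bigr)$ as $\delta\to 0$, and this limit is strictly decreasing in $t$. Concretely, already for $A=2$ and $\delta=10^{-4}$ one has $u(1,\delta)>u(6,\delta)$.

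So your $n$-indexed target $|S_n|\lesssim\sqrt{n}\,u(n,\delta)$ does not give $|S_n|\le\sqrt{n}\,u(t,\delta)$ at $n=N^t(a)<t$. Suppose an arm is pulled once and then left alone until a large time $t$. The lemma then requires $|S_1|\le u(t,\delta)$, which for small $\delta$ is about $\sigma\sqrt{2(1+\frac{1}{\log(1+t)})\log\frac{1}{\delta}}$. Your peeling controls $|S_1|$ only at level $\sigma\sqrt{2(1+\frac{1}{\log 2})(\log\frac{2A}{\delta}+O(1))}$, i.e., with $1+\frac{1}{\log 2}\approx 2.44$ in place of $1+\frac{1}{\log(1+t)}$. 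No bookkeeping on the epoch count fixes this. A single peeling schedule whose ratio is tied to the sample count $n$ cannot deliver a variance factor governed by the time index $t$, which can be arbitrarily larger than $n$.

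The missing idea is uniformity over the peeling ratio, which the paper obtains from Lemma~\ref{lma:all eta concentration}. For each $\eta_i=e^{-(i-1)}$ on a geometric grid, peel over $n$ with ratio $1+\eta_i$ and budget $\delta/(2i^2)$. The epoch containing $n$ then has index at most $1+\frac{2}{\eta_i}\log n$. A union bound over $i$ gives, with probability at least $1-\delta$, simultaneously for all $n$ and all $\eta\in(0,e)$,
\[
S_n\le\sigma\sqrt{2(1+\eta)\,n\,\log\frac{4(2+\log\frac{1}{\eta})^2(1+\frac{2e}{\eta}\log n)^2}{\delta}}\, .
\]
Only with this simultaneity may you substitute the $t$-dependent choice $\eta=1/\log(1+t)$ at $n=N^t(a)$, using $\log N^t(a)\le\log(1+t)$. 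The grid index then costs $2\log(2+\log\log(1+t))$ and the epoch index at most $2\log(1+2e(\log(1+t))^2)$. Together with $\log 4$ and the $\log 2$ for the two signs, these fit in $6\log\log(1+t)+8$, and the union over arms adds $\log A$.

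Your tightness check and your per-epoch maximal inequality are fine. What carries the proof is the uniformity in $\eta$, not the constant bookkeeping you flagged as the main obstacle.
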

\begin{proof}
    Apply Lemma~\ref{lma:all eta concentration} with $\eta = \frac{1}{\log (1 + t)}$.
    Then, for fixed $ a\in \Acal$, with probability at least $1 - \delta$, it holds that for all $t \in \NN$,
    \begin{align*}
        & \left| \hat{r}^t(a) - r(a) \right|
        \\
        & \le \sigma \sqrt{2 \left( 1 + \frac{1}{\log (1 + t)} \right) \left( \log \frac{1}{\delta} + \log 4\left(2 + \log \log (1+ t) \right)^2 \left(1 + 2e(\log (1+t))(\log N^t(a)) \right)^2\right)}
        \, .
    \end{align*}
    We bound $\log N^t(a) \le \log(1+ t)$.
    We further modify the logarithmic term as follows:
    \begin{align*}
        & \log 4\left(2 + \log \log (1+ t) \right)^2 \left(1 + 2e(\log (1+t))(\log N^t(a)) \right)^2
        \\
        & \le \log 4 + 2\log (2 + \log \log ( 1 + t)) + 2 \log (1 + 2e(\log (1+t))^2)
        \, .
    \end{align*}
    For the second term, we use $\log(1 + x) \le x$ and bound
    \begin{align*}
        2\log (2 + \log \log ( 1 + t))
        & \le 2(1 + \log \log (1 + t))
        \,. 
    \end{align*}
    For the third term, we have
    \begin{align*}
        2 \log (1 + 2e ( \log (1+t))^2)
        & \le 2 \log \left( \left(\frac{1}{(\log 2)^2} +  2e\right)  ( \log (1+t))^2 \right)
        \\
        & \le 4 \log \log (1 + t) + 2 \log 8
        \, . 
    \end{align*}
    Taking the sum of three terms, the logarithmic term is upper bounded by $6 \log \log t + 8$.
    By taking the union bound over $a \in \Acal$, we incur an additional $\log A$ factor and fully recover $u(t, \delta)$.
    \\
    We have shown that $u(t, \delta)$ is a time-uniform UCB, and the fact that it is tight follows from direct computation.
\end{proof}

\begin{theorem}[Formal extension of Theorem~\ref{thm:bandit 2}]
\label{thm:bandit 2 formal}
    Suppose $M \in \Bcal(\sigma)$.
    Suppose $u(t, \delta)$ is a time-uniform UCB.
    For any decreasing sequence $\delta_t$, let $c_{1, t} = \infty$ and $c_{2, t} = u(t, \delta_t)$ for all $t \in \NN$
    Let $\tau_2(\delta) := \argmax_{t \in \NN} \delta_t > \delta$.
    Then, we have
    \begin{align*}
        \Reg_M^{\Alg}(T, \delta) & \le \tau_2(\delta) \land T + \sum_{\substack{a \in \Acal\\ \gap(a) \ne 0}} \gap(a)   + \sum_{\substack{a \in \Acal\\ \gap(a) \ne 0}} \frac{\left(c_{2, T} +  u(T, \delta) \right)^2}{\gap(a)}  
        \, .
    \end{align*}
    If $u(t, \delta)$ admits a form of $\sigma\sqrt{f(t) \log \frac{1}{\delta} + g(t)}$ for some functions $f$ and $g$, then the expected regret is bounded as
    \begin{align*}
        \Expec[\Reg_M^{\Alg}(T)] & \le \sum_{t=1}^T \delta_t + \sum_{a \in \Acal} \gap(a) + \sum_{\substack{a \in \Acal\\ \gap(a) \ne 0}} \frac{\left(c_{2, T} +  u(T, e^{-1}) \right)^2}{\gap(a)}
        \, .
    \end{align*}
    Furthermore, if $\delta_t = o(t^{-1})$ and $\delta_t =\omega(t^{-\alpha})$ for all $\alpha > 1$, e.g., $\delta_t = (t\log t)^{-1}$, and if $u(t, \delta)$ is tight, then we have
    \begin{align*}
        \limsup_{T \rightarrow \infty} \frac{\Expec[\Reg_M^{\Alg}(T)]}{\log T} \le \sum_{\substack{a \in \Acal\\ \gap(a) \ne 0}}\frac{2 \sigma^2}{\gap(a)}
        \, .
    \end{align*}
\end{theorem}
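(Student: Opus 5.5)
Fix $\delta \in (0,1]$ and let $\Ecal$ be the event that the time-uniform UCB holds at level $\delta$: $|\hat{r}^t(a) - r(a)| \le u(t,\delta)/\sqrt{N^t(a)}$ for all $t > A$ and all $a$. This event has probability at least $1-\delta$. On $\Ecal$ we split the horizon at $\tau_2(\delta)$. The steps $t \le \tau_2(\delta)\land T$ contribute at most $\tau_2(\delta)\land T$ to the regret, since $\Vmax = 1$ bounds every gap. For $t > \tau_2(\delta)$ we have $\delta_t \le \delta$. Assuming $u(t,\cdot)$ is non-increasing in its second argument (as in any sensible time-uniform UCB, e.g.\ Lemma~\ref{lma:example of tight time-uniform UCB}), this gives $c_{2,t} = u(t,\delta_t) \ge u(t,\delta)$. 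So after the warm-up the bonus is a valid optimistic bonus: $\hat{r}^t(a^*) + c_{2,t}/\sqrt{N^t(a^*)} \ge r(a^*)$.

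Next comes the standard last-pull argument. Fix a suboptimal arm $a$ and let $t(a)$ be the last step in $(\tau_2(\delta), T]$ at which $a$ is pulled. If no such step exists, $a$ contributes nothing beyond the warm-up. At $t(a)$ the selection rule gives $\hat r(a) + c_{2,t}/\sqrt{N(a)} \ge \hat r(a^*) + c_{2,t}/\sqrt{N(a^*)} \ge r(a^*)$. Combining this with $\hat r(a) \le r(a) + u(t,\delta)/\sqrt{N(a)}$ yields
\[
\gap(a) \le \frac{c_{2,t(a)} + u(t(a),\delta)}{\sqrt{N^{t(a)}(a)}} .
\]
Hence $N^{t(a)}(a) \le (c_{2,T} + u(T,\delta))^2/\gap(a)^2$. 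Here we use that $c_{2,t}$ is non-decreasing in $t$, because $\delta_t$ is decreasing, and we assume $u(t,\delta)$ is non-decreasing in $t$. The pulls of $a$ after the warm-up therefore number at most $N^{t(a)}(a) + 1$. Multiplying by $\gap(a)$ and summing over arms gives the term $\sum_a (c_{2,T}+u(T,\delta))^2/\gap(a)$ plus $\sum_a \gap(a)$.

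One subtlety: pulls made before $\tau_2$ are already charged to the warm-up, so they must not be counted twice. The count $N^{t(a)}(a)$ includes them, which only over-counts and is harmless. A cleaner alternative is to charge everything through $N^{T+1}(a)\gap(a)$, used only in the case where it is the better bound. In either case this gives the first display, since the bound holds on an event of probability at least $1-\delta$ simultaneously for every $\delta$.

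For the expectation bound, integrate $\int_0^1 \Reg(T,\delta)\,d\delta$. The warm-up term satisfies $\int_0^1 \tau_2(\delta)\land T\,d\delta = \sum_{t\le T}\PP_\delta(\delta_t > \delta) \le \sum_{t\le T}\delta_t$. For the last term, expand $(c_{2,T}+u(T,\delta))^2$ and use $u = \sigma\sqrt{f\log(1/\delta)+g}$. Jensen's inequality with the concave square root, together with $\int_0^1 \log(1/\delta)\,d\delta = 1$, gives $\int u(T,\delta)\,d\delta \le u(T,e^{-1})$ and $\int u^2\,d\delta = u(T,e^{-1})^2$. Hence the cross term and the square integrate to at most the value at $\delta = e^{-1}$.

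For the asymptotic claim, $\delta_t = o(1/t)$ gives $\sum_{t\le T}\delta_t = o(\log T)$. We also need $c_{2,T}^2 \sim 2\sigma^2\log(1/\delta_T)$. Tightness ($f\to 2$, $g = o(\log T)$) together with $\delta_T = \omega(T^{-\alpha})$ for every $\alpha > 1$ gives $\log(1/\delta_T) \le (1+o(1))\log T$, so $c_{2,T}^2/\log T \to 2\sigma^2$ in the limsup. Meanwhile $u(T,e^{-1}) = O(\sqrt{g(T)+f})$ is $o(\sqrt{\log T})$, so the cross terms vanish after division by $\log T$. I expect the main obstacle to be bookkeeping rather than depth. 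One issue is ensuring the optimism step uses the monotonicity of $u$ in $\delta$ and in $t$, which is implicit in the definition and may need to be stated as an assumption. The other is the precise handling of the warm-up so that the counts give exactly $\tau_2(\delta)\land T$ plus the $+1$ term.
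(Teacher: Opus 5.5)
Your proposal is correct and follows the paper's proof essentially step for step. Both use the same time-uniform event, optimism after $\tau_2(\delta)$ via $c_{2,t}=u(t,\delta_t)\ge u(t,\delta)$, the last-pull bound on $N^{t(a)}(a)$, integration over $\delta$ with Cauchy--Schwarz (your Jensen step), and the same tightness-based limit argument. The only cosmetic differences are two. First, you split the horizon at $\tau_2(\delta)$ and harmlessly over-count early pulls, whereas the paper partitions arms by whether their last pull precedes $\tau_2(\delta)$. Second, you state explicitly the monotonicity of $u$ in $t$ and $\delta$, which the paper uses only implicitly.
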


\begin{proof}
    Fix $\delta \in (0, 1]$ and assume the event that $| \hat{r}^t(a) - r(a)| \le \frac{u(t, \delta)}{\sqrt{N^t(a)}}$ holds for all $ a\in \Acal$ and $ t \in \NN$, whose probability is at least $1 - \delta$ by the definition of time-uniform UCB.
    Suppose $t > \tau_2(\delta)$, which implies $\delta \ge \delta_t$ and consequently $c_{2, t} \ge u(t, \delta)$.
    Then, the UCB estimate of the optimal arm is at least its true mean as
    \begin{align*}
        r(a^*)
        \le 
        \hat{r}^t(a^*) + \frac{u(t, \delta)}{\sqrt{N^t(a^*)}}
        \le 
        \hat{r}^t(a^*) + \frac{c_{2, t}}{\sqrt{N^t(a^*)} } 
        \, .
    \end{align*}
    Hence, for a sub-optimal arm $a$ to be taken, it must satisfy
    \begin{align*}
        r(a^*)
        \le \hat{r}^t(a^*) + \frac{c_{2, t}}{\sqrt{N^t(a^*)} }
        \le 
        \hat{r}^t(a) + \frac{c_{2, t}}{\sqrt{N^t(a)}}
        \, .
    \end{align*}
    Subtracting $r(a)$ from both sides, we have
    \begin{align*}
        \gap(a) & = r(a^*) - r(a)
        \\
        & \le \hat{r}^t(a) - r(a) + \frac{c_{2, t}}{\sqrt{N^t(a)}}
        \\
        & \le \frac{u(t, \delta)}{\sqrt{N^t(a)}}  + \frac{c_{2, t}}{\sqrt{N^t(a)}}
        \, .
    \end{align*}
    This inequality implies that if an action $a \in \Acal$ is taken at a time step $t > \tau_2(\delta)$, then it satisfies $N^t(a) \le \frac{(u(t, \delta) + c_{2, t})^2}{\gap(a)^2}$.
    Fix $T$ and denote the last time step an action $a$ is taken by $t(a)$.
    Then, we have either $t(a) \le \tau_2(\delta)$ or $N^{t(a)} \le \frac{(u(T, \delta) + c_{2, t})^2}{\gap(a)^2}$.
    Noting that $N^{t(a)}(a) + 1= N^{T+1}(a)$ by the definition of $t(a)$, we have that
    \begin{align*}
        \Reg_M^{\Alg}(T, \delta)
        & = \sum_{\substack{a\in \Acal\\ \gap(a) \ne 0}} \gap(a) N^{T+1}(a)
        \\
        & = \sum_{\substack{a\in \Acal\\ \gap(a) \ne 0 \\ t(a) \le \tau_2(\delta) }} \gap(a) N^{T+1}(a) + \sum_{\substack{a\in \Acal\\ \gap(a) \ne 0\\t(a) > \tau_2(\delta)} } \gap(a) N^{T+1}(a)
        \\
        & \le \tau_2(\delta) \land T + \sum_{\substack{a\in \Acal\\ \gap(a) \ne 0\\t(a) > \tau_2(\delta)} } \left( \gap(a) + \frac{(c_{2, T} + u(T, \delta))^2}{\gap(a)} \right)
        \\
        & \le \tau_2(\delta) \land T + \sum_{a \in \Acal} \gap(a) + \sum_{\substack{a\in \Acal\\ \gap(a) \ne 0}} \frac{(c_{2, T} + u(T, \delta))^2}{\gap(a)}
        \, .
    \end{align*}
    This proves the first part of the lemma.
    \\
    The second part of the lemma is simply taking the integral of the first bound with respect to $\delta$.
    First, note that $(\tau_2(\delta) \land T )- (\tau_2(\delta) \land (T -1) )= \ind\{ \tau_2(\delta) \ge T \} = \ind\{ \delta_T > \delta\}$.
    Integrating this indicator function, we have $\int_0^1 \ind\{ \delta_T > \delta\} \, d\delta = \delta_T$, and hence we obtain that
    \begin{align*}
        \int_0^1 \tau_2(\delta) \land T = \sum_{t=1}^T \delta_t
        \, .
    \end{align*}
    To integrate $(c_{2, T} + u(T, \delta))^2$, we expand it as follows:
    \begin{align*}
        \int_0^1 (c_{2, T} + u(T, \delta))^2 \, d \delta
        & = \int_0^1 c_{2, T}^2 + 2 c_{2, T} u(T, \delta) + (u(T, \delta))^2 \, d \delta
        \, .
    \end{align*}
    Using that $u(T, \delta) = \sigma\sqrt{f(T) \log \frac{1}{\delta} + g(T)}$, we have
    \begin{align*}
        \int_0^1 (u(T, \delta))^2 \, d \delta
        & = \int_0^1 \sigma^2 \left( f(T) \log \frac{1}{\delta} + g(T)\right) \, d \delta
        \\
        & = \sigma^2 \left( f(T) + g(T) \right)
        \\
        & = (u(T, e^{-1}))^2
        \, ,
    \end{align*}
    where we use that $\int_0^1 \log \frac{1}{\delta} \, d \delta = 1$.
    We also have
    \begin{align*}
        \int_0^1 u(T, \delta)\, d \delta
        & \le \sqrt{ \int_0^1 (u(T, \delta))^2 \, d \delta }
        \\
        & \le u(T, e^{-1})
        \, ,
    \end{align*}
    where we use the Cauchy-Schwarz inequality for the first inequality.
    Therefore, we obtain that
    \begin{align*}
        \int_0^1 (c_{2, T} + u(T, \delta))^2 \, d \delta
        & \le c_{2, T}^2 + 2 c_{2, T} u(T, e^{-1}) + (u(T, e^{-1}))^2
        \\
        & = (c_{2, T} + u(T, e^{-1}))^2
        \, .
    \end{align*}
    This proves the second part of the lemma.
    \\
    For the last part, $\delta_t = o(t^{-1})$ implies that $\lim_{T \rightarrow \infty} \frac{1}{\log T} \sum_{t=1}^T \delta_t = 0$.
    By the condition of tightness, we have $\lim_{T \rightarrow \infty} \frac{u(T, e^{-1})}{\log T} = 0$.
    The condition $\delta_t = \omega(t^{-\alpha})$ for all $\alpha > 1$ implies that $\lim_{T\rightarrow \infty} \frac{\log \frac{1}{\delta_T}}{\log T} = 1 $, which implies that
    \begin{align*}
        \limsup_{T\rightarrow \infty} \frac{c_{2, T}^2}{\log T} & = \limsup_{T\rightarrow \infty} \frac{\sigma^2(f(T) \log \frac{1}{\delta_T} + g(T) )}{\log T}
        \\
        & = 2 \sigma^2
        \, .
    \end{align*}
    We note that the limits of $c_{2, T}$ and $u(T, \delta)$ imply that $\lim_{T \rightarrow \infty} \frac{c_{2, T}u(T, \delta)}{\log T} = 2\sigma^2 \cdot 0 = 0$.
    Therefore, we conclude that    
    \begin{align*}
        & \limsup_{T \rightarrow \infty} \frac{\Expec[\Reg_M^{\Alg}(T)]}{\log T}
        \\
        & \le \limsup_{T \rightarrow \infty}\frac{1}{\log T} \Bigg(\sum_{t=1}^T \delta_t + \sum_{a \in \Acal} \gap(a) + \sum_{\substack{a \in \Acal\\ \gap(a) \ne 0}} \frac{1}{\gap(a)}  \left(c_{2, T} +  u(T, e^{-1}) \right)^2 \Bigg)
        \\
        & = \limsup_{T \rightarrow \infty} \frac{1}{\log T}\sum_{\substack{a \in \Acal\\ \gap(a) \ne 0}} \frac{c_{2, T}^2}{\gap(a)}
        \\
        & = \sum_{\substack{a \in \Acal\\ \gap(a) \ne 0}}\frac{2 \sigma^2}{\gap(a)}
        \, .
    \end{align*}
\end{proof}

\section{Proof of RL Theorems in Section~\ref{sec:main result RL}}
\label{appx:rl}
In this section, we prove Theorems~\ref{thm:gap-independent bound} and~\ref{thm:gap-dependent bound}.
In Appendix~\ref{appx:high probability events}, we provide and prove high-probability events that constitute $\Ecal(\delta)$, which is the event that the distributional regret bounds hold.
In Appendix~\ref{appx:general properties of EQO}, we provide general properties of $\AlgName$, including the quasi-optimism lemma (\lemmaref{lma:quasi optimism}, the formal version of \lemmaref{lma:quasi optimism informal}) and its proof.
In Appendix~\ref{appx:proof of gap-independent bound}, we state and prove Theorem~\ref{thm:gap-independent bound formal}, which is the formal version of Theorem~\ref{thm:gap-independent bound}.
In Appendix~\ref{appx:proof of gap-dependent bound}, we state and prove Theorem~\ref{thm:instance dependent bound formal}, which is the formal version of Theorem~\ref{thm:gap-dependent bound}.

\subsection{High-probability Events}

\label{appx:high probability events}

In this section, we provide high-probability events under which the analysis is conducted.
In \appendixref{appx:definition of high-prob events}, we define the events and provide lemmas that state these events occur with high probabilities.
The proofs of those lemmas are provided in Appendices~\ref{appx:sub-exponential concentration event}--\ref{appx:proof of sum of J with eta}.

\subsubsection{Definition of Events}
\label{appx:definition of high-prob events}

\begin{lemma}
\label{lma:sub-exponential concentration event}
    Define $\Ecal_1(\frac{\delta}{4})$ as the event that
    \begin{align*}
        & \left| \hat{r}^k(s, a) + \hat{P}^k V_{h+1}^*(s, a) - Q_{h+1}^*(s, a) \right| \land \Vmax
        \\
        & \qquad \le \frac{\lambda_{\iota_k} \sigexp^2(h, s, a)}{2} + \frac{\ell_1(\iota_k, \delta)}{\lambda_{\iota_k} N^k(s, a)} \land \sigsub\sqrt{ \frac{\ell_2(k, \delta)}{N^k(s, a)} }
    \end{align*}
    holds for all $h \in [H]$, $(s, a) \in \Scal \times \Acal$, and $k \in \NN$.
    Then, we have $\PP( \Ecal_1(\frac{\delta}{4}) ) \ge 1 - \frac{\delta}{4}$.
\end{lemma}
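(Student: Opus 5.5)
Note: the statement writes $Q_{h+1}^*$, but I read it as $Q_h^*(s,a)$, the conditional mean of $R_h^i + V_{h+1}^*(s_{h+1}^i)$ under Assumption~\ref{assm:subexponential}. The plan is to write $N^k(s,a)\,(\hat r^k + \hat P^k V_{h+1}^* - Q_h^*)(s,a)$ as a martingale sum. The summands are the noises $X_j := R^{i}_{h'} + V_{h+1}^*(s^{i}_{h'+1}) - Q_h^*(s,a)$ over the successive visits $j = 1,\dots,N^k(s,a)$ to $(s,a)$. Note that $V_{h+1}^*$ is a fixed function, while the step index $h'$ of a visit varies. Each $X_j$ is conditionally $(\sigexp(h,s,a),\Valpha)$-sub-exponential given the past. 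Strictly, the assumption is stated for the step index of the visit, so I would either treat $V_{h+1}^*$ as the fixed target or argue the same MGF bound holds for any fixed $h$; this needs care but is routine. We then prove two bounds separately and take the minimum of the two right-hand sides; a bound of the form ``$|x|\land \Vmax \le$ min of two quantities'' follows from each branch holding individually. Each bound holds simultaneously over all $n$, $h$, $(s,a)$, and the relevant index $k$, and gets failure budget $\delta/8$.

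\textbf{The $1/N$ (Bernstein/Freedman-type) branch.} For fixed $i$, $h$, $(s,a)$ and sign, apply the time-uniform exponential supermartingale inequality (Lemma~\ref{lma:subexponential concentration}) with $\lambda = \lambda_i \le 1/\Valpha$, which is admissible by Lemma~\ref{lma:choice of lambda}(i). Since $\exp\big(\lambda\sum_{j\le n} X_j - n\lambda^2\sigexp^2/2\big)$ is a supermartingale, Ville's inequality gives, for all $n$, $\sum_{j\le n} X_j \le n\lambda_i\sigexp^2/2 + \log(1/\delta')/\lambda_i$. Taking $\delta' = \delta/(16HSA\,i^2)$ makes $\log(1/\delta') \le \ell_1(i,\delta)$, read as the logarithm as is surely intended. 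A union bound over $h,s,a$, both signs, and $i \in \NN$ then costs $\sum_i 1/i^2 \le 2$, for a total of at most $\delta/8$. Finally, divide by $n = N^k(s,a)$ and specialize to $i = \iota_k$. This is legitimate because the bound held for every $i$ simultaneously, so no issue arises from $\iota_k$ being a deterministic but $k$-dependent index.

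\textbf{The $1/\sqrt N$ branch with $\ell_2$.} Here the target is $\frac{\lambda_{\iota_k}\sigexp^2}{2} + \sigsub\sqrt{\ell_2(k,\delta)/N}$. It suffices to show $|\hat r + \hat P V^* - Q^*| \land \Vmax \le \sigsub\sqrt{\ell_2(k,\delta)/N^k}$, and I would invoke the time-uniform Hoeffding-type lemma for sub-exponential variables (Lemma~\ref{lma:time uniform Hoeffding for exponential}). The $\land \Vmax$ is what allows the threshold $\sigsub = 2\max\sigexp \lor \sqrt{2\Valpha\Vmax}$. On deviations below $\Vmax$ the sub-exponential tail is effectively sub-Gaussian with proxy $\sigsub^2$: choosing $\lambda = x/\sigsub^2$ for $x\le \Vmax$ keeps $\lambda \le \Vmax/(2\Valpha\Vmax) \le 1/\Valpha$. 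Deviations above $\Vmax$ are truncated.

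To make this time-uniform with only a $\log\log$ dependence, I would use peeling over epochs $N \in [2^m, 2^{m+1})$, with the per-epoch failure probability scaled by $1/m^2$. This produces the $(\log e^2 kH)^2$ factor in $\ell_2$, since $N^k \le kH$ forces $m \le \log_2 kH$. The union bound is again over $h,s,a$ and signs.

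\textbf{Main obstacle.} I expect the main obstacle to be this second branch: making the Hoeffding bound uniform in $N$ while still only charging $\log$ of $k$ through $\ell_2(k,\delta)$. In particular, the peeling must be indexed by $N$ rather than by $k$, with the constant $32$ absorbing the union sum. If Lemma~\ref{lma:time uniform Hoeffding for exponential} already gives exactly this form, the step reduces to a union bound. Otherwise I would perform the peeling explicitly, with a fixed $\lambda$ in each epoch. Summing the two budgets of $\delta/8$ gives the claimed $\delta/4$.
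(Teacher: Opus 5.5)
Your proposal is correct and follows the paper's proof. Like you, the paper simply reads Assumption~\ref{assm:subexponential} as making the pooled samples across steps conditionally $(\sigexp(h,s,a),\Valpha)$-sub-exponential around $Q_h^*(s,a)$. It then applies Lemma~\ref{lma:subexponential concentration} to both signs with every $\lambda_i$, unions over $i$, and applies Lemma~\ref{lma:time uniform Hoeffding for exponential} with $c=\Vmax$, so that $2(\sigexp\lor\sqrt{\Valpha\Vmax/2})\le\sigsub$. It takes the minimum of the two right-hand sides, divides by $N^k(s,a)$, sets $i=\iota_k$, bounds $\log e^2N^k(s,a)\le\log e^2kH$, and union-bounds over $(h,s,a)$ with $\delta'=\delta/(8HSA)$ per branch.

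The one slip is bookkeeping in your first branch. With $\delta/(16HSA\,i^2)$ per sign, $(h,s,a)$ and $i$, the total failure probability is $\frac{\delta}{8}\sum_i i^{-2}=\frac{\pi^2}{48}\delta$, which exceeds $\frac{\delta}{8}$. Use $\delta/(32HSA\,i^2)$ instead: this still gives exactly $\ell_1(i,\delta)$ and totals $\frac{\pi^2}{96}\delta<\frac{\delta}{8}$.
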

The proof is presented in Appendix~\ref{appx:sub-exponential concentration event}.

\begin{lemma}
\label{lma:underestimation concentration event}
    Define $\Ecal_2(\frac{\delta}{4})$ as the event that
    \begin{align*}
        & (\hat{P}^k - P) \left( 2 W_{h+1}^{\pi^*} - \frac{1}{2 \Qdiff} \left( \Wdiff{h+1}{\pi^*} \right)^2 \right) (s, a) 
        \\
        & \qquad \le \frac{1}{2 \Qdiff} \Var(\Wdiff{h+1}{\pi^*})(s, a) + \frac{13 \Qdiff \ell_1(1, \delta)}{N^k(s, a)} \land \frac{3 \Qdiff}{2} \sqrt{\frac{\ell_2(k, \delta)}{N^k(s, a)}}
    \end{align*}
    holds for all $k \in \NN$, $h \in [H]$, and $(s, a) \in \Scal \times \Acal$.
    Then, we have $\PP( \Ecal_2(\frac{\delta}{4})) \ge 1 - \frac{\delta}{4}$.
\end{lemma}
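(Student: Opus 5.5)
The plan is to reduce the statement to a scalar concentration inequality for a fixed function of the next state, one for each triple $(h,s,a)$, and then take a union bound.

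First, remove the constant part of the test function. Write $D := \Qdiff$ and $f_{h+1} := 2 W_{h+1}^{\pi^*} - \frac{1}{2D}(\Wdiff{h+1}{\pi^*})^2$. Since $W_{h+1}^{\pi^*} = \Wmin{h+1}{\pi^*} + \Wdiff{h+1}{\pi^*}$ and $\hat{P}^k - P$ annihilates constants, we have $(\hat{P}^k - P) f_{h+1} = (\hat{P}^k - P)\, g(\Wdiff{h+1}{\pi^*})$ with $g(x) := 2x - \frac{x^2}{2D}$. On $[0,D]$ the function $g$ is increasing and $2$-Lipschitz, and it takes values in $[0, \frac{3D}{2}]$. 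Hence $g(\Wdiff{h+1}{\pi^*})$ has range at most $\frac{3D}{2}$ and variance at most $4\,\Var(\Wdiff{h+1}{\pi^*})(s,a)$. The key point is that $f_{h+1}$ depends on $h$ but not on $k$ or on any data, so it is a deterministic function for each $h$.

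Next, fix $(h,s,a)$ and list the next states $s'_1, s'_2, \ldots$ observed at the successive visits to $(s,a)$, across all episodes and steps. By the Markov property, $X_i := g(\Wdiff{h+1}{\pi^*})(s'_i) - P g(\Wdiff{h+1}{\pi^*})(s,a)$ forms a martingale difference sequence with respect to the filtration indexed by visits. Moreover, $N^k(s,a)\,(\hat{P}^k - P) f_{h+1}(s,a) = \sum_{i \le N^k(s,a)} X_i$. The bound then follows from two time-uniform concentration inequalities for this sequence.

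\emph{First branch (Bernstein type).} By Lemma~\ref{lma:bounded is subexponential}, each $X_i$ is conditionally $(\sigma^2, \alpha)$-sub-exponential with $\sigma^2 \le 8 \Var(\Wdiff{h+1}{\pi^*})(s,a)$ and $\alpha = \frac{3D}{2}$. Apply Lemma~\ref{lma:subexponential concentration} with $\lambda = \frac{1}{8D} \le \frac{1}{\alpha}$. This gives, uniformly in $n$,
\[
\sum_{i\le n} X_i \le \frac{\lambda \sigma^2 n}{2} + \frac{\log(1/\delta')}{\lambda} \le \frac{n}{2D}\Var(\Wdiff{h+1}{\pi^*})(s,a) + 8D\log\frac{1}{\delta'} .
\]
Dividing by $n = N^k(s,a)$ yields the variance term and a $\frac{8D \log(1/\delta')}{N^k(s,a)} \le \frac{13 D\,\ell_1(1,\delta)}{N^k(s,a)}$ term, once $\delta'$ is set to the appropriate fraction of $\delta$.

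\emph{Second branch (Hoeffding type).} Apply the time-uniform Hoeffding inequality of Lemma~\ref{lma:time uniform Hoeffding for exponential} to the same bounded differences, whose range is $\frac{3D}{2}$. This gives $\frac{3D}{2}\sqrt{\ell_2(k,\delta)/N^k(s,a)}$. The $\log\log$ slack inside $\ell_2$ pays for uniformity over $n$, using $N^k(s,a) \le kH$. The variance term is nonnegative, so it can simply be added to this branch.

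Since both bounds hold simultaneously, the minimum of the two also holds. I would allot failure probability $\frac{\delta}{8HSA}$ to each branch for each $(h,s,a)$ and take a union bound over the $HSA$ triples and the two branches. The constants $32HSA$ in $\ell_1$ and $\ell_2$ are designed to absorb exactly this, giving total failure probability at most $\frac{\delta}{4}$.

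The main obstacle I expect is making the first branch constant-exact. The Lipschitz bound on $g$ must be tight enough, and the range $\alpha = \frac{3D}{2}$ must be compatible with the choice $\lambda = \frac{1}{8D}$, so that the variance coefficient comes out as exactly $\frac{1}{2D}$ while the log term stays within $13D$. If the factor $2$ in $\sigma^2$ from Lemma~\ref{lma:bounded is subexponential} makes this too loose, I would instead use a sharper Bernstein moment-generating-function bound that exploits the range $\frac{3D}{2}$ directly.
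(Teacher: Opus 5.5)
Your proposal is correct and follows the paper's proof essentially step by step. It uses the same reduction to $g(\Wdiff{h+1}{\pi^*})$ with $g(x)=2x-\frac{x^2}{2\Qdiff}$ after the constant $\Wmin{h+1}{\pi^*}$ cancels, the same Bernstein branch via Lemma~\ref{lma:bounded is subexponential} and Lemma~\ref{lma:subexponential concentration}, the same Hoeffding branch via Hoeffding's lemma and Lemma~\ref{lma:time uniform Hoeffding for exponential}, and the same union bound with $\delta'=\frac{\delta}{8HSA}$ per branch and triple. The only difference is that you bound $\Var(g(\Wdiff{h+1}{\pi^*}))\le 4\Var(\Wdiff{h+1}{\pi^*})$ using the $2$-Lipschitz property of $g$, whereas the paper gets $\frac{17}{2}$ from Lemma~\ref{lma:variance of product}, so you can take $\lambda=\frac{1}{8\Qdiff}$ instead of $\frac{1}{13\Qdiff}$. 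Your closing worry about constants is unfounded: $2(e-2)\cdot 4\le 8$ and $\frac{1}{8\Qdiff}\le\frac{2}{3\Qdiff}$, so your constants work as stated.
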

The proof is presented in Appendix~\ref{appx:underestimation concentration event}.

\begin{lemma}[Lemma 29 in~\citet{lee2025minimax}]
\label{lma:concentration for TV}
    There exists an event $\Ecal_3(\frac{\delta}{4})$ whose probability is at least $1 - \frac{\delta}{4}$ such that for all $(s, a) \in \Scal \times \Acal$, $k \in \NN$, and constants $c, \rho > 0$, the following inequality holds for any functions $V : \Scal \rightarrow [-c, c]$ under $\Ecal_3(\frac{\delta}{4})$:
    \begin{align*}
        \left| (\hat{P}^k - P)V(s, a) \right| \le \frac{1}{c\rho} \Var(V)(s, a) + \frac{c (\rho + \frac{2}{3}) S \ell_2(k, \delta)}{N^k(s, a)}
        \, .
    \end{align*}
\end{lemma}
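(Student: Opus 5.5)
The plan is to reduce the statement, which is uniform over all functions $V$ and all $c,\rho>0$, to a finite family of scalar concentration inequalities for the empirical transition probabilities $\hat P^k(s'\mid s,a)$. The event $\Ecal_3(\frac{\delta}{4})$ will be defined as the intersection, over all $(s,a,s')\in\Scal\times\Acal\times\Scal$, of a time-uniform Bernstein-type event of the form
\begin{align*}
    \left|\hat P^k(s'\mid s,a) - P(s'\mid s,a)\right| \le \sqrt{\frac{2 P(s'\mid s,a)\,\ell_2(k,\delta)}{N^k(s,a)}} + \frac{\ell_2(k,\delta)}{3N^k(s,a)} \qquad \text{for all } k\in\NN .
\end{align*}
Since $V$, $c$ and $\rho$ do not enter this event, the claimed inequality will then hold simultaneously for all of them by a deterministic argument. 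When $N^k(s,a)=0$ the right-hand side of the claim is infinite, so only $N^k(s,a)\ge 1$ needs attention.

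\emph{Step 1: the probability bound.} Fix $(s,a,s')$. The successive next-states observed at visits to $(s,a)$ produce indicators $\ind\{s_{h+1}=s'\}$. Because visits occur at stopping times, the centered indicators form a martingale difference sequence with conditional variance $P(s'\mid s,a)(1-P(s'\mid s,a))$ and range $1$. I would apply Freedman's inequality on geometric blocks $N\in[2^{j},2^{j+1})$ and take a union bound over $j$ (peeling). Since $N^k(s,a)\le kH$, only about $\log_2(kH)$ blocks are active by episode $k$, which is the source of the $(\log e^2kH)^2$ factor in $\ell_2$. A further union bound over $(s,a,s')$ then costs at most $\log(S^2A)$. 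This step is the main technical obstacle. Matching the exact constants $2$ and $\frac13$ together with the $32HSA$ prefactor in $\ell_2$ requires some slack, for instance absorbing $\log S$ using $\log(S\cdot x)\le 2\log x$ for $x\ge S$, or slightly inflating the numeric constants. The first thing I would try is the standard peeled Freedman bound with failure probability $\delta/(4S^2A(j+1)^2)$ per block, and then verify that the constants fit.

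\emph{Step 2: the deterministic reduction.} Since both $\hat P^k(\cdot\mid s,a)$ and $P(\cdot\mid s,a)$ sum to one, we have $(\hat P^k-P)V(s,a)=\sum_{s'}(\hat P^k-P)(s'\mid s,a)\bigl(V(s')-PV(s,a)\bigr)$. Inserting the event bound gives two terms. The first is
\begin{align*}
    \sqrt{\frac{2\ell_2}{N}}\sum_{s'}\sqrt{P(s')}\,|V(s')-PV| \le \sqrt{\frac{2S\ell_2\Var(V)(s,a)}{N}},
\end{align*}
by Cauchy--Schwarz. By AM--GM this is at most $\frac{1}{c\rho}\Var(V)(s,a)+\frac{c\rho S\ell_2}{2N}$. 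For the second term, $|V(s')-PV|\le 2c$ because $V$ takes values in $[-c,c]$, so it is at most $\frac{2cS\ell_2}{3N}$. Adding the two terms yields
\begin{align*}
    \left|(\hat P^k-P)V(s,a)\right| \le \frac{1}{c\rho}\Var(V)(s,a)+\frac{c(\frac{\rho}{2}+\frac23)S\ell_2(k,\delta)}{N^k(s,a)},
\end{align*}
which is at most the claimed bound $\frac{1}{c\rho}\Var(V)(s,a)+\frac{c(\rho+\frac23)S\ell_2(k,\delta)}{N^k(s,a)}$. The extra slack in the coefficient of $\rho$ is available to absorb the constant losses incurred in Step 1.
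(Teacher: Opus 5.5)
The paper gives no proof of this lemma; it imports it from \citet{lee2025minimax}. Your route is the standard one for a bound of this shape: center at $PV(s,a)$, use a per-entry time-uniform Bernstein event for $\hat P^k(s'\mid s,a)$, apply Cauchy--Schwarz over $s'$, then AM--GM. Step~2 is correct as a deterministic implication. The gap is in Step~1: the constants do not close the way you propose.

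First, Freedman on dyadic blocks does not give $\sqrt{2P\ell/N}$. On $N\in[2^j,2^{j+1})$ the variance budget is $2^{j+1}P(s'\mid s,a)\le 2NP(s'\mid s,a)$, so you get $\sqrt{4P(s'\mid s,a)\,t_j/N}+t_j/(3N)$. Fed into Step~2, this yields exactly $\frac{1}{c\rho}\Var(V)(s,a)+\frac{c(\rho+\frac{2}{3})S t_j}{N^k(s,a)}$. So the slack in the $\rho$-coefficient is used up entirely by the peeling, which matches the statement's coefficient $\rho$ rather than $\rho/2$. That slack could never absorb a loss in the linear term anyway: the claim must hold for every $\rho>0$, and $\frac{c\rho S\ell_2}{2N}$ vanishes as $\rho\to 0$. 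You therefore need the per-entry event with linear coefficient exactly $\frac{1}{3}$ and level $t_j\le\ell_2(k,\delta)$.

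Second, that level does not fit inside $\ell_2$. The union bound runs over $2S^2A$ (triple, tail) pairs and over the blocks. This gives $t_j=\log\frac{C S^2A(j+1)^2}{\delta}$ with $j+1\le\log_2(2kH)$. By contrast, $\ell_2(k,\delta)=\log\frac{32HSA(\log e^2kH)^2}{\delta}$ budgets only $HSA$, so the two are compatible only when $S$ is at most a constant multiple of $H$.

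Reallocating the failure probability across $s'$ does not help. With per-entry levels $\ell_{s'}=\log(1/\delta_{s'})$ and $\sum_{s'}\delta_{s'}\le\delta_0$, Step~2 produces $\sum_{s'}\ell_{s'}$, and Jensen's inequality forces $\sum_{s'}\ell_{s'}\ge S\log(S/\delta_0)$. Your patch $\log(Sx)\le 2\log x$ replaces $\ell_2$ by $2\ell_2$ in both terms and yields the coefficient $c(2\rho+\frac{4}{3})$, not $c(\rho+\frac{2}{3})$.

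As planned, then, your argument proves the inequality with $HSA$ inside $\ell_2$ replaced by something like $(H\lor S)SA$, with an adjusted absolute constant. This is harmless downstream, since $\ell_2$ enters only the analysis (for example the constant in $\beta^k$). It is not, however, the statement as written when $S\gg H$.
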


\begin{lemma}
\label{lma:sum of J with eta}
    Suppose $\{X_h^k\}_{k, h}$ is a sequence of non-negative random variables adapted to filtration $\{\Fcal_h^k\}_{k, h}$ for $k \in \NN$ and $h \in [H]$.
    Let $c > 0$ be a constant and $\{J_h^k\}_{k, h}$ a sequence of random variables recursively defined as $J_{H+1}^k := 0$ and
    \begin{align*}
        J_h^k := \left( X_h^k + \EE [ J_{h+1}^k \mid \Fcal_h^k] \right) \land c
    \end{align*}
     for all $k \in \NN$ and $h \in [H]$.
     Recall that $\eta^k$ is a stopping time defined in \appendixref{appx:notations and definitions}.
     Let $\Ecal_4(\{ X_h^k\}_{k, h}, c, \frac{\delta}{4})$ be the event that 
     \begin{align*}
         \sum_{k=1}^K J_1^k \le 2 \sum_{K=1}^K \sum_{h=1}^{\eta^k - 1} X_h^k + 6 cSA \log \frac{16H}{\delta}
     \end{align*}
     holds for all $K \in \NN$.
     Then, $\PP( \Ecal_4(\{X_h^k\}_{k, h}, c, \frac{\delta}{4}) ) \ge 1 - \frac{\delta}{4}$.     
\end{lemma}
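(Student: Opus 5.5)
The plan is to reduce the claim to an exponential supermartingale bound, in the spirit of the informal Lemma~\ref{lma:sum of J informal}, and to use the stopping time $\eta^k$ to cut each episode short. Because the cap $\land c$ is applied at every step, $J_h^k \le c$ always. The recursion then gives $J_h^k \le X_h^k + \EE[J_{h+1}^k \mid \Fcal_h^k]$ for every $h$.

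\textbf{Step 1 (unrolling up to $\eta^k$).} Since $\eta^k$ is an $\{\Fcal_h^k\}_h$-stopping time, optional stopping along the within-episode recursion should give
\begin{align*}
J_1^k \le \EE\Big[ \sum_{h=1}^{\eta^k-1} X_h^k + J_{\eta^k}^k \ind\{\eta^k \le H\} \,\Big|\, \Fcal_1^k \Big] \le \EE\big[ Z^k \,\big|\, \Fcal_1^k \big],
\end{align*}
where $Z^k := \sum_{h<\eta^k} X_h^k + c\,\ind\{\eta^k \le H\}$. I would actually keep the sharper capped form: define the stopped process $\tilde J_h^k$, which equals $J_h^k$ for $h \le \eta^k$ and is frozen afterwards. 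Then $\tilde J$ is a nonnegative process bounded by $c$, and its one-step increments are dominated by $X_h^k \ind\{h<\eta^k\}$ plus a martingale difference.

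\textbf{Step 2 (multiplicative concentration).} For $\theta = 1/(2c)$ I would verify that
\begin{align*}
M_K := \exp\Big( \theta \sum_{k\le K} \big( J_1^k - 2 \tilde Z^k \big) \Big)
\end{align*}
is a supermartingale over episodes, where $\tilde Z^k$ is the capped version of $Z^k$. The verification uses the inequality $\EE[e^{\theta(Y - 2\EE Y)}] \le 1$, valid for $Y \in [0,c]$ and $\theta \le 1/(2c)$; this is essentially $e^{x} \le 1 + x + x^2$ together with $Y^2 \le cY$. It must be applied step by step within the episode so that the conditional expectations in the definition of $J$ are handled. Ville's maximal inequality then gives the statement uniformly in $K$ with a $6c\log(1/\delta')$ term. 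This step is the same mechanism as in Lemma~\ref{lma:sum of J informal}, which I take as the model.

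\textbf{Step 3 (the $c\,\ind\{\eta^k \le H\}$ terms).} This is where I expect the main obstacle. If $\eta^k \le H$, then some pair $(s,a)$ has its visit count at least doubled within episode $k$. I would split these episodes by the identity of the doubling pair $(s,a)$ and by the step $h$ at which $\eta^k$ occurs. The aim is to show that their capped contribution is absorbed into $6cSA\log\frac{16H}{\delta}$ through a union bound over $SA$ pairs and $H$ steps, with failure probability $\delta/(16 HSA)$ for each piece. This accounts for the $SA$ and $\log H$ factors in the statement.

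What worries me is that a naive count of doubling episodes yields $SA\log_2(KH)$ rather than a $K$-free quantity. I would therefore first try to charge these episodes to the martingale part itself: at a doubling step the capped $\tilde J$ is stopped, so its deviation is controlled by the same exponential bound. If that fails, I would fall back on bounding $\sum_k c\,\ind\{\eta^k\le H\}$ via the doubling count, and check whether that logarithm is meant to be absorbed elsewhere in Theorem~\ref{thm:gap-independent bound}.
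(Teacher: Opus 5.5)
There is a genuine gap in Step 3. It rests on a false premise: the number of early-stopped episodes is not of order $SA\log_2(KH)$, and it is not a random quantity that needs union bounds or concentration.

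\textbf{The missing count.} The count is deterministically $O(SA\log H)$ for every $K$. The paper invokes Lemma~\ref{lma:bound on eta k}, namely $\sum_{k=1}^K\ind\{\eta^k<H+1\}\le SA\log_2 2H$. The reason is that $\eta^k$ compares growth within the current episode to the count at the start of that episode.
\begin{itemize}
\item Suppose $\eta^k=h\le H$ and $(s,a)=(s_h^k,a_h^k)$. Then $(s,a)$ was visited at least $N^k(s,a)$ times during steps $1,\dots,h$ of episode $k$.
\item Since there are at most $H$ such steps, this forces $N^k(s,a)\le H$.
\item Afterwards $N^{k+1}(s,a)\ge\max\{1,2N^k(s,a)\}$.
\end{itemize}
So each pair can cause an early stop in only $O(\log H)$ episodes. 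Once its count exceeds $H$, it can never again double inside a single episode.

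\textbf{How the constants arise.} With this count, one application of the concentration with failure probability $\delta/4$ gives $2c\sum_k\ind\{\eta^k\le H\}+6c\log\frac{8}{\delta}\le 2cSA\log_2 2H+6c\log\frac{8}{\delta}\le 6cSA\log\frac{16H}{\delta}$. The factors $SA$ and $\log H$ come from this deterministic count, not from a union bound over pairs and steps.

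\textbf{Why your alternatives fail.} Charging these episodes to the martingale part cannot replace the count. The term $c\,\ind\{\eta^k\le H\}$ is the price of bounding $J_{\eta^k}^k$ by its cap when the realized sum is truncated at $\eta^k$. If the $X_h^k$ are large only for $h\ge\eta^k$, then $J_1^k$ can be of order $c$ while $\sum_{h<\eta^k}X_h^k=0$, so a per-episode payment is unavoidable. Your fallback, a $K$-dependent count, would not prove the statement either, because its additive term does not depend on $K$.

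\textbf{Steps 1--2.} These are, up to presentation, the paper's reduction. Your $Z^k$ equals $\sum_h\tilde X_h^k$ for $\tilde X_h^k:=X_h^k\ind\{h<\eta^k\}+c\,\ind\{h=\eta^k\}$. The paper shows $J_1^k\le\tilde J_1^k$ by backward induction on $h$, using $\tilde J_{\eta^k}^k=c$. It then applies Lemma~\ref{lma:sum of J} to $\tilde X$ directly, rather than rebuilding an exponential supermartingale.

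If you do rebuild it, two cautions apply:
\begin{itemize}
\item Your $M_K$ requires the lower-deviation inequality $\EE[e^{\theta(\EE Y-2Y)}]\le 1$, a conditional mean bounded by twice a realization. You wrote $\EE[e^{\theta(Y-2\EE Y)}]\le 1$, which is the wrong direction.
\item The nested caps inside $J$ make the step-by-step verification nontrivial. Lemma~\ref{lma:sum of J} handles this via Freedman's inequality and the telescoping of $\EE[(J_{h+1}^k)^2\mid\Fcal_h^k]-(J_h^k)^2$.
\end{itemize}
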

The proof is presented in Appendix~\ref{appx:proof of sum of J with eta}.

\subsubsection{Proof of Lemma~\ref{lma:sub-exponential concentration event}}
\label{appx:sub-exponential concentration event}

\begin{proof}[Proof of \lemmaref{lma:sub-exponential concentration event}]
    Fix $(h, s, a) \in [H] \times \Scal \times \Acal$ and $\delta' \in (0, 1]$.
    Let $\{(R^j, s^j)\}_{j}$ be the sequence of the observed reward and next state pairs when $(s, a)$ is selected. 
    By \assumptionref{assm:subexponential}, $\{(R^j + V_{h+1}^*(s^j) - Q_h^*(s, a) \}_j$ is $(\sigexp(h, s, a), \Valpha)$-sub-exponential conditioned on the previous observations.
    For fixed $i \in \NN$, the following inequality holds for all $n \in \NN$ with probability at least $1 - \frac{\delta'}{2i^2}$ by Lemma~\ref{lma:subexponential concentration}:
    \begin{align*}
        \left| \sum_{j=1}^n \left( R^j + V_{h+1}^*(s^j) - Q_{h+1}^*(s, a) \right) \right| \le \frac{\lambda_i \sigexp^2(h, s, a) n }{2} + \frac{1}{\lambda_i} \log \frac{4i^2}{\delta'}
        \, ,
    \end{align*}
    Taking the union bound and using that $\sum_{i=1}^\infty \frac{1}{2i^2} \le 1$, the inequality above holds for all $i \in \NN$ with probability at least $1 - \delta'$.
    By \lemmaref{lma:time uniform Hoeffding for exponential}, the following inequality also holds for all $n \in \NN$ with probability at least $1 - \delta'$:
    \begin{align*}
        \left| \sum_{j=1}^n \left( R^j + V_{h+1}^*(s^j) - Q_{h+1}^*(s, a) \right) \right| \land n\Vmax \le \sigsub \sqrt{n \log \frac{4(\log e^2 n)^2}{\delta'}}
    \end{align*}
    Then, with probability at least $1 - 2 \delta'$, the following inequality holds for all $i \in \NN$ and $n \in \NN$:
    \begin{align*}
        & \left| \sum_{j=1}^n \left( R^j + V_{h+1}^*(s^j) - Q_{h+1}^*(s, a) \right) \right| \land n \Vmax
        \\
        & \le \left( \frac{\lambda_i \sigexp^2(h, s, a) n }{2} + \frac{1}{\lambda_i} \log \frac{4i^2}{\delta'} \right) \land  \sigsub \sqrt{n \log \frac{4(\log e^2 n)^2}{\delta'}}
        \\
        & \le \frac{\lambda_i \sigexp^2(h, s, a) n }{2} + \left( \frac{1}{\lambda_i} \log \frac{4i^2}{\delta'} \right) \land  \sigsub \sqrt{n \log \frac{4(\log e^2 n)^2}{\delta'}}
        \, ,
    \end{align*}
    where the last inequality uses that $\frac{\lambda_i \sigexp^2(h, s, a) n }{2} \ge 0$.
    Dividing both sides by $n$, plugging in $n = N^k(s, a)$ and $i = \iota_k$, and noting that $\frac{1}{N^k(s, a)} \sum_{j=1}^{N^k(s, a)} R^j = \hat{r}^k(s, a)$ and $\frac{1}{N^k(s, a)}\sum_{j=1}^{N^k(s, a)} V_{h+1}^*(s^j) = \hat{P}^kV_{h+1}^*(s, a)$, we have
    \begin{align*}
        & \left| \hat{r}^k(s, a) + \hat{P}^kV_{h+1}^*(s, a) - Q_{h+1}^*(s, a) \right| \land \Vmax
        \\
        & \le \frac{\lambda_{\iota_k} \sigexp^2(h, s, a)}{2} + \left( \frac{1}{ \lambda_{\iota_k} N^k(s, a)} \log \frac{4{\iota_k}^2}{\delta'} \right) \land  \sigsub \sqrt{\frac{\log \frac{4(\log e^2 N^k(s, a))^2}{\delta'}}{N^k(s, a)}}
        \, .
    \end{align*}
    The proof is completed by bounding $\log N^k(s, a) \le \log kH$, taking the union bound over $(h, s, a) \in [H] \times \Scal \times \Acal$, and plugging in $\delta' = \frac{\delta}{8HSA}$.
\end{proof}

\subsubsection{Proof of Lemma~\ref{lma:underestimation concentration event}}
\label{appx:underestimation concentration event}

\begin{proof}[Proof of \lemmaref{lma:underestimation concentration event}]
    Fix $(h, s, a) \in [H] \times \Scal \times \Acal$ and $\delta' \in (0, 1]$.
    For simplicity, we define $W(s') := 2 \Wdiff{h+1}{\pi^*}(s') - \frac{1}{2\Qdiff}\left( \Wdiff{h+1}{\pi^*} \right)^2(s')$ for $s' \in \Scal$.
    First, note that $W_{h+1}^{\pi^*}(s') = \Wdiff{h+1}{\pi^*}(s') + \Wmin{h+1}{\pi^*}$ and $\Wmin{h+1}{\pi^*}$ does not depend on $s'$, so we have
    \begin{align*}
        & (\hat{P}^k - P) \left( 2 W_{h+1}^{\pi^*} - \frac{1}{2 \Qdiff} \left( \Wdiff{h+1}{\pi^*} \right)^2 \right) (s, a)
        \\
        & = (\hat{P}^k - P) \left( 2 \Wdiff{h+1}{\pi^*} - \frac{1}{2 \Qdiff} \left( \Wdiff{h+1}{\pi^*} \right)^2 \right) (s, a) + (\hat{P}^k - P)\Wmin{h+1}{\pi^*}
        \\
        & = (\hat{P}^k - P)W(s, a) + 0
        \, .
    \end{align*}
    Now, we bound $(\hat{P}^k - P)W(s, a)$ in two ways and take the minimum.
    Note that by $0 \le \Wdiff{h+1}{\pi^*}(s') \le \Qdiff$, we have $0 \le W(s') \le \frac{3\Qdiff}{2}$ for all $s' \in \Scal$.
    By Hoeffding's lemma (\lemmaref{lma:hoeffding's lemma}), $W(s')$ is $(\frac{3\Qdiff}{4}, 0)$-sub-exponential, and by \lemmaref{lma:time uniform Hoeffding for exponential}, we have that with probability at least $1 - \delta'$, for all $k \in \NN$,
    \begin{align}
        (\hat{P}^k - P ) W(s, a) & \le \frac{3 \Qdiff}{2} \sqrt{\frac{1}{N^k(s, a)}\log\frac{2 (\log e^2 N^k(s, a))^2}{\delta'}}
        \nonumber
        \\
        & \le \frac{3 \Qdiff}{2} \sqrt{\frac{1}{N^k(s, a)}\log\frac{2 (\log e^2 kH)^2}{\delta'}}
        \label{eq:Hoeffding bound on W}
        \, ,
    \end{align}
    where we bound $\log e^2 N^k(s, a) \le \log e^2 kH$.
    Next, we apply Freedman's inequality.
    We obtain a bound on the variance $\Var_{s'\sim P(\cdot\mid s, a)}(W(s'))$ by \lemmaref{lma:variance of product} as follows:
    \begin{align*}
        \Var_{s' \sim P(\cdot \mid s, a)}\left( W(s') \right)
        & = 
        \Var_{s' \sim P(\cdot \mid s, a)}\left( 2 \Wdiff{h+1}{\pi^*}(s') - \frac{1}{2\Qdiff}\left( \Wdiff{h+1}{\pi^*} \right)^2(s') \right)
        \\
        & = 
        \Var_{s' \sim P(\cdot \mid s, a)}\left( \Wdiff{h+1}{\pi^*}(s')\left( 2 - \frac{1}{2\Qdiff} \Wdiff{h+1}{\pi^*} (s') \right) \right)
        \\
        & \le 2 \Var_{s' \sim P(\cdot \mid s, a)}\left( \Wdiff{h+1}{\pi^*}(s') \right)\cdot 2^2 + 2 {\Qdiff}^2 \Var_{s' \sim P(\cdot \mid s, a)}\left( 2 - \frac{1}{2\Qdiff}\Wdiff{h+1}{\pi^*}(s') \right) 
        \\
        & = \frac{17}{2} \Var_{s' \sim P(\cdot \mid s, a)}\left( \Wdiff{h+1}{\pi^*}(s') \right)
    \end{align*}
    From \lemmaref{lma:bounded is subexponential}, we obtain that $W(s') - PW(s, a)$ is $(13 \Var( \Wdiff{h+1}{\pi^*} )(s, a), \frac{3}{2} \Qdiff)$-sub-exponential, where the constant $13$ comes from $(e - 2) \cdot 2 \cdot \frac{17}{2} \le 13$.
    Applying \lemmaref{lma:subexponential concentration} with $\lambda = \frac{1}{13\Qdiff}$, we have that for all $k \in \NN$, the following inequality holds with probability at least $1 - \delta'$:
    \begin{align}
    \label{eq:Freedman bound on W}
        ( \hat{P}^k - P ) W(s, a) \le \frac{1}{2 \Qdiff} \Var(\Wdiff{h+1}{\pi^*})(s, a) + \frac{13 \Qdiff}{N^k(s, a)}\log \frac{1}{\delta'}
        \, .
    \end{align}
    Taking the minimum over the bounds of Eq.~\eqref{eq:Hoeffding bound on W} and Eq.~\eqref{eq:Freedman bound on W}, and by the union bound, we obtain that for all $k \in \NN$, with probability at least $1 - 2 \delta'$, we have
    \begin{align*}
        & (\hat{P}^k - P ) W(s, a) 
        \nonumber
        \\
        & \le \left(  \frac{1}{2 \Qdiff} \Var(\Wdiff{h+1}{\pi^*})(s, a) + \frac{13 \Qdiff}{N^k(s, a)}\log \frac{1}{\delta'} \right) \land \frac{3 \Qdiff}{2} \sqrt{\frac{1}{N^k(s, a)}\log\frac{2 (\log e^2kH)^2}{\delta'}}
        \nonumber
        \\
        & \le \frac{1}{2 \Qdiff} \Var(\Wdiff{h+1}{\pi^*})(s, a) + \left(  \frac{13 \Qdiff}{N^k(s, a)}\log \frac{1}{\delta'} \right) \land \frac{3 \Qdiff}{2} \sqrt{\frac{1}{N^k(s, a)}\log\frac{2 (\log e^2kH)^2}{\delta'}}
        \, ,
    \end{align*}
    where the last inequality uses that $\frac{1}{2 \Qdiff} \Var(\Wdiff{h+1}{\pi^*})(s, a)\ge 0$.
    The proof is completed by plugging in $\delta' = \frac{\delta}{8HSA}$ and taking the union bound over $(h, s, a) \in [H] \times \Scal \times \Acal$.
\end{proof}

\subsubsection{Proof of Lemma~\ref{lma:sum of J with eta}}
\label{appx:proof of sum of J with eta}

Before proving \lemmaref{lma:sum of J with eta}, we present the following lemma.
This lemma is a minor generalization of Lemma 15 in~\citet{lee2025minimax}, based on an observation that its proof holds for any random variables adapted to $\{ \Fcal_h^k \}_{k, h}$.
While the proof directly follows that of~\citet{lee2025minimax}, we provide it here for completeness.

\begin{lemma}[Restatement of Lemma~\ref{lma:sum of J informal}]
\label{lma:sum of J}
    Let $\{X_h^k\}_{k, h}$ be a sequence of non-negative random variables adapted to filtration $\{\Fcal_h^k\}_{k, h}$.
    Let $c > 0$ be a constant.
    Recursively define $\{J_h^k\}_{h, k}$ as $J_{H+1}^k := 0$ and
    \begin{align*}
        J_h^k := \left( X_h^k + \EE [ J_{h+1}^k \mid \Fcal_h^k] \right) \land c
    \end{align*}
     for all $k \in \NN$ and $h \in [H]$.
     Then, for any $\delta \in (0, 1]$, the following inequality holds for all $K \in \NN$ with probability at least $1 - \delta$:
     \begin{align*}
         \sum_{k=1}^K J_1^k \le 2 \sum_{k=1}^K \sum_{h=1}^H X_h^k + 6c \log \frac{2}{\delta}
         \, .
     \end{align*}
\end{lemma}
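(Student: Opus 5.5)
The plan is to compare each $J_1^k$ with a realized-trajectory quantity through a martingale decomposition, and then control the martingale part with a Freedman-type inequality whose variance term is absorbed back into $\sum_k J_1^k$. Since $J_h^k \le X_h^k + \EE[J_{h+1}^k \mid \Fcal_h^k]$, we write
\begin{align*}
J_1^k \le X_1^k + J_2^k + \big(\EE[J_2^k \mid \Fcal_1^k] - J_2^k\big),
\end{align*}
and unroll over $h$. This gives $J_1^k \le \sum_{h=1}^H X_h^k + \sum_{h=1}^{H} D_h^k$, where $D_h^k := \EE[J_{h+1}^k\mid \Fcal_h^k] - J_{h+1}^k$. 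Because $J_{h+1}^k$ is $\Fcal_{h+1}^k$-measurable, the $D_h^k$ form a martingale difference sequence along the ordering $(k,h)$. Each term satisfies $|D_h^k|\le c$, since $0\le J\le c$.

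This crude bound is not enough, because the $H$ martingale terms per episode could accumulate to a variance of order $Hc^2$. I would therefore bound the conditional variance more cleverly. The sum $\sum_{h} D_h^k$ telescopes into a martingale whose total conditional variance within episode $k$ equals the variance of the cumulative increments. I would control $\sum_h \EE[(D_h^k)^2\mid\Fcal_h^k] \le \sum_h \EE[(J_{h+1}^k)^2 \mid \Fcal_h^k]$, which is too big in general.

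The better route, and the step I expect to be the main obstacle, is to apply concentration to a single per-episode quantity rather than per step. Define $Y^k := \sum_{h=1}^H X_h^k \wedge c$, or more precisely compare $J_1^k$ to $\EE[\cdot\mid\Fcal_0^k]$ of realized sums. By induction on $h$, show $J_h^k \le \EE\big[\big(\sum_{j\ge h} X_j^k\big)\wedge c \,\big|\, \Fcal_{h-1}^k\big]$-type bounds. The key fact is that $(x + \EE[Z])\wedge c \le \EE[(x+Z)\wedge c]$ fails in general, since $\min$ is concave. So instead I would use $J_1^k \le \EE[ \min(c, \sum_h X_h^k) \mid \Fcal_1^k]$ plus an adjustment. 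If this is delicate, the fallback is to apply the exponential supermartingale argument directly. Consider $\exp\big(\lambda(J_1^k - 2\sum_h X_h^k)\big)$ with $\lambda = 1/(3c)$, and prove by backward induction on $h$ that
\begin{align*}
\EE\big[\exp\big(\lambda (J_h^k - 2\textstyle\sum_{j\ge h} X_j^k)\big)\mid \Fcal_{h-1}^k\big]\le 1,
\end{align*}
in a form that uses $e^{\lambda x}\le 1+ \lambda x + \lambda^2 x^2$ for $|\lambda x|\le 1$ and $x^2 \le c x$ for $x\in[0,c]$. The factor $2$ absorbs the second-order term: $\lambda^2 c\,J \le \lambda J/3$.

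Once the per-episode supermartingale $\prod_k \exp\big(\lambda(J_1^k - 2\sum_h X_h^k)\big)$ is established, Ville's maximal inequality gives, uniformly in $K$ with probability $1-\delta$, that $\sum_k J_1^k \le 2\sum_k\sum_h X_h^k + 3c\log(1/\delta)$. An additional union over two tails, or a slack term from the first step when $J_1^k$ is not conditionally centered, yields the stated $6c\log(2/\delta)$. The backward induction step of the supermartingale, handling the clipping at $c$ together with the conditional expectation, is where I expect the real work to lie.
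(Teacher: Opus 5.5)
Your first step, the telescoping bound $J_1^k\le\sum_h X_h^k+\sum_h(\EE[J_{h+1}^k\mid\Fcal_h^k]-J_{h+1}^k)$, is exactly the paper's, and you abandon it too early. The paper does not bound the conditional variances crudely. It uses $\mathrm{Var}(J_{h+1}^k\mid\Fcal_h^k)\le\EE[(J_{h+1}^k)^2\mid\Fcal_h^k]-(J_h^k)^2+2cX_h^k$, which telescopes in $h$ into a second martingale plus $2c\sum_h X_h^k$; it then finishes with two Freedman bounds and self-bounding.

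Your fallback, a per-episode bound $\EE[e^{\lambda(J_1^k-2\sum_h X_h^k)}\mid\Fcal_0^k]\le 1$ followed by Ville, is a genuinely different and viable route, and that inequality is true. However, the backward induction you sketch is the entire content of the proof, and as sketched it does not close. Suppose the hypothesis is linear in $J$ in the exponent, say $\EE[e^{-2\lambda\sum_{j>h}X_j^k}\mid\Fcal_{h+1}^k]\le e^{-\lambda J_{h+1}^k}$ (your version, conditioned on $\Fcal_{h-1}^k$, is even weaker). Conditioning down to $\Fcal_h^k$ then leaves $\EE[e^{-\lambda J_{h+1}^k}\mid\Fcal_h^k]$. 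By strict convexity this exceeds $e^{-\lambda m_h}$, where $m_h:=\EE[J_{h+1}^k\mid\Fcal_h^k]$, by a second-order term of order up to $\lambda^2 c\,m_h$.

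When $X_h^k=0$ we have $J_h^k=m_h$, so the factor $2$ supplies no slack. To propagate the hypothesis you would need $\EE[e^{-\lambda J_{h+1}^k}\mid\Fcal_h^k]\le e^{-\lambda m_h}$, which is false whenever $J_{h+1}^k$ is not $\Fcal_h^k$-measurable. Your device $\lambda^2 cJ\le\lambda J/3$ only pushes this error into the coefficient of $J$. That coefficient must then shrink at every step, giving an $H$-dependent loss. The second-order terms can only be paid for once the $J^2$ terms telescope, and that is the missing ingredient.

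The fix is to carry the quadratic correction inside the potential, by making the hypothesis affine in $J$ after exponentiation. For $\lambda\le 1/(2c)$ (your $\lambda=1/(3c)$ is fine), prove by backward induction
\begin{equation*}
\EE\Bigl[\exp\Bigl(-2\lambda\sum_{j=h}^{H}X_j^k\Bigr)\,\Big|\,\Fcal_h^k\Bigr]\le 1-\lambda J_h^k .
\end{equation*}
The base case $h=H+1$ reads $1\le 1$.

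Because the right side is affine, conditioning is exact. The hypothesis at level $h+1$ gives $\EE[\exp(-2\lambda\sum_{j>h}X_j^k)\mid\Fcal_h^k]\le 1-\lambda m_h$. Hence the left side at level $h$ is at most $e^{-2\lambda X_h^k}(1-\lambda m_h)$.

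Let $\psi(x):=-\log(1-\lambda x)$. On $[0,c]$ it satisfies $\psi'\le\lambda/(1-\lambda c)\le 2\lambda$. Since also $m_h\le J_h^k\le m_h+X_h^k$, we get $\psi(J_h^k)\le\psi(m_h)+2\lambda X_h^k$, which is exactly the claim at level $h$.

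At $h=1$, the inequality $1-\lambda J_1^k\le e^{-\lambda J_1^k}$ gives your per-episode bound. The product over $k$ is then a nonnegative supermartingale along $(\Fcal_0^{K+1})_K$. Ville's inequality yields, for all $K$ simultaneously, $\sum_k J_1^k\le 2\sum_k\sum_h X_h^k+\lambda^{-1}\log(1/\delta)$. That is $3c\log(1/\delta)$, or $2c\log(1/\delta)$ with $\lambda=1/(2c)$.

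This is sharper than the paper's $6c\log(2/\delta)$ and needs no union bound. Note that $\psi(J)=\lambda J+\tfrac12\lambda^2J^2+\cdots$. So this single supermartingale encodes the same $J^2$ telescoping that the paper obtains from its variance-decomposition lemma and a second Freedman step. The paper's route is more modular, reusing tools it needs elsewhere.
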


\begin{proof}[Proof of \lemmaref{lma:sum of J}]
    We expand $J_1^k$ as follows:
    \begin{align*}
        J_1^k
        & = \sum_{h=1}^H J_h^k - \sum_{h=2}^{H} J_h^k
        \\
        & \le \sum_{h=1}^H \left( X_h^k + \EE[ J_{h+1}^k \mid \Fcal_h^k ] \right) - \sum_{h=2}^{H} J_h^k
        \\
        & = \sum_{h=1}^H X_h^k + \sum_{h=1}^H \left( \EE[ J_{h+1}^k \mid \Fcal_h^k ] - J_{h+1}^k \right)
        \, .
    \end{align*}
    Taking the sum over $k \in [K]$, we obtain
    \begin{align}
        \sum_{k=1}^K J_1^k \le \sum_{k=1}^K \sum_{h=1}^H X_h^k + \sum_{k=1}^K \sum_{h=1}^H \left( \EE[ J_{h+1}^k \mid \Fcal_{h}^k ] - J_{h+1}^k \right)
        \label{eq:sum lemma 0}
        \, .
    \end{align}
    Note that $\{ \EE[ J_{h+1}^k \mid \Fcal_{h}^k ] - J_{h}^k \}_{h, k}$ is a martingale sequence with a finite range $[-c, c]$.
    By \lemmaref{lma:freedman inequality} with $\lambda = \frac{1}{4(e-2)c}$, the following inequality holds for all $K \in \NN$ with probability at least $1 - \frac{\delta}{2}$:
    \begin{align}
        \sum_{k=1}^K \sum_{h=1}^{H} \left( \EE[ J_{h+1}^k \mid \Fcal_{h}^k ] - J_{h+1}^k \right)
        & \le \sum_{k=1}^K \sum_{h=1}^{H} \frac{1}{4c} \Var( J_{h+1}^k \mid \Fcal_{h}^k) + 4(e-2)c \log \frac{2}{\delta}
        \label{eq:sum lemma 1}
        \, .
    \end{align}
    By \lemmaref{lma:variance decomposition}, we have $\Var(J_{h+1}^k \mid \Fcal_{h}^k) \le \EE[ (J_{h+1}^k)^2 \mid \Fcal_{h}^k] - (J_{h}^k)^2 + 2c X_{h}^k $, hence the sum of the variances is bounded as
    \begin{align}
        & \sum_{k=1}^K \sum_{h=1}^{H} \Var(J_{h+1}^k \mid \Fcal_{h}^k)
        \nonumber
        \\
        & \le \sum_{k=1}^K \sum_{h=1}^H \left(  \EE[ (J_{h+1}^k)^2 \mid \Fcal_{h}^k] - (J_{h}^k)^2 + 2c X_{h}^k \right) 
        \nonumber
        \\
        & \le \sum_{k=1}^K \left( (J_{H+1}^k)^2 - (J_1^k)^2 \right) + \sum_{k=1}^K \sum_{h=1}^{H} \left( \EE[ (J_{h+1}^k)^2 \mid \Fcal_{h}^k] - (J_{h+1}^k)^2\right) + 2c \sum_{k=1}^K \sum_{h=1}^{H} X_{h}^k
        \nonumber
        \\
        & \le \sum_{k=1}^K \sum_{h=1}^{H} \left( \EE[ (J_{h+1}^k)^2 \mid \Fcal_{h}^k] - (J_{h+1}^k)^2\right) + 2c \sum_{k=1}^K \sum_{h=1}^{H} X_{h}^k
        \label{eq:sum of variances}
        \, .
    \end{align}
    Note that $\{ \EE[ (J_{h+1}^k)^2 \mid \Fcal_{h}^k] - (J_{h+1}^k)^2 \}_{k, h}$ is a martingale difference sequence with a finite range $[ - c^2, c^2]$. 
    Again, by \lemmaref{lma:freedman inequality} with $\lambda = \frac{1}{8(e-2)c^2}$, the following inequality holds for all $k \in K$ with probability at least $1 - \frac{\delta}{2}$:
    \begin{align}
        \sum_{k=1}^K \sum_{h=1}^{H} \left( \EE[ (J_{h+1}^k)^2 \mid \Fcal_{h}^k] - (J_{h+1}^k)^2\right)
        & \le \frac{1}{8c^2} \sum_{k=1}^K \sum_{h=1}^{H}\Var( (J_{h+1}^k)^2 \mid \Fcal_{h}^k) + 8(e - 2)c^2 \log \frac{2}{\delta}
        \label{eq:sum lemma 2}
        \, .
    \end{align}
    By \lemmaref{lma:variance of product}, we have $\Var((J_{h+1}^k)^2 \mid \Fcal_{h}^k) \le 4 c^2 \Var(J_{h+1}^k \mid \Fcal_{h}^k)$.
    Combining this bound together with inequalities~\eqref{eq:sum of variances} and~\eqref{eq:sum lemma 2}, we obtain that
    \begin{align*}
        & \sum_{k=1}^K \sum_{h=1}^{H} \Var(J_{h+1}^k \mid \Fcal_{h}^k)
        \\
        & \le \sum_{k=1}^K \sum_{h=1}^{H} \left( \EE[ (J_{h+1}^k)^2 \mid \Fcal_{h}^k] - (J_{h+1}^k)^2\right) + 2c \sum_{k=1}^K \sum_{h=1}^{H} X_{h}^k
        \\
        & \le \frac{1}{8c^2} \sum_{k=1}^K \sum_{h=1}^{H}\Var( (J_{h+1}^k)^2 \mid \Fcal_{h}^k) + 8(e - 2)c^2 \log \frac{2}{\delta} + 2c \sum_{k=1}^K \sum_{h=1}^{H} X_{h}^k
        \\
        & \le \frac{1}{2} \sum_{k=1}^K \sum_{h=1}^{H} \Var(J_{h+1}^k \mid \Fcal_{h}^k)  + 2c \sum_{k=1}^K \sum_{h=1}^{H} X_{h}^k + 8(e - 2)c^2 \log \frac{2}{\delta}
        \, ,
    \end{align*}
    which implies that
    \begin{align*}
        \sum_{k=1}^K \sum_{h=1}^H \Var(J_h^k \mid \Fcal_{h-1}^k) & \le 4 c \sum_{k=1}^K \sum_{h=1}^H X_h^k + 16 (e-2)c^2 \log \frac{2}{\delta}
        \, .
    \end{align*}
    Plugging this bound into inequality~\eqref{eq:sum lemma 1} and then into inequality~\eqref{eq:sum lemma 0}, we conclude that
    \begin{align*}
        \sum_{k=1}^K J_1^k
        & \le \sum_{k=1}^K \sum_{h=1}^H X_h^k + \sum_{k=1}^K \sum_{h=1}^H \left( \EE[ J_{h+1}^k \mid \Fcal_{h}^k ] - J_{h+1}^k \right)
        \\
        & \le \sum_{k=1}^K \sum_{h=1}^H X_h^k + \sum_{k=1}^K \sum_{h=1}^{H} \frac{1}{4c} \Var( J_{h+1}^k \mid \Fcal_{h}^k) + 4(e-2)c \log \frac{2}{\delta}
        \\
        & \le 2 \sum_{k=1}^K \sum_{h=1}^H X_h^k + 8(e-2)c \log \frac{2}{\delta}
        \\
        & \le 2 \sum_{k=1}^K \sum_{h=1}^H X_h^k + 6 c \log \frac{2}{\delta}
        \, .
    \end{align*}
    Taking the union bound over the events of inequality~\eqref{eq:sum lemma 1} and inequality~\eqref{eq:sum lemma 2}, we obtain that the inequality above holds for all $K \in \NN$ with probability at least $1 - \delta$.
\end{proof}

\begin{proof}[Proof of \lemmaref{lma:sum of J with eta}]
    Define another sequence of random variable $\tilde{X}_h^k$ as
    \begin{align*}
        \tilde{X}_h^k := \begin{cases}
            X_h^k & (h < \eta^k)
            \\
            c & (h = \eta^k)
            \\
            0 & (h > \eta^k)
        \end{cases}
        \, .
    \end{align*}
    It is clear that $\tilde{X}_h^k$ is also adapted to $\Fcal_h^k$.
    Define $\tilde{J}_{H+1}^k := 0$ and $\tilde{J}_h^k := ( \tilde{X}_h^k  + \EE[ \tilde{J}_{h+1}^k \mid \Fcal_h^k]) \land c$ for $h \in [H]$.
    \\
    First, we prove $J_h^k \ind\{ h \le \eta^k\} \le \tilde{J}_h^k \ind\{ h \le \eta^k\}$ with backward induction on $h$.
    This inequality implies $J_1^k \le \tilde{J}_1^k$ in particular since $\eta^k \ge 1$ always holds.
    The inequality is trivial for $h = H+1$ or for $h > \eta^k$ as $0 \le 0$.
    Suppose $h = \eta^k < H+1$.
    Then, we have
    \begin{align*}
        J_h^k \ind\{h \le \eta^k \} \le c = \tilde{J}_h^k \ind\{h \le \eta^k\}
        \, ,
    \end{align*}
    where the first inequality is by definition, and the second equality is due to that $X_h^k = c$ when $h = \eta^k$ and hence $\tilde{J}_h^k = (X_h^k + \EE[ J_{h+1}^k \mid \Fcal_h^k]) \land c = (c + \EE[ J_{h+1}^k \mid \Fcal_h^k]) \land c = c$.
    When $h < \eta^k$, we have
    \begin{align*}
        J_h^k = ( X_h^k + \EE[ J_{h+1}^k \mid \Fcal_h^k]) \land c = ( \tilde{X}_h^k + \EE[ J_{h+1}^k \mid \Fcal_h^k ] ) \land c \le ( \tilde{X}_h^k + \EE[ \tilde{J}_{h+1}^k \mid \Fcal_h^k ] ) \land c = \tilde{J}_h^k
        \, ,
    \end{align*}
    where the equalities are by definitions, and the inequality is due to the induction hypothesis.
    Therefore, we conclude that $J_h^k\ind\{h \le \eta^k \} \le \tilde{J}_h^k \ind\{ h \le \eta^k \}$, and in particular $J_1^k \le \tilde{J}_1^k$ holds for all $k \in \NN$.
    \\
    Then, we bound $\sum_{k=1}^K \tilde{J}_1^k$.
    By \lemmaref{lma:sum of J}, with probability at least $1 - \frac{\delta}{4}$, the following holds for all $K \in \NN$.
    \begin{align*}
        \sum_{K=1}^K \tilde{J}_1^k & \le 2 \sum_{k=1}^K \sum_{h=1}^H \tilde{X}_h^k + 6 c \log \frac{8}{\delta}
        \\
        & = 2 \sum_{k=1}^K \sum_{h=1}^{H} \left( X_h^k \ind\{ h < \eta^k\} + c \ind\{ h = \eta^k\}\right) + 6 c \log \frac{8}{\delta}
        \\
        & = 2 \sum_{k=1}^K \sum_{h=1}^{\eta^k - 1} X_h^k + \sum_{k=1}^K c \ind\{ \eta^k < H + 1 \} + 6 c \log \frac{8}{\delta}
        \, .
    \end{align*}
    By \lemmaref{lma:bound on eta k}, we have $\sum_{k=1}^K \ind\{ \eta^k < H+1\} \le SA \log_2 2H \le 2 SA \log 2H$.
    The proof is completed by that $2 c SA \log 2H + 6 c \log \frac{1}{\delta} \le 3 c SA \log 2H + 3 c SA \log \frac{8}{\delta} \le 6c SA \log \frac{16H}{\delta}$.
\end{proof}

\subsection{General Properties of \texorpdfstring{$\AlgName$}{EQO+}}

\label{appx:general properties of EQO}

In this section, we prove the following lemma that bounds the instantaneous regret of $\AlgName$ of an episode.
The proof is simply combining \lemmaref{lma:quasi optimism} and \lemmaref{lma:est true diff bound}, where Lemma~\ref{lma:quasi optimism} is a formal version of Lemma~\ref{lma:quasi optimism informal}.
We also prove these lemmas in this section.

First, we formally define a function $U_h^k(s)$.
Let $\beta^k(s, a) := 2 b^k(s, a) + \frac{9 \Vmax S \ell_2(k, \delta)}{N^k(s, a)}$.
Then, $U_h^k(s)$ is iteratively defined starting from $U_{H+1}^k(s) := 0$ and for $a = \pi_h^k(s)$,
\begin{align*}
    U_h^k(s) := \left( 2 \beta^k(s, a) + PU_{h+1}^k(s, a) \right) \land 2 \Vmax
    \, .
\end{align*}

Lemma~\ref{lma:episode regret} states that the instantaneous regret is bounded by a quasi-optimism term and $U_1^k(s_1^k)$.

\begin{lemma}
\label{lma:episode regret}
    Fix any $\delta \in (0, 1]$ and $k \in \NN$.
    Assume the event of $\Ecal_1(\frac{\delta}{4}) \cap \Ecal_2(\frac{\delta}{4}) \cap \Ecal_3(\frac{\delta}{4})$.
    Suppose $k > \kap(\delta)$.
    Then, the instantaneous regret of episode $k$ is bounded as
    \begin{align*}
        V_1^*(s_1^k) - V_1^{\pi^k}(s_1^k) \le \frac{18 \Qrange \ell_1(\iota_k, \delta)}{c_{1, k}} + U_1^k(s_1^k)
        \, .
    \end{align*}
\end{lemma}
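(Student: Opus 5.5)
The plan is to split the instantaneous regret as
\[
V_1^*(s_1^k) - V_1^{\pi^k}(s_1^k) = \bigl(V_1^*(s_1^k) - V_1^k(s_1^k)\bigr) + \bigl(V_1^k(s_1^k) - V_1^{\pi^k}(s_1^k)\bigr)
\]
and bound the two pieces separately. The first piece is the underestimation, controlled by the formal quasi-optimism lemma (Lemma~\ref{lma:quasi optimism}). The second is the estimation error along the executed policy, controlled by Lemma~\ref{lma:est true diff bound}. We work throughout on $\Ecal_1(\frac{\delta}{4})\cap\Ecal_2(\frac{\delta}{4})\cap\Ecal_3(\frac{\delta}{4})$ and use $k>\kap(\delta)$. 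This guarantees that $c_{1,k}$ exceeds the threshold of Lemma~\ref{lma:choice of lambda}, so Eq.~\eqref{eq:condition on lambda} holds with $\iota_k$. It also guarantees that $c_{2,k}\ge (\sigsub + 6\Qdiff\ell_1(\iota_k,\delta)/c_{1,k})\sqrt{\ell_2(k,\delta)}$.

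For the quasi-optimism piece, I would prove by backward induction on $h$ that
\[
V_h^*(s) - V_h^k(s) \le \lambda_{\iota_k}\Bigl(W_h^{\pi^*}(s) - \tfrac{1}{4\Qdiff}\bigl(\Wdiff{h}{\pi^*}(s)\bigr)^2\Bigr) + \text{const}\cdot\lambda_{\iota_k}\Wmin{h}{\pi^*},
\]
or a similar potential. The quadratic correction is included so that the variance term produced by $\Ecal_2$ is absorbed.

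The inductive step at $(s,\pi^*_h(s))$ goes as follows. We have $Q_h^k \ge \hat r^k + \hat P^k V_{h+1}^k + b^k$ unless $Q_h^k$ is clipped at $\Vmax$, in which case the claim is trivial. Write $V_{h+1}^k \ge V_{h+1}^* - (\text{potential})$. Then $\Ecal_1$ bounds $Q_h^* - \hat r^k - \hat P^k V_{h+1}^*$ by $\frac{\lambda\sigexp^2}{2} + \frac{\ell_1}{\lambda N}\wedge \sigsub\sqrt{\ell_2/N}$. $\Ecal_2$ handles $\hat P^k(\text{potential})$ versus $P(\text{potential})$, costing $\lambda\bigl(\frac{13\Qdiff\ell_1(1,\delta)}{N}\wedge \frac32\Qdiff\sqrt{\ell_2/N}\bigr)$ plus a variance term that cancels against the quadratic correction. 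The sum of these $1/N$ and $1/\sqrt N$ excesses is at most $b^k=\frac{c_{1,k}}{N}\wedge\frac{c_{2,k}}{\sqrt N}$, precisely by the left inequality of Eq.~\eqref{eq:condition on lambda} and by the $c_{2,k}$ condition in $\kap(\delta)$. What remains is $\frac{\lambda\sigexp^2}{2} + P(\text{potential})$, which telescopes into $W_h^{\pi^*}$. Finally, $\lambda_{\iota_k}\le 4\ell_1(\iota_k,\delta)/c_{1,k}$ turns this into a bound of order $\Qrange\ell_1(\iota_k,\delta)/c_{1,k}$. Tracking constants should give at most $18\Qrange\ell_1/c_{1,k}$, possibly split between this piece and the next.

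For the estimation piece, I would write $V_h^k(s)-V_h^{\pi^k}(s)$ at $a=\pi_h^k(s)$ as
\[
b^k + (\hat r^k + \hat P^k V_{h+1}^* - Q_h^*) + (\hat P^k - P)(V_{h+1}^k - V_{h+1}^*) + P(V_{h+1}^k - V_{h+1}^{\pi^k}).
\]
The second term is handled by $\Ecal_1$. It is at most $\lambda\sigexp^2/2 + b^k$, where the $\lambda\sigexp^2/2$ part contributes again a $\lambda W$ term bounded by $\Qrange\ell_1/c_{1,k}$ along $\pi^k$. The third term uses $\Ecal_3$ with $c=\Vmax$ and $\rho$ constant. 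The variance term $\frac{1}{\Vmax\rho}\Var(V^k_{h+1}-V^*_{h+1})$ is bounded by a fraction of $P|V^k_{h+1}-V^*_{h+1}|$, which is in turn split into $P(V^k_{h+1}-V^{\pi^k}_{h+1})$ plus the quasi-optimism gap. This produces the factor $2$ inside $U$ and the $9\Vmax S\ell_2/N$ term in $\beta^k$. Clipping at $2\Vmax$ is valid since the difference is at most $\Vmax$.

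The main obstacle is the induction for quasi-optimism. The potential must be chosen so that $\Ecal_2$'s variance term is absorbed exactly, with a $\Wdiff{}{}$ quadratic that matches the $\frac{1}{2\Qdiff}$ in $\Ecal_2$. In addition, the $1/\sqrt N$ branch must be checked against the $c_{2,k}$ threshold when the minimum in $b^k$ is achieved by $c_{2,k}/\sqrt N$. I would first verify the case without clipping and then check that clipping at $\Vmax$ and the positive-part operation only help.
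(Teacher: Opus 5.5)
Your top-level route is the paper's. You split at $V_1^k(s_1^k)$, invoke Lemma~\ref{lma:quasi optimism} ($V_h^*-V_h^k\le 2\lambda_{\iota_k}\Qrange$) and Lemma~\ref{lma:est true diff bound} ($V_h^k-V_h^{\pi^k}\le \frac{5}{2}\lambda_{\iota_k}\Qrange+U_h^k$), and use $\lambda_{\iota_k}\le 4\ell_1(\iota_k,\delta)/c_{1,k}$. The constant is exactly $18=4\cdot(2+\frac{5}{2})$, which you should state rather than leave to ``tracking constants''. However, both sub-arguments you sketch to re-derive these ingredients fail as written.

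\textbf{Quasi-optimism potential.} Your accounting ``what remains is $\frac{\lambda\sigexp^2}{2}+P(\text{potential})$'' treats absorption of the $\Ecal_2$ variance term as free, and it is not. Lemma~\ref{lma:variance decomposition} gives $\Var(\Wdiff{h+1}{\pi^*})(s,a^*)\le P(\Wdiff{h+1}{\pi^*})^2(s,a^*)-(\Wdiff{h}{\pi^*}(s))^2+\Qdiff\sigexp^2(h,s,a^*)$. So if the quadratic correction (and hence the variance coefficient) is $\frac{b}{\Qdiff}$, each step costs an extra $b\lambda_{\iota_k}\sigexp^2$ beyond the $\frac{1}{2}\lambda_{\iota_k}\sigexp^2$ from $\Ecal_1$. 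Meanwhile, $aW_h^{\pi^*}$ gains only $\frac{a}{2}\sigexp^2$ per step, so the induction closes only if $a\ge 1+2b$. Your choice $a=1$, $b=\frac{1}{4}$ leaves $\frac{1}{4}\lambda_{\iota_k}\sigexp^2$ uncovered. The $\Wmin{h}{\pi^*}$ term cannot pay for it: $\Wmin{h}{\pi^*}-\Wmin{h+1}{\pi^*}$ does not depend on $s$ and can be $0$ (e.g., with a zero-variance absorbing state) while $\sigexp^2(h,s,a^*)>0$. This is why the paper uses exactly $2W_h^{\pi^*}-\frac{1}{2\Qdiff}(\Wdiff{h}{\pi^*})^2$, the function for which $\Ecal_2$ is stated, and obtains $2\lambda_{\iota_k}\Qrange$.

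\textbf{Estimation piece.} You bound $\frac{1}{\rho\Vmax}\Var(V_{h+1}^k-V_{h+1}^*)$ by $\frac{1}{\rho}P|V_{h+1}^k-V_{h+1}^*|$ and fold it into $P(V_{h+1}^k-V_{h+1}^{\pi^k})$. This gives a recursion with coefficient $1+\frac{1}{\rho}$ on the next-step difference, which compounds to $(1+\frac{1}{\rho})^H$. For constant $\rho$ this is exponential in $H$. For $\rho\approx H$ it puts an extra factor $H$ into the $S\ell_2/N$ term of Lemma~\ref{lma:concentration for TV} and a factor $e$ on everything. Either way you do not get $U_h^k=(2\beta^k+PU_{h+1}^k)\land 2\Vmax$ with $\beta^k=2b^k+\frac{9\Vmax S\ell_2(k,\delta)}{N^k}$, nor the constant $18$.

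The missing idea is the variance-telescoping and self-bounding step of Lemma~\ref{lma:est true diff diff bound}:
\begin{itemize}
\item Quasi-optimism gives $V_{h+1}^k-V_{h+1}^*+\varepsilon\in[0,2\Vmax]$. Lemma~\ref{lma:variance decomposition} then gives $\Var(V_{h+1}^k-V_{h+1}^*)(s,a)\le P(V_{h+1}^k-V_{h+1}^*+\varepsilon)^2(s,a)-(V_h^k(s)-V_h^*(s)+\varepsilon)^2+4\Vmax I_4$, where $I_4=(V_h^k(s)-r(s,a)-PV_{h+1}^k(s,a))_+$. This uses $V_h^*(s)-PV_{h+1}^*(s,a)\ge r(s,a)$.
\item With $\rho=8$, the resulting $\frac{1}{2}I_4$ is absorbed by solving for $I_4$. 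This is what doubles both $\beta^k$ and $\frac{\lambda_{\iota_k}}{2}\sigexp^2$, producing the factor $2$ in $2\beta^k$.
\item The quadratic telescopes along $\pi^k$ to at most $\frac{\varepsilon^2}{4\Vmax}\le\frac{1}{2}\lambda_{\iota_k}\Qrange$.
\end{itemize}
The result is $2\lambda_{\iota_k}W_h^{\pi^k}+\frac{1}{2}\lambda_{\iota_k}\Qrange+U_h^k$, with no compounding.
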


\begin{proof}
We present the following two key lemmas.

\begin{lemma}[Quasi-optimism, formal version of Lemma~\ref{lma:quasi optimism informal}]
\label{lma:quasi optimism}
    Fix $\delta \in (0, 1]$ and $k > \kap(\delta)$.
    Then, under the event $\Ecal_1(\delta) \cap \Ecal_2(\delta)$, the estimated value $V_h^k(s)$ in Algorithm~\ref{alg:eqo2} satisfies
    \begin{align*}
        V_h^*(s) - V_h^k(s) \le ( 2 \lambda_{\iota_k} \Qrange) \land \Vmax
    \end{align*}
    for all $h \in [H]$ and $s \in \Scal$.
\end{lemma}
The proof of Lemma~\ref{lma:quasi optimism} is provided in Appendix~\ref{appx:proof of quasi optimism}.

\begin{lemma}
\label{lma:est true diff bound}
    Take any $\delta \in (0, 1]$, $h \in [H]$, $s \in \Scal$, and $k > \kap(\delta)$.
    Then, under the event of $\Ecal_1(\frac{\delta}{4}) \cap \Ecal_3(\frac{\delta}{4})$, we have
    \begin{align*}
        V_h^k(s) -V_h^{\pi^k}(s) \le \frac{5}{2} \lambda_{\iota_k} \Qrange + U_h^k(s)
        \, .
    \end{align*}
\end{lemma}
The proof of Lemma~\ref{lma:est true diff bound} is provided in Appendix~\ref{appx:proof of est true diff bound}.

    By combining \lemmaref{lma:quasi optimism,lma:est true diff bound}, we obtain that
    \begin{align*}
        V_1^*(s_1^k) - V_1^{\pi^k}(s_1^k)
        & \le 2 \lambda_{\iota_k}\Qrange + V_1^k(s_1^k) - V_1^{\pi^k}(s_1^k)
        \\
        & \le 2 \lambda_{\iota_k}\Qrange + \frac{5}{2} \lambda_{\iota_k}\Qrange + U_1^k(s_1^k)
        \\
        & = \frac{9}{2} \lambda_{\iota_k}\Qrange + U_1^k(s_1^k)
        \, .       
    \end{align*}
    By \lemmaref{lma:choice of lambda}, we have $\lambda_{\iota_k} \le \frac{4\ell_1(\iota_k, \delta)}{c_{1, k}}$, which yields that
    \begin{align*}
        V_1^*(s_1^k) - V_1^{\pi^k}(s_1^k) \le \frac{18 \Qrange \ell_1(\iota_k, \delta)}{c_{1, k}} + U_1^k(s_1^k)
        \, .
    \end{align*}
\end{proof}

\subsubsection{Proof of Lemma~\ref{lma:quasi optimism}}
\label{appx:proof of quasi optimism}
\begin{proof}[Proof of Lemma~\ref{lma:quasi optimism}]
    We prove the following stronger inequality by backward induction on $h$.
    \begin{align*}
        V_h^*(s) - V_h^k(s) \le \lambda_{\iota_k} \left( 2 W_h^{\pi^*}(s) - \frac{1}{2 \Qdiff} (\Wdiff{h}{\pi^*}(s))^2 \right) \land \Vmax
    \end{align*}
    The inequality holds trivially for $h = H+1$ as $0 \le 0$.
    Now, suppose the inequality holds for $h + 1$.
    If $V_h^k(s) = \Vmax$, then the inequality holds since we have $V_h^*(s) - V_h^k(s) \le 0$ and
    \begin{align*}
        2W_h^{\pi^*}(s) - \frac{1}{2\Qdiff}(\Wdiff{h}{\pi^*}(s))^2
        \ge 
        2\Wdiff{h}{\pi^*}(s) - \frac{1}{2\Qdiff}(\Wdiff{h}{\pi^*}(s))^2
        \ge 0
        \, ,
    \end{align*}
    where the last inequality uses that $2x - \frac{x^2}{2c} \ge 0$ holds for all $x \in [0, c]$, where we plug in $x = \Wdiff{h}{\pi^*}$ and $c = \Qdiff$.
    \\
    Suppose $V_h^k(s) < \Vmax$.
    It implies that for $a:= \pi_h^k(s)$ and $a^*:=\pi_h^*(s)$, we have $V_h^k(s) = Q_h^k(s, a) \ge Q_h^k(s, a^*) = (\hat{r}^k(s, a^*) + \hat{P}^kV_{h+1}^k(s, a^*))_+ + b^k(s, a^*)$, where the condition $V_h^k(s) < \Vmax$ is used for the last equality to ensure that the clipping by $\Vmax$ did not happen.
    Then, we have
    \begin{align}
        V_h^*(s) - V_h^k(s)
        & \le V_h^*(s) - \left(\hat{r}^k(s, a^*) + \hat{P}^kV_{h+1}^k(s, a^*) \right)_+ - b^k(s, a^*)
        \nonumber
        \\
        & = V_h^*(s) \land \Vmax  + \left( - \hat{r}^k(s, a^*) - \hat{P}^kV_{h+1}^k(s, a^*) \right) \land 0 - b^k(s, a^*)
        \nonumber
        \\
        & \le \underbrace{\left( V_h^*(s) - \hat{r}^k(s, a^*) - \hat{P}^kV_{h+1}^k(s, a^*) \right)}_{I_1} \land \Vmax - b^k(s, a^*)
        \, .
        \label{eq:quasi optimism 1}
    \end{align}
    We focus on bounding $I_1$.
    By adding and subtracting $\hat{P}^kV_{h+1}^*(s, a^*)$, we obtain that
    \begin{align}
        I_1 & = V_h^*(s)- \hat{r}^k(s, a^*) - \hat{P}^kV_{h+1}^*(s, a^*)+ \underbrace{\hat{P}^k(V_{h+1}^* - V_{h+1}^k)(s, a^*)}_{I_2}
        \label{eq:quasi optimism 2}
        \, .
    \end{align}
    We bound $I_2$ as follows.
    Using the induction hypothesis, we have $V_{h+1}^*(s') - V_{h+1}^{k}(s') \le \lambda_{\iota_k} ( 2 W_{h+1}^{\pi^*}(s') - \frac{1}{2\Qdiff} (\Wdiff{h+1}{\pi^*}(s'))^2)$ for all $s' \in \Scal$, and hence
    \begin{align}
        I_2 \le \lambda_{\iota_k} \hat{P}^k \left( 2 W_{h+1}^{\pi^*} - \frac{1}{2 \Qdiff} \left( \Wdiff{h+1}{\pi^*} \right)^2 \right) (s, a^*)
        \label{eq:quasi optimism 2.5}
        \, .
    \end{align}
    Under $\Ecal_2(\frac{\delta}{4})$ (\lemmaref{lma:underestimation concentration event}), we have
    \begin{align}
        & (\hat{P}^k - P) \left( 2 W_{h+1}^{\pi^*} - \frac{1}{2 \Qdiff} \left( \Wdiff{h+1}{\pi^*} \right)^2 \right) (s, a^*)
        \nonumber
        \\
        & \le \frac{1}{2\Qdiff} \Var(\Wdiff{h+1}{\pi^*})(s, a^*) + \frac{13 \Qdiff \ell_1(1, \delta)}{N^k(s, a^*)} \land \frac{3 \Qdiff}{2} \sqrt{\frac{ \ell_2(k, \delta)}{N^k(s, a^*)}}
        \nonumber
        \\
        & =: \frac{1}{2\Qdiff} \Var(\Wdiff{h+1}{\pi^*})(s, a^*) + b_2^k(s, a^*)
        \label{eq:quasi optimism 3}
        \, ,
    \end{align}
    where we denote $ \frac{13 \Qdiff \ell_1(1, \delta)}{N^k(s, a^*)} \land \frac{3 \Qdiff}{2} \sqrt{\frac{ \ell_2(k, \delta)}{N^k(s, a^*)}}$ by $b_2^k(s, a^*)$.
    We apply \lemmaref{lma:variance decomposition} to $X = \Wdiff{h}{\pi^*}$, $Y = \frac{1}{2}\sigexp^2(h, s, a^*) - \Wmin{h}{\pi^*} + \Wmin{h+1}{\pi^*}$, and $Z = \Wdiff{h+1}{\pi^*}(s')$ with $s' \sim P(s, a^*)$, and obtain that
    \begin{align*}
        \Var(\Wdiff{h+1}{\pi^*})(s, a^*) \le P(\Wdiff{h+1}{\pi^*})^2(s, a^*) - (\Wdiff{h}{\pi^*}(s))^2 + \Qdiff \sigexp^2(h, s, a^*)
        \, ,
    \end{align*}
    where we bound $Y \le \frac{1}{2}\sigsub^2(h, s, a^*)$ using that $\Wmin{h}{\pi^*} = \min_{s' \in \Scal} W_h^{\pi^*}(s') \ge \min_{s' \in \Scal} PW_{h+1}^{\pi^*}(s', \pi_h^*(s')) \ge \Wmin{h+1}{\pi^*}$.
    Then, when taking the sum of the variance term and the expectation of the amount of underestimation, $\frac{1}{2\Qdiff}P(\Wdiff{h+1}{\pi^*})^2(s, a^*)$ terms cancel out as follows:
    \begin{align*}
        &  \frac{1}{2\Qdiff} \Var(\Wdiff{h+1}{\pi^*})(s, a^*) + P \left( 2 W_{h+1}^{\pi^*} - \frac{1}{2 \Qdiff} \left( \Wdiff{h+1}{\pi^*} \right)^2 \right) (s, a^*)
        \\
        & \le \frac{1}{2} \sigexp^2(h, s, a^*) +  2 P W_{h+1}^{\pi^*}(s, a^*)  - \frac{1}{2\Qdiff}(\Wdiff{h}{\pi^*}(s))^2
        \\
        & = W_h^{\pi^*}(s) +  P W_{h+1}^{\pi^*}(s, a^*)  - \frac{1}{2\Qdiff}(\Wdiff{h}{\pi^*}(s))^2
        \, .
    \end{align*}
    Plugging this bound into inequality~\eqref{eq:quasi optimism 3} and rearranging, we obtain that
    \begin{align*}
        & \hat{P}^k \left( 2 W_{h+1}^{\pi^*} - \frac{1}{2 \Qdiff} \left( \Wdiff{h+1}{\pi^*} \right)^2 \right) (s, a^*)
        \nonumber
        \\
        & \le P \left( 2 W_{h+1}^{\pi^*} - \frac{1}{2 \Qdiff} \left( \Wdiff{h+1}{\pi^*} \right)^2 \right) (s, a^*) + \frac{1}{2\Qdiff} \Var(\Wdiff{h+1}{\pi^*})(s, a^*) + b_2^k(s, a^*)
        \nonumber
        \\
        & \le  W_h^{\pi^*}(s) +  P W_{h+1}^{\pi^*}(s, a^*) - \frac{1}{2\Qdiff} (\Wdiff{h}{\pi^*}(s))^2 + b_2^k(s, a^*)
        \\
        & =: I_2'
        \, ,
    \end{align*}
    where we denote the final bound by $I_2'$.
    Then, we have $I_2 \le \lambda_{\iota_k} I_2'$ by inequality~\eqref{eq:quasi optimism 2.5}.
    We note that $I_2'$ is always greater than $0$ since we have $W_h^{\pi^*}(s) - \frac{1}{2\Qdiff}(\Wdiff{h}{\pi^*}(s))^2 \ge \Wdiff{h}{\pi^*}(s) - \frac{1}{2\Qdiff}(\Wdiff{h}{\pi^*}(s))^2 \ge 0$, which comes from that $x - \frac{x^2}{2c} \ge 0$ for $x \in [0, c]$, and the other terms are also greater than $0$.
    Applying the bound $I_2 \le \lambda_{\iota_k} I_2'$ to inequality~\eqref{eq:quasi optimism 2}, we obtain that
    \begin{align*}
        I_1 \land \Vmax 
        & \le (V_h^*(s) - \hat{r}^k(s, a^*) - \hat{P}^k V_{h+1}^*(s, a^*) + \lambda_{\iota_k} I_2' ) \land \Vmax
        \\
        & \le (V_h^*(s) - \hat{r}^k(s, a^*) - \hat{P}^k V_{h+1}^*(s, a^*)) \land \Vmax + \lambda_{\iota_k} I_2' 
        \, .
    \end{align*}
    The remaining terms are bounded under $\Ecal_1(\frac{\delta}{4})$ (\lemmaref{lma:sub-exponential concentration event}) as follows.
    \begin{align}
        & (V_h^*(s) - \hat{r}^k(s, a^*) - \hat{P}^k V_{h+1}^*(s, a^*)) \land \Vmax
        \nonumber
        \\
        & \le \left| Q_h^*(s, a^*)  - \hat{r}^k(s, a^*) - \hat{P}^kV_{h+1}^*(s, a^*) \right| \land \Vmax
        \nonumber
        \\
        & \le \frac{\lambda_{\iota_k} \sigexp^2(h, s, a^*)}{2} + \frac{\ell_1({\iota_k}, \delta)}{\lambda_{\iota_k} N^k(s, a^*)}  \land \sigsub\sqrt{ \frac{\ell_{2}(k, \delta)}{N^k(s, a^*)}}
        \nonumber
        \\
        & =: \frac{\lambda_{\iota_k} \sigexp^2(h, s, a^*)}{2} + b_1^k(s, a^*)
        \label{eq:quasi optimism 5}
        \, ,
    \end{align}
    where we define $b_1^k(s, a^*) := \frac{\ell_1({\iota_k}, \delta)}{\lambda_{\iota_k} N^k(s, a^*)}  \land \sigsub\sqrt{ \frac{\ell_{2}(k, \delta)}{N^k(s, a^*)}}$.
    Therefore, we obtain that
    \begin{align*}
        & I_1 \land \Vmax 
        \\
        & \le \frac{\lambda_{\iota_k} \sigexp^2(h, s, a^*)}{2} + b_1^k(s, a^*) + \lambda_{\iota_k} I_2'
        \\
        & = \frac{\lambda_{\iota_k} \sigexp^2(h, s, a^*)}{2} + b_1^k(s, a^*)
        \\
        & \qquad + \lambda_{\iota_k} \left( W_h^{\pi^*}(s) + P W_{h+1}^{\pi^*}(s, a^*) - \frac{1}{2\Qdiff} (\Wdiff{h}{\pi^*}(s))^2 + b_2^k(s, a^*)\right)
        \\
        & = \lambda_{\iota_k} \left(2W_h^{\pi^*}(s) - \frac{1}{2\Qdiff} (\Wdiff{h}{\pi^*}(s))^2 \right) + b_1^k(s, a^*) + \lambda_{\iota_k}b_2^k(s, a^*)
        \, .
    \end{align*}
    Plugging this bound into inequality~\eqref{eq:quasi optimism 1}, we obtain that
    \begin{align}
        V_h^*(s) - V_h^k(s) & \le \lambda_{\iota_k} \left( 2 W_{h}^{\pi^*}(s) - \frac{1}{2\Qdiff} (\Wdiff{h}{\pi^*}(s))^2 \right)
        \\
        & \qquad  + b_1^k(s, a^*) + \lambda_{\iota_k}b_2^k(s, a^*) - b^k(s, a^*)
        \label{eq:quasi optimism 4}
        \, .
    \end{align}
    Now, we show that $b^k(s, a^*) \ge b_1^k(s, a^*) + \lambda_{\iota_k}b_2^k(s, a^*)$.
    We first have
    \begin{align*}
        b_1^k(s, a^*) + \lambda_{\iota_k} b_2^k(s, a^*)
        & \le \frac{\ell_1({\iota_k}, \delta)}{\lambda_{\iota_k} N^k(s, a^*)}  \land \sigsub\sqrt{ \frac{\ell_{2}(k, \delta)}{N^k(s, a^*)}}
        \\
        &\qquad 
        + \frac{13 \lambda_{\iota_k}\Qdiff \ell_1(1, \delta)}{N^k(s, a^*)} \land \frac{3 \lambda_{\iota_k}\Qdiff}{2} \sqrt{\frac{ \ell_2(k, \delta)}{N^k(s, a^*)}}
        \\
        & \le \frac{1}{N^k(s, a^*)} \left(\frac{\ell_1({\iota_k}, \delta)}{\lambda_{\iota_k}} + 13 \lambda_{\iota_k}\Qdiff \ell_1(1, \delta) \right)
        \\
        & \qquad \land \sqrt{\frac{\ell_2(k, \delta)}{N^k(s, a^*)}} \left( \sigsub + \frac{3 \lambda_{\iota_k}\Qdiff}{2}\right)
    \end{align*}
    From $k > \kap(\delta)$, we have $c_{1, k} \ge (2 \sqrt{13 \Qdiff} \lor 2 \Valpha ) \ell_1(1, \delta)$.
    Then, we can use \lemmaref{lma:choice of lambda} to guarantee that $\frac{\ell_1(\iota_k, \delta)}{\lambda_{\iota_k}} + 13 \lambda_{\iota_k} \Qdiff \ell_1(1, \delta) \le c_{1, k}$ and $\lambda_{\iota_k} \le \frac{4 \ell_1(\iota_k, \delta)}{c_{1, k}}$.
    We also have $c_{2, k} \ge (\sigsub + \frac{6 \Qdiff \ell_1(\iota_k, \delta)}{c_{1, k}}) \sqrt{\ell_2(k, \delta)}$ when $k > \kap(\delta)$, so we obtain that
    \begin{align*}
        & \frac{1}{N^k(s, a^*)} \left( \frac{\ell_1(\iota_k, \delta)}{\lambda_{\iota_k}} + 13 \lambda_{\iota_k} \Qdiff \ell_1(1, \delta) \right) \land \sqrt{\frac{\ell_2(k, \delta)}{N^k(s, a^*)}} \left( \sigsub + \frac{3 \lambda_{\iota_k} \Qdiff}{2} \right)
        \\
        & \le \frac{c_{1, k}}{N^k(s, a^*)}  \land \sqrt{\frac{\ell_2(k, \delta)}{N^k(s, a^*)}} \left( \sigsub + \frac{6 \Qdiff \ell_1(\iota_k, \delta)}{c_{1, k}} \right)
        \\
        & \le \frac{c_{1, k}}{N^k(s, a^*)} \land \frac{c_{2, k}}{N^k(s, a)}
        \\
        & = b^k(s, a^*)
        \, .
    \end{align*}
    We have proved that $b^k(s, a^*) \ge b_1^k(s, a^*) + \lambda_{\iota_k} b_2^k(s, a^*)$, so from inequality~\eqref{eq:quasi optimism 4}, we obtain that
    \begin{align*}
        V_h^*(s) - V_h^k(s) \le \lambda_{\iota_k} \left( 2 W_h^{\pi^*}(s, a^*) - \frac{1}{2\Qdiff} (\Wdiff{h}{\pi^*}(s))^2 \right)
        \, .
    \end{align*}
    Lastly, we apply the trivial bound of $\Vmax$ that comes from $V_h^*(s) \le \Vmax$ and $V_h^k(s) \ge 0$ and obtain
    \begin{align*}
        V_h^*(s) - V_h^k(s) \le \lambda_{\iota_k} \left( 2 W_{h}^{\pi^*}(s, a^*) - \frac{1}{2\Qdiff} (\Wdiff{h}{\pi^*}(s))^2 \right) \land \Vmax
        \, ,
    \end{align*}
    which completes the induction step.
\end{proof}

\subsubsection{Proof of Lemma~\ref{lma:est true diff bound}}
\label{appx:proof of est true diff bound}
To prove the lemma, we require the following lemma.
It is an analogue of Lemma 13 in \citet{lee2025minimax}, but we obtain improved dependence on the variance term $\sigexp^2(h, s, a)$.

\begin{lemma}
\label{lma:est true diff diff bound}
    Take any $h \in [H]$, $s \in \Scal$, and $k > \kap(\delta)$.
    Let $a:=\pi_h^k(s)$ and $\varepsilon := 2 \lambda_{\iota_k} \Qrange \land \Vmax$.
    Recall that $\beta^k(s, a) := 2b^k(s, a) + \frac{9\Vmax S \ell_2(k, \delta)}{N^k(s, a)} $.
    Then, under $\Ecal_1(\frac{\delta}{4}) \cap \Ecal_3(\frac{\delta}{4})$, we have
    \begin{align*}
        & V_h^k(s) - r(s, a) - PV_{h+1}^k(s, a)
        \\
        & \le \lambda_{\iota_k} \sigexp^2(h, s, a) + 2 \beta^k(s, a)
        \\
        & \qquad  + \frac{1}{4\Vmax} \left( - (V_h^k(s) - V_h^*(s) + \varepsilon)^2 + P (V_{h+1}^k - V_{h+1}^* + \varepsilon)^2(s, a) \right) 
    \end{align*}
\end{lemma}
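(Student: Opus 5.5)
We would follow the structure of Lemma 13 in \citet{lee2025minimax}, splitting according to whether clipping occurs in the definition of $Q_h^k(s,a)$. If $N^k(s,a)=0$, the right-hand side contains $\beta^k(s,a)=\infty$ (or, with the convention $b^k=\infty$, the bound is vacuous), so we may assume $N^k(s,a)>0$. Since $V_h^k(s)=Q_h^k(s,a)$ and $\hat{r}^k+\hat{P}^kV_{h+1}^k\ge 0$ whenever the positive-part clip is active only at $0$ (in which case $V_h^k(s)=b^k(s,a)\land\Vmax$ and the claim is easy), the main case is
\[
V_h^k(s)-r(s,a)-PV_{h+1}^k(s,a)\le b^k(s,a)+(\hat{r}^k-r)(s,a)+(\hat{P}^k-P)V_{h+1}^*(s,a)+(\hat{P}^k-P)(V_{h+1}^k-V_{h+1}^*)(s,a).
\]
The first deviation term is controlled by $\Ecal_1(\frac{\delta}{4})$. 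It gives at most $\frac{\lambda_{\iota_k}\sigexp^2(h,s,a)}{2}+b_1^k(s,a)$, with the $\land\Vmax$ removed because the left side is already at most $\Vmax$ after clipping. Exactly as in the proof of Lemma~\ref{lma:quasi optimism}, the conditions $k>\kap(\delta)$ and Lemma~\ref{lma:choice of lambda} give $b_1^k(s,a)\le b^k(s,a)$. So far this produces $2b^k(s,a)+\frac{\lambda_{\iota_k}}{2}\sigexp^2$.

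For the cross term we apply $\Ecal_3(\frac{\delta}{4})$ (Lemma~\ref{lma:concentration for TV}) to $V:=V_{h+1}^k-V_{h+1}^*+\varepsilon$. Since $(\hat{P}^k-P)$ annihilates constants, shifting by $\varepsilon$ is free. By Lemma~\ref{lma:quasi optimism}, $V\ge 0$, and $V\le 2\Vmax$, so we take $c=2\Vmax$ and choose $\rho$ so that $\frac{1}{c\rho}=\frac{1}{4\Vmax}$ (i.e.\ $\rho=\frac12$, giving constant $c(\rho+\frac23)S\le 3\Vmax S$). This yields $\frac{1}{4\Vmax}\Var(V)(s,a)+\frac{3\Vmax S\ell_2}{N^k}$, which is dominated by $\beta^k$. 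We then write $\Var(V)=PV^2-(PV)^2$; the $PV^2$ term is exactly the $P(V_{h+1}^k-V_{h+1}^*+\varepsilon)^2$ term in the claim.

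The main obstacle is converting $-(PV)^2(s,a)$ into $-(V_h^k(s)-V_h^*(s)+\varepsilon)^2$. We would bound
\[
V_h^k(s)-V_h^*(s)+\varepsilon\le V_h^k(s)-Q_h^*(s,a)+\varepsilon=\big(V_h^k(s)-r-PV_{h+1}^k\big)+PV(s,a),
\]
where both sides are nonnegative by quasi-optimism. Write $x:=V_h^k(s)-r-PV_{h+1}^k$ (the quantity being bounded) and $y:=PV\in[0,2\Vmax]$. We may assume $x\ge0$, since otherwise the claim is trivial given that the remaining right-hand terms are nonnegative up to the variance manipulation; in that case we instead check directly. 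Then $(x+y)^2-y^2=x(x+2y)\le x\cdot 4\Vmax\cdot$(const), because $x\le 2\Vmax$ from clipping and $V_{h+1}^k\ge0$. Therefore
\[
-\tfrac{1}{4\Vmax}y^2\le -\tfrac{1}{4\Vmax}(x+y)^2+\tfrac{x(x+2y)}{4\Vmax}\le -\tfrac{1}{4\Vmax}(V_h^k-V_h^*+\varepsilon)^2+c'x,
\]
with $c'<1$ if the constants are tuned (we expect $c'\le\frac12$ using $x+2y\le 2\Vmax$ after a more careful range argument, possibly by replacing $\varepsilon$-shifted quantities with their clipped versions). Moving $c'x$ to the left and dividing by $1-c'$ doubles the other terms. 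This accounts for $\lambda_{\iota_k}\sigexp^2$ in place of $\frac12\lambda_{\iota_k}\sigexp^2$ and for $2\beta^k$. If the range bound $x+2y\le2\Vmax$ fails, we would fall back to the looser constant and absorb it via the factor $9$ in $\beta^k$.
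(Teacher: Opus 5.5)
Your skeleton matches the paper's. You split $x:=V_h^k(s)-r(s,a)-PV_{h+1}^k(s,a)$ into $b^k$, an $\Ecal_1$-controlled deviation (with $b_1^k\le b^k$ from $k>\kap(\delta)$), and the cross term $(\hat P^k-P)(V_{h+1}^k-V_{h+1}^*)$ handled by $\Ecal_3$. You then trade $-(PV)^2$ for $-X^2$, where $X:=V_h^k(s)-V_h^*(s)+\varepsilon$, at the price of a multiple of $x$, and solve. Your identity $y^2=(x+y)^2-x(x+2y)$ with $0\le X\le x+y$ is exactly what Lemma~\ref{lma:variance decomposition} provides in the paper.

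\textbf{The gap: the coefficient $\frac{1}{4\Vmax}$ you put on the variance.} This is fatal rather than cosmetic.
\begin{itemize}
\item The only valid range bound is $x+2y\le 2(\Vmax+\varepsilon)\le 4\Vmax$. This follows from $x+y=V_h^k(s)-Q_h^*(s,a)+\varepsilon$ and $y\le\Vmax+\varepsilon$.
\item Since $\varepsilon$ can equal $\Vmax$, your hoped-for $x+2y\le 2\Vmax$ is false in general.
\item Hence $\frac{x(x+2y)}{4\Vmax}\le x$ gives only $c'=1$, and nothing survives after moving it to the left. Near $\varepsilon=\Vmax$ you only get a square-root bound on $x$.
\item Even if $c'=\frac12$ were available, dividing by $1-c'$ doubles every term. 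That includes $\frac{1}{4\Vmax}\big(P(\cdot)^2-(\cdot)^2\big)$, which becomes $\frac{1}{2\Vmax}(\cdots)$. Since that bracket can be positive, the result does not imply the statement.
\item Enlarging the $9$ in $\beta^k$ cannot help, because the defect multiplies $x$ itself, not a $\Vmax S\ell_2/N$ term.
\item Separately, $c=2\Vmax$ with $\rho=\frac12$ gives $\frac{1}{c\rho}=\frac{1}{\Vmax}$, not $\frac{1}{4\Vmax}$.
\end{itemize}

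\textbf{The fix is to aim for $\frac{1}{8\Vmax}$ from the start.}
\begin{itemize}
\item Apply Lemma~\ref{lma:concentration for TV} to the unshifted $V_{h+1}^k-V_{h+1}^*\in[-\Vmax,\Vmax]$ with $c=\Vmax$ and $\rho=8$. Both $(\hat P^k-P)$ and $\Var$ are shift-invariant, so this is allowed.
\item This gives $\frac{1}{8\Vmax}\Var(\cdot)+\frac{(8+2/3)\Vmax S\ell_2(k,\delta)}{N^k(s,a)}$, which is exactly where the $9$ in $\beta^k$ comes from. Your shifted function with $c=2\Vmax$ would cost $\frac{28}{3}>9$.
\item The penalty is then $\frac{x(x+2y)}{8\Vmax}\le\frac{x}{2}$. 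Solving gives precisely $\lambda_{\iota_k}\sigexp^2(h,s,a)+2\beta^k(s,a)+\frac{1}{4\Vmax}(\cdots)$.
\end{itemize}

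The side cases you wave at (positive-part clipping, $x<0$) are handled uniformly. Bound $x\le b^k(s,a)+(I_1)_+\land\Vmax$ in all cases and work with $x_+$, as the paper does. For $x<0$ the claim holds directly: then $X<y$, so $PV^2-X^2\ge y^2-X^2\ge 0$.
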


\begin{proof}
    Starting from the definition, we have
    \begin{align}
        & V_h^k(s) - r(s, a) - PV_{h+1}^k(s, a)
        \nonumber
        \\
        & = \left( (\hat{r}^k(s, a) + \hat{P}^k V_{h+1}^k(s, a) )_+ + b^k(s, a) \right)\land \Vmax - r(s, a) - PV_{h+1}^k(s, a)
        \nonumber
        \\
        & \le (\hat{r}^k(s, a) + \hat{P}^k V_{h+1}^k(s, a) )_+\land \Vmax - r(s, a) - PV_{h+1}^k(s, a) + b^k(s, a)
        \nonumber
        \\
        & \le (\underbrace{\hat{r}^k(s, a) + \hat{P}^k V_{h+1}^k(s, a) - r(s, a) - PV_{h+1}^k(s, a)}_{I_1})_+ \land \Vmax + b^k(s, a)
        \label{eq:est true diff 1}
        \, ,
    \end{align}
    where we use that $r(s, a) + PV_{h+1}^k(s, a) \ge 0$ for the last inequality.
    We focus on bounding $I_1$.
    By adding and subtracting $(\hat{P}^k - P)V_{h+1}^*(s, a)$, we obtain that
    \begin{align*}
        I_1& = \underbrace{\hat{r}^k(s, a) + \hat{P}^kV_{h+1}^*(s, a) - r(s, a) - PV_{h+1}^*(s, a)}_{I_2} + \underbrace{(\hat{P}^k - P)( V_{h+1}^k(s, a) - V_{h+1}^*)(s, a) }_{I_3}
        \, .
    \end{align*}
    Under $\Ecal_3(\frac{\delta}{4})$, using \lemmaref{lma:concentration for TV} with $\rho = 8$ and $c = \Vmax$, we bound $I_3$ as follows:
    \begin{align*}
        I_3 & \le \frac{1}{8\Vmax}\Var(V_{h+1}^k - V_{h+1}^*)(s, a) + \frac{9 \Vmax S \ell_2(k, \delta)}{N^k(s, a)}
        \\
        & =: I_3'
        \, .
    \end{align*}
    Noting that $I_3' \ge 0$, we have
    \begin{align*}
        (I_1 \lor 0) \land \Vmax
        & = \left( \left( I_2 + I_3\right) \lor 0 \right) \land \Vmax
        \\
        & \le \left( \left( I_2 + I_3' \right) \lor 0 \right) \land \Vmax
        \\
        & \le (I_2 \lor 0)\land \Vmax + I_3'
        \\
        & = (I_2 \land \Vmax) \lor 0 + I_3'
        \, .
    \end{align*}
    $I_2 \land \Vmax$ is bounded under $\Ecal_1(\frac{\delta}{4})$ (\lemmaref{lma:sub-exponential concentration event}) as follows:
    \begin{align*}
        I_2 \land \Vmax & \le \frac{\lambda_{\iota_k} \sigexp^2(s, a, V_{h+1}^*)}{2} + \frac{\ell_1({\iota_k}, \delta)}{\lambda_{\iota_k} N^k(s, a)} \land \sigsub \sqrt{\frac{\ell_2(k, \delta)}{N^k(s, a)}}
        \\
        & \le \frac{\lambda_{\iota_k} \sigexp^2(s, a, V_{h+1}^*)}{2} + b^k(s, a)
        \, ,
    \end{align*}
    where we use $k > \kap(\delta)$ for the last inequality.
    Combining the bounds, we obtain that
    \begin{align*}
        (I_1 \lor 0) \land \Vmax
        & \le \frac{\lambda_{\iota_k} \sigexp^2(s, a, V_{h+1}^*)}{2} + b^k(s, a) + \frac{1}{8\Vmax} \Var(V_{h+1}^k - V_{h+1}^*)(s, a) + \frac{9 \Vmax S \ell_2(k, \delta)}{N^k(s, a)}
        \, .
    \end{align*}
    Plugging this bound into inequality~\eqref{eq:est true diff 1}, we derive that
    \begin{align}
         & V_h^k(s) - r(s, a) - PV_{h+1}^k(s, a)
         \nonumber
         \\
         & \le \frac{\lambda_{\iota_k} \sigexp^2(s, a, V_{h+1}^*)}{2} + \frac{1}{8\Vmax} \Var(V_{h+1}^k - V_{h+1}^*)(s, a)  + \beta^k(s, a)
         \label{eq:est true diff 1.5}
         \, ,
    \end{align}
    where we use that $\beta^k(s, a) = 2 b^k(s, a) + \frac{9 \Vmax S \ell_2(k, \delta)}{N^k(s, a)}$.
    We define $I_4 := (V_h^k(s) - r(s, a) - PV_{h+1}^k(s, a))_+$.
    Noting that the right-hand side of inequality~\eqref{eq:est true diff 1.5} is at least 0, we have
    \begin{align}
        I_4 \le \frac{\lambda_{\iota_k} \sigexp^2(s, a, V_{h+1}^*)}{2} + \frac{1}{8\Vmax} \Var(V_{h+1}^k - V_{h+1}^*)(s, a)  + \beta^k(s, a)
        \label{eq:est true diff 2}
    \end{align}
    Recall that by \lemmaref{lma:quasi optimism}, we have $V_{h+1}^k(s') - V_{h+1}^*(s') + \varepsilon \ge 0$ for all $s' \in \Scal$.
    In addition, we have $V_{h+1}^k(s') - V_{h+1}^*(s') + \varepsilon \le 2 \Vmax$.
    By \lemmaref{lma:variance decomposition}, we have
    \begin{align*}
        \Var(V_{h+1}^k - V_{h+1}^*)(s, a)
        & = \Var(V_{h+1}^k - V_{h+1}^* + \varepsilon)(s, a)
        \\
        & \le - (V_h^k(s) - V_h^*(s) + \varepsilon)^2 + P (V_{h+1}^k - V_{h+1}^k + \varepsilon)^2(s, a)
        \\
        & \qquad + 4 \Vmax \left( V_h^k(s) - V_h^*(s) + \varepsilon - P (V_{h+1}^k - V_{h+1}^k + \varepsilon)(s, a) \right)_+
        \\
        & =  - (V_h^k(s) - V_h^*(s) + \varepsilon)^2 + P (V_{h+1}^k - V_{h+1}^* + \varepsilon)^2(s, a)
        \\
        & \qquad + 4 \Vmax \left( V_h^k(s) - V_h^*(s) - P (V_{h+1}^k - V_{h+1}^*)(s, a) \right)_+
        \, .
    \end{align*}
    Then, denoting $a^* := \pi_h^*(s)$, we have
    \begin{align*}
        -V_h^*(s) + P V_{h+1}^*(s, a)
        & = - Q_h^*(s, a^*) + Q_h^*(s, a) - r(s, a)  \le - r(s, a)
        \, ,
    \end{align*}
    and hence
    \begin{align*}
        \left( V_h^k(s) - V_h^*(s) - P (V_{h+1}^k - V_{h+1}^*)(s, a)\right)_+
        & \le \left( V_h^k(s) - r(s, a) - PV_{h+1}^k(s, a) \right)_+
        \\
        & = I_4
        \, .
    \end{align*}
    Therefore, we have
    \begin{align*}
        & \Var(V_{h+1}^k - V_{h+1}^*)(s, a)
        \\
        & \le - (V_h^k(s) - V_h^*(s) + \varepsilon)^2 + P (V_{h+1}^k - V_{h+1}^* + \varepsilon)^2(s, a) + 4 \Vmax I_4
        \, .
    \end{align*}
    Plugging this bound into inequality~\eqref{eq:est true diff 2}, we obtain
    \begin{align*}
        I_4 & \le \frac{\lambda_{\iota_k} \sigexp^2(s, a, V_{h+1}^*)}{2} + \frac{1}{2} I_4 + \beta^k(s, a)
        \\
        & \qquad  + \frac{1}{8\Vmax} \left( - (V_h^k(s) - V_h^*(s) + \varepsilon)^2 + P (V_{h+1}^k - V_{h+1}^* + \varepsilon)^2(s, a) \right) 
        \, .
    \end{align*}
    Solving the inequality with respect to $I_4$, we conclude that
    \begin{align*}
        I_4 & \le \lambda_{\iota_k} \sigexp^2(s, a, V_{h+1}^*) + 2 \beta^k(s, a)
        \\
        & \qquad  + \frac{1}{4\Vmax} \left( - (V_h^k(s) - V_h^*(s) + \varepsilon)^2 + P (V_{h+1}^k - V_{h+1}^* + \varepsilon)^2(s, a) \right) 
        \, .
    \end{align*}
\end{proof}

\begin{proof}[Proof of \lemmaref{lma:est true diff bound}]
    We use backward induction on $h$ to prove the following stronger inequality:
    \begin{align*}
        V_h^k(s) - V_h^{\pi^k}(s)
        & \le 2\lambda_{\iota_k}W_h^{\pi^k}(s) + \frac{1}{4\Vmax}(\varepsilon^2 - (V_h^k(s) - V_h^*(s) + \varepsilon)^2) + U_h^k(s)
        \, .
    \end{align*}
    This is a stronger inequality since we have $W_h^{\pi^k}(s) \le \Qrange$ and $\frac{\varepsilon^2}{4\Vmax}\le \frac{\varepsilon}{4} \le \frac{\lambda_{\iota_k} \Qrange}{2}$, where we use $\varepsilon \le \Vmax$ for the first inequality, and $\varepsilon \le 2 \lambda_{\iota_k} \Qrange$ for the second inequality.
    The inequality we want to prove holds when $h = H+1$ as $0 \le 0$.
    Suppose the inequality holds for $h + 1$.
    Then, we have
    \begin{align*}
        & V_h^k(s) - V_h^{\pi^k}(s)
        \\
        & = V_h^k(s) - r(s, a) - PV_{h+1}^{\pi^k}(s, a)
        \\
        & \le \underbrace{V_h^k(s) - r(s, a) - PV_{h+1}^k(s, a)}_{I_1} + \underbrace{P(V_{h+1}^k - V_{h+1}^{\pi^k})(s, a)}_{I_2}
        \, .
    \end{align*}
    $I_1$ is bounded by \lemmaref{lma:est true diff diff bound}.
    $I_2$ is bounded by the induction hypothesis.
    Combining the two, we derive that
    \begin{align*}
        & V_h^k(s) - V_h^{\pi^k}(s)
        \\
        & \le \lambda_{\iota_k} \sigexp^2(h, s, a) + 2 \beta^k(s, a)  + \frac{1}{4\Vmax} \left( - (V_h^k(s) - V_h^*(s) + \varepsilon)^2 + P (V_{h+1}^k - V_{h+1}^* + \varepsilon)^2(s, a) \right)
        \\
        & \qquad + P \left( 2\lambda_{\iota_k}W_{h+1}^{\pi^k} + \frac{1}{4\Vmax}(\varepsilon^2 - (V_{h+1}^k - V_{h+1}^* + \varepsilon)^2) + U_{h+1}^k \right)(s, a)
        \\
        & = 2 \lambda_{\iota_k} W_h^k(s) + \frac{1}{4\Vmax} \left( \varepsilon^2 - (V_h^k(s) - V_h^*(s)+ \varepsilon)^2 \right) + 2 \beta^k(s, a) + PU_{h+1}^k(s, a)
        \, .
    \end{align*}
    We need a proper clipping for the $2 \beta^k(s, a) + PU_{h+1}^k(s, a)$ term to bound it with $U_h^k(s)$.
    We note that $2 \lambda_{\iota_k} W_h^k(s) \ge 0$ and $\frac{1}{4\Vmax}(\varepsilon^2 - (V_h^k(s) - V_h^*(s)+ \varepsilon)^2) \ge - \frac{1}{4\Vmax}(V_h^k(s) - V_h^*(s)+ \varepsilon)^2 \ge - \frac{1}{4\Vmax} (2 \Vmax)^2 = -\Vmax$.
    Therefore, we have that
    \begin{align*}
        & V_h^k(s) - V_h^{\pi^k}(s)
        \\
        & = ( V_h^k(s) - V_h^{\pi^k}(s) ) \land \Vmax
        \\
        & \le \left( 2 \lambda_{\iota_k} W_h^k(s) + \frac{1}{4\Vmax} \left( \varepsilon^2 - (V_h^k(s) - V_h^*(s)+ \varepsilon)^2 \right) + 2 \beta^k(s, a) + PU_{h+1}^k(s, a) \right) \land \Vmax
        \\
        & \le  2 \lambda_{\iota_k} W_h^k(s) + \frac{1}{4\Vmax} \left( \varepsilon^2 - (V_h^k(s) - V_h^*(s)+ \varepsilon)^2 \right) \lor ( - \Vmax) + ( 2 \beta^k(s, a) + PU_{h+1}^k(s, a) ) \land 2\Vmax
        \\
        & = 2 \lambda_{\iota_k} W_h^k(s) + \frac{1}{4\Vmax} \left( \varepsilon^2 - (V_h^k(s) - V_h^*(s)+ \varepsilon)^2 \right) + U_h^k(s)
        \, ,.
    \end{align*}
    where we use the third property in Fact~\ref{fact:max and min}.
    The induction step is proved, so the proof is complete.
\end{proof}

\subsection{Proof of Theorem~\ref{thm:gap-independent bound}}
\label{appx:proof of gap-independent bound}

In this section, we restate Theorem~\ref{thm:gap-independent bound} with specific logarithmic factors and provide its proof.

\begin{theorem}[Formal statement of Theorem~\ref{thm:gap-independent bound}]
\label{thm:gap-independent bound formal}
    Let $\{c_{1, k}\}_{k=1}^\infty$ and $\{c_{2, k}\}_{k=1}^\infty$ be positive non-decreasing sequences.
    Fix any $\delta \in (0, 1]$ and $K \in \NN$.
    The regret bound of $\Alg$ satisfies
    \begin{align*}
        \Reg_{\Mcal}^{\Alg} (K, \delta) & \le 18 \Qrange \sum_{k=1}^K \frac{\ell_1(\iota_k, \delta)}{c_{1, k}} + ( 16 c_{1, k} SA \log KH)) \land \left( 16 \sqrt{2} c_{2, k} \sqrt{HSAK} \right)
        \\
        &\qquad + \Vmax (\kap(\delta) \land K) + 72 \Vmax S^2 A \ell_{2}(K, \delta) \log 2KH
        \, .
    \end{align*}
\end{theorem}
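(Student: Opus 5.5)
Fix $\delta$ and $K$, and work on the event $\Ecal(\delta) := \Ecal_1(\frac{\delta}{4}) \cap \Ecal_2(\frac{\delta}{4}) \cap \Ecal_3(\frac{\delta}{4}) \cap \Ecal_4(\{X_h^k\}, 2\Vmax, \frac{\delta}{4})$. By Lemmas~\ref{lma:sub-exponential concentration event}, \ref{lma:underestimation concentration event}, \ref{lma:concentration for TV} and~\ref{lma:sum of J with eta} this event has probability at least $1-\delta$. The sequence $X_h^k$ will be specified below; it must be chosen independently of $\delta$'s role in the algorithm, so that $\Ecal_4$ is a legitimate event.

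First split the episodes. For $k \le \kap(\delta)$, bound the instantaneous regret trivially by $\Vmax$, which gives $\Vmax(\kap(\delta)\land K)$. For $k > \kap(\delta)$, apply Lemma~\ref{lma:episode regret}:
\[
V_1^*(s_1^k) - V_1^{\pi^k}(s_1^k) \le \frac{18\Qrange \ell_1(\iota_k,\delta)}{c_{1,k}} + U_1^k(s_1^k).
\]
Summing the first term over $k$ gives the $18\Qrange\sum_k \ell_1(\iota_k,\delta)/c_{1,k}$ term. It remains to bound $\sum_{k} U_1^k(s_1^k)$.

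Next, match $U_h^k$ to Lemma~\ref{lma:sum of J with eta}. Set $X_h^k := 2\beta^k(s_h^k,a_h^k)$ and $c = 2\Vmax$. Since $a_h^k = \pi_h^k(s_h^k)$ and the transition is $P$, we have $\EE[U_{h+1}^k(s_{h+1}^k)\mid \Fcal_h^k] = PU_{h+1}^k(s_h^k,a_h^k)$. Hence $J_h^k = U_h^k(s_h^k)$, and $\Ecal_4$ yields
\[
\sum_k U_1^k(s_1^k) \le 4\sum_k\sum_{h<\eta^k}\beta^k(s_h^k,a_h^k) + 12\Vmax SA\log\frac{16H}{\delta}.
\]
\textbf{Subtlety.} $\beta^k$ contains $\ell_2(k,\delta)$, so $X_h^k$ depends on $\delta$. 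To handle this, I would either apply Lemma~\ref{lma:sum of J with eta} with $\delta$ fixed, which is valid since the event is defined after fixing $\delta$, or factor out $\ell_2(K,\delta)$ by monotonicity. Episodes $k\le\kap(\delta)$ can be included or dropped freely since $U\ge 0$.

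The main work is the sums over $h<\eta^k$. There, $N_h^k(s,a) < 2N^k(s,a)$ for the visited pair, so $1/N^k \le 2/N_h^k$, and the sums become pigeonhole sums over cumulative visit counts.
\begin{itemize}
\item \emph{The $\frac{9\Vmax S\ell_2}{N^k}$ part.} Summing $1/N_h^k$ over visits gives $SA\log(2KH)$ per state-action, so this part is $O(\Vmax S^2A\ell_2(K,\delta)\log 2KH)$, consistent with the constant $72$.
\item \emph{The bonus part, bound 1.} Using $b^k\le c_{1,k}/N^k \le 2c_{1,K}/N_h^k$ (monotone $c_{1,k}$) gives $O(c_{1,K}SA\log KH)$.
\item \emph{The bonus part, bound 2.} Using $b^k \le c_{2,K}\sqrt{2/N_h^k}$ and $\sum 1/\sqrt{n}\le 2\sqrt{n}$ with Cauchy--Schwarz over $SA$ pairs and total count $KH$ gives $O(c_{2,K}\sqrt{HSAK})$.
\end{itemize}
Both bounds hold simultaneously, so taking their minimum gives the $\land$ term. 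The leftover $12\Vmax SA\log(16H/\delta)$ is absorbed into the $72\Vmax S^2A\ell_2\log 2KH$ term.

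I expect the main obstacle to be the bookkeeping of constants and the handling of the first visit. When $N^k=0$, $Q=\Vmax$ and $b$ is undefined, so these terms should be covered by the $U\le 2\Vmax$ clipping or treated separately. The constants $16$ and $16\sqrt2$ also need tracking.
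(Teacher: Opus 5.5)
Your proposal is correct and matches the paper's proof essentially step for step: you discard the first $\kap(\delta)$ episodes at cost $\Vmax$ each, apply Lemma~\ref{lma:episode regret}, identify $J_h^k=U_h^k(s_h^k)$ via $X_h^k=2\beta^k(s_h^k,a_h^k)$ and $c=2\Vmax$ in Lemma~\ref{lma:sum of J with eta}, and finish with the pigeonhole sums over $h<\eta^k$, which give exactly the constants $16$, $16\sqrt{2}$ and $72$. The first-visit issue you flag is already handled by that truncation, because $h<\eta^k$ forces $N^k(s_h^k,a_h^k)\ge 1$ and $N_h^k(s_h^k,a_h^k)\ge 2$, and the cut-off step at $\eta^k$ is paid for by the $6cSA\log\frac{16H}{\delta}$ term.
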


\begin{proof}[Proof of Theorem~\ref{thm:gap-independent bound formal}]
    By Lemma~\ref{lma:episode regret}, we have that for $k > \kap(\delta)$,
    \begin{align*}
        V_1^*(s_1^k) - V_1^{\pi^k}(s_1^k) \le \frac{18 \Qrange \ell_1(\iota_k, \delta)}{c_{1, k}} + U_1^k(s_1^k)
        \, .
    \end{align*}
    We bound the instantaneous regret of the first $\kap(\delta)$ episodes by $\Vmax$, and use Lemma~\ref{lma:episode regret} for the following episodes.
    Then, we obtain that under $\Ecal_1(\frac{\delta}{4}) \cap \Ecal_2(\frac{\delta}{4}) \cap \Ecal_3(\frac{\delta}{4})$, it holds that
    \begin{align*}
        \Reg_M^{\Alg}(K) \le \Vmax (\kap(\delta) \land K) + 18 \Qrange \sum_{k=\kap(\delta) + 1}^K \frac{\ell_1(\iota_k, \delta)}{c_{1, k}} + \sum_{k=\kap(\delta)+1}^K U_1^k(s_1^k)
        \, .
    \end{align*}
    By applying Lemma~\ref{lma:sum of U 1}, which bounds the sum of $U_1^k(s_1^k)$, we obtain that under an additional event of $\Ecal_4(\{2 \beta^k(s_h^k, a_h^k)\}_{k, h}, 2 \Vmax,  \frac{\delta}{4})$, we have
    \begin{align*}
        \Reg_M^{\Alg}(K) & \le 18 \Qrange \sum_{k=\kap(\delta) + 1}^K \frac{\ell_1(\iota_k, \delta)}{c_{1, k}} + ( 16 c_{1, K} SA \log KH)) \land \left( 16 \sqrt{2} c_{2, K} \sqrt{HSAK} \right)
        \\
        &\qquad + \Vmax (\kap(\delta) \land K) + 72 \Vmax S^2 A \ell_{2}(K, \delta) \log 2KH
        \, .
    \end{align*}
    Taking the union bound over the events, we conclude that the bound holds with probability at least $1 - \delta$.
    Also, note that the right-hand side is no longer a random variable.
    Therefore, this bound is an upper bound for $\Reg_M^{\Alg}(K, \delta)$.
\end{proof}

The following lemma is an analogue of Lemma 4 in~\citet{lee2025minimax}.
\begin{lemma}
\label{lma:sum of U 1}
    Under $\Ecal_4(\{2 \beta^k(s_h^k, a_h^k)\}_{k, h}, 2\Vmax, \frac{\delta}{4})$, the following inequality holds for all $K \in \NN$.
    \begin{align*}
        \sum_{k=1}^K U_1^k(s_1^k)&  \le ( 16 c_{1, K} SA \log KH)) \land \left( 16 \sqrt{2} c_{2, K} \sqrt{HSAK} \right)
        \\
        &\qquad + 72 \Vmax S^2 A \ell_{2}(K, \delta) \log 2KH
        \, .
    \end{align*}
\end{lemma}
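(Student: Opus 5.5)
The plan is to identify $U_1^k(s_1^k)$ with the $J_1^k$ of \lemmaref{lma:sum of J with eta}. We set $X_h^k := 2\beta^k(s_h^k,a_h^k)$ and $c := 2\Vmax$. Since $\pi^k$ is $\Fcal_0^k$-measurable, for $a=\pi_h^k(s_h^k)$ we have $\EE[U_{h+1}^k(s_{h+1}^k)\mid\Fcal_h^k] = PU_{h+1}^k(s_h^k,a_h^k)$. The recursion defining $U_h^k$ therefore coincides with that of $J_h^k$ along the realized trajectory. When $N^k=0$ we interpret $\beta^k=\infty$, so $J_h^k=c$ as required. Under $\Ecal_4$ this gives
\[
\sum_{k\le K} U_1^k(s_1^k) \le 4\sum_{k\le K}\sum_{h<\eta^k}\beta^k(s_h^k,a_h^k) + 12\Vmax SA\log\tfrac{16H}{\delta}.
\]
The last term is absorbed into $72\Vmax S^2A\ell_2(K,\delta)\log 2KH$.

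The role of the truncation at $\eta^k$ is that for $h<\eta^k$ we have $N_h^k(s_h^k,a_h^k) < 2N^k(s_h^k,a_h^k)$. Each within-episode visit can thus be charged to a "running" count: $\frac{1}{N^k(s_h^k,a_h^k)} \le \frac{2}{N_h^k(s_h^k,a_h^k)}$, and similarly $\frac{1}{\sqrt{N^k}} \le \frac{\sqrt2}{\sqrt{N_h^k}}$. Visits with $N^k=0$ force $\eta^k=h$, so they never appear in the truncated sum. Summing over all $(k,h)$ then becomes a sum over consecutive visit indices $n=1,\dots,N^{K+1}(s,a)\le KH$ for each pair $(s,a)$.

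We then split $\beta^k = 2b^k + 9\Vmax S\ell_2(k,\delta)/N^k$ and treat its two parts separately.

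For the $1/N$ part, bound $\ell_2(k,\delta)\le\ell_2(K,\delta)$ and use $\sum_{n\le KH} 1/n \le \log 2KH$ (up to the harmonic constant). This yields
\[
4\cdot 9\Vmax S\ell_2\cdot 2\cdot SA\log 2KH = 72\Vmax S^2A\ell_2(K,\delta)\log 2KH.
\]

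For the bonus part, $8\sum b^k$, the bonus is a minimum of two terms, so we bound it in two ways and take the minimum at the end; both sums are deterministic functions of the same trajectory, so the minimum is legitimate.
\begin{itemize}
\item Using $b^k\le c_{1,k}/N^k \le c_{1,K}/N^k$ (monotonicity), the same harmonic argument gives $8\cdot 2c_{1,K}SA\log KH = 16c_{1,K}SA\log KH$.
\item Using $b^k\le c_{2,K}/\sqrt{N^k} \le \sqrt2\,c_{2,K}/\sqrt{N_h^k}$, we get $\sum_{n\le N}n^{-1/2}\le 2\sqrt N$. Cauchy--Schwarz over $SA$ pairs with $\sum N \le KH$ then gives $8\sqrt2\,c_{2,K}\cdot 2\sqrt{SAKH} = 16\sqrt2\,c_{2,K}\sqrt{HSAK}$.
\end{itemize}

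The main obstacle is the bookkeeping that justifies replacing $N^k$ by $N_h^k$ and re-indexing the double sum as a sum over visit counts. Multiple visits to the same pair within an episode share the same $N^k$; the stopping time $\eta^k$ is exactly what caps this duplication at a factor of 2. We also need to check that the harmonic sums start at $n\ge1$, i.e., that no $N^k=0$ visit survives the truncation. Matching the exact logarithmic constants ($\log KH$ versus $\log 2KH$) is routine slack.
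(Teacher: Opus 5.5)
Your route is the paper's. You invoke Lemma~\ref{lma:sum of J with eta} with $X_h^k=2\beta^k(s_h^k,a_h^k)$ and $c=2\Vmax$, use $N^k\ge\frac12 N_h^k$ for $h<\eta^k$, and bound the per-pair harmonic and square-root sums (Cauchy--Schwarz for the latter). You then take the minimum of the two bonus bounds.

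The problem is the constant bookkeeping. You call it routine slack, but the stated inequality leaves no room for it.
\begin{itemize}
\item You index the per-pair sums from $n=1$, so you only get $\sum_{n=1}^{N}1/n\le 1+\log N$. This does not give the $16c_{1,K}SA\log KH$ you write for the $c_{1}$-part.
\item Your claim $\sum_{n\le KH}1/n\le\log 2KH$ is false for small $KH$ (e.g.\ $KH=1$).
\item More seriously, you spend the whole $72\Vmax S^2A\ell_2(K,\delta)\log 2KH$ on the $9\Vmax S\ell_2/N^k$ part of $\beta^k$. You then also absorb $12\Vmax SA\log\frac{16H}{\delta}$ into that same term, which double counts.
\end{itemize}

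The missing observation is that $N_h^k(s_h^k,a_h^k)\ge N^k(s_h^k,a_h^k)+1\ge 2$ whenever $h<\eta^k$. Indeed, $N_h^k$ counts the current visit, and $N^k\ge 1$, since otherwise $N_h^k\ge 2N^k=0$ and $\eta^k\le h$. So every per-pair sum runs over $n\in\{2,\dots,N^{K+1}(s,a)\}$, and $\sum_{n=2}^{N}1/n\le\log(1\lor N)\le\log KH$.

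This yields exactly $16c_{1,K}SA\log KH$ and $72\Vmax S^2A\ell_2(K,\delta)\log KH$. The leftover satisfies $72(\log 2)\Vmax S^2A\ell_2(K,\delta)\ge 12\Vmax S^2A\ell_2(K,\delta)\ge 12\Vmax SA\log\frac{16H}{\delta}$, and this is what absorbs the overhead from $\Ecal_4$. The gap between $\log KH$ and $\log 2KH$ is precisely the budget for that term, not slack. With this fix your argument coincides with the paper's proof.
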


\begin{proof}
    By \lemmaref{lma:sum of J with eta}, under $\Ecal_4(\{2 \beta^k(s_h^k, a_h^k)\}_{k, h}, 2 \Vmax, \frac{\delta}{4})$, it holds that
    \begin{align*}
        \sum_{k=1}^K U_1^k(s_1^k) & \le  4 \sum_{k=1}^K \sum_{h=1}^{\eta^k - 1} \beta^k(s_h^k, a_h^k) + 12 \Vmax SA \log \frac{16 H}{\delta}
        \, .
    \end{align*}
    Recall that
    \begin{align*}
        \beta^k(s, a)
        & = 2 b^k(s, a) + \frac{9 \Vmax S \ell_2(k, \delta)}{N^k(s, a)}
        \\
        & = \frac{2 c_{1, k}}{N^k(s, a)} \land \frac{2 c_{2, k}}{\sqrt{N^k(s, a)}} + \frac{9\Vmax S \ell_2(k, \delta)}{N^k(s, a)}
        \, .
    \end{align*}
    Furthermore, using that $N^k(s_h^k, a_h^k) \ge \frac{1}{2}N_h^k(s_h^k, a_h^k)$ when $h < \eta^k$, where $N_h^k(s, a)$ and $\eta^k$ are defined in Appendix~\ref{appx:notations and definitions}, we have
    \begin{align}
        \beta^k(s_h^k, a_h^k) \ind \{ h < \eta^k\}
        & \le \frac{4 c_{1, k}}{N_h^k(s, a)} \land \frac{2 \sqrt{2} c_{2, k}}{\sqrt{N_h^k(s, a)}} + \frac{18\Vmax S \ell_2(k, \delta)}{N_h^k(s, a)}
        \label{eq:beta eta}
        \, .
    \end{align}
    When $h < \eta^k$, we have $N^k(s_h^k, a_h^k) + 1 \le N_h^k(s_h^k, a_h^k) \le 2 N^k(s_h^k, a_h^k)$, which implies $N_h^k(s_h^k, a_h^k) \ge 2$ for such $h$.
    Now, we take the sum over $(h, k) \in [H] \times [K]$.
    Note that one has
    \begin{align*}
        &\sum_{n=2}^N \frac{1}{n} \le \log (1 \lor N)\, , & \sum_{n=2}^N \frac{1}{\sqrt{n}} \le 2 \sqrt{N} \, .
    \end{align*}
    Hence, we have
    \begin{align*}
        \sum_{k=1}^K \sum_{h=1}^H \frac{\ind\{ h < \eta^k\}}{N_h^k(s_h^k, a_h^k)} \le\sum_{(s, a) \in \Scal \times \Acal} \sum_{n=2}^{N^{K+1}(s, a)} \frac{1}{n} \le \sum_{(s, a) \in \Scal \times \Acal} \log (1 \lor N^{K+1}(s, a))\le SA \log KH
    \end{align*}
    and
    \begin{align*}
        \sum_{k=1}^K \sum_{h=1}^H \frac{\ind\{ h < \eta^k\}}{\sqrt{N_h^k(s_h^k, a_h^k)}}
        & \le\sum_{(s, a) \in \Scal \times \Acal} \sum_{n=2}^{N^{K+1}(s, a)} \frac{1}{\sqrt{n}}
        \\
        & \le \sum_{(s, a) \in \Scal \times \Acal} 2 \sqrt{N^{K+1}(s, a)}
        \\
        & \le 2 \sqrt{SA \sum_{(s, a) \in \Scal \times \Acal} N^{K+1}(s, a)}
        \\
        & = 2 \sqrt{HSAK}
        \, ,
    \end{align*}
    where the third inequality uses the Cauchy-Schwarz inequality.
    Therefore, the sum of $\beta^k(s, a) \ind\{ h < \eta^k\}$ is bounded as
    \begin{align*}
        & \sum_{k=1}^K \sum_{h=1}^H \beta^k(s, a) \ind\{h< \eta^k\}
        \\
        & \le \sum_{k=1}^K \sum_{h=1}^H\left( \frac{4 c_{1, k}}{N_h^k(s_h^k, a_h^k)} \land \frac{2\sqrt{2} c_{2, k}}{\sqrt{N_h^k(s_h^k, a_h^k)}} + \frac{18 \Vmax S \ell_2(k, \delta)}{N_h^k(s_h^k, a_h^k)} \right) \ind\{ h < \eta^k\}
        \\
        & \le \left( 4 c_{1, K} \sum_{k=1}^K \sum_{h=1}^H \frac{\ind\{h< \eta^k\}}{N_h^k(s_h^k, a_h^k)} \right) \land \left(2 \sqrt{2} c_{2, K} \sum_{k=1}^K \sum_{h=1}^H \frac{\ind\{ h < \eta^k\}}{\sqrt{N_h^k(s_h^k, a_h^k)}} \right) 
        \\
        & \qquad + 18\Vmax S \ell_2(K, \delta)  \sum_{k=1}^K \sum_{h=1}^H \frac{\ind\{h< \eta^k\}}{N_h^k(s_h^k, a_h^k)}
        \\
        & \le \left( 4 c_{1, K} SA \log KH \right) \land \left(4 \sqrt{2} c_{2, K} \sqrt{HSAK} \right) + 18 \Vmax S^2 A \ell_2(K, \delta) \log KH
        \, ,
    \end{align*}
    where the second inequality uses that $\{c_{1, k}\}_{k=1}^\infty$ and $\{ c_{2, k} \}_{k=1}^\infty$ are non-decreasing and $\sum_{i=1}^n a_i \land b_i \le (\sum_{i=1}^n a_i) \land (\sum_{i=1}^n b_i)$ for any sequences $\{a_i\}_{i=1}^n$ and $\{b_i\}_{i=1}^n$.
    Finally, we obtain the following bound on the sum $\sum_{k=1}^K U_1^k(s_1^k)$:
    \begin{align*}
        \sum_{k=1}^K U_1^k(s_1^k)
        & \le  4 \sum_{k=1}^K \sum_{h=1}^{\eta^k - 1} \beta^k(s_h^k, a_h^k) + 12 \Vmax SA \log \frac{16 H}{\delta}
        \\
        & \le \left( 16 c_{1, K} SA \log KH \right) \land \left(16 \sqrt{2} c_{2, K} \sqrt{HSAK} \right)
        \\
        & \qquad + 72 \Vmax S^2 A \ell_2(K, \delta) \log KH + 12 \Vmax SA \log \frac{16 H}{\delta}
        \, .
    \end{align*}
    To obtain a simpler form, we bound the last term by $12 \Vmax SA \log \frac{16 H}{\delta} \le 12 \Vmax S^2 A \ell_2(K, \delta) \le 72(\log 2) \Vmax S^2 A \ell_2(K, \delta)$, which bounds the last two terms as
    \begin{align*}
        72 \Vmax S^2 A \ell_2(K, \delta) \log KH + 12 \Vmax SA \log \frac{16 H}{\delta}
        & \le 72 \Vmax S^2 A \ell_2(K, \delta) \log 2 KH
        \, .
    \end{align*}
\end{proof}

\subsection{Proof for Theorem~\ref{thm:gap-dependent bound}}

\label{appx:proof of gap-dependent bound}

In this section, we restate Theorem~\ref{thm:gap-dependent bound} with specific logarithmic factors and provide its proof.

\citet{dann2021beyond} show that the true gap $\gap_h(s, a) := V_h^*(s) - Q_h^*(s, a)$ can be replaced with another function, which may improve the overall regret bound depending on reachability conditions of the MDP.
We also use the following generalized definition, which comes from the condition of Proposition 3.3 in~\citet{dann2021beyond}.
First, we define a stopping time $B = \min( \{ h \in \{1, 2, \ldots, H\} : \gap_{h}(s_{h}, a_{h}) > 0 \} \cup \{H+1\})$, which is the first time step that the policy takes a sub-optimal action.
We denote the $B$ value at the $k$-th episode by $B^k$.
In addition, we define $\Scal_{\mathrm{initial}}$ as the set of possible initial states, that is, the environment choose $s_1^k$ from $\Scal_{\mathrm{initial}}$.

\begin{definition}[Effective gap]
    A sequence of functions $\rgap_h : \Scal \times \Acal \rightarrow \RR_{> 0}$ for $h \in [H]$ is called \textbf{effective gap} if it satisfies
    \begin{align*}
        \Expec_{\pi(\cdot \mid s_1 = s)} \left[ \sum_{h=B}^H \rgap_{h}(s_{h}, a_{h}) \right] \le V_1^*(s) - V_1^{\pi}(s)
    \end{align*}
    for all $s \in \Scal_{\mathrm{initial}}$, and $\pi \in \Pi$.
\end{definition}

There can be many different functions that are effective gaps, and we note that it is better to set the effective gaps as large as possible.
A common choice of effective gap is $\rgap_h(s, a) := \frac{1}{2} \gap_h(s, a) \lor \frac{1}{2H} \gap_{\min}$, where $\gap_{\min} := \min\{ \gap_h(s, a) \mid \gap_h(s, a) > 0, (h, s, a) \in [H] \times \Scal \times \Acal \}$.
\citet{dann2021beyond} provide another choice of effective gap named \emph{return gap}, defined as 
\begin{align*}
    \rgap_h(s, a) := \frac{1}{2} \gap_h(s, a) \lor \min_{\substack{\pi \in \Pi \\ \PP_{\pi}((s_h, a_h) = (s, a)) > 0 \\ \EE_{\pi} [\sum_{h' = 1}^h \gap_{h'}(s_{h'}, a_{h'}) \mid (s_h, a_h) = (s, a) ] > 0} } \frac{1}{2H} \Expec_{\pi} \left[ \sum_{h' = 1}^h \gap_{h'}(s_{h'}, a_{h'}) \middle | (s_h, a_h) = (s, a)  \right]
    \, .
\end{align*}
It is possible to define effective gap to even depend on the policy of the agent, but we do not go further in this direction.
Refer to~\citet{dann2021beyond} for more discussions.
We define $\rgap(s, a) := \min_{h \in [H]} \rgap_h(s, a)$.

In addition we let $\vgap := \min_{\pi \in \Pi} \gap_B(s_B, a_B)$.
This is the minimum regret the agent can incur by making one suboptimal selection.
While this may be equal to $\gap_{\min}$ is most cases, it may be larger when state-action pairs with gap $\gap_h(s, a) = \gap_{\min}$ can be reached only if the agent takes another sub-optimal action before time step $h$.
We note that $\vgap \ge \gap_{\min}$.
We replace $\kapgap(\gap_{\min}, \delta)$ in Theorem~\ref{thm:gap-dependent bound} with $\kapgap(\vgap, \delta)$, leading to a potential improvement.

We restate Theorem~\ref{thm:gap-dependent bound} using the effective gap and $\vgap$, which leads to a more concise and potentially improved result.

\begin{theorem}[Formal statement of Theorem~\ref{thm:gap-dependent bound}]
\label{thm:instance dependent bound formal}
    Let $\{c_{1, k}\}_{k=1}^\infty$ and $\{c_{2, k}\}_{k=1}^\infty$ be positive non-decreasing sequences.
    Fix any $\delta \in (0, 1]$ and $K \in \NN$.
    Then, we have
    \begin{align*}
        \Reg_{\Mcal}^{\Alg} (K, \delta) & \le \Vmax (\kap(\delta) \land K)  + \sum_{(s, a) \in \Scal \times \Acal}  \left( \frac{2048 c_{2, K}^2}{\rgap(s, a)} \land 64 c_{1, K} \log \frac{32 c_{1, K} }{\rgap(s, a)} \right)
        \\
        & \qquad + \sum_{k = \kap(\delta)+1}^{\kapgap(\vgap, \delta) \land K} \frac{144 \Qrange \ell_1(\iota_k, \delta)}{c_{1, k}} + 288 \Vmax S^2 A \ell_2(K, \delta) \log 2KH
    \end{align*}
\end{theorem}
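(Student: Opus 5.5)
We work on the event $\Ecal_1(\frac{\delta}{4}) \cap \Ecal_2(\frac{\delta}{4}) \cap \Ecal_3(\frac{\delta}{4}) \cap \Ecal_4(\cdot)$, with the last event instantiated with a suitably chosen $X_h^k$. As in Theorem~\ref{thm:gap-independent bound formal}, the first $\kap(\delta)\land K$ episodes are charged $\Vmax$ each.

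\textbf{Step 1: localize regret at the first suboptimal action.} For $k > \kap(\delta)$, the performance-difference identity gives
\begin{align*}
V_1^*(s_1^k) - V_1^{\pi^k}(s_1^k) = \Expec_{\pi^k}\big[V_{B}^*(s_{B}) - V_{B}^{\pi^k}(s_{B})\big],
\end{align*}
because before $B$ the chosen actions are optimal. At time $B$, Lemma~\ref{lma:quasi optimism} and Lemma~\ref{lma:est true diff bound} (applied from $h=B$) give
\begin{align*}
V_B^*(s_B) - V_B^{\pi^k}(s_B) \le \tfrac{9}{2}\lambda_{\iota_k}\Qrange + U_B^k(s_B).
\end{align*}
Since $B\le H$ exactly when a suboptimal action is taken, the quasi-optimism term is only paid on the event $\{B\le H\}$. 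For $k > \kapgap(\vgap,\delta)$, Lemma~\ref{lma:choice of lambda} yields $\frac{9}{2}\lambda_{\iota_k}\Qrange \le \frac{18\Qrange\ell_1(\iota_k,\delta)}{c_{1,k}} \le \frac{1}{2}\vgap \le \frac12\gap_B(s_B,a_B)$. We then use $V_B^*-V_B^{\pi^k}\ge \gap_B$ and move half of the left side across to get $V_1^*-V_1^{\pi^k} \le 2\Expec_{\pi^k}[U_B^k(s_B)\ind\{B\le H\}]$. For episodes with $\kap(\delta) < k \le \kapgap(\vgap,\delta)$, we keep the additive term. With the constant factors arising below, this produces the $\sum 144\Qrange\ell_1/c_{1,k}$ term.

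\textbf{Step 2: clipping.} Next we apply the clipping argument of \citet{simchowitz2019non,dann2021beyond} to $\Expec_{\pi^k}[U_B^k]$. Take a fraction of the regret, written through the effective-gap inequality as $\Expec[\sum_{h\ge B}\rgap_h(s_h,a_h)]$, and subtract it against the per-step bonuses $2\beta^k(s_h,a_h)$ for $h\ge B$. This gives the form $\text{regret}\le 4\Expec_{\pi^k}[\sum_{h\ge B}(4\beta^k - \rgap_h)_+]$, so each surviving summand is effectively $\ddot\beta^k(s,a):=\beta^k(s,a)\ind\{\beta^k(s,a)\gtrsim \rgap(s,a)\}$. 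We then define $X_h^k := \ind\{h\ge B^k\}\,\ddot\beta^k(s_h^k,a_h^k)$, which is adapted because $B$ is a stopping time. Lemma~\ref{lma:sum of J with eta} then converts the expectation-based recursion $J_h^k$, clipped at $2\Vmax$, into the realized sum $2\sum_{k,h<\eta^k}X_h^k$ plus $O(\Vmax SA\log\frac{H}{\delta})$.

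\textbf{Step 3: summing the clipped bonuses per $(s,a)$.} Using $N^k\ge N_h^k/2$ for $h<\eta^k$, the terms $\frac{c_{2,K}}{\sqrt n}\ind\{c_{2,K}/\sqrt n\gtrsim\rgap\}$ survive only for $n\lesssim c_{2,K}^2/\rgap^2$, so they sum to $O(c_{2,K}^2/\rgap(s,a))$. The terms $\frac{c_{1,K}}{n}\ind\{c_{1,K}/n\gtrsim\rgap\}$ survive for $n\lesssim c_{1,K}/\rgap$, so they sum to $O(c_{1,K}\log\frac{c_{1,K}}{\rgap})$. Taking the minimum per pair, as in Lemma~\ref{lma:sum of U 1}, gives the middle term. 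The $\frac{\Vmax S\ell_2}{N}$ part, bounded without clipping, gives the $288\Vmax S^2A\ell_2\log 2KH$ term. A union bound over the four events finishes the proof.

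\textbf{Main obstacle.} The hard step is combining Steps 1 and 2 rigorously. $U^k_h$ is defined by a clipped recursion with the true $P$, and it has to be compared with the stopped and clipped sum $\Expec[\sum_{h\ge B}(\cdot)]$ in a way that stays in the $J$-recursion form required by Lemma~\ref{lma:sum of J with eta}. My first attempt would be to define a modified recursion $\tilde U_h^k := (\ind\{h\ge B\}\,c\,\ddot\beta^k + P\tilde U_{h+1}^k)\land 2\Vmax$ and prove by backward induction that regret is at most $4\Expec[\tilde U_1^k]$. The required half-gap slack comes from the effective-gap inequality, and the $\land 2\Vmax$ clipping is handled through Fact~\ref{fact:max and min}.
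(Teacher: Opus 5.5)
Your proposal is correct and follows essentially the paper's argument: you localize the regret at $B$, absorb the quasi-optimism term into half of $\gap_B\ge\vgap$ once $k>\kapgap(\vgap,\delta)$, apply the effective-gap doubling trick through a clipped recursion, and finish with Lemma~\ref{lma:sum of J with eta} and the per-pair clipped sums. The paper's Lemma~\ref{lma:dotU} is exactly the backward induction you sketch; note that it gives per-step terms $(8\beta^k-\rgap_h)_+$ with clipping at $8\Vmax$, which is what your displayed form $4\Expec_{\pi^k}[\sum_{h\ge B}(4\beta^k-\rgap_h)_+]$ should read. The one organizational difference is that the paper splits on the trajectory event $E^k=\{V_{B^k}^*(s_{B^k}^k)-V_{B^k}^{\pi^k}(s_{B^k}^k)>36\Qrange\ell_1(\iota_k,\delta)/c_{1,k}\}$ and folds the $72\Qrange\ell_1(\iota_k,\delta)/c_{1,k}$ charge into $\ddot\beta$, whereas you split on whether $k\le\kapgap(\vgap,\delta)$; both give the same bound.
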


\begin{proof}
    The proof has two main steps.
    First, we define a variant of the $U_h^k$ function.
    For $k \in \NN$, define an event $E^k := \{ \omega : V_{B^k}^*(s_{B^k}^k) - V_{B^k}^{\pi^k}(s_{B^k}^k) > \frac{36 \Qrange \ell_1(\iota_k, \delta)}{c_{1, k}} \}$.
    Define $\ddot{\beta}_h^k(s, a)$ in the following way:
    
    \begin{align*}
        \ddot{\beta}_h^k(s, a) = \begin{cases}
            \frac{72 \Qrange \ell_1(\iota_k, \delta)}{c_{1, k}} & (h = B^k, (E^k)^{\mathsf{c}})
            \\
            (8 \beta^k(s, a) - \rgap_h(s, a))_+ & (h \ge B^k, E^k)
            \\
            0 & (\text{otherwise})\, .
        \end{cases}
    \end{align*}
    In words, $\ddot{\beta}_h^k$ is $0$ until the agent makes an sub-optimal action.
    If a sub-optimal selection happens for the first time at time step $h$, where $h = B^k$ is implied by definition, denote the difference between the optimal value and the policy's value by $\textsf{Gap} := V_h^*(s_h^k) - V_h^{\pi^k}(s_h^k)$.
    If $\frac{1}{2}\textsf{Gap} < \frac{18\Qrange \ell_1(\iota_k, \delta)}{c_{1, k}}$, we set $\ddot{\beta}_h^k(s, a)$ to $\frac{72 \Qrange\ell_1(\iota_k, \delta)}{c_{1, k}}$, and all subsequent $\ddot{\beta}_{h'}^k$ in the same episode are set to $0$.
    Otherwise, the following sequence of $\ddot{\beta}_{h'}^k$ are set to $(8 \beta^k(s, a) - \rgap_{h'}(s, a))_+$ for $h' \ge h$.
    We define the clipped sum $\ddot{U}_h^k(s)$ of $\ddot{\beta}_h^k$ iteratively starting from $\ddot{U}_{H+1}^k(s) := 0$ and
    \begin{align*}
        \ddot{U}_h^k(s) := ( \ddot{\beta}_h^k(s, a) + P \ddot{U}_{h+1}^k(s, a)) \land 8 \Vmax
        \, .
    \end{align*}

    Then, we have the following lemma:

    \begin{lemma}
    \label{lma:bound by ddotU}
        Assume the event of $\Ecal_1(\frac{\delta}{4}) \cap \Ecal_2(\frac{\delta}{4}) \cap \Ecal_3(\frac{\delta}{4})$ and assume that $k > \kap(\delta)$.
        Then, for all $s \in \Scal$, it holds that
        \begin{align*}
            V_1^*(s) - V_1^{\pi^k}(s) \le \ddot{U}_1^k(s)
            \, .
        \end{align*}
    \end{lemma}
    The proof of Lemma~\ref{lma:bound by ddotU} is deferred to Appendix~\ref{appx:proof of bound by ddotU}.
\\
    By \lemmaref{lma:bound by ddotU}, we have
    \begin{align*}
        \Reg_{\Mcal}^{\Alg}(K, \delta) \le \Vmax\kap(\delta) + \sum_{k=\kap(\delta)+1}^K \ddot{U}_1^k(s_1^k)
        \, .
    \end{align*}

    Then, we bound the sum of $\ddot{U}_1^k(s_1^k)$ using the following lemma.

    \begin{lemma}
    \label{lma:sum of ddotU}
        Under the event of $\Ecal_4(\{\ddot{\beta}_h^k\}_{k, h}, 8 \Vmax, \frac{\delta}{4})$, we have that
        \begin{align*}
            \sum_{k=\kap(\delta) + 1}^K \ddot{U}_1^k(s_1^k)
            & \le \sum_{(s, a) \in \Scal \times \Acal}  \left( \frac{2048 c_{2, K}^2}{\rgap(s, a)} \land 64 c_{1, K} \log \frac{32 c_{1, K} }{\rgap(s, a)} \right)
            \\ & \qquad + \sum_{k=\kap(\delta)+1}^{\kapgap(\vgap, \delta) \land K} \frac{144 \Qrange \ell_1(\iota_k, \delta)}{c_{1, k}}  + 288 \Vmax S^2 A \ell_2(K, \delta) \log 2KH
            \, .
        \end{align*}
    \end{lemma}
    The proof of Lemma~\ref{lma:sum of ddotU} is deferred to Appendix~\ref{appx:proof of sum of ddotU}.
    The theorem is proved by combining the two lemmas.
\end{proof}

\subsubsection{Proof of Lemma~\ref{lma:bound by ddotU}}
\label{appx:proof of bound by ddotU}

We first prove the following two simple lemmas.

\begin{lemma}
\label{lma:opt est value diff event E}
    For $h \in [H]$ and $s \in \Scal$, let $E_h^k(s) := \{ \omega \in \Omega : V_h^*(s) - V_h^{\pi^k}(s) \ge \frac{36 \Qrange \ell_1 (\iota_k, \delta)}{c_{1, k}} \}$.
    Then, under the event of Lemma~\ref{lma:episode regret}, we have
    \begin{align*}
        V_h^*(s) - V_h^{\pi^k}(s) \le \ind\{ E_h^k(s) \} \cdot 2 U_h^k(s) + \ind\{ {E_h^k(s)}^{\mathsf{C}} \} \left(  \frac{36 \Qrange \ell_1(\iota_k, \delta)}{c_{1, k}}  \land 4 \Vmax \right)
        \, .
    \end{align*}
\end{lemma}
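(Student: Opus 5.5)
We will argue separately on the two cases of the event $E_h^k(s)$. The tools are the per-step analogue of Lemma~\ref{lma:episode regret} at a general time step $h$ and the trivial bound on value differences.

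\emph{Step 1: a step-$h$ version of Lemma~\ref{lma:episode regret}.} We first check that the argument of Lemma~\ref{lma:episode regret} holds at every $h$ and $s$, not only at $h=1$. Lemma~\ref{lma:quasi optimism} gives $V_h^*(s)-V_h^k(s)\le 2\lambda_{\iota_k}\Qrange$ for all $h,s$. Lemma~\ref{lma:est true diff bound} gives $V_h^k(s)-V_h^{\pi^k}(s)\le \frac52\lambda_{\iota_k}\Qrange+U_h^k(s)$. Adding these and using $\lambda_{\iota_k}\le 4\ell_1(\iota_k,\delta)/c_{1,k}$ from Lemma~\ref{lma:choice of lambda} yields
\begin{align*}
V_h^*(s)-V_h^{\pi^k}(s)\le \frac{18\Qrange\ell_1(\iota_k,\delta)}{c_{1,k}}+U_h^k(s)
\end{align*}
for all $h\in[H]$ and $s\in\Scal$, under the same events and with $k>\kap(\delta)$.

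\emph{Step 2: the case $E_h^k(s)$.} On $E_h^k(s)$ the left side is at least $36\Qrange\ell_1(\iota_k,\delta)/c_{1,k}$. Hence the quasi-optimism term $18\Qrange\ell_1(\iota_k,\delta)/c_{1,k}$ is at most half of $V_h^*(s)-V_h^{\pi^k}(s)$. Substituting this into Step 1 gives $\frac12\big(V_h^*(s)-V_h^{\pi^k}(s)\big)\le U_h^k(s)$, which is $V_h^*(s)-V_h^{\pi^k}(s)\le 2U_h^k(s)$.

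\emph{Step 3: the complement case.} On ${E_h^k(s)}^{\mathsf C}$, the definition of the event gives $V_h^*(s)-V_h^{\pi^k}(s)<36\Qrange\ell_1(\iota_k,\delta)/c_{1,k}$ directly. Assumption~\ref{assm:boundedness} gives $0\le V_h^{\pi^k}(s),V_h^*(s)\le\Vmax$, so the difference is at most $\Vmax\le4\Vmax$. Together these give the minimum with $4\Vmax$.

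Combining the two cases with indicators gives the stated inequality, since each indicator selects the bound that holds on its event and the other term vanishes. The only point needing care is Step 1. We must confirm that the proofs of Lemmas~\ref{lma:quasi optimism} and~\ref{lma:est true diff bound} are stated for arbitrary $h$ and $s$, and not only for $s_1^k$. They are, since both are proved by backward induction over all $h$ and $s$, so we expect no real obstacle.
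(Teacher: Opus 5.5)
Your proposal is correct and follows the paper's proof. You use the bound $V_h^*(s)-V_h^{\pi^k}(s)\le U_h^k(s)+18\Qrange\ell_1(\iota_k,\delta)/c_{1,k}$, absorb the quasi-optimism term on $E_h^k(s)$ (the paper argues by contradiction where you rearrange directly), and bound the complement trivially. Your Step~1 helpfully spells out what the paper uses silently: Lemma~\ref{lma:episode regret} is stated only at $h=1$, $s=s_1^k$, but it holds at every $(h,s)$ because Lemmas~\ref{lma:quasi optimism} and~\ref{lma:est true diff bound} hold for all $h$ and $s$.
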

\begin{proof}
    By \lemmaref{lma:episode regret}, we have
    \begin{align*}
        V_h^*(s) - V_h^{\pi^k}(s) \le U_h^k(s) + \frac{18 \Qrange \ell_1(\iota_k, \delta)}{c_{1, k}} 
        \, .
    \end{align*}
    The lemma we have to prove is stating that
    \begin{align*}
        V_h^*(s) - V_h^{\pi^k}(s) \le \begin{cases}
            2 U_h^k(s) & (V_h^*(s) - V_h^{\pi^k}(s) \ge \frac{36 \Qrange\ell_1(\iota_k, \delta)}{c_{1, k}})
            \\
            \frac{36 \Qrange \ell_1(\iota_k, \delta)}{c_{1, k}}  \land 4 \Vmax  & (V_h^*(s) - V_h^{\pi^k}(s) < \frac{36 \Qrange\ell_1(\iota_k, \delta)}{c_{1, k}})
        \end{cases}
        \, .
    \end{align*}
    The first case is true since otherwise we would have $V_h^*(s) - V_h^{\pi^k} >\frac{1}{2}(2 U_h^k(s) + \frac{36 \Qrange\ell_1(\iota_k, \delta)}{c_{1, k}}) = U_h^k(s) + \frac{18\Qrange\ell_1(\iota_k, \delta)}{c_{1, k}}$, which is a contradiction.
    The latter case is trivial.
\end{proof}

\begin{lemma}
\label{lma:dotU}
    Let $\dot{\beta}_h^k(s, a) = (8 \beta^k(s, a) - \rgap_h(s, a))_+$ and $\dot{U}_h^k(s)$ be defined iteratively starting from $\dot{U}_{H+1}^k(s) := 0$ and
    \begin{align*}
        \dot{U}_h^k(s) := (\dot{\beta}_h^k(s, a) + P \dot{U}_{h+1}(s, a)) \land 8 \Vmax
        \, .
    \end{align*}
    Then, for all $h\in [H]$ and $s \in \Scal$, we have
    \begin{align*}
        4U_h^k(s) - \Expec_{\pi^k(\cdot \mid s_h = s)} \left[ \sum_{h' = h}^H \rgap_{h'}(s_{h'}, a_{h'}) \right] \le \dot{U}_h^k(s)
        \, .
    \end{align*}
\end{lemma}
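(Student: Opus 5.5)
We prove the claim by backward induction on $h$. Throughout, $a := \pi_h^k(s)$, and $G_h(s) := \Expec_{\pi^k(\cdot \mid s_h = s)}\big[\sum_{h'=h}^H \rgap_{h'}(s_{h'}, a_{h'})\big]$, so that $G_{H+1} \equiv 0$ and
\begin{align*}
G_h(s) = \rgap_h(s, a) + P G_{h+1}(s, a).
\end{align*}
The target inequality is $4U_h^k(s) - G_h(s) \le \dot{U}_h^k(s)$. The base case $h = H+1$ reads $0 \le 0$. We will also use that $\dot{U}_h^k \ge 0$, since $\dot{\beta}_h^k \ge 0$.

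For the inductive step, assume $4U_{h+1}^k(s') - G_{h+1}(s') \le \dot{U}_{h+1}^k(s')$ for all $s'$. By definition $U_h^k(s) \le 2\beta^k(s, a) + PU_{h+1}^k(s, a)$, so
\begin{align*}
4U_h^k(s) - G_h(s) &\le 8\beta^k(s, a) - \rgap_h(s, a) + P\big(4U_{h+1}^k - G_{h+1}\big)(s, a) \\
&\le \big(8\beta^k(s, a) - \rgap_h(s, a)\big)_+ + P\dot{U}_{h+1}^k(s, a) \\
&= \dot{\beta}_h^k(s, a) + P\dot{U}_{h+1}^k(s, a).
\end{align*}
The second line uses $x \le x_+$ together with the induction hypothesis and monotonicity of $P$.

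It remains to reconcile this with the clipping at $8\Vmax$ in the definition of $\dot{U}_h^k$. If $\dot{\beta}_h^k(s, a) + P\dot{U}_{h+1}^k(s, a) \le 8\Vmax$, the bound above is exactly $\dot{U}_h^k(s)$ and we are done. Otherwise $\dot{U}_h^k(s) = 8\Vmax$, and we use the crude bound $4U_h^k(s) - G_h(s) \le 4U_h^k(s) \le 8\Vmax$, which holds because $U_h^k \le 2\Vmax$ by its own clipping and $G_h \ge 0$ since $\rgap$ is positive. In either case the inequality holds, which completes the induction.

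The only delicate point is this clipping mismatch: $U$ is capped at $2\Vmax$ while $\dot{U}$ is capped at $8\Vmax = 4 \cdot 2\Vmax$. The case split works precisely because the constant $4$ in front of $U$ matches the ratio of the two caps, and because $G_h \ge 0$ lets us discard it in the clipped case.
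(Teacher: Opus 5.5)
Your proposal is correct and matches the paper's proof: backward induction, unrolling one step of both $U_h^k$ and the expected effective-gap sum, applying the hypothesis together with $x \le x_+$, and handling the $8\Vmax$ clipping via $U_h^k \le 2\Vmax$ and nonnegativity of the gap sum. Your case split is equivalent to the paper's step of taking the minimum of the two upper bounds.
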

\begin{proof}
    We use backward induction on $h$.
    The inequality is trivial for $h = H+1$ as $0 \le 0$.
    Suppose the inequality holds for $h+1$,
    Then, denoting $a := \pi_h^k(s)$, we have
    \begin{align*}
        & 4 U_h^k(s) - \Expec_{\pi^k(\cdot \mid s_h = s)} \left[ \sum_{h' = h}^H \rgap_{h'}(s_{h'}, a_{h'} ) \right]
        \\
        & \le 4(2 \beta^k(s, a) + P U_{h+1}^k(s, a))  - \Expec_{\pi^k(\cdot \mid s_h = s)} \left[ \sum_{h' = h}^H \rgap_{h'}(s_{h'}, a_{h'}) \right]
        \\
        & = (8\beta^k(s, a) - \rgap_h(s, a)) + \Expec_{\pi^k(\cdot \mid s_h = s)} \left[ 4 U_{h+1}^k(s_{h+1}) - \sum_{h' = h+1}^H \rgap_{h'}(s_{h'}, a_{h'}) \right]
        \\
        & \le (8\beta^k(s, a) - \rgap_h(s, a)) + \Expec_{\pi^k(\cdot \mid s_h = s)} \left[ \dot{U}_{h+1}^k(s_{h+1}) \right]
        \\
        & \le \dot{\beta}_h^k(s, a) + P \dot{U}_{h+1}^k(s, a)
        \, .
    \end{align*}
    In addition, since $U_h^k(s) \le 2\Vmax$, we have $4U_h^k(s) - \Expec_{\pi^k(\cdot \mid s_h = s)} \left[ \sum_{h' = h}^H \rgap_{h'}(s_{h'}, a_{h'}) \right] \le 8 \Vmax$.
    Combining these, we conclude that
    \begin{align*}
        4U_h^k(s) - \Expec_{\pi^k(\cdot \mid s_h = s)} \left[ \sum_{h' = h}^H \rgap_{h'}(s_{h'}, a_{h'}) \right] &\le (\dot{\beta}_h^k(s, a) + P \dot{U}_{h+1}^k(s, a)) \land 8 \Vmax = \dot{U}_h^k(s)
        \, .
    \end{align*}
\end{proof}

\begin{proof}[Proof of Lemma~\ref{lma:bound by ddotU}]
    For any policy $\pi$, We have
    \begin{align*}
        V_h^*(s) - V_h^{\pi}(s) & = Q_h^*(s, \pi_h^*(s)) - Q_h^{\pi}(s, \pi_h(s))
        \\
        & = Q_h^*(s, \pi_h^*(s)) - Q_h^*(s, \pi_h(s)) +Q_h^*(s, \pi_h(s)) - Q_h^{\pi}(s, \pi_h(s))
        \\
        & = \gap_h(s, \pi_h(s)) + P(V_{h+1}^* - V_{h+1}^{\pi})(s, \pi_h(s))
        \, .
    \end{align*}
    Since $\gap_h(s_h, a_h) = 0$ for $h < B$ by the definition of $B$, iteratively expanding this equation up to $h = B$ and using the optional stopping theorem yields
    \begin{align*}
        V_1^*(s) - V_1^{\pi}(s) = \Expec_{\pi(\cdot \mid s_1 =s)} \left[ V_{B}^*(s_h) - V_{B}^{\pi}(s_h) \right]
        \, .
    \end{align*}
    Lemma~\ref{lma:opt est value diff event E} states that for all $h \in [H]$ and $s \in \Scal$, we have
    \begin{align*}
        V_h^*(s) - V_h^{\pi^k}(s) \le \ind\{ E_h^k(s) \} \cdot 2 U_h^k(s) + \ind\{ {E_h^k(s)}^{\mathsf{C}} \}  \left(  \frac{36 \Qrange \ell_1(\iota_k, \delta)}{c_{1, k}}  \land 4 \Vmax \right)
        \, .
    \end{align*}
    Observe that $E^k$ can be regarded as $E^k_{B^k}( s_{B^k}^k)$, or more precisely, $E^k = \cup_{h, s} (E^k(h, s) \cap \{ h = B^k, s = s_{B^k}^k \} )$.
    Then,     
    \begin{align*}
        & V_1^*(s_1^k) - V_1^{\pi^k}(s_1^k) 
        \\
        & = \Expec_{\pi^k} \left[ V_{B^k}^*(s_h^k) - V_{B^k}^{\pi}(s_h^k) \right]
        \\
        & \le \Expec_{\pi^k} \left[  \left(  \frac{36 \Qrange \ell_1(\iota_k, \delta)}{c_{1, k}}  \land 4 \Vmax \right) \ind\{ (E^k)^{\mathsf{c}}, B^k \le H\} + 2 U_{B^k}^k(s_{B^k}^k) \ind\{ E^k \}  \right]
        \, .
    \end{align*}
    From the definition of effective gap, we have $\EE_{\pi^k} \left[ \sum_{h=B^k}^H \rgap_h(s_h^k, a_h^k) \right] \le V_1^*(s_1^k) - V_1^{\pi^k}(s_1^k)$.
    Then, it holds that
    \begin{align*}
        & V_1^*(s_1^k) - V_1^{\pi^k}(s_1^k)
        \\
        & = 2 (V_1^*(s_1^k) - V_1^{\pi^k}(s_1^k) )  - ( V_1^*(s_1^k) - V_1^{\pi^k}(s_1^k))
        \\
        & \le 2 \Expec_{\pi^k} \left[   \left(  \frac{36 \Qrange \ell_1(\iota_k, \delta)}{c_{1, k}}  \land 4 \Vmax \right) \ind\{ (E^k)^{\mathsf{c}}, B^k \le H\} + 2 U_{B^k}^k(s_{B^k}^k) \ind\{ E^k \} \right] - \EE_{\pi^k} \left[ \sum_{h=B^k}^H \rgap_h(s_h^k, a_h^k) \right]
        \\
        & =  \Expec_{\pi^k} \left[ \left(  \frac{72 \Qrange \ell_1(\iota_k, \delta)}{c_{1, k}}  \land 8 \Vmax \right)\ind\{ (E^k)^{\mathsf{c}}, B^k \le H\} + 4 U_{B^k}^k(s_{B^k}^k) \ind\{ E^k \} - \sum_{h=B^k}^H \rgap_h(s_h^k, a_h^k)  \right]
        \, .
    \end{align*}
    We bound the last two terms as
    \begin{align*}
        \Expec_{\pi^k} \left[ 4U_{B^k}^k(s_{B^k}^k) \ind\{ E^k\} - \sum_{h=B^k}^H \rgap_h(s_h^k, a_h^k)  \right]
        & \le \Expec_{\pi^k} \left[ \ind\{E^k \} \left( 4U_{B^k}^k(s_{B^k}^k) -  \sum_{h=B^k}^H \rgap_h(s_h^k, a_h^k)  \right) \right]
        \\
        & \le \Expec_{\pi^k} \left[ \ind\{ E^k \} \dot{U}_{B^k}^k(s_{B^k}^k) \right]
        \, ,
    \end{align*}
    where we use Lemma~\ref{lma:dotU} for the last inequality.
    Therefore, we obtain that
    \begin{align*}
        V_1^*(s_1^k) - V_1^{\pi^k}(s_1^k) \le \Expec_{\pi^k} \left[ \left(  \frac{72 \Qrange \ell_1(\iota_k, \delta)}{c_{1, k}}  \land 8 \Vmax \right) \ind\{ (E^k)^{\mathsf{c}}, B^k \le H\} + \ind\{ E^k \} \dot{U}_{B^k}^k(s_{B^k}^k)  \right]
        \, .
    \end{align*}
    The proof is completed by noting that the right-hand side is equivalent to $\ddot{U}_1^k(s_1^k)$.
\end{proof}

\subsubsection{Proof of Lemma~\ref{lma:sum of ddotU}}
\label{appx:proof of sum of ddotU}
\begin{proof}[Proof of Lemma~\ref{lma:sum of ddotU}]
    Under $\Ecal_4(\{\ddot{\beta}_h^k\}_{k, h}, 8 \Vmax, \delta)$ (\lemmaref{lma:sum of J with eta}), we have
    \begin{align}
        \sum_{k=\kap(\delta)+1}^K \ddot{U}_1^k(s_1^k) \le 2 \sum_{k=\kap(\delta) + 1}^K \sum_{h=1}^{\eta^k - 1} \ddot{\beta}_h^k(s_h^k, a_h^k) + 48 \Vmax SA \log \frac{16H}{\delta}
        \label{eq:sum of ddotU 1}
        \, .
    \end{align}
    Note that $\sum_{h=1}^{\eta^k - 1} \ddot{\beta}_h^k(s_h^k, a_h^k)$ is either $\frac{72 \Qrange \ell_1(\iota_k \delta)}{c_{1, k}}$ or $\sum_{h = B^k}^{\eta^k - 1} ( 8 \beta^k(s_h^k, a_h^k) - \rgap_h(s_h^k, a_h^k))_+$.
    For large enough $c_{1, k}$, specifically if $\vgap > \frac{36 \Qrange \ell_1(\iota_k, \delta)}{c_{1, k}}$, then $E^k$ always occur.
    Therefore, we bound the sum of $\ddot{\beta}_h^k(s_h^k, a_h^k)$ as
    \begin{align}
        \sum_{h=1}^{\eta^k - 1} \ddot{\beta}_h^k(s_h^k, a_h^k) & \le \ind \left\{ \vgap \le \frac{36 \Qrange \ell_1(\iota_k, \delta)}{c_{1, k}} \right\} \frac{72 \Qrange \ell_1(\iota_k, \delta)}{c_{1, k}} 
        \nonumber
        \\
        & \qquad + \sum_{h=B^k}^{\eta^k - 1} (8 \beta^k(s_h^k, a_h^k) - \rgap_h(s_h^k, a_h^k))_+
        \label{eq:sum of ddotU 2}
    \end{align}
    By the definition of $\kapgap(\vgap, \delta)$, we have
    \begin{align}
        \sum_{k=\kap(\delta) + 1}^K \ind \left\{ \vgap \le \frac{36 \Qrange \ell_1(\iota_k, \delta)}{c_{1, k}} \right\} \frac{72 \Qrange \ell_1(\iota_k, \delta)}{c_{1, k}} \le \sum_{k = \kap(\delta) + 1}^{\kapgap(\vgap, \delta) \land K} \frac{72 \Qrange \ell_1(\iota_k, \delta)}{c_{1, k}}
        \label{eq:sum of ddotU 3}
        \, .
    \end{align}
    Now, we rearrange the sum of $(8\beta^k(s_h^k, a_h^k) - \rgap_h(s_h^k, a_h^k))_+$ by grouping the same state-action pairs as follows:
    \begin{align*}
        & \sum_{k=\kap(\delta)+1}^K \sum_{h=B^k}^{\eta^k - 1} (8 \beta^k(s_h^k, a_h^k) - \rgap_h(s_h^k, a_h^k))_+
        \\
        & \le \sum_{(s, a) \in \Scal \times \Acal} \sum_{n=2}^{N^{K+1}(s, a)} \left(8 \left( \frac{4 c_{1, K}}{n} \land \frac{2\sqrt{2} c_{2, K}}{n} + \frac{18\Vmax S \ell_2(K, \delta)}{n} \right)- \rgap(s, a) \right) _+
        \, ,
    \end{align*}
    where we use Eq.~\eqref{eq:beta eta} to bound $\beta^k(s, a) \ind\{ h < \eta^k\}$.
    Focusing on one state-action pair, we have
    \begin{align}
        & \sum_{n=2}^{N^{K+1}(s, a)} \left(8 \left(  \frac{4 c_{1, K}}{n} \land \frac{2\sqrt{2}c_{2, K}}{\sqrt{n}} + \frac{18 \Vmax S \ell_2(K, \delta)}{n} \right) - \rgap(s, a) \right)_+
        \nonumber
        \\
        & \le \sum_{n=2}^{N^{K+1}(s, a)} \left( \frac{ 32 c_{1, K}}{n} \land \frac{ 16 \sqrt{2} c_{2, K}}{\sqrt{n}} - \rgap(s, a) \right)_+ + \sum_{n=2}^{N^{K+1}(s, a)} \frac{144 \Vmax S \ell_2(K, \delta)}{n}
        \label{eq:sum of ddot U one pair}
        \, .
    \end{align}
    Using properties of maximums and minimums, we have
    \begin{align*}
        & \sum_{n=2}^{N^{K+1}(s, a)} \left( \frac{32 c_{1, K}}{n} \land \frac{ 16 \sqrt{2} c_{2, K}}{\sqrt{n}} - \rgap(s, a) \right)_+ 
        \\
        & = \sum_{n=2}^{N^{K+1}(s, a)} \left( \left(\frac{32 c_{1, K}}{n} -\rgap \right) \land \left( \frac{ 16 \sqrt{2} c_{2, K}}{\sqrt{n}} - \rgap(s, a)\right) \right)_+ 
        \\
        & = \sum_{n=2}^{N^{K+1}(s, a)} \left(\frac{32 c_{1, K}}{n} -\rgap \right)_+ \land \left( \frac{ 16 \sqrt{2} c_{2, K}}{\sqrt{n}} - \rgap(s, a)\right)_+
        \\
        & \le \left( \sum_{n=2}^{N^{K+1}(s, a)} \left( \frac{32 c_{1, K}}{n}  - \rgap(s, a) \right)_+ \right) \land \left(\sum_{n=2}^{N^{K+1}(s, a)} \left( \frac{ 16 \sqrt{2} c_{2, K}}{\sqrt{n}} - \rgap(s, a) \right)_+  \right)
        \, .
    \end{align*}
    Using Lemma~\ref{lma:sum and integral clip}, the first sum is bounded as
    \begin{align*}
        \sum_{n=2}^{N^{K+1}(s, a)} \left( \frac{32 c_{1, K}}{n}  - \rgap(s, a) \right)_+ \le 32 c_{1, K} \log \frac{32 c_{1, K}}{\rgap(s, a)}
        \, ,
    \end{align*}
    and the second sum is bounded as
    \begin{align*}
        \sum_{n=2}^{N^{K+1}(s, a)} \left( \frac{ 16 \sqrt{2} c_{2, K}}{\sqrt{n}} - \rgap(s, a) \right)_+ 
        & \le \frac{1024 c_{2, K}^2}{\rgap(s, a)}
        \, .
    \end{align*}
    Now, we bound the remaining sum in Eq.~\eqref{eq:sum of ddot U one pair} as follows:
    \begin{align*}
        \sum_{n=2}^{N^{K+1}(s, a)} \frac{144 \Vmax S \ell_2(K, \delta)}{n}
         \le 144 \Vmax S \ell_2(K, \delta) \log KH
         \, .
    \end{align*}
    Plugging these bounds back to Eq.~\eqref{eq:sum of ddot U one pair}, we obtain that the sum of $(8\beta^k(s_h^k, a_h^k) - \rgap_h(s, a))_+$ for one state-action pair is bounded as
    \begin{align*}
        \left( 32 c_{1, K} \log \frac{32 c_{1, K}}{\rgap(s, a)} \right) \land \left(\frac{1024 c_{2, K}^2}{\rgap(s, a)} \right) + 144 \Vmax S \ell_2(K, \delta) \log KH
        \, ,
    \end{align*}
    and taking the sum over all $(s, a) \in \Scal \times \Acal$, we obtain that
    \begin{align}
        & \sum_{k = \kap(\delta) + 1}^K \sum_{h=B^k}^{\eta^k-1} (8 \beta^k(s_h^k, a_h^k) - \rgap(s_h^k, a_h^k))_+
        \nonumber
        \\
        & \le \sum_{(s, a) \in \Scal \times \Acal} \left( 32 c_{1, K} \log \frac{32 c_{1, K}}{\rgap(s, a)}\right) \land \left( \frac{1024 c_{2, K}^2}{\rgap(s, a)}\right) + 144 \Vmax S^2 A \ell_2(K, \delta) \log KH
        \label{eq:sum of ddotU 4}
        \, .
    \end{align}
    Combining Eq.~\eqref{eq:sum of ddotU 1}, Eq.~\eqref{eq:sum of ddotU 2}, Eq.~\eqref{eq:sum of ddotU 3}, and Eq.~\eqref{eq:sum of ddotU 4}, we obtain that
    \begin{align*}
        & \sum_{k=\kap(\delta)+1}^{K} \ddot{U}_1^k(s_1^k)
        \\
        & \le \sum_{k=\kap(\delta)+1}^{\kapgap(\vgap, \delta) \land K} \frac{144 \Qrange \ell_1(\iota_k, \delta)}{c_{1, k}} 
        \\
        & \qquad + \sum_{(s, a) \in \Scal \times \Acal} \left( 64 c_{1, K} \log \frac{32 c_{1, K}}{\rgap(s, a)}\right) \land \left( \frac{2048 c_{2, K}^2}{\rgap(s, a)}\right) 
        \\
        & \qquad + 288 \Vmax S^2 A \ell_2(K, \delta) \log KH + 48 \Vmax SA \log \frac{16H}{\delta}
        \, .
    \end{align*}
    We bound the last term as $48 \Vmax SA \log \frac{16H}{\delta} \le 48 \Vmax S^2 A \ell_2(K, \delta) \le 288(\log 2) \Vmax S^2 A \ell_2(K, \delta)$, which leads to
    \begin{align*}
        288 \Vmax S^2 A \ell_2(K, \delta) \log KH + 48 \Vmax SA \log \frac{16H}{\delta}
        & \le 288 \Vmax S^2 A \ell_2(K, \delta)\log 2KH
        \, .
    \end{align*}
\end{proof}

\subsection{Possibility of Obtaining Better Second-order Terms}
We note that the ${\Ocal(\Vmax S^2 A \ell_2(K, \delta) \log KH)}$ terms in Theorems~\ref{thm:gap-independent bound formal} and~\ref{thm:instance dependent bound formal}, and consequently those in Corollaries~\ref{cor:c_1 only 1},~\ref{cor:eqo and hoeffding formal}, and~\ref{cor:simchi levi formal}, can be replaced with $\Ocal(\Vmax SA (S\log KH + \log \frac{HSA}{\delta}) \log KH)$ by substituting \lemmaref{lma:concentration for TV} with Lemmas~3 and~10 in \citet{menard2021fast}.
Specifically, a $\Ocal(S \log \frac{HSA\log K}{\delta})$ factor can be replaced by a $\Ocal(S\log KH + \log \frac{HSA}{\delta})$ factor.
This modification improves the coefficient of the $\log (1/\delta)$ term by a factor of $S$ at the cost of an additional $\Ocal(S \log KH)$ term.
While further improvement to $\Ocal(S \log \log KH + \log \frac{HSA}{\delta})$ may be possible, which would yield a strict improvement over our current results, we do not pursue optimizing the second-order terms in this work.

\section{Proof of Corollaries in Section~\ref{sec:corollaries}}
\label{appx:corollaries}
In this section, we prove Corollaries~\ref{cor:c_1 only 1}--\ref{cor:simchi levi}.

While plugging in the values of $c_{1, K}$ and $c_{2, K}$ is straightforward, bounding $\kap(\delta)$ is non-trivial.
First, we present a simple technique that circumvents the direct computation of $\kap(\delta)$ in certain cases.
For simplicity, we denote the condition in the definition of $\kap(\delta)$ by $I(k, \delta) :=\ind\{ c_{1, k} <  (2 \sqrt{13 \Qdiff} \lor 2 \Valpha)\ell_1(1, \delta) \text{ or } c_{2, k} < \left( \sigsub + \frac{6 \Qdiff \ell_1(\iota_k, \delta)}{c_{1, k}} \right) \sqrt{\ell_2(k, \delta)} \} $.
Then, we have

\begin{align*}
    & \Vmax (\kap(\delta) \land K) 
    \\
    & = \sum_{k=1}^{K} \Vmax I(k, \delta)
    \\
    & = \sum_{k=1}^{\kap(\delta)} \frac{18 \Qrange \ell_1(\iota_k, \delta)}{c_{1, k}} + \sum_{k=1}^K \left( \Vmax - \frac{18 \Qrange \ell_1(\iota_k, \delta)}{c_{1, k}} \right) I(k, \delta)
    \\
    & \le \sum_{k=1}^{\kap(\delta)} \frac{18 \Qrange \ell_1(\iota_k, \delta)}{c_{1, k}} + \sum_{k=1}^K \left( \Vmax - \frac{18 \Qrange \ell_1(\iota_k, \delta)}{c_{1, k}} \right)_+ I(k, \delta)
    \, .
\end{align*}
Now, we consider a new condition $\ind\{ \Vmax \ge \frac{18\Qrange\ell_1(\iota_k, \delta)}{c_{1, k}}\} I(k, \delta)$.
The resulting condition on $c_{1, k}$ can be written as $c_{1, k} < (2 \sqrt{13 \Qdiff} \lor 2 \Valpha) \ell_1(1, \delta)$ and $c_{1, k} \ge \frac{18\Qrange \ell_1(\iota_k, \delta)}{\Vmax}$.
When considering the worst-case regret, we must upper-bound $\Qrange$ by $2 \Vmax^2$, and this condition becomes impossible to satisfy since $2\sqrt{13\Qdiff} \lor 2\Valpha \le 11 \Vmax \le \frac{18\Qrange}{\Vmax}$.
The condition on $c_{2, k}$ can be written as $c_{2, k} < ( \sigsub + \frac{6 \Qdiff \ell_1(\iota_k, \delta)}{c_{1, k}})\sqrt{\ell_2(k, \delta)}$ and $c_{1, k} \ge \frac{18\Qrange \ell_1(\iota_k, \delta)}{\Vmax}$.
Using that $\Qdiff \le \Qrange$, these two conditions imply that $c_{2, k} < (\sigsub + \frac{\Vmax}{3})\sqrt{\ell_2(k, \delta)}$, which leads to a simpler calculation.
We summarize this discussion by the following lemma:

\begin{lemma}
\label{lma:kappa}
    Let
    \begin{align*}
        & \kappa_1(\delta) := \max \left\{ k \in \NN : \frac{18 \Qrange \ell_1(\iota_k, \delta)}{\Vmax} <  c_{1, k} <(2 \sqrt{13 \Qdiff} \lor 2 \Valpha) \ell_1(1, \delta) ) \right\}
        \\
        & \kappa_2(\delta) := \max \left\{ k \in \NN : c_{2, k} < (\sigsub + \frac{\Vmax}{3})\sqrt{\ell_2(k, \delta)} \right\}
        \, .
    \end{align*}
    Then, we have
    \begin{align*}
        \Vmax \kap(\delta) \le \sum_{k=1}^{\kap(\delta)} \frac{18 \Qrange \ell_1(\iota_k, \delta)}{c_{1, k}} + \Vmax( \kap_1(\delta) \lor \kap_2(\delta))
        \, . 
    \end{align*}
\end{lemma}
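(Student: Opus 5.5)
The plan is to formalize the informal discussion that precedes the lemma: split the warm-up count into episodes where $\Vmax$ is already dominated by the quasi-optimism term, and episodes where it is not. Write $\Vmax\kap(\delta) = \sum_{k=1}^{\kap(\delta)} \Vmax$. Since $\kap(\delta)$ is the largest $k$ with $I(k,\delta)=1$, every $k\le\kap(\delta)$ contributes $\Vmax$. For each such $k$, add and subtract $q_k := \frac{18\Qrange\ell_1(\iota_k,\delta)}{c_{1,k}}$, which gives
\begin{align*}
    \Vmax\kap(\delta) \le \sum_{k=1}^{\kap(\delta)} q_k + \sum_{k=1}^{\kap(\delta)} (\Vmax - q_k)_+ .
\end{align*}
It then suffices to show that $(\Vmax - q_k)_+$ vanishes unless $k \le \kap_1(\delta)\lor\kap_2(\delta)$. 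Bounding each remaining term by $\Vmax$ would give a count of at most $\kap_1(\delta)\lor\kap_2(\delta)$.

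There is a gap in that last step, which I would handle as follows. The set of $k\le\kap(\delta)$ with $(\Vmax-q_k)_+>0$ must be shown to be contained in $\{k : k\le \kap_1(\delta)\}\cup\{k : k \le \kap_2(\delta)\}$. Its cardinality is then at most $\kap_1(\delta)\lor\kap_2(\delta)$, since both sets are initial segments of $\NN$. This is why the bound is a $\max$ and not a sum.

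Now fix $k\le\kap(\delta)$ with $q_k<\Vmax$, i.e.\ $c_{1,k} > \frac{18\Qrange\ell_1(\iota_k,\delta)}{\Vmax}$. Since the sequences are non-decreasing, $k\le\kap(\delta)$ should give $I(k,\delta)=1$. I would double-check this: $\kap(\delta)$ is defined as the maximal such index, so intermediate indices might fail the condition. If they do, those terms should simply be bounded trivially, by noting that $I(k,\delta)=0$ for them only helps. A cleaner route is to sum over $k$ with $I(k,\delta)=1$, which is what the preceding display does. Then split into the two disjuncts of $I$.

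In the first case, $c_{1,k} < (2\sqrt{13\Qdiff}\lor 2\Valpha)\ell_1(1,\delta)$. Together with $c_{1,k} > 18\Qrange\ell_1(\iota_k,\delta)/\Vmax$, $k$ lies in the defining set of $\kap_1(\delta)$, so $k\le\kap_1(\delta)$.

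In the second case, $c_{2,k} < (\sigsub + \frac{6\Qdiff\ell_1(\iota_k,\delta)}{c_{1,k}})\sqrt{\ell_2(k,\delta)}$. Use $\Qdiff\le\Qrange$ and $c_{1,k}>18\Qrange\ell_1(\iota_k,\delta)/\Vmax$ to get
\begin{align*}
    \frac{6\Qdiff\ell_1(\iota_k,\delta)}{c_{1,k}} < \frac{6\Qrange\ell_1(\iota_k,\delta)\Vmax}{18\Qrange\ell_1(\iota_k,\delta)} = \frac{\Vmax}{3}.
\end{align*}
Hence $c_{2,k} < (\sigsub + \Vmax/3)\sqrt{\ell_2(k,\delta)}$, and so $k\le\kap_2(\delta)$. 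The degenerate case $\Qrange=0$ makes the ratio trivially zero, so it is covered as well.

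The main obstacle is bookkeeping rather than mathematics. The issue is that $\kap(\delta)$, $\kap_1(\delta)$, $\kap_2(\delta)$ are defined as maxima rather than counts, while $\iota_k$ depends on $k$ and $\delta$. I would therefore phrase everything in terms of indicator sums and use only the inclusion
\begin{align*}
    \{k : I(k,\delta)=1,\ q_k<\Vmax\} \subseteq [\kap_1(\delta)]\cup[\kap_2(\delta)].
\end{align*}
I would bound by $\Vmax$ the terms with $k\le\kap(\delta)$ where $I(k,\delta)=0$ only if they also satisfy $q_k<\Vmax$. If non-monotonicity of $I$ causes trouble there, I would fall back on the observation that the first display in the discussion already rewrote $\Vmax(\kap(\delta)\land K)$ as $\sum_k \Vmax I(k,\delta)$, which I would adopt as the effective definition.
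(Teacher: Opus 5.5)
Your decomposition $\Vmax\kap(\delta)\le\sum_{k\le\kap(\delta)}q_k+\sum_{k\le\kap(\delta)}(\Vmax-q_k)_+$ is the paper's argument, as are the split along the two disjuncts of $I(k,\delta)$, the bound $6\Qdiff\ell_1(\iota_k,\delta)/c_{1,k}<\Vmax/3$ via $\Qdiff\le\Qrange$, and the initial-segment reason for the $\max$. The gap is the step you flag and then wave through. You need $I(k,\delta)=1$ for every $k\le\kap(\delta)$ with $q_k<\Vmax$, and non-decreasing $c_{1,k},c_{2,k}$ do not give this. The threshold in the second disjunct depends on $k$ through $\ell_2(k,\delta)$ and through $\ell_1(\iota_k,\delta)/c_{1,k}$. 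The latter jumps up by a factor of about $\ell_1(\iota+1,\delta)/\ell_1(\iota,\delta)$ when $c_{1,k}$ passes the right endpoint $4\ell_1(\iota,\delta)/\lambda_\iota$ of an interval in the construction of $\iota_k$. Neither of your patches works. An index $k\le\kap(\delta)$ with $I(k,\delta)=0$ does not ``only help'': it still contributes $(\Vmax-q_k)_+>0$ to your second sum, and nothing on the right-hand side pays for it. Summing only over $\{k:I(k,\delta)=1\}$ instead bounds $\Vmax\,\#\{k\le\kap(\delta):I(k,\delta)=1\}$, which can be strictly smaller than $\Vmax\kap(\delta)$.

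This is more than bookkeeping: as literally stated, the inequality can fail. Take suitable constants, e.g.\ $\Qdiff=\Qrange$, large relative to $\Vmax^2$ and $\Valpha\Vmax$. Set $c_{1,1}=\ell_1(1,\delta)/\lambda_1+13\lambda_1\Qdiff\ell_1(1,\delta)$, which is above the first threshold, and $c_{1,k}=4\ell_1(1,\delta)/\lambda_1$ for $2\le k\le m$. Tune $\lambda_1$ so that $q_2<\Vmax<q_2\,\ell_1(2,\delta)/\ell_1(1,\delta)$. Let $c_{1,m+1}$ exceed $4\ell_1(1,\delta)/\lambda_1$ slightly, so that $\iota_{m+1}=2$ and $q_{m+1}>\Vmax$. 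Let $c_{2,k}=(\sigsub+\theta\Vmax)\sqrt{\ell_2(k,\delta)}$ for $k\le m+1$ with $\theta\in[1/3,\,q_{m+1}/(3\Vmax))$, and huge afterwards. Then $I(k,\delta)=0$ for $2\le k\le m$ and $I(m+1,\delta)=1$, so $\kap(\delta)=m+1$ while $\kap_1(\delta)=\kap_2(\delta)=0$. Using $q_1\le 4q_2$ and $q_{m+1}<2q_2$, the right-hand side is below $(m-1)q_2+6\Vmax<(m+1)\Vmax$ for large $m$.

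The paper's own derivation contains the same hidden assumption in its first equality $\Vmax(\kap(\delta)\land K)=\sum_{k\le K}\Vmax I(k,\delta)$. The repair is to make your fallback official. In the two RL theorems, replace $\Vmax(\kap(\delta)\land K)$ by $\Vmax\sum_{k\le K}I(k,\delta)$ and restrict the $q_k$-sums to $\{k:I(k,\delta)=0\}$. This is legitimate because the quasi-optimism lemma and the bound on $V_h^k-V_h^{\pi^k}$ use only $I(k,\delta)=0$ at the episode in question. Then prove $\Vmax\sum_{k\le K}I(k,\delta)\le\sum_{k\le K,\,I(k,\delta)=1}q_k+\Vmax\bigl((\kap_1(\delta)\lor\kap_2(\delta))\land K\bigr)$; your case analysis proves this completely. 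Alternatively, add a hypothesis ensuring $I(\cdot,\delta)$ is non-increasing in $k$.
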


\subsection{Proof of Corollary~\ref{cor:c_1 only 1}}

\begin{proof}[Proof of Corollary~\ref{cor:c_1 only 1}]
    By Lemma~\ref{lma:kappa}, we upper bound $\Vmax \kap(\delta)$ by $\sum_{k=1}^{\kap(\delta)} \frac{18 \Vmax^2}{c_{1, k}}$, where plugging in $\Qrange = 2 \Vmax^2$ and $c_{2, k} = \infty$ yields $\kappa_1(\delta) = \kappa_2(\delta) = 0$.
    Then, Theorem~\ref{thm:gap-independent bound formal} implies that
    \begin{align*}
        \Reg_{\Mcal}^{\Alg}(K, \delta) \le \sum_{k=1}^{K} \frac{18 \Vmax^2 \ell_1(\iota_k, \delta)}{c_{1, k}} +  16 c_{1, K}SA \log KH + 72 \Vmax S^2 A \ell_2(K, \delta) \log 2KH
    \end{align*}
    We plug in $c_{1, k} = c_1 \Vmax \sqrt{\frac{K \ell_1(\iota_k, 1)}{SA \log kH}}$ and bound each term.
    The first sum is bounded as
    \begin{align*}
        \sum_{k=1}^K \frac{18 \Vmax^2 \ell_1(\iota_k, \delta)}{c_{1, k}}
        & \le 18 \Vmax \sqrt{SA \log KH} \sum_{k=1}^K \frac{\ell_1(\iota_k, \delta)}{\sqrt{k \ell(\iota_k, 1)}}
        \\
        & \le 18 \Vmax \sqrt{SA \log KH} \sum_{k=1}^K  \left( \frac{\log \frac{1}{\delta}}{c_1 \sqrt{k \ell(1, 1)}} + \frac{\sqrt{\ell_1(\iota_k, 1)}}{c_1 \sqrt{k}} \right)
        \\
        & \le 18 \Vmax \sqrt{SA \log KH} \cdot 2 \sqrt{K} \left( \frac{\log \frac{1}{\delta}}{c_1\sqrt{\ell_1(1, 1)} } + \frac{\sqrt{\ell_1(\iota_K, 1)}}{c_1} \right)
        \\
        &  =  36 \left( \frac{\log \frac{1}{\delta}}{c_1\sqrt{\ell_1(1, 1)}} + \frac{\sqrt{\ell_1(\iota_K, 1)}}{c_1} \right)\Vmax \sqrt{SAK \log KH}
        \, .
    \end{align*}
    The second term becomes
    \begin{align*}
        8c_{1, K} SA \log KH  & = 8 c_1 \Vmax \sqrt{SA K(\log KH) \ell_1(\iota_K, 1)} 
        \, .
    \end{align*}
    Therefore, we derive the total bound as
    \begin{align*}
        \Reg_M^{\Alg}(K, \delta) & \le \left( \frac{36 \log \frac{1}{\delta}}{c_1 \ell(1, 1)} + \frac{36}{c_1} + 8 c_1 \right)\Vmax \sqrt{KSA \ell(\iota_K, 1) \log KH}
        \\
        & \qquad + 72 \Vmax S^2 A \ell_2(K, \delta) \log 2KH
        \, .
    \end{align*}
\end{proof}

\subsection{Proof of Corollary~\ref{cor:eqo and hoeffding}}

\begin{corollary}[Restatement of Corollary~\ref{cor:eqo and hoeffding}]
\label{cor:eqo and hoeffding formal}
    Assume that $\sigsub^2 \le 2 \Vmax^2$.
    Set $c_{1, k} = c_1 \Vmax \sqrt{\frac{k \ell_1(k, 1)}{SA \log KH}}$ and $c_{2, k} = c_2 \Vmax \sqrt{\log 32HSAk}$ for some constants $c_1 > 0$ and $c_2 \ge 2$.
    Then, we have
    \begin{align*}
        \Reg_M^{\Alg}(K, \delta) & \le \Vmax\exp\left( \Ocal\left(\frac{1}{c_2^2} \log \frac{1}{\delta} \right)\right) +\Ocal \left( \sum_{(s, a) \in \Scal \times \Acal}\frac{c_2^2 \Vmax^2 \log HSAK}{\rgap(s, a)} \right)
        \\
        & \qquad + \Ocal\left( \left( \frac{{\Qrange}^2}{\vgap \Vmax^2 } + \frac{\Qdiff \lor \Valpha^2}{\Vmax} \right) \left( \frac{(\ell_1(\iota_K, \delta))^2}{c_1^2 \ell_1(1, 1)} \right) SA( \log KH) \right)
        \\
        & \qquad +\Ocal \left(\Vmax S^2 A \ell_2(K, \delta) \log KH \right)
        \, .
    \end{align*}
\end{corollary}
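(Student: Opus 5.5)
The plan is to instantiate Theorem~\ref{thm:instance dependent bound formal} with the given $c_{1,k}$ and $c_{2,k}$ and bound its four terms one at a time. The $c_{2,K}^2/\rgap(s,a)$ branch of the minimum gives $2048 c_2^2\Vmax^2\log(32HSAK)/\rgap(s,a)$ directly, which produces the $\Ocal(\sum_{s,a} c_2^2\Vmax^2\log HSAK/\rgap(s,a))$ term. The $288\Vmax S^2A\ell_2(K,\delta)\log 2KH$ term is already in the required form. The substantive work is therefore in the warm-up term $\Vmax(\kap(\delta)\land K)$ and the quasi-optimism sum $\sum_{k=\kap(\delta)+1}^{\kapgap(\vgap,\delta)\land K}144\Qrange\ell_1(\iota_k,\delta)/c_{1,k}$.

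For the warm-up term, I would use Lemma~\ref{lma:kappa}. It gives
\[
\Vmax\kap(\delta)\le\sum_{k\le\kap(\delta)}\frac{18\Qrange\ell_1(\iota_k,\delta)}{c_{1,k}}+\Vmax\bigl(\kap_1(\delta)\lor\kap_2(\delta)\bigr).
\]
The first piece joins the quasi-optimism sum. It is only summed up to $\kap(\delta)$, and I will handle it together with that sum.

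For $\kap_2(\delta)$, the condition is $c_2\Vmax\sqrt{\log 32HSAk}<(\sigsub+\Vmax/3)\sqrt{\ell_2(k,\delta)}$. Since $\sigsub\le\sqrt2\Vmax$, the constant $\sigsub+\Vmax/3$ is at most $2\Vmax$. Squaring, the condition reads $c_2^2\log(32HSAk)\lesssim 4\log\frac{32HSA(\log e^2kH)^2}{\delta}$. With $c_2\ge2$, the $\log(HSA)$ and $\log\log k$ parts are absorbed, so the condition fails once $(c_2^2-4)\log k\gtrsim 4\log\frac1\delta$ plus lower-order terms. This gives $\kap_2(\delta)\le\exp(\Ocal(c_2^{-2}\log\frac1\delta))$. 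Here I would either assume $c_2$ strictly exceeds the threshold or absorb the borderline case into the $\Ocal$ constant.

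For $\kap_1(\delta)$, I need the range of $k$ with $18\Qrange\ell_1(\iota_k,\delta)/\Vmax<c_{1,k}<(2\sqrt{13\Qdiff}\lor2\Valpha)\ell_1(1,\delta)$. On that range $c_{1,k}\lesssim\sqrt{\Qdiff\lor\Valpha^2}\,\ell_1(1,\delta)$. Using $c_{1,k}=c_1\Vmax\sqrt{k\ell_1(k,1)/(SA\log KH)}$, this yields $k\lesssim(\Qdiff\lor\Valpha^2)\ell_1^2 SA\log KH/(c_1^2\Vmax^2\ell_1(1,1))$. Multiplying by $\Vmax$ gives the $(\Qdiff\lor\Valpha^2)/\Vmax$ part of the third term.

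The quasi-optimism sum runs over $k\le\kapgap(\vgap,\delta)\land K$. By the definition of $\kapgap$, on this range $c_{1,k}<36\Qrange\ell_1(\iota_k,\delta)/\vgap$, which gives $\sqrt{k}\lesssim\Qrange\ell_1\sqrt{SA\log KH}/(c_1\Vmax\vgap\sqrt{\ell_1(1,1)})$. The sum behaves like $\frac{\Qrange\ell_1\sqrt{SA\log KH}}{c_1\Vmax\sqrt{\ell_1(1,1)}}\sum_{k\le\kapgap}k^{-1/2}\approx(\cdot)\cdot2\sqrt{\kapgap}$. Substituting the bound on $\sqrt{\kapgap}$ gives $\Qrange^2\ell_1^2SA\log KH/(c_1^2\Vmax^2\vgap\,\ell_1(1,1))$, which matches the stated term.

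I expect this last step to be the main obstacle. The logarithmic factor $\ell_1(\iota_k,\delta)$ varies with $k$, and $\ell_1(k,1)$ sits inside $c_{1,k}$. To control this I would use $\iota_k\le 3+\log_2(c_{1,k}/c_{1,1})=\Ocal(\log K)$ from Lemma~\ref{lma:choice of lambda}, so that $\ell_1(\iota_k,\delta)\le\ell_1(\iota_K,\delta)$ and $\ell_1(k,1)\ge\ell_1(1,1)$. I also need to check that the extra piece $\sum_{k\le\kap(\delta)}$ coming from Lemma~\ref{lma:kappa} either lies within the range $k\le\kapgap$, or is bounded the same way via $\kap_1$, so that it is not double counted.
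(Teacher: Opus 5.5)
Your plan is the paper's own proof. You instantiate Theorem~\ref{thm:instance dependent bound formal}, take the $c_{2,K}^2/\rgap(s,a)$ branch of the minimum, and control $\kap(\delta)$ through Lemma~\ref{lma:kappa} using $\sigsub+\Vmax/3\le 2\Vmax$. You bound $\kap_1(\delta)$ from the upper condition $c_{1,k}<(2\sqrt{13\Qdiff}\lor2\Valpha)\ell_1(1,\delta)$. You bound the $\kapgap$-sum by a $\sum_{k\le\kapgap}k^{-1/2}$ computation; the paper does this via Lemma~\ref{lma:sum and integral clip} with $\varepsilon=4\vgap$ and $\alpha=\tfrac12$, which amounts to your $2\sqrt{\kapgap}$ argument. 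Two steps need fixing.

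\textbf{The $\kap_2$ step.} The factor $(c_2^2-4)\log k$ is an arithmetic slip. $\ell_2(k,\delta)$ depends on $k$ only through $(\log e^2kH)^2$, so the relaxed condition reads
\[
(c_2^2-4)\log(32HSA)+c_2^2\log k<8\log\log(e^2kH)+4\log\tfrac1\delta .
\]
The assumption $c_2\ge2$ is used only to drop the first term and to keep the $\log\log$ coefficient at most $2$ after dividing by $c_2^2$. The coefficient of $\log k$ stays $c_2^2$, so there is no borderline case at $c_2=2$. With your coefficient the exponent would be of order $\log(1/\delta)/(c_2^2-4)$. That blows up as $c_2\downarrow2$ and cannot be absorbed into an $\Ocal$ constant.

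\textbf{The extra sum from Lemma~\ref{lma:kappa}.} The loose end you flagged is real; the paper's written proof does not address it either. Neither of your suggested resolutions works in general. The $c_{2,k}$-condition alone can make $\kap(\delta)$ polynomial in $1/\delta$, whereas $\kapgap(\vgap,\delta)$ and $\kap_1(\delta)$ grow only polylogarithmically in $1/\delta$. So for small $\delta$ the range $k\le\kap(\delta)$ can lie neither inside $k\le\kapgap(\vgap,\delta)$ nor be controlled by $\kap_1(\delta)$.

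The missing observation is $\vgap\le\Vmax$. Run the argument of Lemma~\ref{lma:kappa} term by term over $k\le\kap(\delta)\land K$:
\begin{itemize}
\item When $c_{1,k}\le18\Qrange\ell_1(\iota_k,\delta)/\Vmax$, charge $\Vmax\le18\Qrange\ell_1(\iota_k,\delta)/c_{1,k}$.
\item Otherwise, charge $\Vmax$ to the count $\kap_1(\delta)\lor\kap_2(\delta)$, exactly as in that lemma's proof.
\end{itemize}
Every $k$ of the first kind satisfies $c_{1,k}<36\Qrange\ell_1(\iota_k,\delta)/\vgap$ because $\vgap\le\Vmax$, hence $k\le\kapgap(\vgap,\delta)$. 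So the extra piece is contained in the $\kapgap$-sum you already bound.

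An equivalent route: bound the extra piece by $C\sqrt{\kap(\delta)\land K}$, with $C$ of the same form as your prefactor. Then use $C\sqrt{x}\le\tfrac12\Vmax x+C^2/(2\Vmax)\le\tfrac12\Vmax x+C^2/(2\vgap)$ and move $\tfrac12\Vmax(\kap(\delta)\land K)$ to the left-hand side.

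A minor point: $\ell_1(\iota_k,\delta)\le\ell_1(\iota_K,\delta)$ follows from $\iota_k$ being non-decreasing in $k$, not from the bound $\iota_k\le3+\log_2(c_{1,k}/c_{1,1})$. Monotonicity holds because the least $i$ with $c_{1,k}\le4\ell_1(i,\delta)/\lambda_i$ is non-decreasing in $c_{1,k}$.
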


\begin{proof}
    By Theorem~\ref{thm:gap-independent bound formal}, we have
    \begin{align*}
        \Reg_M^{\Alg}(K, \delta)
        & = \Vmax (\kap(\delta) \land K) + \sum_{k = \kap(\delta)+1}^{\kapgap(\vgap, \delta) \land K} \frac{144 \Qrange \ell_1(\iota_k, \delta)}{c_{1, k}}
        \\
        & \qquad + \sum_{(s, a) \in \Scal \times \Acal}  \left( \frac{2048 c_{2, K}^2}{\rgap(s, a)} \land 64 c_{1, K} \log \frac{32 c_{1, K} }{\rgap(s, a)} \right) + 288 \Vmax S^2 A \ell_2(K, \delta) \log 2KH
        \\
        & = \Ocal \Bigg( \Vmax(\kap(\delta) \land K) + \sum_{k=\kap(\delta)}^{\kapgap(\vgap, \delta) \land K} \frac{\Qrange \ell_1(\iota_k, \delta)}{c_{1, k}}
        \\
        & \qquad + \sum_{(s, a) \in \Scal \times \Acal}\frac{\Vmax^2 \log HSAK}{\rgap(s, a)} + \Vmax S^2 A \ell_2(K, \delta) \log KH \Bigg)
        \, .
    \end{align*}
    We first bound $\kap(\delta) \land K$ using Lemma~\ref{lma:kappa}.
    First, $c_{1, k} = c_1 \Vmax \sqrt{\frac{k \ell_1(\iota_k, 1)}{SA \log kH}}< (2 \sqrt{13 \Qdiff} \lor 2\Valpha)\ell_1(1, \delta)$ and $k \le K$ implies 
    \begin{align*}
        k & < \frac{SA \log kH}{\ell_1(\iota_k, 1)} \left( \frac{(2 \sqrt{13 \Qdiff} \lor 2\Valpha)\ell_1(1, \delta)}{c_1 \Vmax} \right)^2
        \\
        & = \Ocal \left(  \frac{SA (\Qdiff \lor \Valpha^2) \log KH }{\ell_1(1, 1)} \left( \frac{\ell_1(1, \delta)}{c_1 \Vmax} \right)^2 \right)
        \, ,
    \end{align*}
    which implies that $\kappa_1(\delta) \land K = \Ocal \left(  \frac{SA (\Qdiff \lor \Valpha^2) \log KH }{\ell_1(1, 1)} \left( \frac{\ell_1(1, \delta)}{c_1 \Vmax} \right)^2 \right)$.
    For $\kappa_2(\delta)$, we may relax the condition on $c_{2, k}$ using the assumption $\sigsub^2 \le 2\Vmax^2$ to $c_{2, k} = c_2 \Vmax \sqrt{\log 32HSAk} < 2 \Vmax \sqrt{\log \frac{32HSA(2 + \log kH)^2}{\delta}}$.
    Using that $c_2 \ge 2$, this condition implies that $\log k  - 2 \log (2 + \log kH) < \frac{4}{c_2^2} \log \frac{1}{\delta}$, which in turn yields $\kappa_2(\delta) = \exp( \Ocal(\frac{1}{c_2^2} \log \frac{1}{\delta}))$.
    Hence, we have
    \begin{align*}
        \Vmax (\kap(\delta) \land K) = \Vmax\exp\left( \Ocal\left(\frac{1}{c_2^2} \log \frac{1}{\delta} \right)\right)  +  \Ocal \left( \frac{SA (\Qdiff \lor \Valpha^2) (\log KH) (\ell_1(1, \delta))^2 }{c_1^2 \Vmax \ell_1(1, 1)} \right)
        \, .
    \end{align*}
    Now, we bound $\Qrange \sum_{k=1}^{\kapgap(\vgap, \delta) \land K} \frac{\ell_1(\iota_k, \delta)}{c_{1, k}}$.
    First, for $k \le K$, we have
    \begin{align*}
        \frac{\ell_1(\iota_k, \delta)}{c_{1, k}}
        & = \frac{(\ell_1(\iota_k, 1) + \log \frac{1}{\delta}) \sqrt{SA \log kH}}{c_1 \Vmax \sqrt{k \ell_1(\iota_k, 1)}}
        \\
        & \le \frac{\sqrt{SA \log KH}}{c_1 \Vmax} \left(\sqrt{\ell_1(\iota_K, 1)} + \frac{\log \frac{1}{\delta}}{\sqrt{\ell_1(1, 1)}} \right) \frac{1}{\sqrt{k}}
        \, .
    \end{align*}
    Therefore, we have
    \begin{align*}
        & \sum_{k=1}^{\kapgap(\vgap, \delta) \land K} \frac{144 \Qrange \ell_1(\iota_k, \delta)}{c_{1, k}}
        \\
        & \le \sum_{k=1}^{\kapgap(\vgap, \delta) \land K} \frac{144 \Qrange  \sqrt{SA \log KH}}{c_1 \Vmax} \left( \sqrt{\ell_1(\iota_K, 1)} + \frac{\log \frac{1}{\delta}}{\sqrt{\ell_1(1, 1)}} \right)\frac{1}{\sqrt{k}}
        \, .
    \end{align*}
    Applying Lemma~\ref{lma:sum and integral clip} with $\varepsilon = 4 \vgap$ and $\alpha = \frac{1}{2}$, we have
    \begin{align*}
        & \sum_{k=1}^{\kapgap(\vgap, \delta) \land K} \frac{144 \Qrange  \sqrt{SA \log KH}}{c_1 \Vmax} \left( \sqrt{\ell_1(\iota_K, 1)} + \frac{\log \frac{1}{\delta}}{\sqrt{\ell_1(1, 1)}} \right)\frac{1}{\sqrt{k}}
        \\
        & = \Ocal\left( \frac{{\Qrange}^2 SA( \log KH) (\ell_1(\iota_K, \delta))^2}{c_1^2  \vgap \Vmax^2\ell_1(1, 1)} \right)
        \, .
    \end{align*}
    Combining all the bounds, we obtain that
    \begin{align*}
        \Reg_M^{\Alg}(K, \delta)
        & \le \Vmax\exp\left( \Ocal\left(\frac{1}{c_2^2} \log \frac{1}{\delta} \right)\right) +\Ocal \left( \sum_{(s, a) \in \Scal \times \Acal}\frac{\Vmax^2 \log HSAK}{\rgap(s, a)} \right)
        \\
        & \qquad + \Ocal\left( \left( \frac{{\Qrange}^2}{\vgap \Vmax^2 } + \frac{\Qdiff \lor \Valpha^2}{\Vmax} \right) \left( \frac{(\ell_1(\iota_K, \delta))^2}{c_1^2 \ell_1(1, 1)} \right) SA( \log KH) \right)
        \\
        & \qquad +\Ocal \left(\Vmax S^2 A \ell_2(K, \delta) \log KH \right)
        \, .
    \end{align*}
\end{proof}

\subsection{Proof of Corollary~\ref{cor:simchi levi}}

\begin{corollary}[Formal statement of Corollary~\ref{cor:simchi levi}]
\label{cor:simchi levi formal}
    Let $c_1 > 0, c_2 > 0$, $\alpha \in [\frac{1}{2}, 1]$, and $0 < \beta \le \alpha$ be constants.
    Set $c_{1, k} = c_1 \Vmax (\frac{k}{SA})^{\alpha}$ and $c_{2, k} = c_2 \sqrt{k^{\beta}}$.
    Then, we have the worst-case distributional regret bound of
    \begin{align*}
        \sup_{M \in \Mcal} \Reg_M^{\Alg}(K, \delta) & \le \Vmax \left( \frac{4 \Vmax }{c_2} \right)^{\frac{2}{\beta}} \ell_2(K, \delta)^{\frac{1}{\beta}}
        \\
        & \qquad + \frac{36 \Vmax}{c_1} \left( \frac{1}{1-\alpha} \land \log K \right) K^{1 - \alpha} (SA)^\alpha \ell_1(\iota_K, \delta) 
        \\
        & \qquad + \left( 16 c_1 \Vmax K^\alpha (SA)^{1-\alpha} \log KH \right) \land \left( 16 \sqrt{2}c_2 \sqrt{HSA} K^{\beta + \frac{1}{2}} \right)
        \\
        & \qquad + 72 \Vmax S^2 A \ell_{2}(K, \delta) \log 2KH
        \\
        & = \tilde{\Ocal}_{K, \delta} \left( \left( \log \frac{1}{\delta} \right)^{\frac{1}{\beta}} + K^{1-\alpha} \log \frac{1}{\delta} + K^{\alpha} \land K^{\beta + \frac{1}{2}} \right)
    \end{align*}
    and the instance-dependent distributional regret bound of
    \begin{align*}
        \Reg_M^{\Alg} (K, \delta) & \le \Vmax \left( \frac{4 \Vmax }{c_2} \right)^{\frac{2}{\beta}} \ell_2(K, \delta)^{\frac{1}{\beta}}
        \\
        &\qquad + 4 SA \left( \frac{1}{1 - \alpha} \land \log K \right) \left(\frac{36 \Qrange \ell_1(\iota_K, \delta)}{c_1 \Vmax} \right)^{\frac{1}{\alpha}} \left( \frac{1}{\vgap} \right)^{\frac{1}{\alpha} -1 }
        \\
        & \qquad + \sum_{(s, a) \times \Scal \times \Acal} \frac{2048 c_2^2 K^\beta}{\rgap(s, a)} + 288 \Vmax S^2 A \ell_2(K, \delta) \log 2KH
        \\
        & = \tilde{\Ocal}_{K, \delta, \rgap} \left( \left( \log \frac{1}{\delta} \right)^{\frac{1}{\beta}} + \left(\frac{1}{\vgap}\right)^{\frac{1}{\alpha} - 1} \left( \log \frac{1}{\delta} \right)^{\frac{1}{\alpha}} + \sum_{s, a} \frac{K^\beta}{\rgap(s, a)} \right)
        \, .
    \end{align*}
\end{corollary}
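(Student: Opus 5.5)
Both bounds follow by plugging $c_{1,k}=c_1\Vmax(k/SA)^\alpha$ and $c_{2,k}=c_2\sqrt{k^\beta}$ into Theorem~\ref{thm:gap-independent bound formal} (worst case) and Theorem~\ref{thm:instance dependent bound formal} (instance-dependent). The one non-mechanical step is controlling the warm-up term $\Vmax(\kap(\delta)\land K)$. For this I would use Lemma~\ref{lma:kappa}, which splits it into three pieces: a sum $\sum_{k\le\kap(\delta)}18\Qrange\ell_1(\iota_k,\delta)/c_{1,k}$, which I merge with the existing $\sum_k 18\Qrange\ell_1/c_{1,k}$ term so that the full sum runs over $k=1,\dots,K$; the term $\Vmax\kap_1(\delta)$; and the term $\Vmax\kap_2(\delta)$.

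On $\Mcal$ we have $\Qrange\le 2\Vmax^2$, $\Valpha\le 2\Vmax$ and $\Qdiff\le\Qrange$, so $2\sqrt{13\Qdiff}\lor2\Valpha\le 11\Vmax\le 18\Qrange/\Vmax$. This makes the interval defining $\kap_1$ empty, so $\kap_1(\delta)=0$. For $\kap_2$, I bound $\sigsub\le 2\sqrt2\,\Vmax$, say, so that $\sigsub+\Vmax/3\le 4\Vmax$. The defining condition then reads $c_2 k^{\beta/2}<4\Vmax\sqrt{\ell_2(k,\delta)}\le 4\Vmax\sqrt{\ell_2(K,\delta)}$ for $k\le K$. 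This gives $\kap_2(\delta)\land K\le (4\Vmax/c_2)^{2/\beta}\ell_2(K,\delta)^{1/\beta}$, which is the first term in both displays.

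Next come the remaining terms of the gap-independent bound. Since $\iota_k\le\iota_K$, I bound $\ell_1(\iota_k,\delta)\le\ell_1(\iota_K,\delta)$. The sum then satisfies
\[
\sum_{k=1}^K \frac{36\Vmax^2\ell_1}{c_1\Vmax (k/SA)^\alpha}\le \frac{36\Vmax}{c_1}(SA)^\alpha\ell_1(\iota_K,\delta)\sum_{k\le K}k^{-\alpha},
\]
and $\sum_{k\le K}k^{-\alpha}\le(\tfrac1{1-\alpha}\land(1+\log K))K^{1-\alpha}$. The $c_{1,K}$ and $c_{2,K}$ terms are direct substitutions. Here $\iota_K\le 3+\alpha\log_2 K$ by Lemma~\ref{lma:choice of lambda}, so $\ell_1(\iota_K,\delta)=\tilde\Ocal(\log\frac1\delta)$, which yields the stated $\tilde\Ocal$ form.

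For the instance-dependent bound, take the minimum inside the $(s,a)$ sum to be the $c_{2,K}$ branch, which gives $2048c_2^2K^\beta/\rgap(s,a)$. The main obstacle is the truncated sum $\sum_{k\le\kapgap(\vgap,\delta)}144\Qrange\ell_1/c_{1,k}$, together with the $\kap(\delta)$ part that Lemma~\ref{lma:kappa} folded into it. That folded-in part is at most the same sum over $k\le\kap(\delta)$, which I will argue is $\le\kapgap$ up to constants. To handle it, bound $\kapgap$ directly from its definition: $c_1\Vmax(k/SA)^\alpha<36\Qrange\ell_1/\vgap$ gives
\[
\kapgap\le SA\Big(\frac{36\Qrange\ell_1}{c_1\Vmax\vgap}\Big)^{1/\alpha}.
\]
The sum up to $\kapgap$ is then at most
\[
\frac{144\Qrange\ell_1(SA)^\alpha}{c_1\Vmax}\Big(\tfrac1{1-\alpha}\land\log K\Big)\kapgap^{1-\alpha},
\]
and substituting the bound on $\kapgap$ gives $4SA(\cdot)\big(36\Qrange\ell_1/(c_1\Vmax)\big)^{1/\alpha}\vgap^{-(1/\alpha-1)}$. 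This produces the $(1/\vgap)^{1/\alpha-1}(\log\frac1\delta)^{1/\alpha}$ term. The delicate point is the constant bookkeeping, namely absorbing $144$ against $36^{1/\alpha}$, together with making sure the warm-up sum over $k\le\kap(\delta)$ is dominated by this same expression. I would handle the latter by noting that for $k\le\kap(\delta)$ with $\kap_1=0$, each summand is at most $\Vmax$, and treating the excess separately if needed.
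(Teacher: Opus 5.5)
Your worst-case argument follows the paper's proof: Lemma~\ref{lma:kappa} with $\kap_1(\delta)=0$, a bound on $\kap_2(\delta)$, and direct substitution. Two small corrections apply there.

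First, membership in $\Mcal$ only gives $\frac12\sigexp^2\le\Qrange\le 2\Vmax^2$, hence $\sigsub\le 4\Vmax$, not $2\sqrt2\,\Vmax$. The justified constant inside $(\cdot)^{2/\beta}$ is therefore $13/3$ rather than $4$. The paper's proof asserts the even stronger $\sigsub+\Vmax/3\le 2\Vmax$, so this looseness is shared.

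Second, your factor $1+\log K$ is actually the correct one for $\sum_{k=1}^K k^{-\alpha}$, because Lemma~\ref{lma:sum and integral} starts at $t=2$.

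The instance-dependent part has a genuine gap in how you handle the warm-up. You propose to absorb the folded-in sum $\sum_{k\le\kap(\delta)}18\Qrange\ell_1(\iota_k,\delta)/c_{1,k}$ by arguing $\kap(\delta)\lesssim\kapgap(\vgap,\delta)$. This is false in general:
\begin{itemize}
\item $\kap(\delta)$ is typically set by the $c_{2,k}$-threshold and grows like $(\log\frac1\delta)^{1/\beta}$, independently of $\vgap$.
\item $\kapgap(\vgap,\delta)$ can be far smaller; it is zero on low-variance instances where $36\Qrange\ell_1(\iota_k,\delta)/\vgap$ never exceeds $c_{1,k}$.
\item Bounding the summands on $(\kapgap,\kap]$ by $\Vmax$ just returns $\Vmax\kap(\delta)$, which is circular.
\end{itemize}
What is missing is to apply the split behind Lemma~\ref{lma:kappa} only to episodes $k\le\kapgap(\vgap,\delta)$. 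For $\kapgap(\vgap,\delta)<k\le\kap(\delta)$ you have $c_{1,k}\ge 36\Qrange\ell_1(\iota_k,\delta)/\vgap\ge 18\Qrange\ell_1(\iota_k,\delta)/\Vmax$, since $\vgap\le\Vmax$. Hence $6\Qdiff\ell_1(\iota_k,\delta)/c_{1,k}\le\Vmax/3$, so every such episode is already counted by $\kap_1(\delta)$ or $\kap_2(\delta)$. This gives
\[
\Vmax(\kap(\delta)\land K)+\sum_{k=\kap(\delta)+1}^{\kapgap(\vgap,\delta)\land K}\frac{144\Qrange\ell_1(\iota_k,\delta)}{c_{1,k}}\le\sum_{k=1}^{\kapgap(\vgap,\delta)\land K}\frac{144\Qrange\ell_1(\iota_k,\delta)}{c_{1,k}}+\Vmax\bigl((\kap_1(\delta)\lor\kap_2(\delta))\land K\bigr),
\]
which is what justifies the paper starting this sum at $k=1$ without comment. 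From there, your computation via the bound on $\kapgap$ is fine. The constants are exact, since $144\cdot 36^{1/\alpha-1}=4\cdot 36^{1/\alpha}$, so nothing is delicate there.

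The other problem is that $\kap_1(\delta)=0$ does not carry over to the instance-dependent bound. In the worst case it holds only because the converted sum is first weakened using $\Qrange\le 2\Vmax^2$. Your inequality $11\Vmax\le 18\Qrange/\Vmax$ needs a lower bound on $\Qrange$, and is true only after that replacement.

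In the instance-dependent bound you must keep $\Qrange$ to obtain $\bigl(36\Qrange\ell_1(\iota_K,\delta)/(c_1\Vmax)\bigr)^{1/\alpha}$. So $\kap_1(\delta)$ can be positive, and it leaves the term
\[
\Vmax(\kap_1(\delta)\land K)\le\Vmax SA\bigl((2\sqrt{13\Qdiff}\lor 2\Valpha)\ell_1(1,\delta)/(c_1\Vmax)\bigr)^{1/\alpha}.
\]
This term is not dominated by the displayed terms when $\Qrange\ll\Valpha\Vmax$, for example on low-variance bounded-reward instances. The paper's own proof derives this same bound on $\kap_1(\delta)\land K$, but the term is absent from the displayed explicit inequality.

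Because the term is $O((\log\frac1\delta)^{1/\alpha})$ and $\vgap\le\Vmax$, it is absorbed in the $\tilde\Ocal_{K,\delta,\rgap}$ form. Your route, like the paper's, therefore proves that form. It proves the explicit instance-dependent bound only with this extra term added, which is what your "treating the excess separately" has to become.
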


\begin{proof}
    \textbf{Worst-case bound.}
    By Theorem~\ref{thm:gap-independent bound formal}, we have
    \begin{align*}
        \Reg_{M}^{\Alg}(K, \delta)
        & \le \Vmax (\kappa(\delta) \land K) + \sum_{k=\kap(\delta) + 1}^K \frac{36 \Vmax^2 \ell_1(\iota_k, \delta)}{c_{1, k}} 
        \\
        & \qquad + \left( 16 c_{1, k} SA \log KH \right) \land \left ( 16 \sqrt{2} c_{2, K} \sqrt{HSAK} \right)
        \\
        & \qquad + 72 \Vmax S^2 \ell_2(K, \delta) \log 2KH
        \, .
    \end{align*}
    Using Lemma~\ref{lma:kappa}, we bound the first two terms as
    \begin{align*}
        \Vmax (\kappa(\delta) \land K) + \sum_{k=\kap(\delta) + 1}^K \frac{36 \Vmax^2 \ell_1(\iota_k, \delta)}{c_{1, k}}
        & \le \Vmax (\kappa_2(\delta) \land K) + \sum_{k= 1}^K \frac{36 \Vmax^2 \ell_1(\iota_k, \delta)}{c_{1, k}}
        \, .
    \end{align*}
    We first bound $\kappa_2(\delta)$.
    $c_{2, k} = c_2 \sqrt{k^\beta} < ( \sigsub + \frac{\Vmax}{3}) \sqrt{\ell_2(k, \delta)} \le 2 \Vmax \sqrt{\ell_2(k, \delta)}$ and $k \le K$ implies that $ k \le \left( \left(\frac{2\Vmax}{c_2} \right)^2 \ell_2(K, \delta)  \right)^{\frac{1}{\beta}}$, so we have
    $\kappa_2(\delta) \land K \le \left( \frac{2\Vmax}{c_2} \right)^{\frac{2}{\beta}} (\ell_2(K, \delta))^{\frac{1}{\beta}}$.
    \\
    Next, we bound $\sum_{k=1}^K \frac{36 \Vmax^2 \ell_1(\iota_k, \delta)}{c_{1, k}}$.
    The summand is upper-bounded by $\frac{36 \Vmax (SA)^{\alpha} \ell_1(\iota_K, \delta)}{c_1 k^\alpha}$.
    By Lemma~\ref{lma:sum and integral}, we have
    \begin{align*}
        \sum_{k=1}^K \frac{36 \Vmax (SA)^{\alpha} \ell_1(\iota_K, \delta)}{c_1  k^\alpha} \le \left( \frac{1}{1 - \alpha} \land \log K \right)\frac{36 \Vmax (SA)^{\alpha} K ^{1-\alpha} \ell_1(\iota_K, \delta)}{c_1 }
        \, .
    \end{align*}
    For the third term, we have $16 c_{1, k}SA \log KH = 16 c_1 \Vmax (SA)^{1-\alpha} K^\alpha \log KH$ and $16\sqrt{2} c_{2, K} \sqrt{HSAK} = 16 \sqrt{2}c_2 \sqrt{HSA} K^{\beta + \frac{1}{2}}$, the the fourth term does not need further modification.

    \textbf{Instance-dependent bound.}
    By Theorem~\ref{thm:instance dependent bound formal}, we have
    \begin{align*}
        \Reg_{\Mcal}^{\Alg} (K, \delta) & \le \Vmax ( \kap(\delta) \land K)  + \sum_{(s, a) \in \Scal \times \Acal}  \frac{2048 c_{2, K}^2}{\rgap(s, a)} 
        \\
        & \qquad + \sum_{k = \kap(\delta)+1}^{\kapgap(\vgap, \delta) \land K} \frac{144 \Qrange \ell_1(\iota_k, \delta)}{c_{1, k}} + 288 \Vmax S^2 A \ell_2(K, \delta) \log 2KH
        \, .
    \end{align*}
    We use Lemma~\ref{lma:kappa} to bound $\kap(\delta) \land K$.
    We have $\kappa_2(\delta) \land K \le \left( \frac{2\Vmax}{c_2} \right)^{\frac{2}{\beta}} (\ell_2(K, \delta))^{\frac{1}{\beta}}$ from the previous case.
    For $\kappa_1(\delta)$, we have that $c_{1, k} = c_1 \Vmax ( \frac{k}{SA})^\alpha < (2 \sqrt{13 \Qdiff} \lor 2 \Valpha)\ell_1(\iota_k, \delta)$ and $k \le K$ implies $k < SA \left( \frac{ (2 \sqrt{13 \Qdiff} \lor 2 \Valpha)\ell_1(\iota_K, \delta)}{c_1 \Vmax} \right)^{\frac{1}{\alpha}}$, which yields $\kappa_1(\delta) \land K < SA \left( \frac{ (2 \sqrt{13 \Qdiff} \lor 2 \Valpha)\ell_1(\iota_K, \delta)}{c_1 \Vmax} \right)^{\frac{1}{\alpha}}$.
    \\
    Next, we bound $\sum_{k = 1}^{\kapgap(\vgap, \delta) \land K} \frac{144 \Qrange \ell_1(\iota_k, \delta)}{c_{1, k}}$.
    We have
    \begin{align*}
        \sum_{k = 1}^{\kapgap(\vgap, \delta) \land K} \frac{144 \Qrange \ell_1(\iota_k, \delta)}{c_{1, k}}
        & =
        \sum_{k = 1}^{K} \frac{144 \Qrange \ell_1(\iota_k, \delta)}{c_{1, k}} \ind\left \{ \frac{144 \Qrange \ell_1(\iota_k, \delta)}{c_{1, k}}  \ge 4 \vgap \right\}
        \\
        & \le \sum_{k = 1}^{K} \frac{144 \Qrange (SA)^{\alpha} \ell_1(\iota_K, \delta)}{c_1 \Vmax k^\alpha} \ind\left\{\frac{ 144 \Qrange (SA)^{\alpha} \ell_1(\iota_K, \delta)}{c_1 \Vmax k^\alpha} \ge 4 \vgap \right\} 
        \\
        & \le SA \left( \frac{1}{1 - \alpha} \land \log K \right) \left(\frac{144 \Qrange \ell_1(\iota_K, \delta)}{c_1 \Vmax} \right)^{\frac{1}{\alpha}} \left( \frac{1}{4\vgap} \right)^{\frac{1}{\alpha} -1 }
        \\
        & = 4 SA \left( \frac{1}{1 - \alpha} \land \log K \right) \left(\frac{36 \Qrange \ell_1(\iota_K, \delta)}{c_1 \Vmax} \right)^{\frac{1}{\alpha}} \left( \frac{1}{\vgap} \right)^{\frac{1}{\alpha} -1 }
        \, ,
    \end{align*}
    where the last inequality is due to Lemma~\ref{lma:sum and integral clip}.
    \\
    For the remaining terms, we have $\sum_{s, a} \frac{2048 c_{2, K}^2}{\rgap(s, a)} = \sum_{s, a} \frac{2048 c_2^2 K^\beta}{\rgap(s, a)}$ and $288 \Vmax S^2 A \ell_2(K, \delta) \log 2KH$ does not need further modification.
\end{proof}

\section{Comparison with Existing Distributional Regret Bounds}
\label{appx:comparison}

In this section, we discuss the distributional regret bounds in \citet{simchi2023regret} and \citet{khodadadian2025tail} in more details and compare them with our results.
The bounds in \citet{simchi2022simple,khodadadian2025tail} are expressed in the form of $\PP( \Reg_M^{\Alg}(K) > x)$ as a function of $x$, so we restate their results to be consistent with our presentation, inverting their bounds from a function of $x$ to a function of $\delta$.

We present Theorem 3 in \citet{simchi2023regret}, which considers the MAB setting.
We note that this result is more general than the results in \citet{simchi2025simple}.

\begin{theorem}[Theorem 3 in \citet{simchi2023regret}, restated with respect to $\delta$]
\label{thm:simchi levi}
    Suppose $M \in \Bcal(\sigma)$.
    Set $c_{1, t} = \eta_1 \left( \frac{t}{A} \right)^\alpha \sqrt{\log A}$ and $c_{2, t} = \eta_2 \sqrt{t^\beta}$ for some constants $\eta_1, \eta_2 > 0$, $\alpha \in [\frac{1}{2}, 1)$, and $0 < \beta \le \alpha$.
    Then, the worst-case distributional regret bound is given as
    \begin{align*}
        \sup_{M \in \Bcal(\sigma)} \Reg_M^\Alg(T, \delta) & = \Ocal \Bigg( 
         A + \sigma \sqrt{AT \log \frac{A}{\delta}} + \left( \sigma^2 \log \frac{A}{\delta} \right)^{\frac{1}{\beta}}
        \\
        & \qquad \qquad  + \frac{1}{1 - \alpha}A^{1-\alpha}T^{\alpha} \sqrt{\log A} + \frac{1}{1-\alpha} A^\alpha T^{1 - \alpha}\frac{ \log \frac{A}{\delta} }{\sqrt{\log A}}
        \Bigg) 
        \\
        & = \tilde{\Ocal}_{T, \delta} \left( \left( \log \frac{1}{\delta} \right)^{\frac{1}{\beta}} + T^{1-\alpha} \log \frac{1}{\delta} + T ^\alpha \right)
    \end{align*}
    and the instance-dependent distributional regret bound is given as
    \begin{align*}
        \Reg_M^{\Alg}(T, \delta) & = \Ocal \Bigg( 
         A + \sum_{a \in \Acal} \frac{\sigma^2 \log \frac{A}{\delta} + T^\beta}{\gap(a)} 
        \\
        & \qquad \qquad + A \left( \frac{\sigma^2 \log \frac{A}{\delta}}{\gap_{\min} \sqrt{\log A}} \right)^{\frac{1}{\alpha}} + \left( \sigma^2 \log \frac{A}{\delta} \right)^{\frac{1}{\beta}} \Bigg)
        \\
        & = \tilde{\Ocal}_{T, \delta, \gap} \left( \left( \log \frac{1}{\delta} \right)^{\frac{1}{\beta}} + \left(\frac{1}{\gap_{\min} }\right)^{\frac{1}{\alpha}} \left( \log \frac{1}{\delta} \right)^{\frac{1}{\alpha}} + \sum_{a \in \Acal} \frac{\log \frac{1}{\delta} + T^\beta}{\gap(a)} \right)
        \, .
    \end{align*}
\end{theorem}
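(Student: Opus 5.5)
The plan is to obtain the statement by direct inversion of \citet[Theorem~3]{simchi2023regret}, not by re-deriving it through our framework. Their guarantee is a tail bound of the form $\PP(\Reg_M^{\Alg}(T) > x) \le F(x)$. Here $F$ is a finite sum of terms, each essentially of the form $A \exp\big(-((x-a_i)_+/b_i)^{p_i}\big)$, where $a_i$ is a deterministic offset and $b_i$ is a scale (depending on $T$, $A$, $\sigma$, $\alpha$, $\beta$, and in the instance-dependent version on the gaps). The exponent $p_i$ is either $1$ (the $\sqrt{T}$-scale and $T^{1-\alpha}$-scale terms) or $\beta$ (the warm-up term governed by $c_{2,t} = \eta_2\sqrt{t^\beta}$).

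\textbf{Step 1: transcribe and classify.} Transcribe their bound with the identification $c_{1,t} = \eta_1 (t/A)^\alpha \sqrt{\log A}$ and $c_{2,t} = \eta_2 \sqrt{t^\beta}$. Then sort the summands of $F$ into these exponential-type families.

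\textbf{Step 2: invert each term.} If $F$ has $m = O(1)$ summands, use $\PP(\Reg > x) \le \sum_i F_i(x)$, so it suffices to pick $x \ge \max_i x_i(\delta)$ with $F_i(x_i) \le \delta/m$. Solving $A\exp(-((x-a_i)/b_i)^{p_i}) = \delta/m$ gives
\[
x_i(\delta) = a_i + b_i \big(\log \tfrac{mA}{\delta}\big)^{1/p_i}.
\]
Since the definition of $\Reg_M^{\Alg}(T,\delta)$ is an infimum, we get $\Reg_M^{\Alg}(T,\delta) \le \max_i x_i(\delta) \le \sum_i x_i(\delta)$.

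\textbf{Step 3: read off the worst-case bound.} The offsets $a_i$ supply the $\delta$-free terms $A$ and $\frac{1}{1-\alpha}A^{1-\alpha}T^\alpha\sqrt{\log A}$. The martingale term gives $\sigma\sqrt{AT\log(A/\delta)}$. The quasi-optimism term, which scales like $\sum_t 1/c_{1,t}$, gives $\frac{1}{1-\alpha}A^\alpha T^{1-\alpha}\log(A/\delta)/\sqrt{\log A}$. The warm-up term with exponent $\beta$ gives $(\sigma^2\log(A/\delta))^{1/\beta}$. Collapsing constants and $\log A$ factors then yields the $\tilde{\Ocal}_{T,\delta}$ form.

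\textbf{Step 4: the instance-dependent bound.} Do the same with their instance-dependent tail. The per-arm terms invert to $(\sigma^2\log(A/\delta) + T^\beta)/\gap(a)$. The $c_{1,t}$ warm-up relative to $\gap_{\min}$ inverts to $A\big(\sigma^2\log(A/\delta)/(\gap_{\min}\sqrt{\log A})\big)^{1/\alpha}$.

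The main obstacle is Step 1: their tail is stated as a function of $x$, with conditions such as $x$ exceeding a threshold and with polynomial prefactors in $x$ multiplying the exponentials. I would handle these in two ways. First, absorb each prefactor into a slightly larger exponent, using $x^c e^{-x} \le C e^{-x/2}$. Second, treat each threshold as an additive offset $a_i$. Both adjustments cost only constant factors inside the $\Ocal$.
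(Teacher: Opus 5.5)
Your proposal takes essentially the same approach as the paper. The paper gives no separate derivation of this statement: it says only that \citet[Theorem~3]{simchi2023regret} is restated by inverting their tail bound $\PP(\Reg_M^{\Alg}(T) > x)$ from a function of $x$ into a function of $\delta$, which is your Steps 1--4. One small correction to your Step 1 classification: the $\sigma\sqrt{AT\log\frac{A}{\delta}}$ term must come from a sub-Gaussian-type tail with exponent $p_i = 2$, since an exponent-$1$ term at scale $\sigma\sqrt{AT}$ would give $\sigma\sqrt{AT}\log\frac{A}{\delta}$ instead. Your inversion formula $x_i(\delta) = a_i + b_i(\log\frac{mA}{\delta})^{1/p_i}$ handles that case unchanged.
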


\paragraph{Comparison of Corollary~\ref{cor:simchi levi formal} and Theorem~\ref{thm:simchi levi}.}
We compare Corollary~\ref{cor:simchi levi formal} and Theorem~\ref{thm:simchi levi}, which take the same parameters, assuming that the instance considered in Corollary~\ref{cor:simchi levi formal} is an MAB instance.
We achieve the same order of regret in the worst-case bound up to logarithmic factors.
When considering the logarithmic factors, while Corollary~\ref{cor:simchi levi formal} includes an additional $\log T$ factor, our analysis for bandits (Theorem~\ref{thm:bandit 1 formal}) shows that the logarithmic factor can be improved in the MAB setting, even achieving a better logarithmic dependence than Theorem~\ref{thm:simchi levi}.
\\
We make several improvements to the instance dependent bound.
First, we reduce the $(\frac{1}{\gap_{\min}})^{\frac{1}{\alpha}}$ dependence to $(\frac{1}{\gap_{\min}})^{\frac{1}{\alpha} - 1}$, which is an improvement from $(\frac{1}{\gap_{\min}})^2$ to $\frac{1}{\gap_{\min}}$ under the standard choice of $\alpha = \frac{1}{2}$.
Additionally, we do not incur a $\sum_{a \in \Acal} \frac{\log \frac{1}{\delta}}{\gap(a)}$ term, thereby improving the coefficient of the $\log \frac{1}{\delta}$ term.

\begin{remark}
    The recent preprint of \citet{simchi2023regret} considers a more general parameter choice of $c_{2, t} = \sqrt{f(t)}$ for any increasing function $f(t) = \omega(\log t)$.
    Their bounds for $\PP(\Reg_M^{\Alg}(T) > x)$ involve an integral $\int_0^T \exp( - \frac{f(x \lor y)}{2\sigma^2}) \, dy$.
    While this term is difficult to invert directly, a coarse approximation of $T \exp( - \frac{f(x)}{2\sigma^2})$ yields that it translates to $f^{-1}(2\sigma^2\log \frac{T}{\delta})$, which corresponds with our $\tau_2(\delta)$ or $\kappa(\delta)$ in the sense that it coincides with the number of time steps required for $c_{2, t} \ge \sigma\sqrt{2 \log \frac{T}{\delta}}$ to hold.
    We take $f(t) =t^\beta$ in Theorem~\ref{thm:simchi levi}, an example considered in \citet{simchi2023regret}, for a more concrete comparison, and our improvement is valid even if we consider arbitrary parameters of $c_{2, k}$.
\end{remark}

We present Theorem 1 in \citet{khodadadian2025tail}, which considers the RL setting.

\begin{theorem}[Theorem 1 in \citet{khodadadian2025tail}, restated with respect to $\delta$]
\label{thm:khodadadian}
    Let $\Vmax = H$.
    Take $c_{1, k} = \infty$ and $c_{2, k} = H \sqrt{ (0.5\log 2) S + \mu (1 + k)^{\beta}}$ for some constants $\mu > 0$ and $\beta \in [0, 1]$.
    Then, it holds that
    \begin{align*}
        \Reg_M^{\Alg}(K, \delta) = \Ocal \left(  \frac{ H^6 SA (S + \mu K^{\beta})}{\gap_{\min}} + H^2 \sqrt{HK \log \frac{1}{\delta}} + H\left(\frac{1}{\mu} \log \frac{1}{\delta} \right)^{\frac{1}{\beta}} \right)
        \, .
    \end{align*}
\end{theorem}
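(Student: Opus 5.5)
The plan is not to reanalyze the algorithm of \citet{khodadadian2025tail}. Instead I would take their Theorem~1, which is stated as a tail bound of the form $\PP(\Reg_M^{\Alg}(K) > x) \le F(x)$, and invert it in $\delta$. First I would confirm that their algorithm is the instance of $\AlgName$ with $c_{1,k} = \infty$ and $c_{2,k} = H\sqrt{(0.5\log 2)S + \mu(1+k)^\beta}$, under the bounded-reward normalization $\Vmax = H$. Then, by the definition of distributional regret, it suffices to exhibit for each $\delta$ an $x(\delta)$ with $F(x(\delta)) \le \delta$, since this gives $\Reg_M^{\Alg}(K,\delta) \le x(\delta)$.

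Their tail bound has, up to constants, the structure of a deterministic offset plus two decaying pieces. The offset is $x_0 = \Ocal(H^6 SA(S+\mu K^\beta)/\gap_{\min})$, coming from the gap-dependent clipping analysis once the confidence widths are valid. The first decaying piece is sub-Gaussian, of order $\exp(-(x-x_0)^2/(c H^5 K))$, and comes from Azuma/Freedman on the martingale difference between realized and expected trajectory values. The second is stretched-exponential, of order $\exp(-\mu (x/H)^\beta)$ summed over episodes, and comes from the episodes in which the bonus $c_{2,k}$ is too small for the confidence event, i.e.\ the analogue of our $\kap(\delta)$. The plan is to split $x = x_0 + x_1 + x_2$ and bound $F(x)$ by the sum of the two tails evaluated at $x_1$ and $x_2$ respectively, which is legitimate because their proof already proceeds by a union over these failure events.

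Next, set each tail equal to $\delta/2$ and solve. The sub-Gaussian tail gives $x_1 = \Ocal(H^{5/2}\sqrt{K\log(2/\delta)}) = \Ocal(H^2\sqrt{HK\log(1/\delta)})$. The stretched-exponential tail gives $x_2 = \Ocal(H(\mu^{-1}\log(2/\delta))^{1/\beta})$. Any polynomial prefactor inside this tail (for instance a factor of $K$ or $SA$ from the union over episodes) contributes only an additive logarithm inside $\log(1/\delta)$, which is absorbed into constants or the $\Ocal$. Summing $x_0 + x_1 + x_2$ yields the displayed bound.

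The main obstacle is matching their exact tail expression, in particular the integral or sum form of the stretched-exponential term. That term may not be in closed form, and it has to be upper bounded by something of the form $C\exp(-\mu(x/H)^\beta)$ before it can be inverted. I would first try crude monotonicity: bound the sum over episodes by its largest term times the number of episodes, then absorb the resulting $\log$ factor. If that loses too much, I would instead invert approximately, as in the Remark on \citet{simchi2023regret}, treating the threshold as the first episode where $c_{2,k} \ge H\sqrt{\log(1/\delta)}$. That threshold episode is $\approx (\mu^{-1}\log(1/\delta))^{1/\beta}$, and multiplying by the per-episode regret $H$ gives the same $x_2$.
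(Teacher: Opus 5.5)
Your plan matches the paper, which gives no independent argument and only states that the tail bound $\PP(\Reg_M^{\Alg}(K) > x)$ of \citet{khodadadian2025tail} is inverted into a function of $\delta$; your split into the gap-dependent offset, the sub-Gaussian martingale piece, and the stretched-exponential warm-up piece is that inversion. One caution: a prefactor of $K$ on the stretched-exponential tail would give $H(\mu^{-1}\log(K/\delta))^{1/\beta}$, which is not $\Ocal\big(H(\mu^{-1}\log(1/\delta))^{1/\beta}\big)$ uniformly in $K$, so your absorption step holds only at the coarse level of the restatement or if their tail carries no such factor.
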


\paragraph{Comparison of Corollary~\ref{cor:simchi levi formal} and Theorem~\ref{thm:khodadadian}.}
While we do not show the bounds for the exact same parameter choice with \citet{khodadadian2025tail}, we compare their results with our Corollary~\ref{cor:simchi levi formal}.
Taking $c_1 = \infty$ and $c_2 = H$ recovers their parameter except for an $S$ factor, which is not necessary in our work.
We observe multiple improvements over their results.
First, the bound of Theorem~\ref{thm:khodadadian} has a term linear in $\frac{SA}{\gap_{\min}}$, whereas we provide a fine-grained gap dependence of $\sum_{s, a} \frac{1}{\gap(s, a)}$.
Furthermore, while we have at most $H^2$ dependence when $\Vmax = H$, the order of $H$ is $H^6$ in Theorem~\ref{thm:khodadadian}.
In addition, their bound further incurs a $H^2\sqrt{HK \log \frac{1}{\delta}}$ factor, which does not appear in our bound.

\section{Concentration Inequalities}
\label{appx:concentration inequalities}

In this section, we provide general concentration inequalities for sub-exponential random variables.

\begin{lemma}
\label{lma:subexponential concentration}
    Let $\{ X_t \}_{t=1}^\infty$ be a martingale difference sequence adapted to filtration $\{ \Fcal_t \}_{t=0}^\infty$.
    Suppose there exists a sequence of predictable random variables $\{ \sigma_t\}_{t=1}^\infty$ with respect to $\{ \Fcal_t\}_{t=0}^\infty$ and a constant $\alpha \ge 0$ such that $X_t$ is $\Fcal_{t-1}$-conditionally $(\sigma_t, \alpha)$-sub-exponential for all $t \in \NN$.
    Then, for any $\lambda \in (0, 1/\alpha]$ and $\delta \in (0, 1]$, it holds that
    \begin{align*}
        \PP \left( \exists n \in \NN : \sum_{t=1}^n X_t \ge \frac{\lambda}{2} \sum_{t=1}^n \sigma_t^2 + \frac{1}{\lambda} \log \frac{1}{\delta} \right) \le \delta
        \, .
    \end{align*}
\end{lemma}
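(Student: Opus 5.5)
This is the standard exponential-supermartingale argument combined with Ville's maximal inequality. Fix $\lambda \in (0, 1/\alpha]$ and define the process
\begin{align*}
    M_n := \exp\left( \lambda \sum_{t=1}^n X_t - \frac{\lambda^2}{2} \sum_{t=1}^n \sigma_t^2 \right), \qquad M_0 := 1 .
\end{align*}
The first step is to check that $\{M_n\}$ is a nonnegative supermartingale with respect to $\{\Fcal_n\}$. Since $\sigma_n$ is predictable, i.e.\ $\Fcal_{n-1}$-measurable, we can pull the deterministic-given-the-past factor out of the conditional expectation:
\begin{align*}
    \EE[M_n \mid \Fcal_{n-1}] = M_{n-1} \exp\left(-\frac{\lambda^2 \sigma_n^2}{2}\right) \EE[\exp(\lambda X_n) \mid \Fcal_{n-1}] .
\end{align*}
Because $\lambda \in [-1/\alpha, 1/\alpha]$, the conditional sub-exponential assumption gives $\EE[\exp(\lambda X_n)\mid \Fcal_{n-1}] \le \exp(\lambda^2\sigma_n^2/2)$ almost surely. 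Hence $\EE[M_n \mid \Fcal_{n-1}] \le M_{n-1}$. When $\alpha = 0$ the convention $[-1/0, 1/0] = \RR$ makes every $\lambda > 0$ admissible, so the same computation applies.

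The second step rewrites the event of interest in terms of $M_n$. Multiplying the inequality $\sum_{t=1}^n X_t \ge \frac{\lambda}{2}\sum_{t=1}^n \sigma_t^2 + \frac{1}{\lambda}\log\frac{1}{\delta}$ by $\lambda > 0$ and exponentiating shows it is equivalent to $M_n \ge 1/\delta$. So it suffices to bound $\PP(\exists n \in \NN: M_n \ge 1/\delta)$.

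The final step applies Ville's maximal inequality for nonnegative supermartingales, which gives $\PP(\sup_n M_n \ge 1/\delta) \le \delta \, \EE[M_0] = \delta$. If Ville's inequality is not taken as a black box, it follows from optional stopping. Let $\tau := \inf\{n : M_n \ge 1/\delta\}$. For each $N$ the stopped variable satisfies $\EE[M_{\tau\wedge N}] \le 1$ and $M_{\tau \wedge N} \ge \frac{1}{\delta}\ind\{\tau \le N\}$, so $\PP(\tau \le N) \le \delta$. Letting $N \to \infty$ by monotone convergence yields the time-uniform claim.

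The only delicate point is the time-uniformity over all $n$, and the maximal inequality handles it directly with no union bound. Everything else is routine.
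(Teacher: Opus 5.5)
Your proof is correct and follows essentially the same route as the paper: both define the exponential process $M_n$, use the predictability of $\sigma_t$ and the conditional sub-exponential bound to show it is a nonnegative supermartingale, and apply Ville's inequality to the equivalent event $M_n \ge 1/\delta$. Your optional-stopping derivation of Ville's inequality is a correct, self-contained replacement for citing it as a black box.
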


\begin{proof}
    The proof is a standard supermartingale method using Ville's inequality.
    \\
    Let $M_n = \exp\left(\sum_{t=1}^n \left( \lambda X_t - \frac{\lambda^2 \sigma_t^2}{2} \right) \right)$.
    Since $X_n$ is $\Fcal_{n-1}$-conditionally $(\sigma_n, \alpha)$-sub-exponential, we have $\EE[M_n \mid \Fcal_{n-1}] = M_{n-1} \EE[ \exp(\lambda X_n - \frac{\lambda^2 \sigma_n^2}{2}) \mid \Fcal_{n-1}] \le M_{n-1}$, showing that $\{M_n\}_{n=0}^\infty$ is a non-negative supermartingale.
    By Ville's inequality, we have $\PP(\exists n \in \NN : M_n \ge \frac{1}{\delta}) \le \delta$.
    Rearranging $M_n \ge \frac{1}{\delta}$ yields $\sum_{t=1}^n X_t \ge \frac{\lambda}{2}\sum_{t=1}^n \sigma_t^2 + \frac{1}{\lambda} \log \frac{1}{\delta}$, which completes the proof.
\end{proof}

Hoeffding's inequality can be viewed as a special case of Lemma~\ref{lma:subexponential concentration}.

\begin{lemma}[Hoeffding's inequality]
\label{lma:Hoeffdings inequality}
    Let $\{X_t \}_{t=1}^\infty$ be a sequence of $\Fcal_{t-1}$-conditionally $\sigma^2$-sub-Gaussian random variables adapted to a filtration $\{\Fcal_t\}_{t=0}^\infty$, where $\sigma^2 \ge 0$ is a constant.
    For fixed $n \in \NN$, we have
    \begin{align*}
        \PP \left(\sum_{t=1}^n X_t \ge \sigma \sqrt{ 2 n \log \frac{1}{\delta}} \right) \le \delta
        \, .
    \end{align*}
\end{lemma}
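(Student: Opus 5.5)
This follows from Lemma~\ref{lma:subexponential concentration} by specializing to the sub-Gaussian case and choosing $\lambda$ optimally for the fixed horizon $n$. A conditionally $\sigma^2$-sub-Gaussian variable is, by the paper's definition, $\Fcal_{t-1}$-conditionally $(\sigma,0)$-sub-exponential with $\alpha=0$. Hence the admissible range $\lambda\in(0,1/\alpha]$ becomes all of $(0,\infty)$.

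The lemma as stated requires a martingale difference sequence. Here we only need the moment-generating-function bound, because the proof of Lemma~\ref{lma:subexponential concentration} uses only $\EE[\exp(\lambda X_t)\mid\Fcal_{t-1}]\le \exp(\lambda^2\sigma^2/2)$ to show that $M_n$ is a supermartingale. That bound is exactly the sub-Gaussian hypothesis. Moreover, it forces $\EE[X_t\mid\Fcal_{t-1}]=0$: expand the MGF for small $\lambda$ of both signs and compare first-order terms. So the sequence is in fact a martingale difference sequence.

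Apply Lemma~\ref{lma:subexponential concentration} with $\sigma_t\equiv\sigma$ and a deterministic $\lambda>0$. With probability at most $\delta$ there exists some $m$ such that
\[
\sum_{t=1}^m X_t \ge \frac{\lambda\sigma^2 m}{2}+\frac1\lambda\log\frac1\delta .
\]
The event at the fixed index $m=n$ is contained in this event, so
\[
\PP\Big(\sum_{t=1}^n X_t\ge \frac{\lambda\sigma^2 n}{2}+\frac1\lambda\log\frac1\delta\Big)\le\delta .
\]

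Now optimize over $\lambda$, which is legitimate because $n$ is fixed in advance and $\lambda$ may depend on it. Take
\[
\lambda=\frac{1}{\sigma}\sqrt{\frac{2\log(1/\delta)}{n}} .
\]
Then
\[
\frac{\lambda\sigma^2 n}{2}+\frac1\lambda\log\frac1\delta = 2\cdot\sigma\sqrt{\frac{n\log(1/\delta)}{2}}=\sigma\sqrt{2n\log\frac1\delta},
\]
which is the claimed bound.

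Two degenerate cases need separate handling.
\begin{itemize}
\item If $\delta=1$, the statement is trivial.
\item If $\sigma=0$, then $X_t=0$ almost surely, since the MGF is at most $1$ for all $\lambda$. The sum is therefore $0$, and the event $\{0\ge 0\}$ has probability $1\le\delta$ only when $\delta=1$. To cover this case, read the inequality with the conventions implicit in the paper, or note that for $\delta<1$ taking $\lambda\to\infty$ yields the conclusion with a strict inequality.
\end{itemize}
The only subtle point is the martingale-difference hypothesis, which is settled by the zero-mean remark in the first paragraph.
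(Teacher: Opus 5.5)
Your proof is correct and follows exactly the route the paper intends. The paper states this lemma without proof, noting only that it is a special case of Lemma~\ref{lma:subexponential concentration}: take $\alpha=0$ and $\sigma_t\equiv\sigma$, restrict the time-uniform event to the fixed index $n$, and set $\lambda=\sigma^{-1}\sqrt{2\log(1/\delta)/n}$. Your side remarks are also right: sub-Gaussianity forces zero conditional mean (conditional Jensen shows this most cleanly), and for $\sigma=0$, $\delta<1$ the statement with $\ge$ is literally false, so only the strict-inequality version survives, which is all the paper's good events $\hat{\Ecal}_2,\hat{\Ecal}_3$ actually require.
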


The following lemma shows that bounded random variables are sub-exponential.

\begin{lemma}
\label{lma:bounded is subexponential}
    Suppose a random variable $X$ lies in an interval $[-c, c]$ almost surely for some constant $ c \ge 0$.
    Suppose $\EE[ X] = 0$ and denote $V := \Var(X)$.
    Then, for any $\alpha > 0$, the random variable $X$ is $( \frac{e^{c/\alpha} - 1 - (c/\alpha)}{(c/\alpha)^2} \cdot 2 V, \alpha)$-sub-exponential.
    In particular, if $\alpha = c$, then $X$ is $(2(e-2)V, c)$-sub-exponential.
\end{lemma}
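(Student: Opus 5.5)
We need to prove: if $X\in[-c,c]$ with $\EE[X]=0$ and $\Var(X)=V$, then for all $|\lambda|\le 1/\alpha$ we have $\EE[e^{\lambda X}]\le \exp\bigl(\tfrac{\lambda^2}{2}\cdot 2V\,\phi(c/\alpha)\bigr)$, where $\phi(u):=(e^u-1-u)/u^2$. (Here the first parameter is read as the variance proxy $\sigma^2$, consistent with how the lemma is used in Table~\ref{tab:assumption example}.)

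The plan is the standard Bernstein moment-generating-function argument.

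First, write $e^{y}=1+y+y^2\,\phi(y)$ with $\phi(0)=1/2$. Use the elementary fact that $\phi$ is non-decreasing on $\RR$. This holds because $\phi(y)=\sum_{j\ge 0} y^j/(j+2)!$, which is clearly increasing for $y\ge 0$. For $y<0$, one can check the derivative, or use the integral representation $\phi(y)=\int_0^1(1-t)e^{ty}\,dt$, which is manifestly increasing in $y$. I would use the integral representation, since it handles both signs at once.

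Next, fix $\lambda$ with $|\lambda|\le 1/\alpha$. Then $\lambda X\le |\lambda| c\le c/\alpha$ almost surely, so monotonicity gives
\begin{align*}
e^{\lambda X}\le 1+\lambda X+\lambda^2X^2\,\phi(c/\alpha).
\end{align*}
Taking expectations and using $\EE[X]=0$ and $\EE[X^2]=V$ yields
\begin{align*}
\EE[e^{\lambda X}]\le 1+\lambda^2 V\phi(c/\alpha)\le \exp\bigl(\lambda^2 V\phi(c/\alpha)\bigr)=\exp\Bigl(\tfrac{\lambda^2}{2}\cdot 2V\phi(c/\alpha)\Bigr),
\end{align*}
where the middle step uses $1+x\le e^x$. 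This is exactly the $(\sigma,\alpha)$-sub-exponential condition with $\sigma^2=2V\phi(c/\alpha)$.

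For the special case $\alpha=c$, we have $\phi(1)=e-2$, giving variance proxy $2(e-2)V$. The same argument works verbatim conditionally on a $\sigma$-algebra $\Fcal$, which is how the lemma is applied.

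The only step needing care is the monotonicity of $\phi$ across negative arguments, since $\lambda X$ may be negative. The integral representation settles it. A minor point is the convention when $c=0$: then $X=0$ almost surely and the claim is trivial.
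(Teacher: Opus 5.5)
Your argument is correct and is essentially the paper's: the paper applies Freedman's bound $\EE[e^{\lambda' X/c}] \le \exp((e^{\lambda'}-1-\lambda')\Var(X/c))$ separately to $X/c$ and $-X/c$, then uses monotonicity of $(e^x-1-x)/x^2$ to replace $c|\lambda|$ by $c/\alpha$, whereas you inline that bound through $e^y = 1+y+y^2\phi(y)$ and use monotonicity once at $y=\lambda X \le c/\alpha$, which covers both signs of $\lambda$ at once. Your reading of the first parameter as the variance proxy $\sigma^2$ agrees with the paper's own conclusion $\EE[e^{\lambda X}] \le \exp(\lambda^2 \phi(c/\alpha) V)$.
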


\begin{proof}
    By \lemmaref{lma:bounded is subpoisson}, we have $\EE [ \exp(\lambda' (X / c)) ] \le \exp((e^{\lambda'} - 1 - \lambda') \Var(X/c))$ for $\lambda' \ge 0$.
    It is also possible to obtain $\EE [ \exp( - \lambda' (X / c)) ] \le \exp((e^{\lambda'} - 1 - \lambda') \Var(X/c))$ by applying the lemma to $ - X / c$.
    By defining $\lambda = \lambda' / c$ in the first case and $\lambda = - \lambda' / c$ in the latter case, we obtain that
    \begin{align*}
        \EE[ \exp( \lambda X )] \le \exp \left( \frac{e^{c |\lambda|} - 1 - c |\lambda|}{c^2} \Var (X) \right)
    \end{align*}
    for all $\lambda \in \RR$.
    By \lemmaref{lma:g is increasing}, one has $(e^{c|\lambda|} - 1 - c|\lambda| )/ (c\lambda)^2 \le (e^{c / \alpha} - 1 - c \alpha) / (c/\alpha)^2$ when $|\lambda| \le \frac{1}{\alpha}$.
    Therefore, for all $\lambda \in [-1/\alpha, 1 / \alpha]$, it holds that
    \begin{align*}
        \EE[ \exp( \lambda X )] \le \exp \left( \lambda^2  \cdot \frac{e^{\frac{c}{\alpha}} - 1 - \frac{c}{\alpha}}{\left(\frac{c}{\alpha}\right)^2} V \right)
        \, ,
    \end{align*}
    which proves that $X$ is $( \frac{e^{c/\alpha} - 1 - (c/\alpha)}{(c/\alpha)^2} \cdot 2 V, \alpha)$-sub-exponential.
\end{proof}
By combining \lemmaref{lma:subexponential concentration} and \lemmaref{lma:bounded is subexponential}, we obtain the well-known variant of Freedman's inequality.

\begin{lemma}
\label{lma:freedman inequality}
    For a constant $c \ge 0$, let $\{X_t\}_{t=1}^\infty$ be a martingale difference sequence adapted to filtration $\{ \Fcal_t \}_{t=0}^\infty$ that satisfies $X_t \in [- c, c]$ almost surely for all $t \in \NN$.
    Then, for any $\lambda \in (0, 1 / c]$ and $\delta \in (0, 1]$, it holds that
    \begin{align*}
        \PP \left( \exists n \in \NN : \sum_{t=1}^T X_t \ge (e - 2) \lambda  \sum_{t=1}^n \Var(X_t \mid \Fcal_{t-1}) + \frac{1}{\lambda} \log \frac{1}{\delta} \right)
        \, .
    \end{align*}
\end{lemma}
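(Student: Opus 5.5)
The statement is Freedman's inequality for bounded martingale differences, in the uniform-in-time form: the probability of the bad event should be at most $\delta$. I would obtain it by combining Lemma~\ref{lma:bounded is subexponential} with Lemma~\ref{lma:subexponential concentration}, exactly as the sentence preceding the statement suggests.

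\textbf{Step 1 (conditional sub-exponentiality).} Fix $t$ and condition on $\Fcal_{t-1}$. Conditionally, $X_t$ has mean zero, lies in $[-c,c]$ almost surely, and has conditional variance $V_t := \Var(X_t \mid \Fcal_{t-1})$, which is $\Fcal_{t-1}$-measurable and hence predictable. Applying Lemma~\ref{lma:bounded is subexponential} conditionally with $\alpha = c$ shows that $X_t$ is $\Fcal_{t-1}$-conditionally $(\sigma_t,c)$-sub-exponential, where $\sigma_t^2 = 2(e-2)V_t$.

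The only point needing care is that Lemma~\ref{lma:bounded is subexponential} is stated unconditionally. Its proof is a pointwise bound on the moment generating function through Lemma~\ref{lma:bounded is subpoisson}, and that argument goes through verbatim with conditional expectations. So the bound
\[
\EE[\exp(\lambda X_t)\mid \Fcal_{t-1}] \le \exp\!\left(\tfrac{\lambda^2}{2}\,\sigma_t^2\right)
\]
holds almost surely for all $|\lambda|\le 1/c$. If $c=0$, then $X_t\equiv 0$ and the statement is trivial, so I assume $c>0$.

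\textbf{Step 2 (apply the supermartingale bound).} Invoke Lemma~\ref{lma:subexponential concentration} with $\alpha=c$, the predictable sequence $\sigma_t$ from Step 1, and any $\lambda\in(0,1/c]$. It gives, with probability at least $1-\delta$, simultaneously for all $n\in\NN$,
\[
\sum_{t=1}^n X_t < \frac{\lambda}{2}\sum_{t=1}^n \sigma_t^2 + \frac{1}{\lambda}\log\frac{1}{\delta} = (e-2)\lambda\sum_{t=1}^n \Var(X_t\mid\Fcal_{t-1}) + \frac{1}{\lambda}\log\frac1\delta .
\]
This is exactly the complement of the event in the statement, with the missing ``$\le \delta$'' supplied and the upper summation limit read as $n$. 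That completes the argument.

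\textbf{Main obstacle.} There is no real difficulty here. The only nonroutine points are justifying the conditional version of Lemma~\ref{lma:bounded is subexponential}, and checking that the constant works out: $\tfrac12\cdot 2(e-2)=e-2$.
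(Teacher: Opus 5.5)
Your proposal is correct and takes the paper's route. The paper justifies this lemma only with the one-line remark that it follows from combining Lemma~\ref{lma:bounded is subexponential} (with $\alpha = c$) and Lemma~\ref{lma:subexponential concentration}, which is exactly your argument. Beyond that, you fill in details the paper leaves implicit: reading the first sub-exponential parameter $2(e-2)V_t$ as $\sigma_t^2$ so that $\tfrac{\lambda}{2}\sigma_t^2 = (e-2)\lambda V_t$, noting that the MGF bound transfers pointwise to conditional expectations, treating $c=0$ separately, and repairing the typos in the statement ($T$ for $n$, and the missing ``$\le \delta$'').
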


The following inequality allows us to use a Hoeffding-like concentration bound by clipping by a constant $c$.

\begin{lemma}
\label{lma:time uniform Hoeffding for exponential}
    Let $\{X_t\}_{t=1}^\infty$ be a martingale difference sequence adapted to filtration $\{ \Fcal_t \}_{t=1}^\infty$.
    Suppose there exist constants $\sigma, \alpha > 0 $ such that $X_t$ is $\Fcal_{t-1}$-conditionally $(\sigma, \alpha)$-sub-exponential for all $t \in \NN$.
    Then, for any $c > 0$ and $\delta \in (0, 1]$, it holds that
    \begin{align*}
        \PP \left( \exists n \in \NN : \left( \frac{1}{n} \sum_{t=1}^n X_t \right) \land c \ge 2 \left(\sigma \lor \sqrt{\frac{\alpha c}{2}} \right) \sqrt{\frac{1}{n} \log \frac{2 (\log e^2 n)^2}{\delta}} \right) \le \delta
        \, .
    \end{align*}
\end{lemma}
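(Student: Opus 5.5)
The plan is a standard peeling (stitching) argument on top of \lemmaref{lma:subexponential concentration}, with the clipping at $c$ used to handle the regime where the optimal $\lambda$ would exceed $1/\alpha$. Write $\sigma' := \sigma \lor \sqrt{\alpha c/2}$ and $S_n := \sum_{t=1}^n X_t$. Since $X_t$ is $(\sigma,\alpha)$-sub-exponential, it is also $(\sigma',\alpha)$-sub-exponential, because the MGF bound only weakens. So I will work with $\sigma'$ throughout, and $\sum_{t\le n}\sigma_t^2$ is replaced by $n\sigma'^2$.

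First, split $\NN$ into geometric epochs $\mathcal{N}_j := \{n : e^{j} \le n < e^{j+1}\}$ for $j = 0,1,2,\dots$; base $2$ would also work, with slightly different constants. Assign epoch $j$ a failure budget $\delta_j := \delta/(2(j+1)^2)$. Then $\sum_j \delta_j \le \delta$, since $\sum_{j\ge 0} 1/(2(j+1)^2) = \pi^2/12 < 1$. For $n \in \mathcal{N}_j$ we have $j+1 \le \log(e n) \le \log(e^2 n)$, hence $\log\frac{1}{\delta_j} \le \log\frac{2(\log e^2 n)^2}{\delta} =: L_n$. This matches the logarithmic factor in the statement.

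Second, inside epoch $j$ let $L_j := \log(1/\delta_j)$ and $N_j := e^j$. Set $\lambda_j^* := \sqrt{2L_j/(N_j \sigma'^2)}$, and apply \lemmaref{lma:subexponential concentration} with $\lambda_j := \lambda_j^* \land \frac{1}{\alpha}$ at level $\delta_j$. With probability at least $1-\delta_j$, for all $n$ (in particular all $n\in\mathcal{N}_j$),
\[
S_n < \tfrac{\lambda_j}{2} n\sigma'^2 + \tfrac{L_j}{\lambda_j}.
\]
Case (a) is $\lambda_j = \lambda_j^*$. Using $N_j \le n < eN_j$, the right-hand side is at most a constant multiple of $\sigma'\sqrt{n L_j}$. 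Dividing by $n$ and bounding $L_j \le L_n$ gives $\frac{1}{n}S_n < 2\sigma'\sqrt{L_n/n}$, provided the epoch ratio is small enough for the constant to come out at most $2$. If base $e$ overshoots, I would switch to base $2$, which gives $\frac{3}{2\sqrt 2}\sqrt{2}\approx 1.06\cdot\sqrt2 <2$. Case (b) is $\lambda_j^* > 1/\alpha$, i.e.\ $L_j/N_j > \sigma'^2/(2\alpha^2)$. Here I want to show the event is empty because the threshold already exceeds $c$: $2\sigma'\sqrt{L_n/n} \ge 2\sigma'\sqrt{L_j/(eN_j)} \gtrsim \sigma'^2/\alpha \ge c/2$. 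This needs $\sigma'^2 \ge \alpha c/2$, which is exactly why $\sigma'$ contains $\sqrt{\alpha c/2}$. Since $\bigl(\frac{1}{n}S_n\bigr)\land c \le c$, the event in the statement cannot occur once the threshold exceeds $c$, so case (b) contributes no failure probability at all.

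Finally, a union bound over $j$ gives total failure probability at most $\sum_j \delta_j \le \delta$. I expect the main obstacle to be the constant bookkeeping in case (b): showing that the threshold $2\sigma'\sqrt{L_n/n}$ is at least $c$ (and not merely $c/\sqrt2$) whenever $\lambda_j^*>1/\alpha$, uniformly over $n$ in the epoch. My first attempt will be to choose the epoch base and $\lambda_j$, for instance optimizing at the right endpoint $eN_j$ instead of at $N_j$, so that both case (a) and case (b) close with the constant $2$. If that fails, I will instead use the weaker bound in case (b), namely $\lambda = 1/\alpha$, giving $S_n/n < \frac{\sigma'^2}{2\alpha} + \frac{\alpha L_j}{n} \le \ldots$, and show directly that this is below the threshold using $L_j/n \gtrsim \sigma'^2/\alpha^2$.
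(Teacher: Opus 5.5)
The gap is in case (b), and neither remedy you list closes it as stated. With your anchor $\lambda_j^*=\sqrt{2L_j/(N_j\sigma'^2)}$, case (b) only gives $L_j/n>L_j/(eN_j)>\sigma'^2/(2e\alpha^2)$. The threshold is then only guaranteed to satisfy $2\sigma'\sqrt{L_n/n}>\sqrt{2/e}\,\sigma'^2/\alpha\ge c/\sqrt{2e}$. Anchoring at $eN_j$, as you suggest, improves this only to $c/\sqrt2$. So when $\sigma'^2=\alpha c/2$ and $n$ lies near the top of an epoch where the cap just binds, the threshold is strictly below $c$. The event is then not excluded deterministically, and case (b) cannot be declared free of failure probability. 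Your fallback alone does not work either. The quantity $\frac{\sigma'^2}{2\alpha}+\frac{\alpha L_j}{n}$ grows linearly in $L_j/n$, while the threshold grows like $\sqrt{L_j/n}$, so the comparison fails once $L_j/n\gg\sigma'^2/\alpha^2$. Case (a) is fine with base $e$: the constant is $\frac{1+e}{\sqrt{2e}}<1.6$. Do not switch to base $2$, because that breaks $j+1\le\log(e^2n)$ for large $n$ and the logarithmic factor would no longer match.

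Your anchor-moving idea does work if you move the anchor further, to $2eN_j$: take $\lambda_j^\circ:=\sqrt{L_j/(eN_j\sigma'^2)}$. In case (a), for $n=\rho N_j$ with $\rho\in[1,e)$, the bound becomes $\sigma'\sqrt{nL_j}\bigl(\frac{\sqrt\rho}{2\sqrt e}+\sqrt{e/\rho}\bigr)\le\bigl(\frac{1}{2\sqrt e}+\sqrt e\bigr)\sigma'\sqrt{nL_j}<1.96\,\sigma'\sqrt{nL_j}$. In case (b) you get $L_n/n>L_j/(eN_j)>\sigma'^2/\alpha^2$, so the threshold exceeds $2\sigma'^2/\alpha\ge c$ and the event really is empty. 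Alternatively, keep your anchor and split case (b) on $y:=\alpha\sqrt{L_j/n}/\sigma'$, which exceeds $1/\sqrt{2e}$ there. For $y>1$ the threshold exceeds $2\sigma'^2/\alpha\ge c$. For $y\le 1$ the $\lambda=1/\alpha$ event you already paid $\delta_j$ for gives $\frac1n S_n<\frac{\sigma'^2}{\alpha}(\frac12+y^2)\le\frac{2\sigma'^2y}{\alpha}$. With either repair your fixed-epoch proof is complete, and it is somewhat simpler to index than the paper's. The paper instead builds the cap into the grid: it uses $\lambda_j=e^{-j/2}/\alpha'$ with $\alpha'=\sigma' r\ge\alpha$ and $r^2=\frac{2\alpha}{c}\lor\frac{1}{\log(2/\delta)}$. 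It peels only $n\ge r^2\log\frac2\delta$, with an $n$-dependent index $j_n$. It then disposes of all $n<\frac{2\alpha}{c}\log\frac2\delta$ at once through the clipping at $c$.
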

\begin{remark}
    It is common to use $\frac{\sigma}{\sqrt{n}} + \frac{\alpha}{n}$-type bounds for sub-exponential random variables~\citep{jia2021multi}.
    While this lemma provides a simpler bound, we note that its rate is not necessarily optimal, especially when the value of $\alpha$ is large and known.
    We use this lemma for a simpler design of the algorithm.
\end{remark}
\begin{proof}
    Let $\sigma' = \sigma \lor \sqrt{\frac{\alpha c}{2}}$, $r = \sqrt{\frac{2\alpha}{c} \lor \frac{1}{\log \frac{2}{\delta}}}$, and $\alpha' = \sigma' r$.
    Note that $\sigma' \ge \sqrt{\frac{\alpha c}{2}}$ and $r \ge \sqrt{\frac{2 \alpha}{c}}$ imply $\alpha' \ge \sqrt{\frac{\alpha c}{2}} \cdot \sqrt{\frac{2 \alpha}{c}} = \alpha$.
    We apply \lemmaref{lma:subexponential concentration} to a sequence of $\{ (\lambda_j, \delta_j)\}_{j=0}^\infty$ with $\delta_j := \delta / (2(j+1)^2)$ and $\lambda_j := \frac{e^{-j/2}}{\alpha'}$.
    By the union bound, we have
    \begin{align*}
        \PP \left( \exists n \in \NN, \exists j \in \NN \cup \{0 \} : \sum_{t=1}^n X_t \ge \frac{\lambda_j \sigma^2 n}{2} + \frac{1}{\lambda_j} \log \frac{1}{\delta_j} \right) 
        \le \delta
        \, .
    \end{align*}
    The right-hand side is equal to
    \begin{align}
        \frac{\lambda_j \sigma^2 n}{2} + \frac{1}{\lambda_j} \log \frac{1}{\delta_j}
        & = \frac{e^{-j/2} \sigma^2n}{2 \alpha'} + e^{j/2} \alpha' \log \frac{2(j+1)^2}{\delta}
        \label{eq:hoeffding-like bound 1}
        \, .
    \end{align}
    Suppose $n \ge r^2 \log \frac{2}{\delta} = ( \frac{2 \alpha}{c}\log \frac{2}{\delta}) \lor 1$.
    Then, there exists an integer $j_n \ge 0$ such that
    \begin{align}
       r^2 e^{j_n} \log \frac{2(j_n+1)^2}{\delta} \le n < r^2 e^{j_n+1} \log \frac{2(j_n + 2)^2}{\delta}
       \label{eq:choice of j_n}
        \, .
    \end{align}
    By the first part of inequality~\eqref{eq:choice of j_n}, we bound the last term in Eq.~\eqref{eq:hoeffding-like bound 1} as
    \begin{align*}
        e^{j_n/2}\alpha' \log \frac{2(j_n+1)^2}{\delta}
        & = \left(e^{j_n} \sigma'^2 r^2 \log \frac{2(j_n+1)^2}{\delta} \right)^{\frac{1}{2}} \sqrt{\log \frac{2(j_n+1)^2}{\delta}}
        \\
        & \le \sigma' \sqrt{n \log \frac{2(j_n+1)^2}{\delta}}
        \, .
    \end{align*}
    By the second part of inequality~\eqref{eq:choice of j_n}, we bound the first term in the right-hand side of Eq.~\eqref{eq:hoeffding-like bound 1} as
    \begin{align*}
        \frac{e^{-j_n/2} \sigma^2 n}{2 \alpha'}
        & \le \frac{e^{-j_n/2}{\sigma'}^2 n}{2\alpha'} 
        \\
        & = \frac{\sigma' n}{2} \left( e^{j_n} r^2 \right)^{-\frac{1}{2}}
        \\
        & \le \frac{\sigma' n}{2} \sqrt{ \frac{e}{n} \log \frac{2(j_n+2)^2}{\delta}}
        \\
        & = \frac{\sqrt{e} \sigma'}{2} \sqrt{n \log \frac{2(j_n+2)^2}{\delta}}
        \, .
    \end{align*}
    In addition, the first part of inequality~\eqref{eq:choice of j_n} also implies $j_n \le \log n$ as
    \begin{align*}
        e^{j_n} \le e^{j_n} \cdot \frac{\log \frac{2(j_n+1)^2}{\delta}}{\log \frac{2}{\delta}} \le e^{j_n} r^2 \log \frac{2(j_n+1)^2}{\delta} \le n
        \, .
    \end{align*}
    Therefore, for $n \ge r^2 \log \frac{2}{\delta}$, we have
    \begin{align*}
        \frac{\lambda_{j_n} \sigma^2 n}{2} + \frac{1}{\lambda_{j_n}} \log \frac{1}{\delta_{j_n}}
        & \le \frac{\sqrt{e} \sigma'}{2} \sqrt{n \log \frac{2(j_n+2)^2}{\delta}} + \sigma' \sqrt{n \log \frac{2(j_n+1)^2}{\delta}}
        \\
        & \le 2 \sigma' \sqrt{n \log \frac{2(j_n+2)^2}{\delta}}
        \\
        & \le 2 \sigma' \sqrt{n \log \frac{2(\log n + 2)^2}{\delta}}
        \, .
    \end{align*}
    It implies that
    \begin{align*}
        & \PP \left( \exists n \in \NN, n \ge r^2 \log \frac{2}{\delta}: \sum_{t=1}^n X_t \ge 2 \sigma' \sqrt{n \log \frac{2(\log n + 2)^2}{\delta}} \right)
        \\
        & \le \PP \left( \exists n \in \NN, n \ge r^2 \log \frac{2}{\delta}: \sum_{t=1}^n X_t \ge \frac{\lambda_{j_n} \sigma^2 n}{2} + \frac{1}{\lambda_{j_n}} \log \frac{1}{\delta_{j_n}} \right)
        \\
        & \le 
        \PP \left( \exists n \in \NN, \exists j \in \NN \cup \{0 \} : \sum_{t=1}^n X_t \ge \frac{\lambda_j \sigma^2 n}{2} + \frac{1}{\lambda_j} \log \frac{1}{\delta_j} \right) 
        \\
        & \le \delta
    \end{align*}
    Finally, suppose $n < r^2 \log \frac{2}{\delta} = (\frac{2 \alpha}{c} \log \frac{2}{\delta}) \lor 1$, which is equivalent to $n < \frac{2 \alpha}{c} \log \frac{2}{\delta}$ since we must have $n \ge 1$.
    Then, we have
    \begin{align*}
        2 \sigma' \sqrt{\frac{1}{n} \log \frac{2(\log n +2)^2}{\delta}}
        & > 2 \sigma' \sqrt{ \frac{c\log \frac{2(\log n +2)^2}{\delta}}{2 \alpha \log \frac{2}{\delta}}}
        \\
        & \ge 2 \sqrt{\frac{\alpha c}{2}} \sqrt{\frac{c}{2 \alpha}}
        \\
        & = c
        \, .
    \end{align*}
    Hence, $\left( \frac{1}{n} \sum_{t=1}^n X_t \right) \land c \ge 2 \sigma' \sqrt{\frac{1}{n} \log \frac{2(\log n +2)^2}{\delta}}$ is possible only when $n \ge r^2 \log \frac{2}{\delta}$.
    Finally, we have
    \begin{align*}
        & \PP \left( \exists n \in \NN: \left( \frac{1}{n} \sum_{t=1}^n X_t \right) \land c \ge 2 \sigma' \sqrt{\frac{1}{n} \log \frac{2(\log n +2)^2}{\delta}} \right)
        \\
        & = \PP \left( \exists n \in \NN, n \ge r^2 \log \frac{2}{\delta}: \left( \frac{1}{n} \sum_{t=1}^n X_t \right) \land c \ge 2 \sigma' \sqrt{\frac{1}{n} \log \frac{2(\log n +2)^2}{\delta}} \right)
        \\
        & \le \PP \left( \exists n \in \NN, n \ge r^2 \log \frac{2}{\delta}: \sum_{t=1}^n X_t \ge 2 \sigma' \sqrt{n \log \frac{2(\log n +2)^2}{\delta}} \right)
        \\
        & \le \delta
        \, .
    \end{align*}
\end{proof}

The following is a time-uniform concentration inequality for sub-Gaussian random variables whose constant factor is asymptotically tight.

\begin{lemma}
\label{lma:all eta concentration}
    Let $\{ X_t \}_{t=1}^\infty$ be a martingale difference sequence adapted to filtration $\{ \Fcal_t \}_{t=0}^\infty$.
    Suppose $X_t$ is $\Fcal_{t-1}$-conditionally $1$-sub-Gaussian for all $ t \in \NN$, meaning that for all $\lambda \in \RR$, we have $\EE[\exp(\lambda X_t) \mid \Fcal_{t-1}] \le \frac{\lambda^2}{2}$ almost surely.
    Then, for a constant $\alpha > 1$ and $\eta \in (0, e)$, we have
    \begin{align*}
        \PP \left( \exists n \in \NN : \sum_{t=1}^n X_t \ge \sqrt{2 ( 1 + \eta) n \log \frac{4  (2 + \log \frac{1}{\eta})^{2}(1 + \frac{2e}{\eta} \log n)^{2}}{\delta}} \right) \le \delta
        \, .
    \end{align*}
\end{lemma}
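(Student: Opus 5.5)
We prove Lemma~\ref{lma:all eta concentration} with the standard peeling argument, applied to a geometric grid of epochs. On each epoch we use the supermartingale bound of Lemma~\ref{lma:subexponential concentration} with a single, carefully chosen $\lambda$. Since each $X_t$ is $1$-sub-Gaussian, it is $(1,0)$-sub-exponential, so every $\lambda>0$ is admissible. Lemma~\ref{lma:subexponential concentration} then gives, for each fixed $\lambda$ and $\delta'$, that with probability at least $1-\delta'$ simultaneously for all $n$,
\begin{align*}
\sum_{t=1}^n X_t < \frac{\lambda n}{2} + \frac{1}{\lambda}\log\frac{1}{\delta'} .
\end{align*}

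\textbf{Epochs and tuning.} We take epochs $I_j := \{ n : (1+\eta)^j \le n < (1+\eta)^{j+1}\}$ for $j\ge 0$. On epoch $I_j$ we use failure probability $\delta_j$ and
\begin{align*}
\lambda_j := \sqrt{\frac{2\log(1/\delta_j)}{(1+\eta)^{j+1/2}}} ,
\end{align*}
which is tuned to the geometric midpoint $m_j=(1+\eta)^{j+1/2}$ of the epoch. For $n\in I_j$ we have $n/m_j \in [(1+\eta)^{-1/2},(1+\eta)^{1/2})$. Writing $L_j=\log(1/\delta_j)$, the threshold becomes
\begin{align*}
\frac{\lambda_j n}{2}+\frac{L_j}{\lambda_j} = \sqrt{\frac{nL_j}{2}}\left(\sqrt{n/m_j}+\sqrt{m_j/n}\right).
\end{align*}
The map $x\mapsto \sqrt{x}+1/\sqrt{x}$ is symmetric under $x\mapsto 1/x$. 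Its maximum over $x\in[(1+\eta)^{-1/2},(1+\eta)^{1/2}]$ is $(1+\eta)^{1/4}+(1+\eta)^{-1/4}$. We need this to be at most $2\sqrt{1+\eta}$, i.e.\ $\sqrt{1+\eta}+1\le 2(1+\eta)^{3/4}$, which holds since $(1+\eta)^{3/4}\ge 1$ and $(1+\eta)^{3/4}\ge(1+\eta)^{1/2}$. The threshold is therefore at most $\sqrt{2(1+\eta)nL_j}$, which is exactly the target form.

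\textbf{Union bound and the logarithm.} We set $\delta_j := \delta/(2(j+1)^2)$. Since $\sum_{j\ge 0}1/(2(j+1)^2)=\pi^2/12<1$, a union bound over $j$ gives total failure probability at most $\delta$. It remains to check that, for $n\in I_j$, $\log\frac{2(j+1)^2}{\delta}$ is dominated by the logarithm in the statement. From $(1+\eta)^j\le n$ we get $j\le \log n/\log(1+\eta)$. For $\eta\in(0,e)$ we have $\log(1+\eta)\ge c\,\eta$ with an absolute constant $c$ (e.g.\ by concavity on $[0,e]$, taking $c=\log(1+e)/e$). This gives $j+1\le 1+\frac{C}{\eta}\log n$, and the stated form $4(2+\log\frac1\eta)^2(1+\frac{2e}{\eta}\log n)^2$ has room for this since $(1+\frac{2e}{\eta}\log n)^2$ already dominates $(j+1)^2$ with $C\le 2e$.

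The extra $(2+\log\frac1\eta)^2$ factor, and the presence of $\alpha$, suggest that the intended proof uses a slightly different grid. Presumably the epoch ratio is tied to $\eta$ through an extra union over scales, or there is a small-$n$ regime (epoch $j=0$ when $n<1+\eta$) handled separately. The factor $2+\log\frac1\eta$ would then count the number of such initial scales. If my simpler scheme gives a bound no larger than the stated one, the statement follows by monotonicity of the logarithm.

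\textbf{Main obstacle.} The main obstacle is the bookkeeping of constants. One must verify that $\log(1+\eta)\ge \eta/(2e)$-type inequalities hold on all of $(0,e)$, and that the midpoint-tuned $\lambda_j$ still yields the sharp factor $1+\eta$ rather than something like $1+2\eta$ at the epoch endpoints, including the degenerate first epoch. If the midpoint tuning loses a little, I would shrink the epoch ratio to $1+\eta/2$. The resulting $j\le 2\log n/\eta$ is still absorbed by the $\frac{2e}{\eta}\log n$ slack.
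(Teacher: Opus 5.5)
Your argument is correct for the statement as literally written, where $\eta$ is fixed before the probability is taken. The midpoint tuning gives $(1+\eta)^{1/4}+(1+\eta)^{-1/4}\le 2\sqrt{1+\eta}$. The chord bound $\log(1+\eta)\ge\frac{\log(1+e)}{e}\,\eta$ on $(0,e)$ gives $j\le\frac{2e}{\eta}\log n$. The prefactor $4(2+\log\frac{1}{\eta})^2$ exceeds $2$ because $2+\log\frac{1}{\eta}>1$ for $\eta<e$. So the concerns in your last paragraph are already settled, and you do not need to shrink the epoch ratio. This peeling is essentially the first half of the paper's proof. The paper tunes $\lambda_j$ to the left endpoint $(1+\eta)^{j-1}$ rather than the geometric midpoint, and it uses $\log(1+x)\ge x/2$ only for $\eta\in(0,1]$.

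Where you go wrong is in guessing the role of $(2+\log\frac{1}{\eta})^2$: it has nothing to do with small $n$ or initial epochs. The paper spends it on a second union bound over the grid $\eta_i=e^{-i+1}$, $i=1,2,\ldots$, with failure probabilities $\delta/(2i^2)$. For an arbitrary $\eta\in(0,e)$ it then picks $i^*=1+\lceil\log\frac{1}{\eta}\rceil$, so that $\eta/e\le\eta_{i^*}\le\eta$, $i^*\le 2+\log\frac{1}{\eta}$, and $\frac{2}{\eta_{i^*}}\le\frac{2e}{\eta}$. The result is a single event of probability at least $1-\delta$ on which the bound holds simultaneously for all $n$ \emph{and} all $\eta\in(0,e)$.

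That uniformity is what Lemma~\ref{lma:example of tight time-uniform UCB} actually relies on, since it plugs in $\eta=1/\log(1+t)$, which changes with $t$. A fixed-$\eta$ lemma does not justify that step. So your route buys a slightly smaller logarithm for a single $\eta$ (the factor $(2+\log\frac{1}{\eta})^2$ is pure slack for you), while the paper's route buys uniformity in $\eta$.

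Your argument upgrades directly. Run your peeling for each $\eta_i\le 1$, where $j+1\le 1+\frac{2}{\eta_i}\log n$, with weights $\frac{\delta}{2i^2}\cdot\frac{1}{2(j+1)^2}$. This gives $\log\frac{4i^2(j+1)^2}{\delta}$, and at $i=i^*$ this is at most the logarithm in the statement. The constant $\alpha$ plays no role in either argument.
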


\begin{proof}
    For fixed $\lambda > 0$, \lemmaref{lma:subexponential concentration} implies that the following inequality holds for all $n \ge \NN$ with probability at least $1 - \delta$:
    \begin{align*}
        \sum_{t=1}^n X_t \le \frac{\lambda n}{2} + \frac{1}{\lambda} \log \frac{1}{\delta}
        \, .
    \end{align*}
    We first fix $\eta \in (0, 1]$ and apply the lemma for a sequence of $\{(\lambda_j, \delta_j)\}_{j=1}^\infty$ with $\lambda_j = \sqrt{2 ( 1 + \eta)^{-j + 1} \log \frac{2j^2}{\delta}}$ and $\delta_j = \frac{\delta}{2 j^2}$.
    Taking the union bound, we obtain that $\sum_{t=1}^n X_t \le \frac{\lambda_j n}{2} + \frac{1}{\lambda_j} \log \frac{2 j^2}{\delta}$ holds for all $n \in \NN$ and $j \in \NN$ with probability at least $1 - \delta$,
    which is equivalent to
    \begin{align}
        \sum_{t=1}^n X_t \le \left( \frac{n}{(1 + \eta)^{\frac{j-1}{2}}} + (1 + \eta)^{\frac{j-1}{2}} \right)\sqrt{\frac{1}{2} \log \frac{2j^2}{\delta}}
        \label{eq:eta concentration 1}
        \, .
    \end{align}
    Taking $j_n = \lceil \log_{1 + \eta} n \rceil$, we have $(1 + \eta)^{j_n} \ge n$ and $(1 + \eta)^{j_n - 1} \le n$.
    Then, we have
    \begin{align*}
        \frac{n}{(1 + \eta)^{\frac{j_n-1}{2}}} + ( 1 + \eta)^{\frac{j_n - 1}{2}}
        & \le \sqrt{(1 + \eta)n} + \sqrt{n}
        \\
        & \le 2 \sqrt{( 1 + \eta)n}
        \, .
    \end{align*}
    In addition, we have $j_n \le 1 + \log_{1 + \eta}n = 1 + \frac{\log n}{\log (1 + \eta)} \le 1  +\frac{2}{\eta}\log n$, where we use that $\frac{x}{2} \le \log ( 1 +x)$ for $x \in (0, 1]$.
    Hence, under the event of Eq.~\eqref{eq:eta concentration 1}, we have
    \begin{align*}
        \sum_{t=1}^n X_t \le \sqrt{2 ( 1 + \eta) n \log \frac{2(1 + \frac{2}{\eta} \log n)^{2}}{\delta}}
        \,. 
    \end{align*}
    \\
    Now, we take a sequence $\eta_i = e^{-i+1}$ for $i \in \NN$ and take the union bound over $i$, where we assign probability $\frac{\delta}{2i^2}$ for each $i$.
    Then, with probability at least $1 - \delta$, the following inequality holds for all $n \in \NN$ and $i \in \NN$:
    \begin{align*}
        \sum_{t=1}^n X_t \le \sqrt{2 ( 1 + \eta_i) n \log \frac{4 i^{2}(1 + \frac{2}{\eta_i} \log n)^{2}}{\delta}}
        \,. 
    \end{align*}
    For any given $\eta \in (0, e)$, we take $i^* = 1 + \lceil \log \frac{1}{\eta} \rceil$, so that $\frac{\eta}{e} \le e^{-i^*+1} \le \eta$.
    Then, we have
    \begin{align*}
        \sum_{t=1}^n X_t & \le \sqrt{2 ( 1 + \eta_{i^*}) n \log \frac{4 (i^*)^{2}(1 + \frac{2}{\eta_{i^*}} \log n)^{2}}{\delta}}
        \\
        & \le \sqrt{2 ( 1 + \eta) n \log \frac{4 (2 + \log \frac{1}{\eta})^{2}(1 + \frac{2e}{\eta} \log n)^{2}}{\delta}}
        \, .
    \end{align*}
\end{proof}

\section{Technical Lemmas}

The following lemma encapsulates a procedure that appears multiple times in this paper when bounding the variance of a random variable, and is a minor generalization of Lemma 27 in~\citet{lee2025minimax}.
\begin{lemma}
\label{lma:variance decomposition}
    Let $c \ge 0$ be a constant and $\Fcal$ be a $\sigma$-algebra.
    Let $Z$ be a random variable such that $0 \le \EE[Z\mid \Fcal] \le c$ holds almost surely.
    Let $X$ and $Y$ be random variables that satisfy 
    \begin{align*}
        X = (Y + \EE[Z \mid \Fcal] ) \land c    
    \end{align*}
    and $X \ge 0$.
    Then, the variance of $Z$ is upper bounded by
    \begin{align*}
        \Var(Z \mid \Fcal) & \le \EE[ Z^2 \mid \Fcal] - X^2 + 2 c ( Y \lor 0)
        \, .
    \end{align*}
\end{lemma}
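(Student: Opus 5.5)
The plan is to use the law of total variance conditional on $\Fcal$. All expectations and variances below are conditional on $\Fcal$, and $X,Y$ are treated as $\Fcal$-measurable, which is how the lemma is applied in the paper. We have $\Var(Z\mid\Fcal)=\EE[Z^2\mid\Fcal]-(\EE[Z\mid\Fcal])^2$. Writing $m:=\EE[Z\mid\Fcal]$, the claim reduces to the deterministic inequality
\begin{align*}
    X^2 - m^2 \le 2c\,(Y\lor 0),
\end{align*}
given $X=(Y+m)\land c$, $X\ge 0$, and $0\le m\le c$.

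I will prove this by a case split on the sign of $Y$.

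\emph{Case $Y\le 0$.} Here $X\le Y+m\le m$, and $X\ge 0$, so $X^2\le m^2$ and the right-hand side is $0$.

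\emph{Case $Y>0$.} Here $X\ge m$: either $X=Y+m>m$, or $X=c\ge m$. Then
\begin{align*}
    X^2-m^2=(X+m)(X-m).
\end{align*}
Since $X\le c$ and $m\le c$, the first factor satisfies $X+m\le 2c$. Since $X\le Y+m$, the second factor satisfies $X-m\le Y$. Both factors are nonnegative, so $X^2-m^2\le 2cY$.

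Combining the two cases gives the desired inequality, and substituting it into the variance identity completes the proof.

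The only subtle point is the measurability convention, namely that $X$ and $Y$ are $\Fcal$-measurable so they can be pulled out of the conditional expectations. This is implicit in every use of the lemma in the paper (e.g.\ $X=J_h^k$, $Y=X_h^k$, $\Fcal=\Fcal_h^k$), and I will state it explicitly.
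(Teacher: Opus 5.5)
Your proof is correct and is essentially the paper's argument: both write $X^2 - m^2 = (X+m)(X-m)$ and use $X + m \le 2c$ and $X - m \le Y$. The only difference is that you split cases on the sign of $Y$, while the paper splits on the sign of $X - m$. Also, the $\Fcal$-measurability convention you add is not needed: once you use $\Var(Z\mid\Fcal) = \EE[Z^2\mid\Fcal] - m^2$, the remaining inequality holds pointwise for any $X, Y$ satisfying the stated relations.
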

\begin{proof}
    We have
    \begin{align*}
        \Var(Z \mid \Fcal)
        & = \EE[ Z^2 \mid \Fcal] - (\EE[ Z \mid \Fcal])^2
        \\
        & = \EE[ Z^2 \mid \Fcal] - X^2 + X^2 - (\EE[ Z \mid \Fcal])^2
        \\
        & = \EE[ Z^2 \mid \Fcal] - X^2 + (X + \EE[ Z \mid \Fcal]) (X - \EE[ Z \mid \Fcal])
        \, .
    \end{align*}
    Note that we have $0 \le X + \EE[ Z \mid \Fcal] \le 2c$.
    If $X - \EE[ Z \mid \Fcal ] \le 0$, the last term is at most 0.
    If $X - \EE[ Z \mid \Fcal ] \ge 0$, then we have $0 \le X- \EE[Z \mid \Fcal] \le Y$.
    Combining these cases, we obtain $(X + \EE[ Z \mid \Fcal]) (X - \EE[ Z \mid \Fcal])\le 2c (Y \lor 0)$, completing the proof.
\end{proof}

\begin{lemma}
\label{lma:sum and integral}
    For $\alpha \in [1/2 , 1]$ and $n \ge 2$, one has
    \begin{align*}
        \sum_{t=2}^n t^{-\alpha} \le \left( \frac{1}{1-\alpha} \land \log n \right) n^{1-\alpha}
        \, .
    \end{align*}
\end{lemma}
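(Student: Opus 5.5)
We compare the sum with an integral to get the first bound, and pull out the largest power of $t$ to get the second. Since the right-hand side is $n^{1-\alpha}$ times the minimum of $\frac{1}{1-\alpha}$ and $\log n$, it is enough to show that the sum is at most each of $\frac{n^{1-\alpha}}{1-\alpha}$ and $n^{1-\alpha}\log n$. For $\alpha = 1$ we interpret $\frac{1}{1-\alpha}=\infty$, so only the second bound is needed there.

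\emph{First bound} ($\alpha\in[1/2,1)$). The map $x\mapsto x^{-\alpha}$ is decreasing on $(0,\infty)$, so $t^{-\alpha}\le \int_{t-1}^{t} x^{-\alpha}\,dx$ for each $t\ge 2$. Summing over $t=2,\dots,n$ gives
\begin{align*}
\sum_{t=2}^n t^{-\alpha} \le \int_1^n x^{-\alpha}\,dx = \frac{n^{1-\alpha}-1}{1-\alpha} \le \frac{n^{1-\alpha}}{1-\alpha}.
\end{align*}

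\emph{Second bound} ($\alpha\in[1/2,1]$). Write $t^{-\alpha} = t^{-1}\,t^{1-\alpha}$. Since $1-\alpha\ge 0$ and $t\le n$, we have $t^{1-\alpha}\le n^{1-\alpha}$. Hence
\begin{align*}
\sum_{t=2}^n t^{-\alpha} \le n^{1-\alpha}\sum_{t=2}^n \frac{1}{t} \le n^{1-\alpha}\int_1^n \frac{dx}{x} = n^{1-\alpha}\log n,
\end{align*}
where the middle step is the same integral comparison as above with exponent $1$.

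Taking the smaller of the two bounds gives the claim. There is no real obstacle; the only point needing care is the endpoint $\alpha=1$, which the second bound covers on its own.
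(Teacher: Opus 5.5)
Your proof is correct. The first bound, comparing $\sum_{t=2}^n t^{-\alpha}$ with $\int_1^n x^{-\alpha}\,dx$ to get $\frac{n^{1-\alpha}-1}{1-\alpha}$, is exactly the paper's argument, and your handling of $\alpha=1$ matches the paper's.

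For the $n^{1-\alpha}\log n$ bound you take a different and simpler route. You write $t^{-\alpha}=t^{-1}t^{1-\alpha}\le n^{1-\alpha}t^{-1}$, which reduces the problem to the harmonic sum.

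The paper instead inflates the summand to $(1+(1-\alpha)\log t)\,t^{-\alpha}$ and notes that this is the derivative of $x^{1-\alpha}\log x$. It then compares the sum with the integral of this function over $[1,n]$. That comparison requires the inflated integrand to be decreasing on $[1,\infty)$. Its derivative is $x^{-\alpha-1}\bigl(1-2\alpha-\alpha(1-\alpha)\log x\bigr)$, and this is where the hypothesis $\alpha\ge 1/2$ is used.

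Your termwise bound avoids the derivative and monotonicity check entirely and never uses $\alpha\ge 1/2$. Your argument therefore proves the inequality for every $\alpha\in(0,1]$. The paper's route gains nothing here beyond applying the same integral-comparison template to both halves.
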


\begin{proof}
    The result holds for $\alpha = 1$ since $\sum_{t=2}^n \frac{1}{t} \le \log n$.
    Suppose $\alpha \in [1/2, 1)$.
    By the integration technique, we have
    \begin{align*}
        \sum_{t=2}^n t^{-\alpha} 
        & \le \int_1^n x^{-\alpha} \, dx
        \\
        & = \frac{n^{1-\alpha} - 1}{1 - \alpha} 
        \, .
    \end{align*}
    On the other hand, we have
    \begin{align*}
        \sum_{t=2}^n t^{-\alpha} 
        & \le \sum_{t=2}^n (1 + (1 - \alpha) \log t) t^{-\alpha}
        \\
        & \le \int_1^n \frac{1 + (1 - \alpha) \log x}{x^\alpha} \, dx
        \\
        & = n^{1-\alpha} \log n
        \, ,
    \end{align*}
    where the second inequality holds since $\frac{1 + (1 - \alpha) \log x}{x^\alpha}$ is decreasing on $x \ge 1$ when $\alpha \in (1/2, 1]$, and
    the last inequality is due to $\frac{d}{dx} (x^{1 - \alpha} \log x) = (1 + (1 - \alpha) \log x)x^{-\alpha}$.
    Therefore, we have
    \begin{align*}
        \sum_{t=2}^n t^{-\alpha} \le \frac{n^{1-\alpha}-1}{1-\alpha} \land n^{1-\alpha} \log n \le \left( \frac{1}{1-\alpha} \land \log n \right) n^{1-\alpha}
        \, .
    \end{align*}
\end{proof}

\begin{lemma}
\label{lma:sum and integral clip}
    For $\alpha \in [1/2, 1]$, $\varepsilon > 0$, $c > 0$, and $N \in \NN$, the following inequality holds:
    \begin{align*}
        \sum_{n=1}^{N} \frac{c}{n^{\alpha}} \ind \left\{ \frac{c}{n^{\alpha}} \ge \varepsilon \right\} & \le \left(\frac{1}{1 - \alpha} \land \left(\frac{1}{\alpha} \log \left(\frac{c}{\varepsilon} \right) \right) \land \log N \right) c^{\frac{1}{\alpha}} \left(\frac{1}{\varepsilon} \right)^{\frac{1}{\alpha} - 1}
        \, .
    \end{align*}
\end{lemma}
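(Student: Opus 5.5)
The plan is to bound the indicator-restricted sum by splitting at the threshold index. Let $n_0 := \max\{ n \in \NN : c n^{-\alpha} \ge \varepsilon\} = \lfloor (c/\varepsilon)^{1/\alpha} \rfloor$. If no such $n$ exists, the sum is zero and there is nothing to prove. Otherwise set $M := N \land n_0$. Since $n \mapsto c n^{-\alpha}$ is decreasing, the indicator equals one exactly for $n \le n_0$, so the left-hand side is $c \sum_{n=1}^{M} n^{-\alpha}$. The target right-hand side equals $c \cdot (c/\varepsilon)^{\frac{1}{\alpha} - 1} \cdot (\text{factor})$, and $(c/\varepsilon)^{\frac{1}{\alpha}-1} \ge n_0^{1-\alpha} \ge M^{1-\alpha}$. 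It therefore suffices to show
\begin{align*}
\sum_{n=1}^{M} n^{-\alpha} \le \left(\frac{1}{1-\alpha} \land \frac{1}{\alpha}\log\frac{c}{\varepsilon} \land \log N\right) M^{1-\alpha}.
\end{align*}

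For the three factors I would use the integral comparison from Lemma~\ref{lma:sum and integral}. That lemma bounds $\sum_{n=2}^M n^{-\alpha} \le (\frac{1}{1-\alpha} \land \log M) M^{1-\alpha}$. The factor $\log N$ then follows from $M \le N$, and the factor $\frac{1}{\alpha}\log\frac{c}{\varepsilon}$ follows from $\log M \le \log n_0 \le \frac{1}{\alpha}\log\frac{c}{\varepsilon}$.

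The main obstacle is the $n=1$ term, which Lemma~\ref{lma:sum and integral} excludes. This term contributes $c$, i.e.\ a $1 \le M^{1-\alpha}$ contribution, which must be absorbed into the constant factor. For the $\frac{1}{1-\alpha}$ branch, the integral bound already gives $\frac{M^{1-\alpha}-1}{1-\alpha}$, and adding $1$ keeps the total at most $\frac{M^{1-\alpha}}{1-\alpha}$. The logarithmic branches are more delicate, because $\log M$ can be smaller than $1$ (for instance when $M = 1$). I would first try redoing the integral from $\frac{1}{2}$, or using $\sum_{n=1}^M n^{-\alpha} \le 1 + M^{1-\alpha}\log M$. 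If that fails, I would accept a loose absolute constant, or handle $M \le 2$ separately by noting that the sum is then at most $2 \cdot M^{1-\alpha}$ and treating this as an edge case. In the paper's applications $c/\varepsilon$ is large, so this edge case is harmless. I expect this to be the main point needing care, possibly requiring a $1+$ or constant adjustment in the stated factor.
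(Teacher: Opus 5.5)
\textbf{The stated lemma is false, and the paper's proof has the same gap you found.}

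Your reduction is correct and follows the paper's route. With $M = N \land \lfloor (c/\varepsilon)^{1/\alpha}\rfloor$, the left-hand side equals $c\sum_{n=1}^{M} n^{-\alpha}$, and $M^{1-\alpha} \le (c/\varepsilon)^{1/\alpha-1}$. The $\frac{1}{1-\alpha}$ branch then follows from $1 + \frac{M^{1-\alpha}-1}{1-\alpha} \le \frac{M^{1-\alpha}}{1-\alpha}$. The gap is in the logarithmic branches, and it cannot be closed, because the inequality as stated is false, and not only for $M \le 2$.

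Take $\alpha = 1$ and any $c \ge \varepsilon$. The left-hand side is $c H_M$ with $H_M = \sum_{n=1}^M \frac1n \ge \log(M+1)$, while the right-hand side is $c\,(\log\frac{c}{\varepsilon} \land \log N)$. If $M = N$, then $\log(M+1) > \log N$. Otherwise $M = \lfloor c/\varepsilon \rfloor$, so $c/\varepsilon < M+1$ and $\log(M+1) > \log\frac{c}{\varepsilon}$. Either way the left-hand side is strictly larger; for instance $c = 10$, $\varepsilon = 1$, $N = 10$ gives $10H_{10} \approx 29.3 > 10\log 10 \approx 23.0$. So large $c/\varepsilon$ does not make the problem harmless: the excess is an additive constant of the order of Euler's constant.

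There are also cruder failures:
\begin{itemize}
\item If $N = 1$ or $c = \varepsilon$, the right-hand side is $0$ while the left-hand side equals $c$.
\item If $c < \varepsilon$, it is not true that there is nothing to prove. The factor $\frac{1}{\alpha}\log\frac{c}{\varepsilon}$ is negative, so the right-hand side is negative while the left-hand side is $0$.
\end{itemize}
None of your fallbacks recovers the stated factor. Integrating from $\frac12$ gives $\log(2M+1)$ for $\alpha=1$, not $\log M$. The bound $2M^{1-\alpha}$ for $M \le 2$ is not below $(\log N) M^{1-\alpha}$ when $N = 1$.

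The paper's proof has exactly the flaw you anticipated. It rewrites the sum as $\sum_{n=0}^{n_0} c\,n^{-\alpha}$ (an index typo for $n=1$) and applies Lemma~\ref{lma:sum and integral}. That lemma only controls $\sum_{t=2}^{n} t^{-\alpha}$, so the $n=1$ term $c$ is silently dropped.

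What is actually true, for $c \ge \varepsilon$, is either of the following:
\begin{itemize}
\item the stated bound for the sum started at $n=2$;
\item the bound for the sum from $n=1$, with the two logarithmic factors replaced by $1 + \frac{1}{\alpha}\log\frac{c}{\varepsilon}$ and $1+\log N$, keeping the $\frac{1}{1-\alpha}$ factor.
\end{itemize}
The second version is exactly what your estimate $\sum_{n=1}^{M} n^{-\alpha} \le 1 + M^{1-\alpha}\log M \le (1+\log M)M^{1-\alpha}$ gives. Either version suffices downstream: the proof of Lemma~\ref{lma:sum of ddotU} sums from $n=2$, and the corollaries use the lemma only up to constant factors.

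Your diagnosis of the obstacle is right. The proposal should conclude that the lemma must be corrected in one of these ways, not that the stated form survives with an edge-case argument.
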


\begin{proof}
    Let $n_0 = \lfloor (c / \varepsilon )^{1/\alpha}\rfloor \land N$.
    Then, we have
    \begin{align*}
        \sum_{n=1}^{N} \frac{c}{n^{\alpha}} \ind \left\{ \frac{c}{n^{\alpha}} \ge \varepsilon \right\} 
        & = \sum_{n=0}^{n_0} \frac{c}{n^{\alpha}}
        \\
        & \le \left( \frac{1}{1 - \alpha} \land \log n_0 \right) c n_0^{1- \alpha}
        \\
        & \le \left( \frac{1}{1 - \alpha} \land  \log \left( \left( \frac{c}{\varepsilon}\right)^{\frac{1}{\alpha}} \land N \right) \right) c \left(\frac{c}{\varepsilon} \right)^{\frac{1 - \alpha}{\alpha}}
        \\
        & = \left(\frac{1}{1 - \alpha} \land \left( \frac{1}{\alpha} \log \frac{c}{\varepsilon} \right) \land \log N \right) c^{\frac{1}{\alpha}} \left(\frac{1}{\varepsilon} \right)^{\frac{1}{\alpha} - 1}
        \, ,
    \end{align*}
    where the first inequality is due to Lemma~\ref{lma:sum and integral}.
\end{proof}

\section{Auxiliary Lemmas}

\begin{lemma}[Lemma 30 in~\citet{chen2021implicit}]
    \label{lma:variance of product}
    For any two random variables $X, Y$, we have:
    \begin{align*}
        \Var(XY) \le 2 \Var(X) \| Y \|_{\infty}^2 + 2 \EE[X]^2 \Var(Y)
        \, .
    \end{align*}
\end{lemma}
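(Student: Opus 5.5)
The plan is to use the variational characterization of the variance: for any constant $m$, $\Var(Z) = \EE[(Z - \EE[Z])^2] \le \EE[(Z - m)^2]$, since $\EE[Z]$ minimizes the mean squared deviation. Apply this with $Z = XY$ and the (generally suboptimal) centering $m = \EE[X]\EE[Y]$. This avoids computing $\EE[XY]$ and handles dependence between $X$ and $Y$ at no cost.

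The key step is the algebraic decomposition
\begin{align*}
    XY - \EE[X]\EE[Y] = (X - \EE[X])\, Y + \EE[X]\,(Y - \EE[Y]) \, ,
\end{align*}
which one checks by expanding the right-hand side. Then apply the elementary inequality $(u+v)^2 \le 2u^2 + 2v^2$ pointwise and take expectations:
\begin{align*}
    \Var(XY) \le 2\,\EE\big[(X - \EE[X])^2 Y^2\big] + 2\,\EE[X]^2\, \EE\big[(Y - \EE[Y])^2\big] \, .
\end{align*}

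It remains to bound the first expectation. Since $Y^2 \le \|Y\|_\infty^2$ almost surely, we get $\EE[(X - \EE[X])^2 Y^2] \le \|Y\|_\infty^2 \Var(X)$. The second expectation is exactly $\EE[X]^2 \Var(Y)$. Together these give the claim.

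There is no real obstacle. The only points to check are that the quantities are finite, which holds whenever $\|Y\|_\infty < \infty$ and $X$ has a finite second moment; otherwise the right-hand side is infinite and the bound is trivial.

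The paper applies this lemma in conditional form, for example $\Var((J_{h+1}^k)^2 \mid \Fcal_h^k)$ and variances under $P(\cdot \mid s,a)$. The same argument goes through verbatim with every expectation replaced by $\EE[\cdot \mid \Fcal]$ and $m = \EE[X\mid\Fcal]\EE[Y\mid\Fcal]$, which is $\Fcal$-measurable. This works because the minimizing property $\Var(Z\mid\Fcal) \le \EE[(Z-m)^2\mid\Fcal]$ holds for any $\Fcal$-measurable $m$.
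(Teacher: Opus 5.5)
Your proof is correct: the identity $XY - \EE[X]\EE[Y] = (X-\EE[X])\,Y + \EE[X]\,(Y-\EE[Y])$, combined with $\Var(Z) \le \EE[(Z-m)^2]$ and $(u+v)^2 \le 2u^2 + 2v^2$, gives exactly the claimed bound. Your conditional version covers every use in the paper, e.g.\ $\Var((J_{h+1}^k)^2 \mid \Fcal_h^k) \le 4c^2 \Var(J_{h+1}^k \mid \Fcal_h^k)$. The paper gives no proof and imports the lemma from \citet{chen2021implicit}; to my recollection that source uses the same centering at $\EE[X]\EE[Y]$, so your argument is the standard one.
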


\begin{lemma}[Lemma 30 in~\citet{lee2025minimax}]
\label{lma:bound on eta k}
    For any sequence of $K$ trajectories, we have
    \begin{align*}
        \sum_{k=1}^K \ind\{ \eta^k < H+1\} \le SA \log_2 2H
        \, .
    \end{align*}
\end{lemma}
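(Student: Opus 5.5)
The statement is Lemma~\ref{lma:bound on eta k}: over any $K$ trajectories, the number of episodes with $\eta^k < H+1$ is at most $SA\log_2 2H$. The argument is a deterministic charging/doubling argument, with no probability involved. By definition, $\eta^k \le H$ means that within episode $k$ there is a step $h$ with $N_h^k(s_h^k,a_h^k) \ge 2N^k(s_h^k,a_h^k)$. Call this pair $(s,a)$ the pair that \emph{triggers} episode $k$. I would charge episode $k$ to $(s,a)$ and bound, for each fixed pair, how many episodes can charge to it. Summing over the $SA$ pairs then gives the claim.

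Fix $(s,a)$ and let $k_1<k_2<\cdots<k_m$ be the episodes it triggers. Write $n_j := N^{k_j}(s,a)$ for the count at the start of episode $k_j$. The trigger condition says that during episode $k_j$ the count reaches at least $2n_j$. Since counts are non-decreasing across episodes, this gives $n_{j+1} \ge N^{k_j+1}(s,a) \ge 2n_j$. So the start-of-episode counts at least double from one trigger to the next.

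I would handle the base separately. If $n_j=0$, the condition $N_h^k \ge 0$ holds as soon as $(s,a)$ is visited. A zero-count trigger can happen at most once, because afterwards the count is positive. It forces $n_{j+1}\ge 1$, and the doubling then gives $n_j \ge 2^{j-2}$ for $j\ge 2$.

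The upper limit comes from the fact that one episode adds at most $H$ visits to a pair. Hence $n_j \le (j-1)H$ in general, but that only yields a bound of order $\log$ of the total visits, which grows with $K$. This $K$-free bound is the main obstacle. To remove it, I would use the per-episode cap directly: a trigger at episode $k_j$ requires the visits added within that episode to be at least $n_j$, so $n_j \le H$. Combined with $n_j \ge 2^{j-2}$, this gives $2^{j-2}\le H$, so $m \le 2+\log_2 H$.

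This is off from $\log_2 2H = 1+\log_2 H$ by one. I expect to recover it by handling the zero count more carefully. The likely reading is that the algorithm's convention treats $N=0$ as already triggering, or that the first visit is counted so that $n_1\ge 1$ after the initial case; alternatively, accept $SA(2+\log_2 H)\le 2SA\log_2 2H$, which is all that is actually used in the proof of Lemma~\ref{lma:sum of J with eta}. Summing over pairs completes the proof.
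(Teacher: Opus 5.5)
Your charging-and-doubling argument is correct as far as it goes. The paper gives no proof of its own here; it imports Lemma~30 of \citet{lee2025minimax}. What your argument establishes is $\sum_{k}\ind\{\eta^k<H+1\}\le SA\,(2+\lfloor\log_2 H\rfloor)$, and your last paragraph does not bridge the gap to $SA\log_2 2H$. No more careful treatment of the zero count can bridge it, because with the definitions as written in this paper the stated inequality is false. Here $N_h^k$ includes the visit at step $h$, and the trigger is the non-strict $N_h^k\ge 2N^k$. Take $H=S=A=1$. In episode~$1$ the count is $0$ and $N_1^1=1\ge 0$. In episode~$2$ the count is $1$ and $N_1^2=2\ge 2$. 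So two episodes have $\eta^k<H+1$, while $SA\log_2 2H=1$. More generally, one pair can trigger at start-of-episode counts $0,1,2,4,\dots,2^{\lfloor\log_2 H\rfloor}$, provided the remaining steps of those episodes go to a pair whose count already exceeds $H$; such a pair can never trigger again. So your per-pair bound is attained, and for $H$ a power of two with $SA>\log_2 H$ the total exceeds $SA\log_2 2H$. Your fallback bound is therefore not a proof of the statement; it is the correct replacement for it.

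The value $\log_2 2H$ is exact for a strict trigger, $N_h^k(s_h^k,a_h^k)>2N^k(s_h^k,a_h^k)$ (equivalently, counting only visits strictly before step $h$). That is presumably the convention of the cited source. Under it, a triggering episode adds at least $n_j+1$ visits, so your recursion sharpens to $n_{j+1}\ge 2n_j+1$ with $n_j\le H-1$. Hence $2^{m-1}-1\le n_m\le H-1$, i.e.\ $m\le\log_2 2H$, with no special case for $n_1=0$.

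This is also the fix compatible with the rest of the paper, which only uses $N^k\ge 1$ and $N_h^k\le 2N^k$ for $h<\eta^k$. Your other suggestion, not letting $N^k=0$ trigger, also yields $\log_2 2H$, but it destroys $N^k\ge 1$ before $\eta^k$.

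If one keeps the literal definitions, the weaker bound is harmless downstream, though not quite for the reason you give. The paper's intermediate step is $SA\log_2 2H\le 2SA\log 2H$ with the natural logarithm. That step fails for small $H$ when $\log_2 2H$ is replaced by $2+\log_2 H$. Nevertheless the conclusion of Lemma~\ref{lma:sum of J with eta} survives. This holds even after restoring the factor $2$ on the $c\,\ind\{\eta^k<H+1\}$ term that the paper's display drops. Indeed, $2c\,SA(2+\log_2 H)+6c\log\frac{8}{\delta}\le 6cSA\log\frac{16H}{\delta}$ follows from $2+\log_2 H\le 3\log 2H$.
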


\begin{lemma}[Lemma (3.1) in \citet{freedman1975tail}]
    \label{lma:g is increasing}
    Let $g(0) = \frac{1}{2}$ and $g(x) = (e^x - 1 - x) / x^2$ for $x \ne 0$.
    Then, $g$ is increasing.
\end{lemma}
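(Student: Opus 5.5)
The plan is to write $g$ as an integral of exponentials, so that monotonicity follows by differentiating under the integral sign. First, I would use Taylor's formula with integral remainder applied to $e^x$ around $0$:
\begin{align*}
    e^x - 1 - x = \int_0^x (x-u) e^{u} \, du .
\end{align*}
Substituting $u = tx$ with $t \in [0,1]$, this becomes $x^2 \int_0^1 (1-t) e^{tx} \, dt$. This holds for every real $x$, including negative $x$, since the substitution is purely algebraic. Hence, for $x \neq 0$,
\begin{align*}
    g(x) = \int_0^1 (1-t) e^{tx} \, dt .
\end{align*}
At $x = 0$ the right-hand side equals $\int_0^1 (1-t)\,dt = \frac{1}{2}$, which matches the convention $g(0) = \frac{1}{2}$. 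So this representation holds on all of $\RR$, and in particular $g$ is smooth.

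Next, I would differentiate under the integral sign. This is justified because the integrand $(1-t)e^{tx}$ and its $x$-derivative $t(1-t)e^{tx}$ are continuous on $[0,1] \times [a,b]$ for any bounded interval $[a,b]$. This gives
\begin{align*}
    g'(x) = \int_0^1 t(1-t) e^{tx} \, dt > 0
\end{align*}
for every $x \in \RR$, because the integrand is strictly positive on $(0,1)$. Therefore $g$ is strictly increasing.

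I do not expect a genuine obstacle. The only point needing care is the case $x < 0$. A naive power-series argument $g(x) = \sum_{n \ge 0} x^n/(n+2)!$ gives monotonicity immediately only for $x \ge 0$, and it is exactly to avoid this case split that I would use the integral representation.
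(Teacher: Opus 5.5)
Your proof is correct and complete. The identity $e^x - 1 - x = x^2\int_0^1 (1-t)e^{tx}\,dt$ holds for every real $x$, since the substitution $u = tx$ is valid for oriented integrals when $x<0$. It reproduces $g(0)=\frac{1}{2}$, and $g'(x)=\int_0^1 t(1-t)e^{tx}\,dt>0$ gives strict monotonicity on all of $\RR$.

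The paper gives no proof of this lemma; it simply cites \citet{freedman1975tail}. So your argument supplies a self-contained proof where the paper relies on a citation.

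Two small remarks. First, you do not need to differentiate under the integral sign. For each fixed $t\in[0,1]$, the map $x\mapsto(1-t)e^{tx}$ is nondecreasing, and strictly increasing for $t\in(0,1)$. Hence $x<y$ gives $g(x)<g(y)$ directly by integrating.

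Second, the only place the paper uses the lemma is the proof of Lemma~\ref{lma:bounded is subexponential}. There it compares $g$ at the nonnegative arguments $c|\lambda|\le c/\alpha$, so your series remark $g(x)=\sum_{n\ge 0}x^n/(n+2)!$ would already suffice for that application. Your integral representation is what covers the lemma as stated, including negative arguments.
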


\begin{lemma}[Equation (3.5) in \citet{freedman1975tail}]
    \label{lma:bounded is subpoisson}
    For $\lambda \ge 0$ and a random variable $X$ satisfying $X \le 1$ and $\EE [X ] \le 0$, we have $\EE[ \exp( \lambda X) ] \le \exp ( (e^{\lambda} - 1 - \lambda) \Var (X))$.
\end{lemma}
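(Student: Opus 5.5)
The plan is the classical Bennett/Freedman argument, built on the monotonicity of $g(x) = (e^x - 1 - x)/x^2$ from Lemma~\ref{lma:g is increasing}. Fix $\lambda \ge 0$ and write $\mu := \EE[X] \le 0$ and $V := \Var(X)$. For every real $x$ we have the pointwise identity $e^{\lambda x} = 1 + \lambda x + (\lambda x)^2 g(\lambda x)$. Since $X \le 1$ and $\lambda \ge 0$, we get $\lambda X \le \lambda$, so Lemma~\ref{lma:g is increasing} gives $g(\lambda X) \le g(\lambda)$. This yields the almost-sure bound
\begin{align*}
    e^{\lambda X} \le 1 + \lambda X + (e^{\lambda} - 1 - \lambda) X^2 .
\end{align*}
Taking expectations and writing $\psi := e^{\lambda} - 1 - \lambda \ge 0$, we obtain $\EE[e^{\lambda X}] \le 1 + \lambda \mu + \psi (V + \mu^2)$. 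Applying $1 + y \le e^y$ then gives $\EE[e^{\lambda X}] \le \exp(\psi V + \lambda\mu + \psi \mu^2)$. So the claim follows whenever $\lambda \mu + \psi \mu^2 \le 0$, i.e.\ whenever $\psi |\mu| \le \lambda$. In particular this settles the centered case $\mu = 0$, which is the case actually used in Lemma~\ref{lma:bounded is subexponential}.

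The main obstacle is the regime $\psi|\mu| > \lambda$, where the mean is very negative and the $\mu^2$ term produced by $\EE[X^2] = V + \mu^2$ is too costly. Here I would not expand around $0$ but instead factor out the mean: $\EE[e^{\lambda X}] = e^{\lambda \mu}\,\EE[e^{\lambda (X-\mu)}]$. The gain is the factor $e^{\lambda\mu}$, which is exponentially small in $|\mu|$. I would then bound $\EE[e^{\lambda(X-\mu)}]$ using the same expansion with the one-sided range $X - \mu \le 1 - \mu$. The aim is to show that the resulting exponent $\lambda\mu + V\lambda^2 g(\lambda(1-\mu))$ is at most $\psi V$.

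If this direct comparison fails for some $(V, \mu)$, my fallback is a reduction to the centered case. For fixed $V$, I would show that the worst case of $\EE[e^{\lambda X}]$ over distributions with $X \le 1$, $\Var(X) = V$ and $\EE[X] \le 0$ is attained at $\EE[X] = 0$. The mechanism is that translating $X$ upward by $-\mu$ preserves $V$ and only increases $\EE[e^{\lambda X}]$, but it may violate $X \le 1$. So the translation has to be combined with a variance-preserving compression of the upper tail toward $1$. I would carry this out on two-point distributions, which are extremal for $\EE[e^{\lambda X}]$ under these moment constraints.

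Failing both routes, the statement is exactly Freedman's inequality (3.5), and I would cite \citet{freedman1975tail} for the uncentered case.
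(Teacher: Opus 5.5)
Your centered argument is correct: for $\mu=0$, the pointwise bound $e^{\lambda X}\le 1+\lambda X+\psi X^2$ (with $\psi:=e^{\lambda}-1-\lambda$) together with $1+y\le e^{y}$ gives the claim. That is also the only case the paper uses, inside Lemma~\ref{lma:bounded is subexponential}. The paper gives no proof of this lemma and simply cites Freedman.

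The statement, however, allows $\EE[X]\le 0$ with the \emph{variance} on the right, and for that case your proposal has a genuine gap.
\begin{itemize}
\item Route~1 only covers $\psi|\mu|\le\lambda$.
\item The target you set for route~2, $\lambda\mu+V\lambda^2g(\lambda(1-\mu))\le\lambda^2g(\lambda)V$, is false for large $V$. For $\mu<0<\lambda$, strict monotonicity gives $g(\lambda(1-\mu))>g(\lambda)$, so the target fails once $V>|\mu|/\bigl(\lambda(g(\lambda(1-\mu))-g(\lambda))\bigr)$. Every such pair $(\mu,V)$ is realized by a two-point law on $\{1,\mu-V/(1-\mu)\}$.
\item Concretely, take $\lambda=1$ and $X\in\{1,-10/3\}$ with $\PP(X=1)=4/13$. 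Then $\mu=-2$, $V=4$ and $\psi|\mu|\approx1.44>\lambda$, yet $-2+4g(3)\approx5.15>4\psi\approx2.87$.
\item Your fallback asserts what has to be shown: both the extremality of two-point laws and the monotonicity of the worst case in the mean, with the ``compression'' step left undefined.
\item Citing the statement itself is not a proof.
\end{itemize}
With $\EE[X^2]$ in place of $\Var(X)$, route~1 would finish at once since $\lambda\mu\le0$. It is the variance form that needs an extra idea.

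The missing idea is not to exponentiate $\EE[e^{\lambda(X-\mu)}]\le 1+\lambda^2g(\lambda u)V$, where $u:=1-\mu\ge1$, before multiplying by $e^{\lambda\mu}$. This bound comes from your own step applied to $X-\mu\le u$. Once you exponentiate, the small factor $e^{\lambda\mu}$ enters only additively in the exponent and cannot damp the variance coefficient. Instead, keep the bound linear and use $\lambda\mu=-\lambda(u-1)$:
\begin{align*}
\EE[e^{\lambda X}]\le e^{\lambda\mu}\Bigl(1+\frac{e^{\lambda u}-1-\lambda u}{u^{2}}V\Bigr)=e^{\lambda\mu}+\frac{\phi(u)}{u^{2}}V,\qquad \phi(u):=e^{\lambda}-e^{-\lambda(u-1)}(1+\lambda u).
\end{align*}
Here $\phi(1)=\psi$, and for $u\ge1$ we have $\phi'(u)=\lambda^{2}u\,e^{-\lambda(u-1)}\le\lambda^{2}u\le2\psi u$, using $\psi\ge\lambda^{2}/2$. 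Hence $\phi(u)\le\psi u^{2}$, and so $\EE[e^{\lambda X}]\le e^{\lambda\mu}+\psi V\le1+\psi V\le e^{\psi V}$ for every $\mu\le0$. This single estimate covers all cases (at $\mu=0$ it is your route~1) and makes both the case split and the two-point reduction unnecessary.
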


\begin{lemma}[Hoeffding's lemma, Eq. (3.16) in \citet{hoeffding1963probability}]
\label{lma:hoeffding's lemma}
    Let $X$ be a real-valued random variable that satisfies $a \le X \le b$ for some real numbers $a$ and $b$, and assume $\EE[ X] = 0$.
    Then, $X$ is $(\frac{b - a}{2})^2$-sub-Gaussian.
\end{lemma}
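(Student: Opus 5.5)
We follow the classical convexity argument. If $a=b$, then $X=0$ almost surely and the claim is trivial, so assume $a<b$. Since $\EE[X]=0$ and $a\le X\le b$, we must have $a\le 0\le b$. Fix $\lambda\in\RR$.

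The first step uses convexity of $x\mapsto e^{\lambda x}$ on $[a,b]$, which gives the pointwise chord bound
\begin{align*}
    e^{\lambda X} \le \frac{b-X}{b-a}\, e^{\lambda a} + \frac{X-a}{b-a}\, e^{\lambda b}.
\end{align*}
Taking expectations and using $\EE[X]=0$ yields
\begin{align*}
    \EE[e^{\lambda X}] \le \frac{b}{b-a} e^{\lambda a} - \frac{a}{b-a} e^{\lambda b}.
\end{align*}

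The second step reparametrizes this bound. Set $p := -a/(b-a)\in[0,1]$ and $u := \lambda(b-a)$. The right-hand side then equals $e^{L(u)}$, where
\begin{align*}
    L(u) := -pu + \log\bigl(1-p+pe^{u}\bigr).
\end{align*}
It therefore suffices to show $L(u)\le u^2/8$ for all real $u$. Indeed, this gives $\EE[e^{\lambda X}]\le \exp\bigl(\lambda^2 (b-a)^2/8\bigr) = \exp\bigl(\tfrac{\lambda^2}{2}\bigl(\tfrac{b-a}{2}\bigr)^2\bigr)$ for all $\lambda\in\RR$. This is exactly $(\tfrac{b-a}{2})^2$-sub-Gaussianity in the sense of the definition, with $\alpha=0$.

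The third step, which is the only real computation, establishes the inequality for $L$ via Taylor's theorem with Lagrange remainder around $0$. Direct differentiation gives $L(0)=0$ and
\begin{align*}
    L'(u) = -p + \frac{pe^{u}}{1-p+pe^{u}},
\end{align*}
so $L'(0)=0$. Writing $q(u):=\frac{pe^u}{1-p+pe^u}\in[0,1]$, we get $L''(u)=q(u)(1-q(u))\le 1/4$. Hence $L(u)=\tfrac{1}{2}L''(\xi)u^2\le u^2/8$ for some $\xi$ between $0$ and $u$. There is no real obstacle here; the only care needed is the degenerate cases $p\in\{0,1\}$. In those cases $X$ is almost surely constant at $0$ and $L\equiv 0$, so they are covered directly.
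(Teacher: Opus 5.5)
Your proof is correct, and it is the standard argument: the convexity chord bound, then the reparametrization to $L(u)$, then $L(0)=L'(0)=0$ with $L''=q(1-q)\le 1/4$. The paper does not prove this lemma itself but cites \citet{hoeffding1963probability}, whose proof is exactly this, so your route coincides with the intended one. Your separate treatment of $p\in\{0,1\}$ is harmless but unnecessary, since the general Taylor step already covers those cases ($q\equiv 0$ or $q\equiv 1$ gives $L''\equiv 0$).
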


\end{document}